\newcommand{\rev}[1]{{#1}}
\newcommand{\reviclr}[1]{{\color{blue}#1}}
\renewcommand{\reviclr}[1]{{#1}}
\renewcommand{\rev}[1]{{#1}}
\newcommand{\cnote}[1]{{\color{blue} (Cenk: #1)}}
\newcommand{\fnote}[1]{{\color{brown} (Fotis: #1)}}
\newcommand{\knote}[1]{{\color{red} (Khoa: #1)}}
\newcommand{\gnote}[1]{{\color{green} (Gaurav: #1)}}
\newcommand{\enote}[1]{{\color{violet} (Erik: #1)}}
\renewcommand{\cnote}[1]{}
\renewcommand{\fnote}[1]{}
\renewcommand{\knote}[1]{}
\renewcommand{\gnote}[1]{}
\renewcommand{\enote}[1]{}
\newcommand{\ignore}[1]{}
\DeclareMathOperator*{\1}{\mathbbm{1}}
\DeclarePairedDelimiterX{\dotp}[2]{\langle}{\rangle}{#1, #2}
\newcommand{\kmax}{K}
\newcommand\BigO{\mathcal{O}}
\newcommand\Bigo{\BigO}
\DeclareMathOperator*{\argmax}{\arg\!\max}
\newcommand{\be}{\begin{equation}}
\newcommand{\ee}{\end{equation}}
\renewcommand{\epsilon}{\varepsilon}
\newcommand{\margin}{{\mathrm{margin}_i}}
\newcommand{\gain}{{\mathrm{g}_i}}
\DeclareMathOperator*{\E}{\mathbb{E} \,}
\let\Pr\relax
\DeclareMathOperator*{\Pr}{\mathbb{P}}
\DeclareMathOperator*{\Var}{\text{Var}}
\DeclarePairedDelimiter{\ceil}{\lceil}{\rceil}
\declaretheorem[name=Theorem]{theorem}
\declaretheorem[name=Corollary, numberlike=theorem]{corollary}
\declaretheorem[name=Lemma, numberlike=theorem]{lemma}
\declaretheorem[name=Claim, numberlike=theorem]{Claim}
\newcommand{\neuron}{r}
\newcommand{\edge}{j}
\newcommand{\epsilonLayer}[1][\ell]{\epsilon_{#1}}
\newcommand{\SampleComplexityDeltaLayers}[1][\epsilonLayer]{\ceil*{32  \, L^2 (\Delta_r^{\ell \rightarrow})^2 \, \kmax \log (8 \eta / \delta) \,  \epsilon^{-2} \, \sum_{\edge \in \Wpm} s_\edge }}
\newcommand{\Wpm}{\II}
\newcommand{\II}{\mathcal{I}}
\newcommand{\WWRowCon}[1][\neuron]{w}
\newcommand{\WWHatRowCon}[1][\neuron]{{\hat{w}}}
\newcommand{\student}{\theta_\mathrm{student}}
\newcommand{\teacher}{\theta_\mathrm{teacher}}
\newcommandx*{\w}[1][1=x]{w_{a,b}(#1)}
\newcommandx*{\ws}[1][1=x]{w(#1)}
\newcommandx*{\g}[1][1=x]{g_{a,b}(#1)}
\newcommandx*{\gPrime}[1][1=x]{g'_{a,b}(#1)}
\newcommandx*{\gDPrime}[1][1=x]{g''_{a,b}(#1)}
\newcommandx*{\h}[1][1=x]{h_{a,b}(#1)}
\newcommandx*{\hPrime}[1][1=x]{h'_{a,b}(#1)}
\newcommand{\DD}{{\mathcal D}}
\newcommand{\XX}{\mathcal{X}}
\newcommand{\YY}{\mathcal{Y}}
\newcommand\Reals{\mathbb{R}}
\newcommand\PP{\mathcal{P}}
\renewcommand\SS{\mathcal{S}}
\newcommand\TT{\mathcal{T}}
\renewcommand\cite{\citep}
\title{Robust Active Distillation}
\author{Cenk Baykal, Khoa Trinh, Fotis Iliopoulos, Gaurav Menghani, Erik Vee \\
Google Research \\
\texttt{\{baykalc,khoatrinh,fotisi,gmenghani,erikvee\}@google.com}
}
\begin{document}

\maketitle

\begin{abstract}

 Distilling knowledge from a large teacher model to a lightweight one is a widely successful approach for generating compact, powerful models in the semi-supervised learning setting where a limited amount of labeled data is available. In large-scale applications, however, the teacher tends to provide a large number of incorrect soft-labels that impairs student performance. The sheer size of the teacher additionally constrains the number of soft-labels that can be queried due to prohibitive computational and/or financial costs. The difficulty in achieving simultaneous \emph{efficiency} (i.e., minimizing soft-label queries) and \emph{robustness} (i.e., avoiding student inaccuracies due to incorrect labels) hurts the widespread application of knowledge distillation to many modern tasks. In this paper, we present a parameter-free approach with provable guarantees to query the soft-labels of points that are simultaneously informative and correctly labeled by the teacher. At the core of our work lies a game-theoretic formulation that explicitly considers the inherent trade-off between the informativeness and correctness of input instances. We establish bounds on the expected performance of our approach that hold even in worst-case distillation instances. We present empirical evaluations on popular benchmarks that demonstrate the improved distillation performance enabled by our work relative to that of state-of-the-art active learning and active distillation methods.
%We present empirical evaluations on benchmark scenarios that demonstrate the effectiveness of our work in generating high-quality student models in a fast and reliable way.

% The success of knowledge distillation largely hinges on generating \emph{correct} teacher soft-labels to teach the student in an \emph{efficient} way. %Soft-labels that are not in agreement with the groundtruth label lead to poor student performance due to misguided training. On other hand, if the mo

\end{abstract}

\section{Introduction}
\label{sec:introduction}

Deep neural network models have been unprecedentedly successful in many high-impact application areas such as Natural Language Processing~\citep{ramesh2021zero,brown2020language} and Computer Vision~\citep{ramesh2021zero,niemeyer2021giraffe}. However, this has come at the cost of using increasingly large labeled data sets and high-capacity network models that tend to contain billions of parameters~\citep{devlin2018bert}. These models are often prohibitively costly to use for inference and require millions of dollars in compute to train~\citep{patterson2021carbon}. Their sheer size also precludes their use in time-critical applications where fast decisions have to be made, e.g., autonomous driving, and deployment to resource-constrained platforms, e.g., mobile phones and small embedded systems~\citep{baykal2022sensitivity}. To alleviate these issues, a vast amount of recent work in machine learning has focused on methods to generate compact, powerful network models without the need for massive labeled data sets.

Knowledge Distillation (KD)~\citep{bucilua2006model, hinton2015distilling, gou2021knowledge,beyer2021knowledge} is a general purpose approach that has shown promise in generating lightweight powerful models even when a limited amount of labeled data is available~\citep{chen2020big}. The key idea is to use a large teacher model trained on labeled examples to train a compact student model so that its predictions imitate those of the teacher. The premise is that even a small student is capable enough to \emph{represent} complicated solutions, even though it may lack the inductive biases to appropriately learn representations from limited data on its own~\citep{stanton2021does,menon2020distillation}. In practice, KD often leads to significantly more predictive models than otherwise possible with training in isolation~\citep{chen2020big,xie2020self,gou2021knowledge,cho2019efficacy}.

Knowledge Distillation has recently been used to obtain state-of-the-art results in the semi-supervised setting where a small number of labeled and a large number of unlabeled examples are available~\citep{chen2020big,pham2021meta,xie2020self}. \emph{Semi-supervised KD} entails training a teacher model on the labeled set and using its soft labels on the unlabeled data to train the student. The teacher is often a pre-trained model and may also be a generic large model such as GPT-3~\cite{brown2020language} or PaLM~\cite{chowdhery2022palm}. The premise is that a large teacher model can more aptly extract knowledge and learn from a labeled data set, which can subsequently be distilled into a small student. %by using the teacher’s soft labels on the unlabeled data. 
%using self-supervised learning techniques.}

% Knowledge Distillation (KD)~\cite{bucilua2006model, hinton2015distilling, gou2021knowledge,beyer2021knowledge} is a general purpose approach that has shown promise in generating lightweight powerful models even when a limited amount of labeled data is available~\cite{chen2020big}. The key idea is to use a large teacher model trained on labeled examples to train a compact student model so that its predictions imitate those of the teacher. In the semi-supervised setting where a small number of labeled and a large number of unlabeled examples are available, this entails training the teacher model on the labeled set and using the teacher's soft labels on the unlabeled data to train the student. The premise of KD is that even a small student is capable enough to \emph{represent} complicated solutions, even though it may lack the inductive biases to appropriately learn representations from limited data on its own~\cite{stanton2021does,menon2020distillation}. In practice, KD often leads to significantly more predictive models than otherwise possible with training in isolation~\cite{chen2020big,xie2020self,gou2021knowledge,cho2019efficacy}.

Despite its widespread success, KD generally suffers from various degrees of \emph{confirmation bias} and \emph{inefficiency} in modern applications to semi-supervised learning. Confirmation bias~\cite{pham2021meta,liu2021certainty,arazo2020pseudo,beyer2021knowledge} is the phenomenon where the student exhibits poor performance due to training on noisy or inaccurate teacher soft-labels. Here, inaccuracy refers to the inconsistency between the teacher's predictions for the unlabeled inputs and their groundtruth labels. Feeding the student inaccurate soft-labels leads to increased confidence in incorrect predictions, which consequently produces a model that tends to resist new changes and perform poorly overall~\cite{liu2021certainty,arazo2020pseudo}. At the same time, large-scale applications often require the teacher's predictions for billions of unlabeled points. \rev{For instance, consider distilling knowledge from GPT-3 to train a powerful student model. As of this writing, OpenAI charges 6c per 1k token predictions~\citep{OpenAIpricing}. Assuming just 1M examples to label and an average of 100 tokens per example leads to a total cost of $\$6$M. Hence, it is highly desirable to acquire the most helpful -- i.e., informative and correct -- soft-labels subject to a labeling budget (GPT-3 API calls) to obtain the most powerful student model for the target application.}

%Hence, it is highly desirable to minimize the number of API calls to GPT-3 (softlabels) in order to minimize the distillation cost

% it is imperative that we only a portion of the large unlabeled data set is 

% A prominent challenge in applying KD in the semi-supervised setting is that the teacher's predictions for the unlabeled inputs may be inaccurate with respect to the ground-truth label or misleading altogether for the student. This is known as the problem of confirmation 

 %In this paper, we consider the problem of query-efficient and robust active distillation to remedy the issues of confirmation bias and inefficiency in modern large-scale knowledge distillation applications. 
 
%  In this paper, we consider the problem of \emph{efficient} and \emph{robust} large-scale knowledge distillation in the semi-supervised setting where a small number of labeled and a large number of unlabeled examples are available.
 
 Thus, it has become increasingly important to develop KD methods that are both query-efficient and robust to labeling inaccuracies.
 Prior work in this realm is limited to tackling \emph{either} distillation efficiency~\cite{liang2022few,xu2020computation}, by combining mix-up~\cite{zhang2017mixup} and uncertainty-based sampling~\cite{roth2006margin}, or robustness~\cite{pham2021meta,liu2021certainty,arazo2020pseudo,zheng2021meta,zhang2020distilling}, through clever training and weighting strategies, but \rev{\emph{not both of these objectives at the same time}}. In this paper, we present a simple-to-implement method that finds a sweet spot and improves over standard techniques. \rev{Relatedly, there has been prior work in learning under label noise (see  \citet{song2022learning} for a survey), however, these works generally assume that the noisy labels are available (i.e., no active learning component) or impose assumptions on the type of label noise~\cite{younesian2021qactor}. In contrast, we assume that the label noise can be fully adversarial and that we do not have full access to even the noisy labels.}
 
 %In this paper, we show that this is a particularly challenging problem because efficiency (i.e., minimizing soft-label queries) and robustness (i.e., training on correct soft-labels) are competing objectives that require delicate and explicit consideration.

To the best of our knowledge, this work is the first to consider the problem of importance sampling for simultaneous efficiency and robustness in knowledge distillation. To bridge this research gap, we present an efficient algorithm with provable guarantees to identify unlabeled points with soft-labels that tend to be simultaneously informative and accurate. Our approach is parameter-free, imposes no assumptions on the problem setting, and can be widely applied to any  network architecture and data set. At its core lies the formulation of an optimization problem that simultaneously captures the objectives of efficiency and robustness in an appropriate way. In particular, this paper contributes:
\rev{
\begin{enumerate}
    \item A mathematical problem formulation that captures the joint objective of training on \emph{informative} soft-labels that are accurately labeled by the teacher in a query-efficient way
    \item A near linear time, parameter-free algorithm to optimally solve it
    \item Empirical results on benchmark data sets and architectures with varying configurations that demonstrate the improved effectiveness of our approach relative to
    the state-of-the-art
    \item Extensive empirical evaluations that support the widespread applicability and robustness of our approach to varying scenarios and practitioner-imposed constraints.
\end{enumerate}
}
\section{Problem Statement}
\label{sec:problem-statement}

We consider the semi-supervised classification setting where we are given a small labeled set \rev{$\XX_L$ -- typically tens or hundreds of thousands of examples -- together with a large unlabeled set $\XX_U$}, typically on the order of millions or billions. The goal is to leverage both the labeled and unlabeled sets to efficiently and reliably train a compact, powerful model $\student$. \reviclr{To do so, we use knowledge distillation~\cite{xie2020self,liang2020mixkd} where the labeled points are used to train a larger, (often pre-trained) teacher model that can then be used to educate a small model (the \emph{student})}. \reviclr{We emphasize that the teacher may be a pre-trained model, however, it is not trained on the unlabeled set $\XX_U$. The distillation process entails using the soft-labels of the teacher for the unlabeled points. The student is then trained on these soft-labeled points along with the original labeled data set.}
\reviclr{The key insight is that the large, pre-trained teacher model can more aptly learn representations from the limited  data, which can then be imitated by the student.}

\rev{
Somewhat more formally, we are given two input sets $\XX_L, \XX_U$ independently and randomly drawn from the input space $\XX \subseteq \Reals^d$. We assume that we have access to the hard labels $\YY_L \in \{0, 1\}^k$ for the instances in $\XX_L$, but not those in $\XX_U$ and that $|\XX_L| \ll |\XX_U|$.  \reviclr{Consistent with modern ML applications, we assume that a validation data set of labeled points is available.}
%, e.g., $|\XX_A|$ may be a couple thousand and $|\XX_B|$ on the order of millions or billions. 
% , however, that a very limited number of groundtruth label queries can be made for the points in $\XX_U$ throughout the distillation process. 
We will use a slight abuse in notation and refer to the set of labeled data points as $(\XX_L, \YY_L)$ to denote the set of $(x,y)$ labeled pairs. We assume large-scale applications of KD where the teacher is exceedingly large to the extent that querying the teacher soft-label $f_{\teacher}(x) \in [0,1]^k$ for an unlabeled point $x \in \XX_U$ is costly and soft-labeling all of $\XX_U$ is infeasible. In the following, we introduce and motivate robust active distillation to conduct this process efficiently and reliably.}

%% Original...
\ignore{
We consider the semi-supervised classification setting where we are given two input sets $\XX_A, \XX_B$ independently and randomly drawn from the input space $\XX \subseteq \Reals^d$. We assume that we have access to the hard labels $\YY_A \in \{0, 1\}^k$ for the instances in $\XX_A$, but not those in $\XX_B$ and that $|\XX_A| \ll |\XX_B|$, e.g., $|\XX_A|$ may be a couple thousand and $|\XX_B|$ on the order of millions or billions. 
\enote{Do we still need this assumption if we don't do the bucket approach?} \knote{I think this is a fair assumption. Basically, it says that we don't have many labeled examples? The next assumption that it's expensive getting soft labels from the teacher is even stronger.} \cnote{I had put this to describe the problem setting where our method could be of most help, since if $|\XX_A| \approx |\XX_B|$ then we could just soft-label entirety of $\XX_B$} 
We do assume, however, that a very limited number of groundtruth label queries can be made for the points in $\XX_B$ throughout the distillation process. We will use a slight abuse in notation and refer to the set of labeled data points as $(\XX_A, \YY_A)$ to denote the set of $(x,y)$ labeled pairs, and for a given subset $\SS \subseteq \XX_B$ with teacher soft labels $\tilde \YY_{\SS}$ we will let $(\SS, \tilde \YY_{\SS})$ denote the set of soft-labeled pairs. We assume large-scale applications of KD where the teacher is exceedingly large to the extent that querying the teacher soft-label $f_{\teacher}(x) \in [0,1]^k$ for an unlabeled point $x \in \XX_B$ is highly expensive computationally and soft-labeling all of the points in $\XX_B$ is infeasible.

\enote{Maybe a little more description of our set-up in words?} \cnote{I added the following, happy to elaborate further}\enote{Looks good. Originally, I thought this description should be the first para. I'm not sure right now.} \cnote{Good point. I'm torn between which one should come first. Any suggestions?}
At a high level, our goal is to leverage both the labeled and unlabeled sets to efficiently and reliably train a compact, powerful model $\student$. To do so, we use knowledge distillation and follow the setup of prior work~\cite{xie2020self,liang2020mixkd} where the labeled points are used to train a larger model (the \emph{teacher}) that can then be used to educate a light-weight model (the \emph{student}). Following the standard paradigm~\cite{hinton2015distilling}, this entails using the soft-labels (i.e., normalized categorical prediction) of the teacher for the points that are unlabeled. The student is then trained on these soft-labeled points along with the original labeled data set. The key insight is that the large teacher model can more aptly learn representations from the limited labeled data, which can then be imitated by the student. In the following, we introduce and motivate robust active distillation to conduct this process efficiently and reliably.
} %% end ignore
%In this paper, we assume that we have access to a high-capacity teacher model $\teacher$ trained on the set of labeled pairs $(\XX_A, \YY_A)$. We specifically consider the scenario where We formalize this goal in the following and underscore the shortcomings of existing approaches.
%teacher soft labels for only a small, judiciously chosen subset of points in $\XX_B$ 

\subsection{Active Distillation}

%In knowledge distillation, we first train a high-capacity teacher model $\teacher$ on the small set of labeled data $(\XX_A, \YY_A)$.  
\reviclr{The objective of active distillation is to query the minimum number of teacher soft-labels for points in $\XX_U$ in order to train a high-performing student model $\theta_\mathrm{student}$ in a computationally and financially-efficient way}. This process is shown in  Alg.~\ref{alg:active-distillation}. Here, we conduct $T$ active distillation iterations after training the student and the teacher models on the training set $\PP$, which initially only includes the set of hard-labeled points.  On Line~\ref{lin:soft-label} and throughout, $f_\theta(x) \in [0,1]^k$ denotes the softmax output of a neural network model $\theta$ with respect to input $x$. At each iteration (Lines~\ref{lin:start-active}-\ref{lin:end-active}, Alg.~\ref{alg:active-distillation}), we use a given querying algorithm, \textsc{Select}, to identify the most helpful $b$ unlabeled points to soft-label by the teacher based on the most up-to-date student model $\theta_{t-1}$ (Line~\ref{lin:select}). The selected points are then soft-labeled by the teacher and added to the (expanding) training set $\PP$. Subsequently, the student is trained using the Kullback-Leibler (KL) Divergence~\cite{hinton2015distilling} as the loss function on the training set $\PP$ which includes both the hard-labeled points $(\XX_L, \YY_L)$ and the accumulated soft-labeled ones. We follow the standard convention in active learning~\cite{ren2021survey} and efficient distillation ~\cite{liang2020mixkd,xu2020computation} and train the student model from scratch on Line~\ref{lin:train}.

% \section{Pseudocode to be moved somewhere}
% \label{sec:method}

\begin{algorithm}
\small
\caption{\textsc{ActiveDistillation}}
\label{alg:active-distillation}
\textbf{Input:} a set of labeled points $(\XX_L, \YY_L)$, a set of unlabeled points $\XX_U$, the number of points to soft-label per iteration $b \in \mathbb{N}_+$, and a selection algorithm \textsc{Select}$(\bar \XX, \theta, b)$ that selects a sample of size $b$ from $\bar \XX$
\begin{spacing}{1}
\begin{algorithmic}[1]
\small
\STATE $\mathcal{P} \gets (\XX_L, \YY_L)$; \COMMENT{Training set thus far; initially only hard-labeled points} \\
\STATE $\teacher \gets \textsc{Train}(\PP, \teacher^{\mathrm{random}});$ \COMMENT{Train teacher on labeled data starting from random initialization}
\STATE $\theta_0 \gets \textsc{Train}(\PP, \student^{\mathrm{random}});$ \COMMENT{Train student on labeled data starting from random initialization}
\STATE $\SS \gets \emptyset;$ \COMMENT{Set of inputs that have been soft-labeled} \\
\FOR{$t \in \{1, \ldots, T\}$} \label{lin:start-active}
    %\STATE $\SS_t \gets \textsc{Select}(\XX_B \setminus \SS, \theta_{t-1}, b)$ \COMMENT{Select $b$ points to be soft-labeled by $\teacher$} \label{lin:select} \\
    \STATE $\SS_t \gets \textsc{Select}(\XX_U \setminus \SS, \theta_{t-1}, b)$ \COMMENT{Select $b$ points to be soft-labeled by $\teacher$} \label{lin:select} \\
    \STATE $\SS \gets \SS \cup \SS_t$ \COMMENT{Add new points so we do not sample them again}  \\
    \STATE $\PP \gets \PP \cup \{\left(x, f_{\teacher}(x)\right) : x \in \SS_t \}$ \label{lin:soft-label} \COMMENT{Soft-label points and add them to the training set} \\ 
    %\STATE (\text{if \textbf{rand-init} option} $\theta_{t-1} \gets \textsc{RandomInitStudentNetwork()}$)
    \STATE $\theta_t \gets \textsc{Train}(\PP, \theta_\mathrm{student}^\mathrm{random})$ \COMMENT{Train network with the additional soft-labeled points from scratch} \label{lin:train}
\ENDFOR \label{lin:end-active}
\STATE \textbf{return} $\theta_T$
\end{algorithmic}
\end{spacing}
\end{algorithm}

% We train the lightweight student model on the union of the set of labeled points $(\XX_A, \YY_A)$ and the set of soft-labeled points $(\SS, \cup_{x \in \SS} f_{\student}(x))$. For the soft-labeled points, we follow the paradigm of~\cite{hinton2015distilling} and use the Kullback-Leibler (KL) Divergence for the loss when training the student on unlabeled data points,
% $$
% \mathrm{KL}(f_{\teacher}(x), f_{\student}(x)) = \sum_{i \in [k]} f_{\teacher}(x)_i \log(f_{\teacher}(x)_i / f_{\student}(x)_i),
% $$
% where $f_{\student}(x) \in [0,1]^k$ is the softmax output of the student, and cross-entropy loss with respect to the true label for the labeled points. 

The active distillation problem is deeply related to the problem of \emph{active learning}, where the objective is to query the labels of only the most informative points in order to minimize labeling costs. To this end, prior approaches in efficient KD~\cite{xu2020computation,liang2020mixkd} have proposed methods inspired by margin-based sampling~\cite{balcan2007margin,roth2006margin}, a popular and widely used active learning algorithm~\cite{ren2021survey}. Margin-based sampling is one example of uncertainty-based sampling, other examples are clustering-based selection~\cite{sener2017active,ash2019deep}, model uncertainty~\cite{gal2017deep}, and adversarial proximity~\cite{ducoffe2018adversarial} (see~\cite{ren2021survey} for a survey). In the following, we consider margin-based sampling due to its simplicity and prior application to efficient distillation by related work~\cite{liang2020mixkd,xu2020computation}. Margin-based sampling for KD is an intuitive and simple-to-implement idea where the teacher predictions for inputs that the student is most uncertain about are queried. For an input $x$ and prediction $i^* = \argmax_{i \in [k]} f_{\student}(x)_i$, the uncertainty is measured in terms of the \emph{margin} between the top-2 highest probability entries, i.e.,
$
\mathrm{margin}(x) = f_{\student}(x)_{i^*} - \max_{i \in [k] \setminus i^*} f_{\student}(x)_i.
$

\begin{figure*}[ht!]
  \centering
  \begin{minipage}[t]{0.52\textwidth}
  \centering
 \includegraphics[width=1\textwidth]{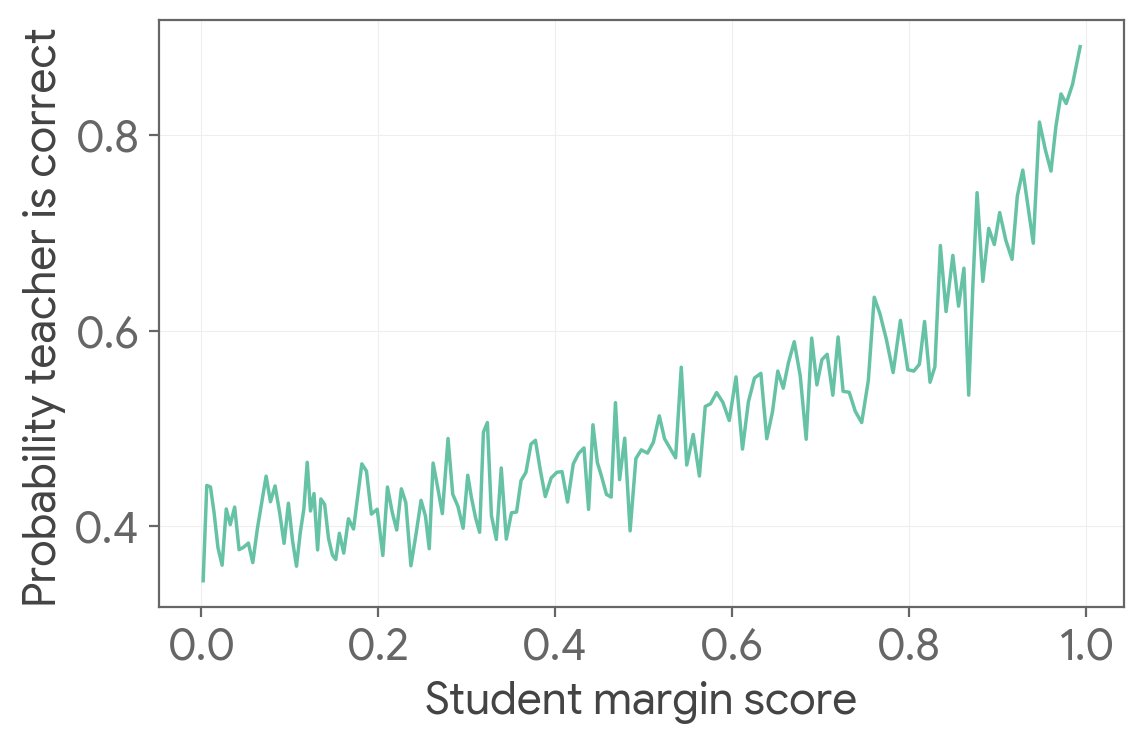}
  \end{minipage}%
  \centering
  \begin{minipage}[t]{0.48\textwidth}
  \centering
   \includegraphics[width=1\textwidth]{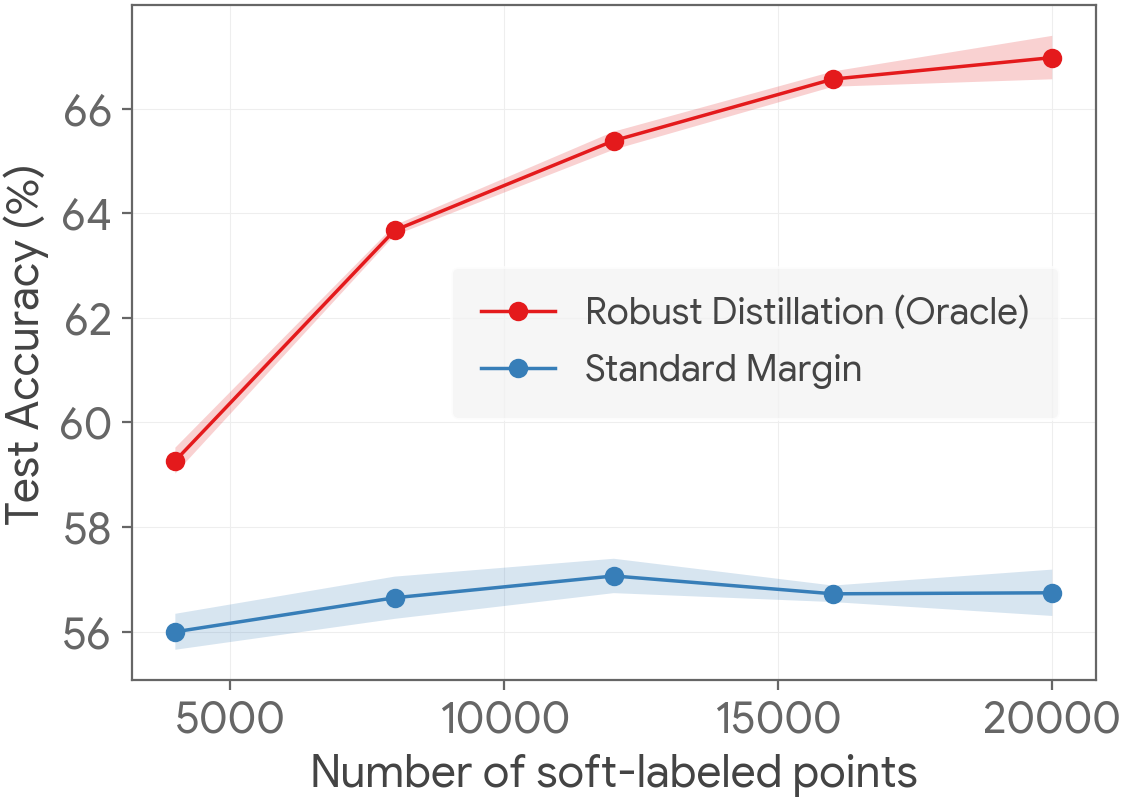}
  \end{minipage}%
\caption{\textbf{Left:} \rev{teacher's accuracy (overall $60.7\%$) relative to the student's (overall $49.4\%$) \emph{margin score}; points with lower margin tend to be incorrectly classified by the teacher. Plot was generated by  averaging teacher's accuracy over 100 closest-margin points.} \textbf{Right:} Performance of robust distillation (red), which picks the lowest-margin points \emph{among those correctly labeled by the teacher}, compared to that of margin (blue).}
%Points with lower student margin scores tend to be incorrectly labeled by the teacher.}
%We use a k-NN to compute the probability that the teacher is correct for a given margin.
\label{fig:margin}
\end{figure*}

\subsection{\rev{Research Gap}}

Despite the widespread success of margin-based sampling in active learning, we claim that it is generally ill-suited for knowledge distillation due to its tendency to amplify confirmation bias, leading to poor student performance. To observe this, note that the objective of margin-based sampling -- and more generally, other uncertainty-based query methods -- is to query the soft-labels of inputs for which the student is most uncertain about ("hard" instances). However, hard instances for the student are often hard to predict correctly by the teacher. Hence, the soft-labels for these points are more likely to be incorrect with respect to the groundtruth labels, leading to misleading student training.

Fig.~\ref{fig:margin} shows an instance of this phenomenon for CIFAR10 with ResNet student-teacher architectures of varying depth. As the figure depicts, the teacher tends to predict incorrect labels for points with low student margin (hard instances), and conversely, tends to be highly accurate on points with high margin (easy instances). This suggests that there is an inherent trade-off between efficiency (minimizing queries) and robustness (mitigating confirmation bias) that needs to be considered. That is, we would like to pick informative points for the student for efficiency, but these informative points tend to be incorrectly classified by the student which leads to misguided training and poor performance. Is it possible to simultaneously achieve both in a principled way? We label this problem \emph{Robust Active Distillation} and propose a method to solve it in the following section.

\section{Robust Active Distillation (RAD)}
\label{sec:rad}
%In this section, we motivate and present our approach, $\mathrm{RAD}$, that explicitly considers the competing objectives of efficiency and robustness in knowledge distillation.

\subsection{Background}
\label{sec:background-subsec}
The margin algorithm~\cite{liang2020mixkd,roth2006margin} selects the $b$ points with the lowest margin scores $\mathrm{margin}(x)$, where $b$ is our soft-label budget. Let $\margin$ be shorthand for the margin of each unlabeled input $\mathrm{margin}(x_i)$ and observe that its \emph{gain} or informativeness can be quantified as
$\gain = 1 - \margin$. Given a budget $b$, note that the margin-sampling algorithm corresponds to the optimal solution of the following optimization problem where the objective is to generate a probability distribution that maximizes the expected sum of gains,
\begin{equation}
\label{eqn:naive-max-gain}
\max \nolimits_{p \in \Delta_b} \E \nolimits_{\SS \sim p} \Big[\sum \nolimits_{i \in \SS} g_i\Big] \, \,  \text{, where } \Delta_b = \{p \in [0,1]^n: \sum \nolimits_{i \in [n]} p_i = b\}.
\end{equation}
As discussed previously, this formulation solely focuses on the informativeness of the points and does not consider the increased likelihood of mislabeling by the teacher.

\paragraph{Robust Distillation} To extend  \eqref{eqn:naive-max-gain} so that it is robust to possible teacher mislabeling, consider the masks
$
c_i = \1 \{\text{teacher labels point i correctly}\}$ for each $i \in [n]$
where $\1 \{x \} = 1$ if $x$ is true and 0 otherwise. Equipped with this additional variable, one way to
explicitly mitigate confirmation bias and simultaneously pick informative samples is to reward points that are correctly labeled by the teacher by assigning gains as before, but penalize those that are incorrectly labeled via losses. This can be done by using the modified gains in the context of \eqref{eqn:naive-max-gain}
$$
\tilde g_i = g_i c_i - (1 - c_i) \ell_i = \begin{cases}
g_i, &\text{ if teacher labels point i correctly} \\
-\ell_i, &\text{ otherwise }
\end{cases} \quad \forall{i \in [n]}.
$$
In words, this means that if the point $i$ is correctly labeled by the teacher we assign the standard margin-based gain $g_i = (1 - \margin)$ as before; otherwise, we penalize the selection by assigning $-\ell_i$ for some loss $\ell_i \ge 0$. This leads to the following general problem of robust distillation
\begin{equation}
    \label{prb:general}
    \max \nolimits_{p \in \Delta_b} \E \nolimits_{i \sim p} \left[g_i c_i - (1 - c_i) \ell_i \right].
\end{equation}

The optimal solution to problem  \eqref{prb:general} corresponds to picking the $b$ most informative (highest gain) points \emph{among those that are predicted correctly by the teacher}, i.e., those points $i \in [n]$ with highest $g_i$ subject to $c_i = 1$. This approach is shown as  Robust Distillation (Oracle) in Fig.~\ref{fig:margin} (right). Fig.~\ref{fig:margin} exemplifies the effect of inaccurate examples on student training (see also ~\cite{pham2021meta}). If we had knowledge of $(c_i)_{i \in [n]}$, then we could optimally solve \eqref{prb:general} to obtain significant improvements over the standard margin algorithm. Unfortunately, perfect knowledge of whether the teacher labels each point correctly or not, i.e., $(c_i)_{i\in[n]}$, is not possible in the semi-supervised setting.

% \enote{I'm not sure yet, but consider moving this discussion elsewhere.} \cnote{I agree. I'm open to suggestions on where to place it} \cnote{Update: should this just be removed?}
% Given the success of the formulation with the indicator-modified gains, one lingering question is whether we can apply the same approach when we do not have access to the indicators $(c_i)_{i \in [n]}$. One approach is to use predict the masks using an algorithm like k-Nearest Neighbor (k-NN) with the gains as the input space. Here the k-NN can be constructed using a validation set of points with both the groundtruth label and the teacher's softmax label for each point and used to infer the indicators $(c_i)_{i \in [n]}$. However, as seen in Fig.~\ref{fig:margin}, even though the general trend is that low margin (high gain) points $i$are misclassified more often, it is very difficult to predict the indicator $c_i$ for each point $i$ \emph{individually}. In Sec.~\ref{sec:} of the appendix, we additionally describe our attempts at using k-NNs with various input spaces, ranging from the student's embedding layer to the input itself, to no avail.

\subsection{Our Approach}

We consider a general and robust approach that simultaneously leverages instance-specific knowledge without having to know the masks $(c_i)_{i\in [n]}$ individually. Suppose that we only know that the teacher mislabels $m$ points out of the $n$ unlabeled inputs $\XX_B$ instead. Can we generate a sampling distribution so that no matter which of the $m$ points are labeled incorrectly by the teacher, our expected gain is high? Assuming $b = 1$ for simplicity, we arrive at the extension of the formulation in \eqref{prb:general}
\begin{equation}
\label{prb:ours-general}
\max \nolimits_{p \in \Delta_1} 
\min \nolimits_{c\in\Delta_{n-m}}
%\E \nolimits_{i \sim p} \left[g_i c_i - (1 - c_i) w g_i \right],
\E \nolimits_{i \sim p} \left[g_i c_i - (1 - c_i) \ell_i \right], \text{where } \Delta_k = \{q \in [0,1]^n: \sum \nolimits_{i \in [n]} q_i = k\}.
\end{equation}

Problem~\eqref{prb:ours-general} has the following game theoretic interpretation. We go first and pick a sampling distribution $p$ over the points. In response, an adversary decides which points are misclassified (i.e, $c_i = 0$) by the teacher subject to the constraint that it can set $c_i = 0$ for at most $m \leq n$ of them since $c \in \Delta_{n-m}$. Given the linear structure of the problem, it turns out that we can invoke von Neumann's Minimax Theorem~\cite{neumann1928theorie} which states that the equilibrium point is the same regardless of whether we go first and pick the probability distribution $p \in \Delta_1$ or the adversary goes first and picks $(c_i)_{i \in [n]}$. By exploiting this connection, we obtain a closed form solution as formalized below.

\newcommand{\obj}{\textsc{Obj}}
\newcommand{\opt}{\textsc{OPT}}
\newcommand{\defeq}{\mathrel{\mathop:}=}

%\begin{theorem}

%\begin{restatable*}{thm}
\begin{restatable*}{thm}{minimaxmaxgain}
\label{thm:minimax-max-gain}
Suppose $g_1 \geq g_2 \geq \cdots \geq g_n > 0$, and define $G_k = \sum_{i\leq k} g_i / (g_i+\ell_i)$ and
$H_k = \sum_{i\leq k} 1 / (g_i+\ell_i)$.
For $G_n\ge m$,
an optimal solution $p^*\in\Delta_1$ to \eqref{prb:ours-general} is given by
\begin{align*}
p_i^* 
    =& \frac{1}{H_{k^*}(g_i+\ell_i)} \mbox{\ \ if $i \leq k^*$}
        \mbox{\ \ and $p_i^*=0$ otherwise, where \ }
        k^* = \argmax \nolimits_{k\in[n]} \frac{G_k - m}{H_k}.
\end{align*}
The distribution can be computed in linear time (assuming sorted $g$) and achieves an objective value of
$\opt(k^*) \defeq \frac{G_{k^*} - m}{H_{k^*}}$.
%\end{theorem} 
\end{restatable*}
We sketch the proof here. The full proof can be found in the Appendix (Sec.~\ref{sec:supp-analysis}).
\begin{proof}[Proof sketch]
Let $$\obj(p,c) = \sum_{i \in [n]} p_i (g_i c_i - (1-c_i)\ell_i).$$
Substituting $p^*$ from Thm.~\ref{thm:minimax-max-gain} and considering a minimizing value for $c\in\Delta_{n-m}$, it is possible to show that 
$p^*\in\Delta_1$ and
$$
\opt(k^*) 
%= \frac{G_{k^*} - m}{H_{k^*}}  
= \min_{c\in\Delta_{n-m}} \obj(p^*, c) \leq \max_{p\in\Delta_1} \min_{c\in\Delta_{n-m}} \obj(p, c).
$$
On the other hand, let $c_i^* = \min \big(1, \frac{\opt(k^*) + \ell_i}{g_i+\ell_i}\big)$. With a little more work, using the fact that
$g_{k^*} \ge \opt(k^*) \ge g_{k^*+1}$,
we can show similarly that $c^*\in\Delta_{n-m}$ and
$$
\opt(k^*) = \max_{p\in\Delta_{1}} \obj(p, c^*) \geq \min_{c\in\Delta_{n-m}} \max_{p\in\Delta_1} \obj(p, c).$$
With these inequalities in hand, we apply the Minimax Theorem~\cite{neumann1928theorie}, which yields
$$
\opt(k^*) \leq 
\max_{p\in\Delta_1} \min_{c\in\Delta_{n-m}} \obj(p, c)
= \min_{c\in\Delta_{n-m}} \max_{p\in\Delta_1} \obj(p, c)
\leq \opt(k^*).
$$
Hence, $p^*$ does indeed obtain the optimal value, $\opt(k^*)$.
\end{proof}

\ignore{
\enote{I think I prefer the above proof sketch. It's shorter and nicer to look at. I don't think the extra details make us look any smarter.}
\begin{proof}[Proof sketch with a few more details]
Let $\obj(p,c) = \sum_i p_i (g_i c_i - (1-c_i)\ell_i)$. Substituting $p^*$ and rearranging, it is possible to show that $\obj(p^*, c) = \frac{1}{H_{k^*}}\sum_{i\le k^*} (c_i + G_{k^*} - k^*)$. The minimum is attained when $\sum_{i\leq k^*} c_i = k^* - m$, yielding the value $\frac{G_{k^*} - m}{H_{k^*}} = \opt(k^*)$. Since $p^*\in\Delta_1$, we have
$$
\opt(k^*) 
%= \frac{G_{k^*} - m}{H_{k^*}}  
= \min_{c\in\Delta_{n-m}} \obj(p^*, c) \leq \max_{p\in\Delta_1} \min_{c\in\Delta_{n-m}} \obj(p, c).
$$
On the other hand, let $c^* = \min \left(1, \frac{\opt(k^*) + \ell_i}{g_i+\ell_i}\right)$. With a little work, we can show the wonderful fact that
$g_{k^*} \ge \opt(k^*) \ge g_{k^*+1}$, which in turn gives us
$c^*\in\Delta_{n-m}$. Substituting and rearranging, we can arrive at $\obj(p, c^*) = \sum_{i\leq k^*} \opt(k^*) p_i + \sum_{i>k} g_i p_i$. Since (by our wonderful fact) $\opt(k^*) \ge g_{k^*+1}$, this quantity is maximized when $p_1 = 1$, yielding
$$
\opt(k^*) = \max_{p\in\Delta_{1}} \obj(p, c^*) \geq \min_{c\in\Delta_{n-m}} \max_{p\in\Delta_1} \obj(p, c).
$$
Finally, we apply the Minimax Theorem, which shows
$$
\opt(k^*) \leq 
\max_{p\in\Delta_1} \min_{c\in\Delta_{n-m}} \obj(p, c)
= \min_{c\in\Delta_{n-m}} \max_{p\in\Delta_1} \obj(p, c)
\leq \opt(k^*)
$$
Hence, $p^*$ does indeed obtain the maximum value, $\opt(k^*)$.
\end{proof}
}
\paragraph{RAD Loss} Equipped with Theorem~\ref{thm:minimax-max-gain}, all that remains is to specify the losses in~\eqref{prb:ours-general}. Prior work on confirmation bias has shown that even a small number misguided soft-labels can derail the student's performance and significantly impact its predictive capability~\cite{liu2021certainty}. Additionally, the harm of an incorrectly labeled point may be even more pronounced when the student is uncertain about that point. To model this, we consider instantiating our general problem formulation \eqref{prb:ours-general} with losses that are \emph{relative to the gain} $\ell_i = -w g_i$ for each $i \in [n]$ where $w \in [0,1]$ is a weight parameter that controls the magnitude of the penalization. This formulation purposefully leads to higher penalties for misclassified points that the student is already unsure about (high gain) to mitigate confirmation bias, and leads to the following optimization problem which is the focus of this paper
\begin{equation}
\label{prb:ours-negative}
\max \nolimits_{p \in \Delta_1} \min \nolimits_{c \in \Delta_{n-m}} \E \nolimits_{i \sim p} \left[g_i c_i - (1 - c_i) w g_i \right].
\end{equation}
Invoking Thm.~\ref{thm:minimax-max-gain} with the relative gain losses as described above immediately leads to the following, which, along with the choice of $w$ below, describes the algorithm \textsc{Rad} that we propose in this paper.
\begin{corollary}
\label{cor:negative}
An optimal solution $p^* \in \Delta_1$ to \eqref{prb:ours-negative} has $k$ non-zero entries $\II_{k^*} \subseteq [n]$ corresponding to the $k^*$ indices of the largest entries of $g$, with
$$
p^*_i = \frac{1}{g_i \sum_{j \in \II_{k^*}} g_j^{-1}} \quad \forall{i \in \II_{k^*}} \quad \text{where} \quad k^* = \argmax \nolimits_{k \in [n]} \frac{k - (1 + w)m}{ \sum_{j \in \II_k} g_j^{-1}}.
$$
The distribution $p^*$ can be computed in $\Theta(n \log n)$ time, with $\opt(k^*) = \nicefrac{(k^* - (1 + w)m)}{ \sum_{j \in \II_{k^*}} g_j^{-1}}$.
%$$\frac{k^* - (1 + w)m}{ \sum_{j \in \II_{k^*}} g_j^{-1}}.$$
\end{corollary}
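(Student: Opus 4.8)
The strategy is simply to instantiate Theorem~\ref{thm:minimax-max-gain} with the specific loss $\ell_i = w g_i$ and simplify. First I would substitute $\ell_i = w g_i$ into the quantities $G_k$ and $H_k$ from the theorem. For $G_k$ we get $G_k = \sum_{i\le k} g_i/(g_i + w g_i) = \sum_{i\le k} 1/(1+w) = k/(1+w)$, and for $H_k$ we get $H_k = \sum_{i\le k} 1/(g_i + w g_i) = \frac{1}{1+w}\sum_{i\le k} g_i^{-1}$. Next I would check that the theorem's hypothesis $G_n \ge m$ is implicitly what makes $k^*$ well-defined; here it reads $n/(1+w) \ge m$, and I would note that (with $w \in [0,1]$) this is a mild condition on the number of mislabeled points relative to $n$ — if it fails the adversary can corrupt everything and the problem is degenerate, so we assume it holds.

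Then I would plug these into the theorem's formula for $k^*$:
\[
k^* = \argmax_{k\in[n]} \frac{G_k - m}{H_k} = \argmax_{k\in[n]} \frac{k/(1+w) - m}{\frac{1}{1+w}\sum_{j\in\II_k} g_j^{-1}} = \argmax_{k\in[n]} \frac{k - (1+w)m}{\sum_{j\in\II_k} g_j^{-1}},
\]
where the common factor $1/(1+w)$ cancels between numerator and denominator, recovering exactly the stated expression (and $\II_k$ is just the index set $\{1,\dots,k\}$ of the $k$ largest gains, since $g$ is sorted). Similarly, the optimal probabilities become $p_i^* = \frac{1}{H_{k^*}(g_i + \ell_i)} = \frac{1}{\frac{1}{1+w}\big(\sum_{j\in\II_{k^*}} g_j^{-1}\big)\cdot (1+w) g_i} = \frac{1}{g_i \sum_{j\in\II_{k^*}} g_j^{-1}}$ for $i \in \II_{k^*}$ and $0$ otherwise, and the optimal value $\opt(k^*) = (G_{k^*}-m)/H_{k^*}$ simplifies the same way to $(k^* - (1+w)m)/\sum_{j\in\II_{k^*}} g_j^{-1}$.

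Finally, for the running time I would observe that Theorem~\ref{thm:minimax-max-gain} gives linear time \emph{assuming the gains are pre-sorted}; since the corollary does not assume sorted input, we incur an additional $\Theta(n\log n)$ sorting cost, which dominates, giving the claimed $\Theta(n\log n)$ total. I do not anticipate a genuine obstacle here — the only things to be careful about are (i) confirming that the cancellation of $(1+w)$ factors is legitimate (it is, since $w \in [0,1]$ so $1+w > 0$), and (ii) making sure the hypothesis $G_n \ge m$ of the theorem is stated or assumed in the corollary's setting so that $\argmax$ is over a nonempty feasible set. Everything else is direct substitution.
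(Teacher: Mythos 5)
Your proposal is correct and follows essentially the same route as the paper, which obtains the corollary by directly instantiating Theorem~\ref{thm:minimax-max-gain} with $\ell_i = w g_i$, cancelling the common $1/(1+w)$ factors in $G_k$ and $H_k$, and attributing the $\Theta(n\log n)$ cost to the initial sort of the gains. Your added remarks on the feasibility condition $G_n \ge m$ (here $n \ge (1+w)m$, which in particular holds for the paper's default $w = 1 - m/n$) are a sensible bit of extra care but do not change the argument.
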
 
\rev{
\paragraph{Choice of $w$} Although \textsc{Rad} can be applied with any user-specified choice of $w$, we use the theoretically-motivated weight of $w = (1 - m/n)$ as the relative penalization constant in our experiments. This choice of $w$ guarantees that the optimal value (expected gain) of~\eqref{prb:ours-negative} (see Corollary~\eqref{cor:negative}) is non-negative --- if the expected gain were negative, we would be better off not sampling at all. \emph{This default value for $w$ makes \textsc{Rad} parameter-free.} Extensive empirical evaluations with varying values of $w$ are presented in  Sec.~\ref{sec:supp-robustness-w} of the Appendix.
}
%\textsc{Rad}'s robustness to the choice of $w$ and support the effectiveness of $w = 1 - m/n$ as the default value (see Sec.~\ref{sec:supp-robustness-w} of the Appendix).}

\begin{figure*}[ht!]
  \centering
  \begin{minipage}[t]{0.49\textwidth}
  \centering
 \includegraphics[width=1\textwidth]{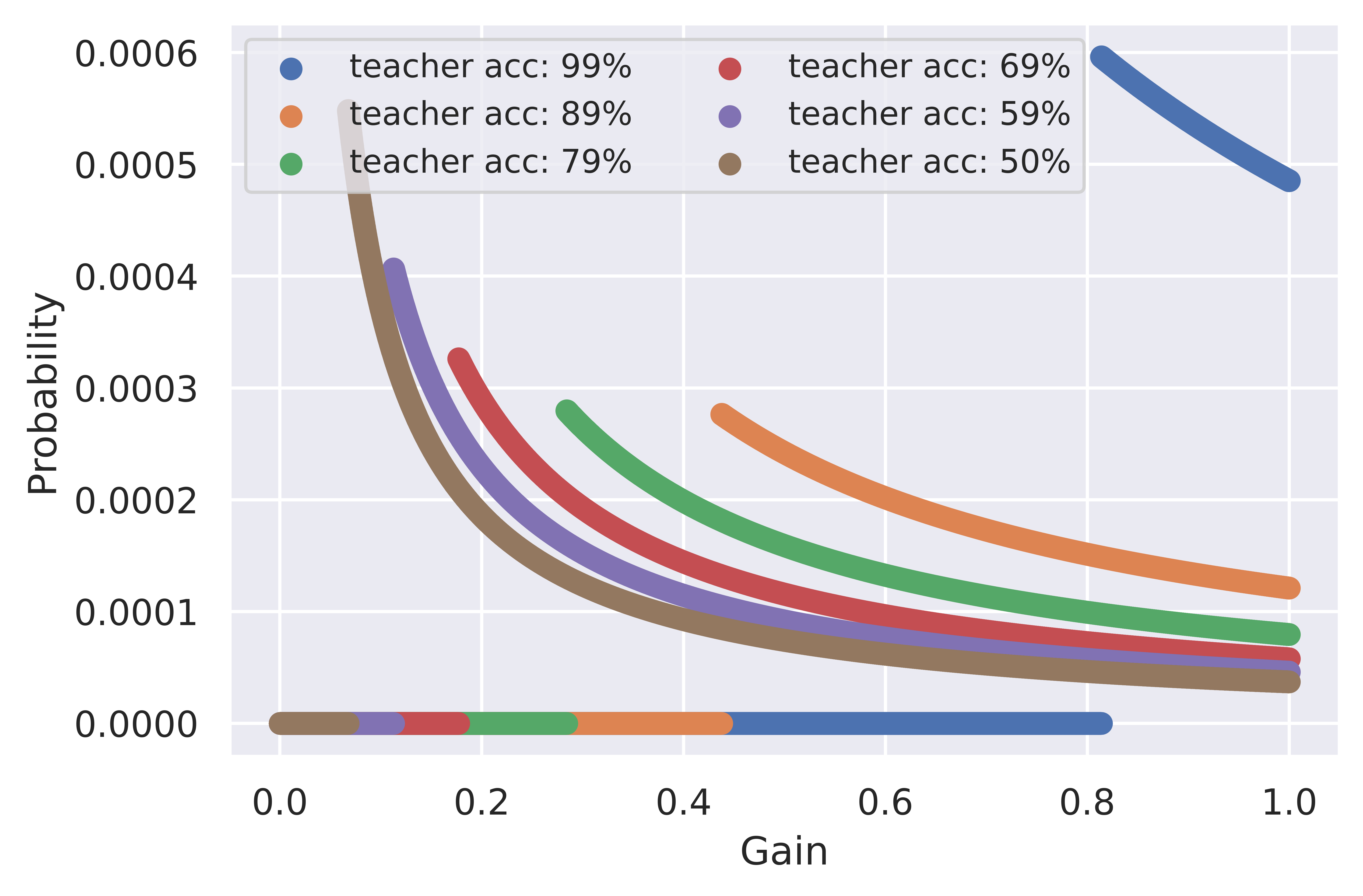}
  \end{minipage}%
  \hfill
  %\hspace{3px}
  \begin{minipage}[t]{0.49\textwidth}
  \centering
   \includegraphics[width=1\textwidth]{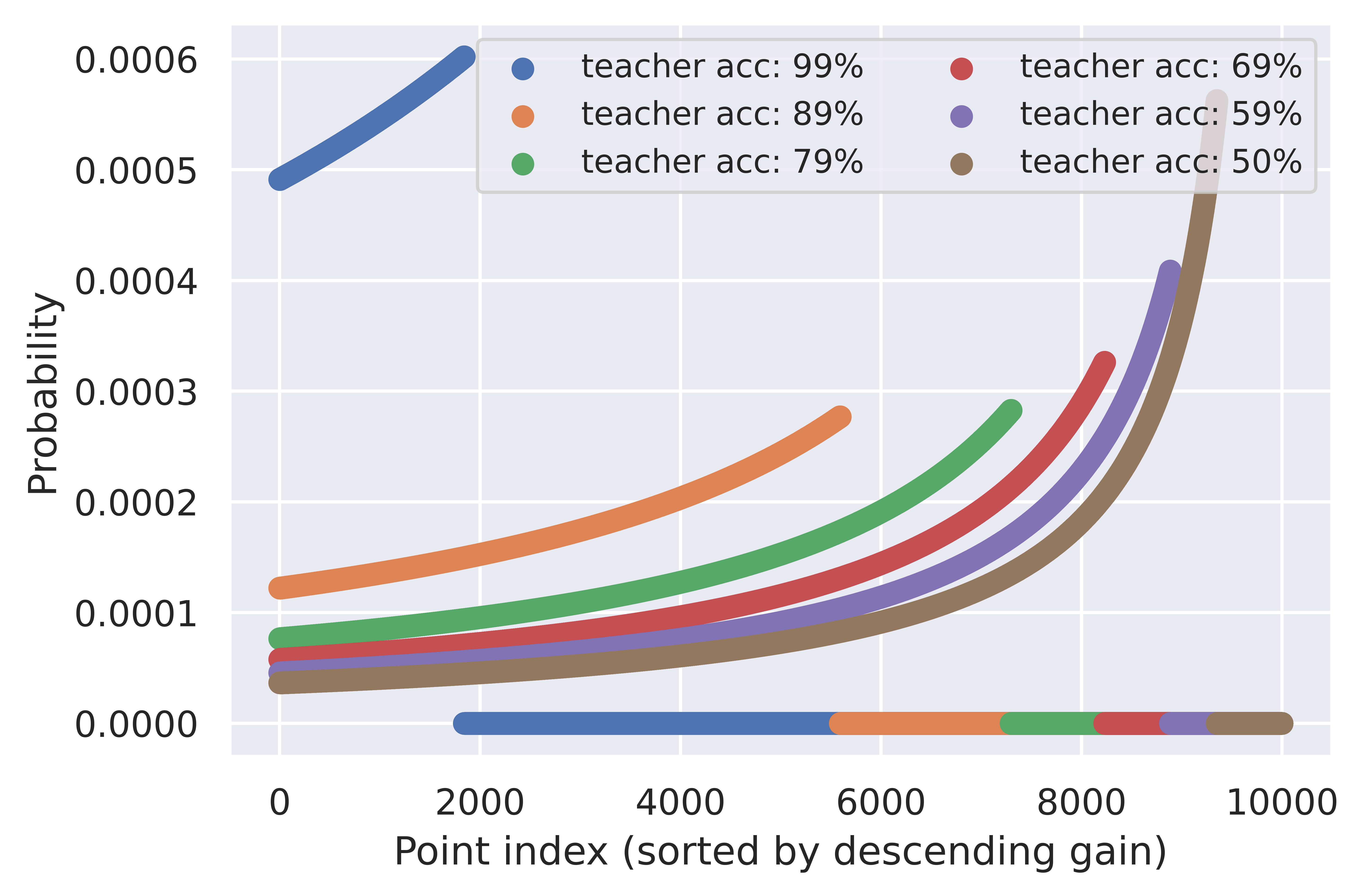}
  \end{minipage}%

\caption{The optimal solution to \eqref{prb:ours-negative} given by Corollary~\ref{cor:negative} for various values for teacher accuracy (varying $m$). Even when most of the samples are labeled correctly by the teacher, our method is purposefully cautious and does not allocate all of the probability (of sampling) mass on the highest gain points.}
\label{fig:sampling-distribution}
\end{figure*}

We observe several favorable properties of \textsc{Rad}'s sampling distribution in Fig.~\ref{fig:sampling-distribution}, which depicts the computed distribution on a synthetic scenario with gains drawn uniformly at random from $[0,1]$.  For one, the sampling distribution tends to allocate less probability mass to the highest gain items. As prior work has shown, this is desirable because the hardest (highest gain) examples tend to be outliers or points with noisy labels~\citep{mindermann2022prioritized,ren2018learning}. In fact, robust learning approaches typically downweight hard examples for this reason~\cite{kumar2010self}, analogous to RAD's sampling behavior. At the same time, \citet{paul2021deep} show that the easiest (lowest gain) examples tend to be truly uninformative and the best strategy is to ignore a certain fraction of the highest and lowest gain points. This strategy parallels \textsc{Rad}'s computed distribution, where a number of low gain points are ignored and the probability peaks around a region inbetween (see Fig.~\ref{fig:sampling-distribution}). A prominent benefit of \textsc{Rad} is that this region is computed in a fully automated way as a function of the teacher's accuracy (i.e., amount of label noise). If the teacher is highly accurate, the distribution accordingly concentrates on the highest gain points (blue, Fig.~\ref{fig:sampling-distribution}); otherwise, it spreads out the probabilities over a larger portion of the points and purposefully assigns lower sampling probability to the highest gain points which are likely to be noisy (e.g., brown, Fig.~\ref{fig:sampling-distribution}).
%whereas it automatically spreads out the distribution and assigns a lower 
%In this regard, our formulation and analysis theoretically justify various heuristic thresholding schemes as a corollary~\cite{citovsky2021batch}.

%We observe that, contrary to standard margin, our sampling distribution tends to allocate less probability mass to the highest gain items. On the other hand, like standard margin, it completely avoids sampling (i.e., assigns $0$ probability to) a large portion of low-gain points based on a certain threshold that is a function of the teacher's accuracy (reliability). Notably, the optimal solution to \eqref{prb:ours-negative} is also reminiscent of a popular active learning strategy, where the top-$2b$ points with highest gain are selected and among them $b$ points are sampled uniformly at random~\cite{citovsky2021batch,balcan2007margin}. In this regard, our formulation and analysis leads to the theoretical justification of this scheme as a corollary. \cnote{TODO: elaborate on how this improves on and explains heuristics used in practice. See also~\cite{ren2018learning}}

\subsection{Implementation Details}
We conclude this section by outlining the practical details of \textsc{Rad}. \rev{We follow the setting of this section and set $\gain = 1 - \margin$ to define the gains of each point (see Sec.~\ref{sec:supp-robustness-gain} of the Appendix for evaluations with a differing gain definition).} In practice, we use the distribution $q = \min\{b p, 1\}$ when sampling a set of $b$ points, where $p$ is from Corollary~\ref{cor:negative}. This is optimal as long as the probabilities from Corollary~\ref{cor:negative} (or more generally, Theorem~\ref{thm:minimax-max-gain}) are not heavily concentrated on a few points (i.e., $\max_{i \in [n]} p_i \leq 1/b$). As exemplified in Fig.~\ref{fig:sampling-distribution} and experimentally verified in Sec.~\ref{sec:results} in all of the evaluated scenarios, we found this to virtually always be the case. Alternatively, the acquisition size $b$ can be adjusted after the single-sample probability distribution is computed so that $b \leq \min_{i \in [n]} 1 / p_i$. Since the number of mistakes the teacher makes on $\XX_B$, $m$, is not known to us in the semi-supervised setting, we approximate this quantity by first taking the sample mean inaccuracy of a small uniform random sample of $b_\mathrm{uniform}$ (see Appendix, Sec.~\ref{sec:supp-experimental-details}) points as a way to approximate $m$ and bootstrap our approach. We then use our approach with $b' = b - b_\mathrm{uniform}$ as described above. By Bernstein's inequality~\cite{bernstein1924modification}, this weighted estimate tightly concentrates around the mean, which in turn implies a high-quality approximation of $m$. We theoretically analyze the effect of an approximate $m$ on the quality of the optimal solution in Sec.~\ref{sec:supp-analysis} of the Appendix (see Lemmas~\ref{lem:supp-approx-m} and~\ref{lem:supp-estimating-m}). 

%(i) queries the minimum number of softlabels from the teacher as possible to minimize computational and financial costs and (ii) ensure the obtained softlabels are accurate and not misleading for the student. 

%\input{sec/analysis}

\section{Results}
\label{sec:results}
We apply our sample selection algorithm, \textsc{Rad}, to benchmark vision data sets and evaluate its performance in generating high-performance student models on a diverse set of knowledge distillation scenarios. We compare the performance of \textsc{Rad} to the following: (i) \textsc{Margin}~\cite{balcan2007margin,roth2006margin} as described in Sec.~\ref{sec:rad}, (ii) \textsc{Uniform}, (iii) \textsc{Cluster Margin} (labeled \textsc{CM}), a state-of-the-art active learning technique~\cite{citovsky2021batch}, (iv) \textsc{Coreset}, a popular clustering-based active learning algorithm~\cite{sener2017active}, (v) \textsc{Entropy}, a greedy approach that picks the points with highest student prediction entropy~\cite{holub2008entropy}, and (vi) \textsc{UNIXKD}~\cite{xu2020computation}, a state-of-the-art active distillation approach based on mix-up~\cite{zhang2017mixup}. We implemented all algorithms in Python and used the TensorFlow~\cite{tensorflow2015-whitepaper} deep learning library. We used the hyperparameters specified in the respective papers for all of the compared approaches. For \textsc{Rad}, we use the theoretically-derived setting of $w = 1 - m/n$ as specified in Sec.~\ref{sec:rad} and emphasize that this makes \textsc{Rad} fully parameter-free. 
%We emphasize that \textsc{Rad} is fully parameter-free with the default $w = 1 - m/n$ (see Sec.~\ref{sec:rad}).

%

In Sec.~\ref{sec:supp-results} of the Appendix, we present: the full set of hyper-parameters and experimental details (Sec.~\ref{sec:supp-experimental-details}); additional evaluations that report statistics beyond test accuracy (Sec.~\ref{sec:supp-beyond-test-acc}); applications of \textsc{Rad} to the standard active learning setting and comparisons to SOTA approaches (Sec.~\ref{sec:supp-active-learning}); experiments with varying $w$ and gain definition to evaluate the robustness of \textsc{Rad} (Sec.~\ref{sec:supp-robustness-w} and~\ref{sec:supp-robustness-gain}, respectively); comparisons on a diverse set of knowledge distillation configurations (Sec.~\ref{sec:supp-varying-configurations}). Overall, our empirical evaluations show that \textsc{Rad} uniformly improves on state-of-the-art baselines and demonstrate its off-the-shelf effectiveness without the need to tune or change any hyperparameters.

%Additional experimental results that cover an even wider range of problem instances can be found in Sec.~\ref{sec:supp-results} of the Appendix.

\subsection{CIFAR10, CIFAR100, \& SVHN}
\paragraph{Setup} We use ResNet~\cite{resnet}, ResNetV2-$\{11, 20, 29\}$~\cite{resnetvtwo}, or MobileNet~\cite{mobilenet} with a depth multiplier of 1 as the student and ResNet-50, ResNetV2-$\{56, 110\}$, or MobileNet with a depth multiplier of 2 as the teacher model. We considered the CIFAR10/CIFAR100~\cite{cifar}, SVHN~\cite{svhn}, and ImageNet~\cite{imagenet} data sets. Unless otherwise specified, we use the Adam optimizer~\cite{kingma2014adam} with a batch size of $128$ with data set specific learning rate schedules. We follow the active distillation setting shown in Alg.~\ref{alg:active-distillation} with various configurations. We use 64 Cloud TPU v4s each with two cores.
%We considered a subsample of size $100$ to approximate the value of $m$ for \textsc{Rad} as specified by our analysis in Sec.~\ref{sec:supp-analysis} of the Appendix.
The full set of hyper-parameters and experimental details can be found in Sec.~\ref{sec:supp-results} of the Appendix.

\paragraph{Configurations} We experimented with a diverse set of configurations for the knowledge distillation task. We reported the specific configuration for each plot as part of the plot's title (e.g., see Fig.~\ref{fig:acc}). In context of the variables in the plot heading, we varied the number of epochs that the student is trained for (denoted as $e$), the size of the initial set of labeled points ($|A|$), the number of soft-labels to query per iteration ($b$), and the teacher model ($t$); \emph{resnet} in the configuration refers to ResNetV2-20 as the student and ResNet-50 as the teacher unless otherwise specified. All results were averaged over 10 trials unless otherwise stated. For each trial, we reshuffled the entire data set and picked a random portion (of size $|A|$) to be the labeled data set $\XX_L$, and considered the rest to be the unlabeled set $\XX_U$. 

\begin{figure*}[htb!]

%% FIRST ROW
    \begin{minipage}[t]{0.25\textwidth}
  \centering
 \includegraphics[width=1\textwidth]{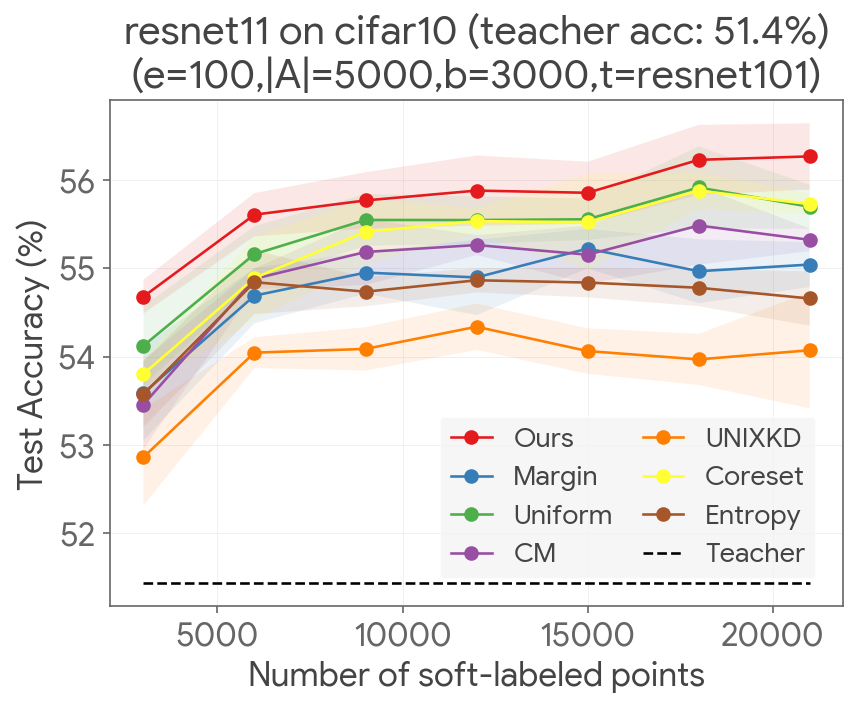}
 %\subcaption{$$}
  \end{minipage}%
  \centering
  \begin{minipage}[t]{0.25\textwidth}
  \centering
 \includegraphics[width=1\textwidth]{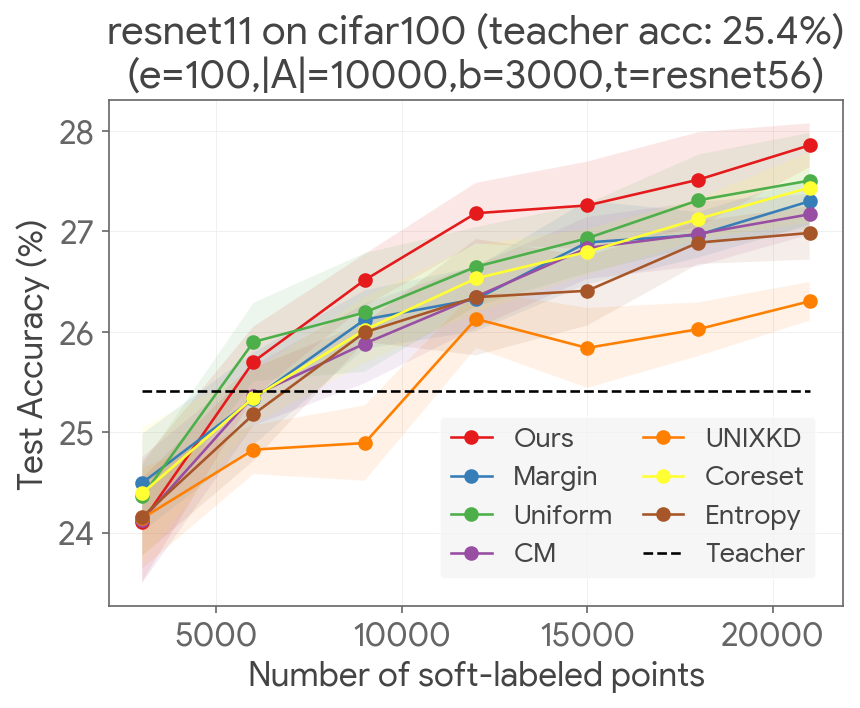}
  \end{minipage}%
      \begin{minipage}[t]{0.25\textwidth}
  \centering
 \includegraphics[width=1\textwidth]{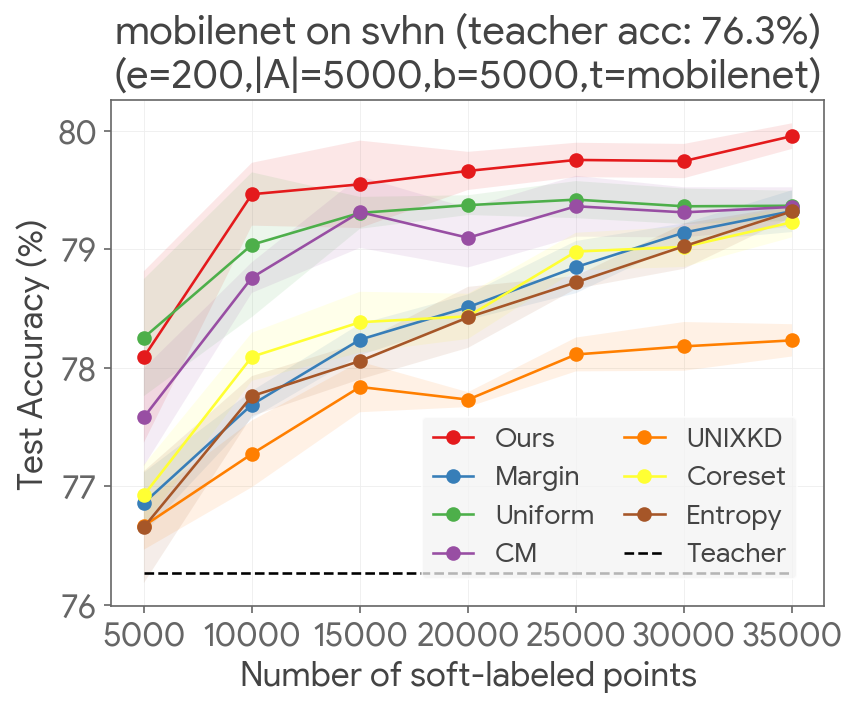}
 %\subcaption{$$}
  \end{minipage}%
  \begin{minipage}[t]{0.25\textwidth}
  \centering
 \includegraphics[width=1\textwidth]{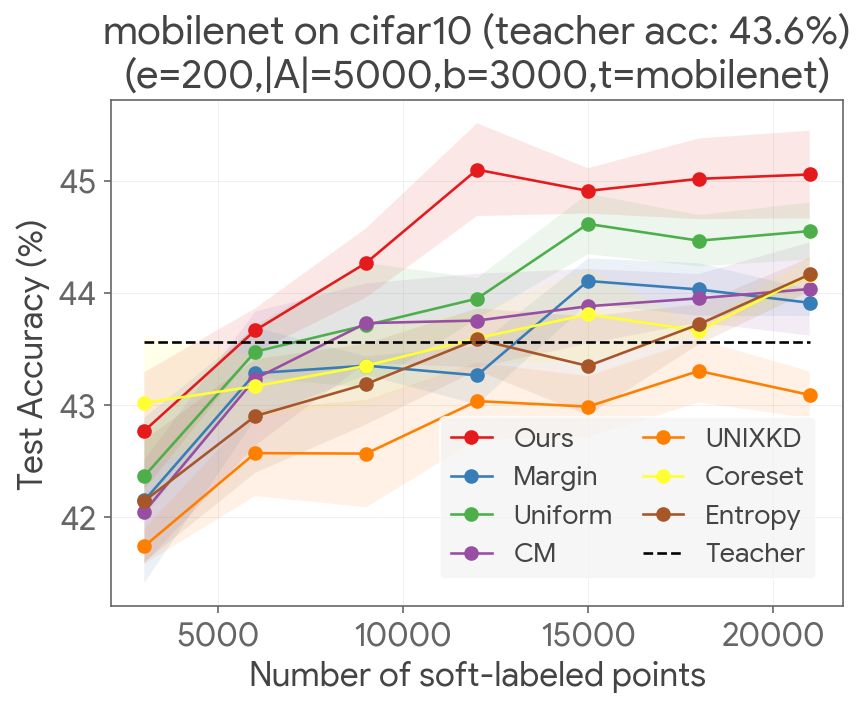}
 %\subcaption{$$}
  \end{minipage}%
  
  %% SECOND ROW
        \begin{minipage}[t]{0.26\textwidth}
  \centering
 \includegraphics[width=1\textwidth]{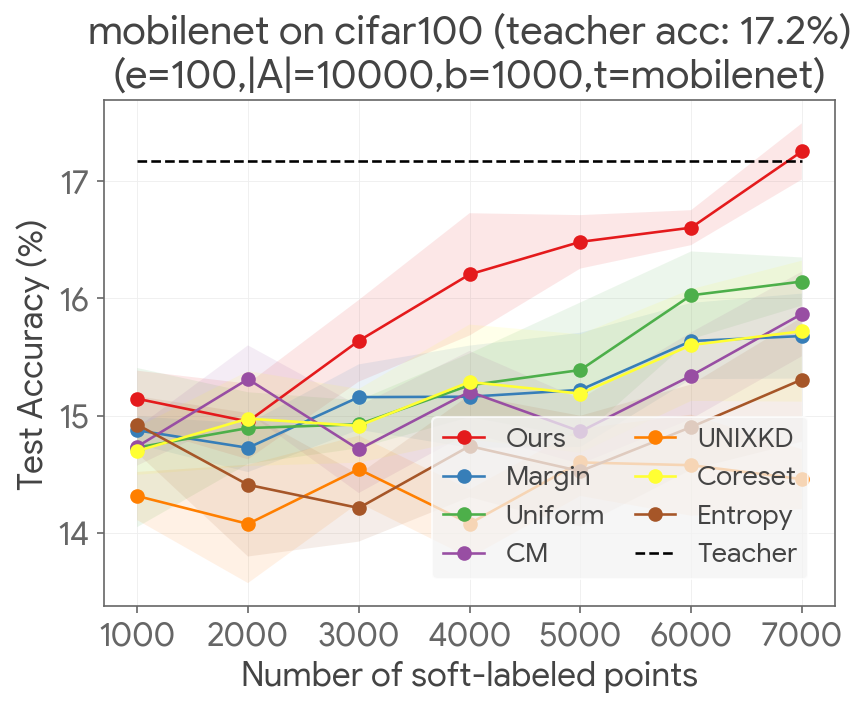}
 %\subcaption{$$}
  \end{minipage}%
  \begin{minipage}[t]{0.25\textwidth}
  \centering
 \includegraphics[width=1\textwidth]{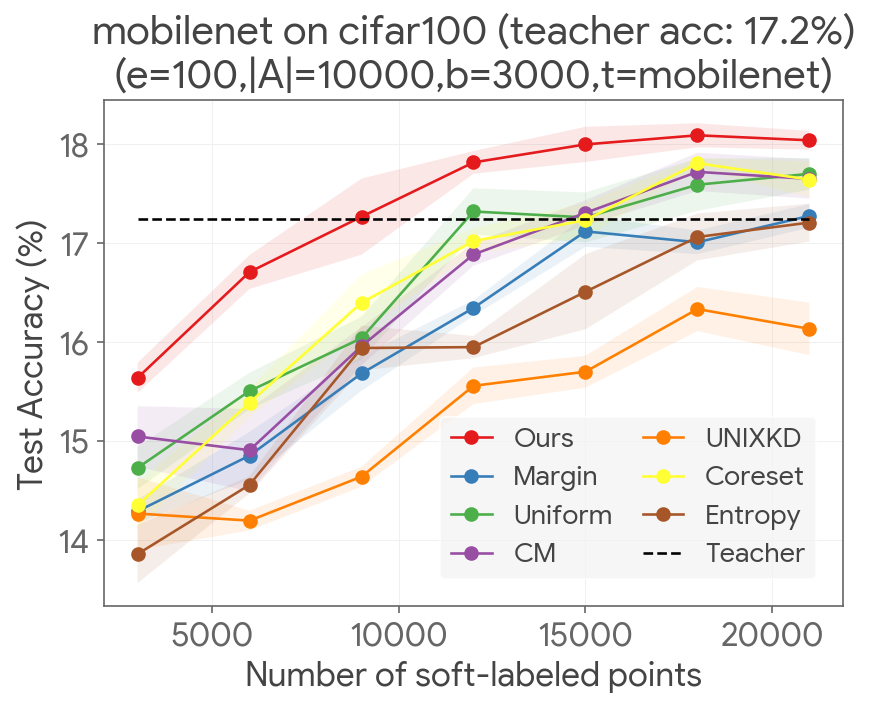}
 %\subcaption{$$}
  \end{minipage}%
      \begin{minipage}[t]{0.25\textwidth}
  \centering
 \includegraphics[width=1\textwidth]{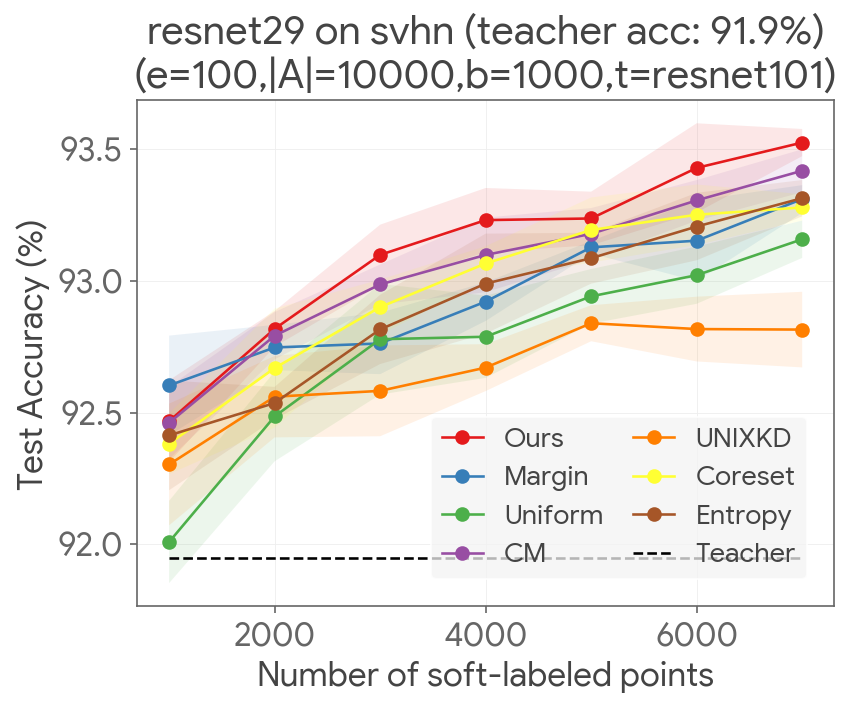}
 %\subcaption{$$}
  \end{minipage}%
  \centering
  \begin{minipage}[t]{0.25\textwidth}
  \centering
 \includegraphics[width=1\textwidth]{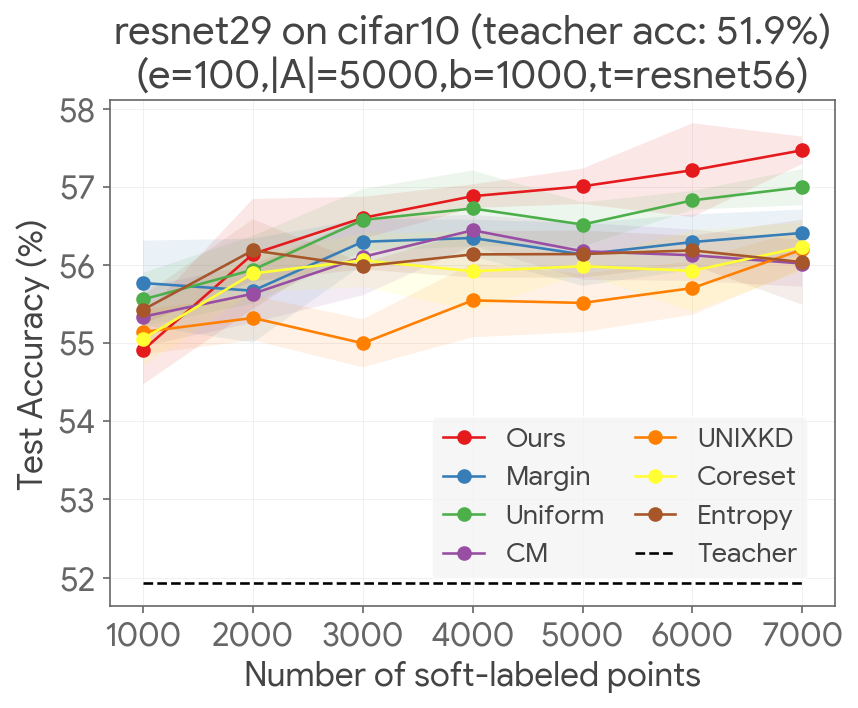}
  \end{minipage}%

  %% THIRD ROW
        \begin{minipage}[t]{0.25\textwidth}
  \centering
 \includegraphics[width=1\textwidth]{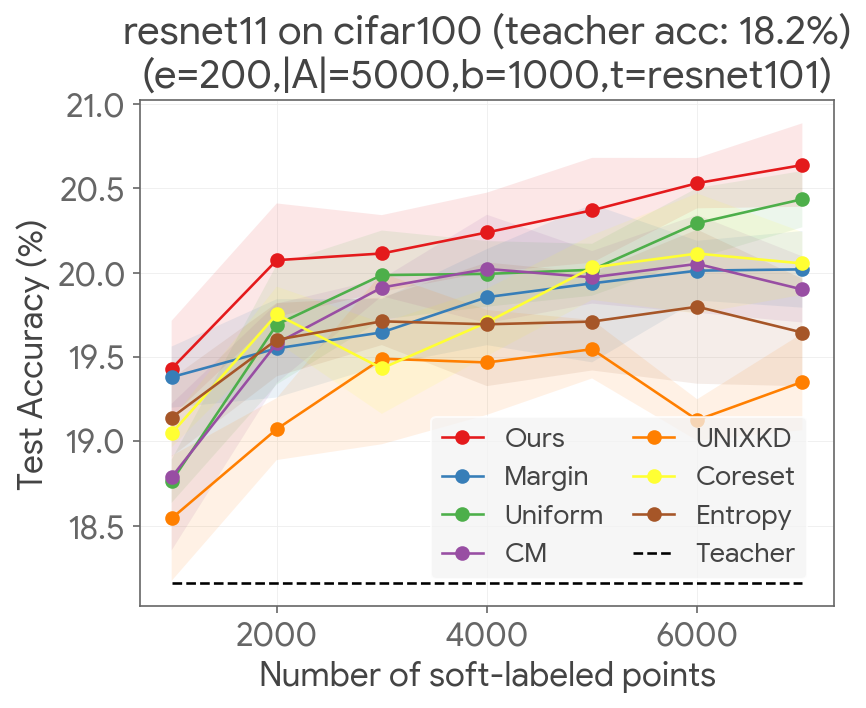}
 %\subcaption{$$}
  \end{minipage}%
  \centering
  \begin{minipage}[t]{0.25\textwidth}
  \centering
 \includegraphics[width=1\textwidth]{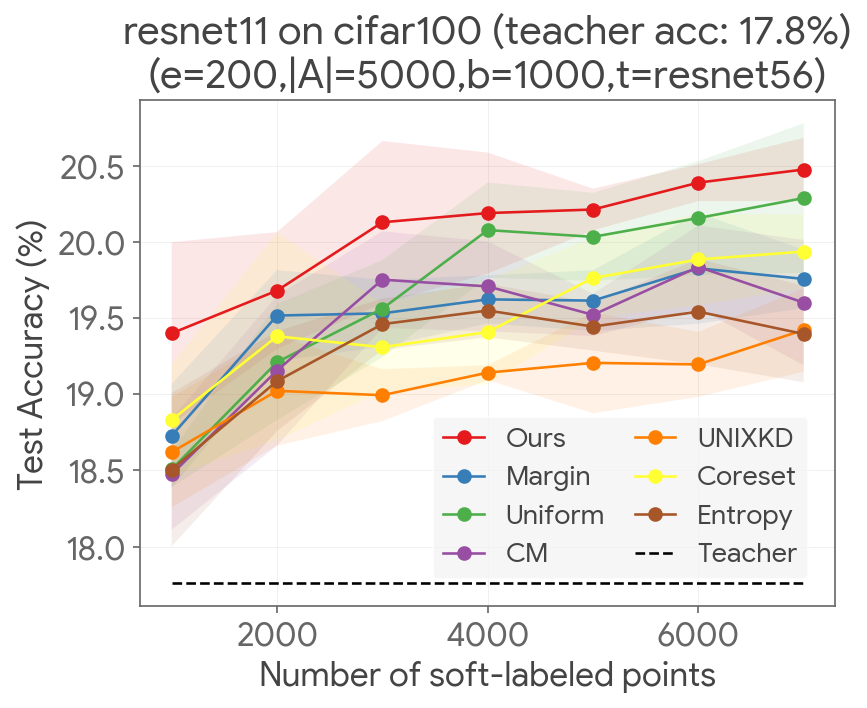}
  \end{minipage}%
      \begin{minipage}[t]{0.25\textwidth}
  \centering
 \includegraphics[width=1\textwidth]{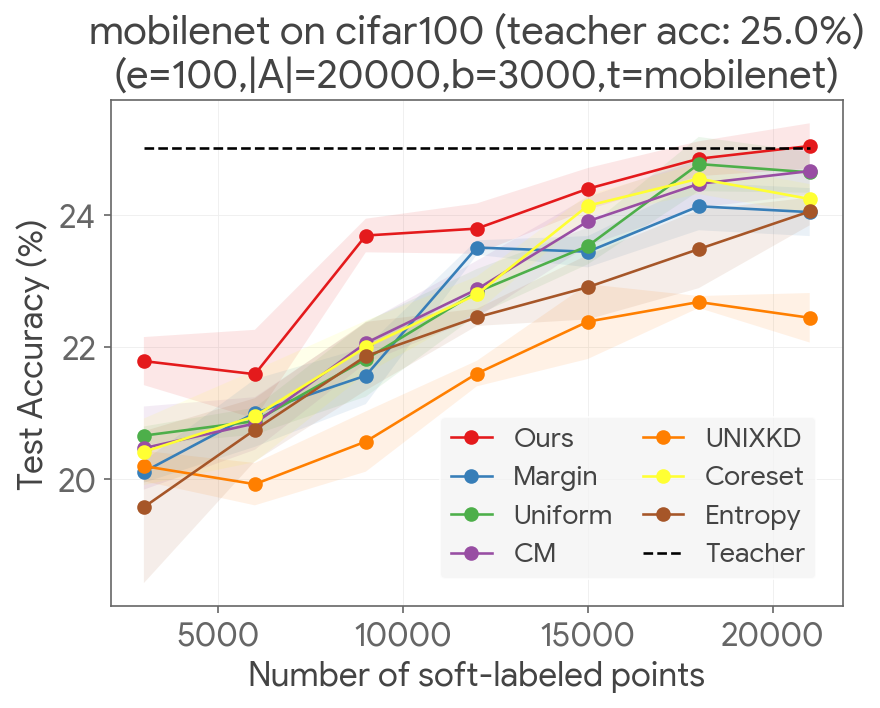}
 %\subcaption{$$}
  \end{minipage}%
  \begin{minipage}[t]{0.25\textwidth}
  \centering
 \includegraphics[width=1\textwidth]{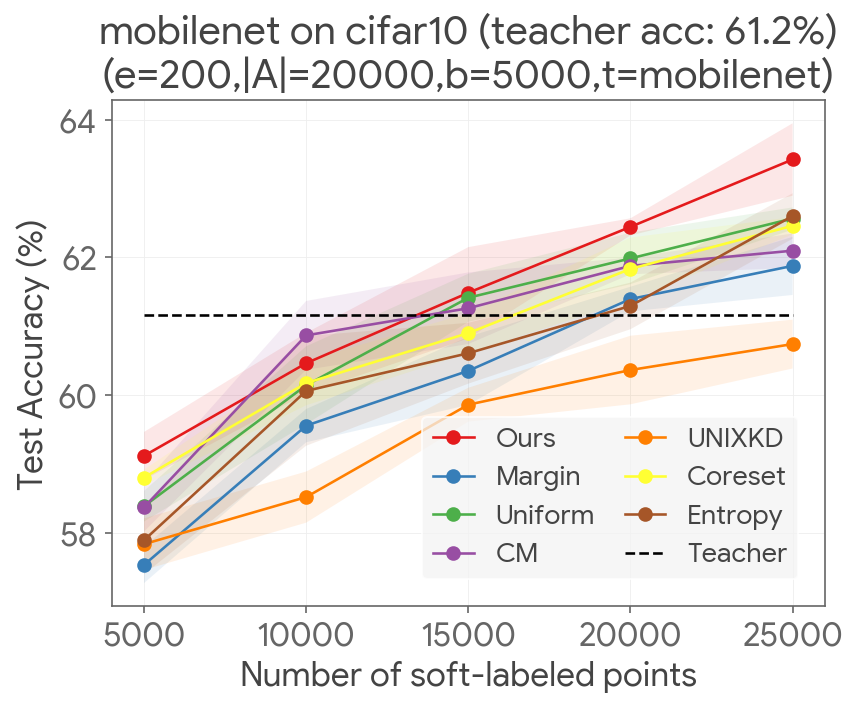}
 %\subcaption{$$}
  \end{minipage}%
  
  %% FOURTH ROW
        \begin{minipage}[t]{0.25\textwidth}
  \centering
 \includegraphics[width=1\textwidth]{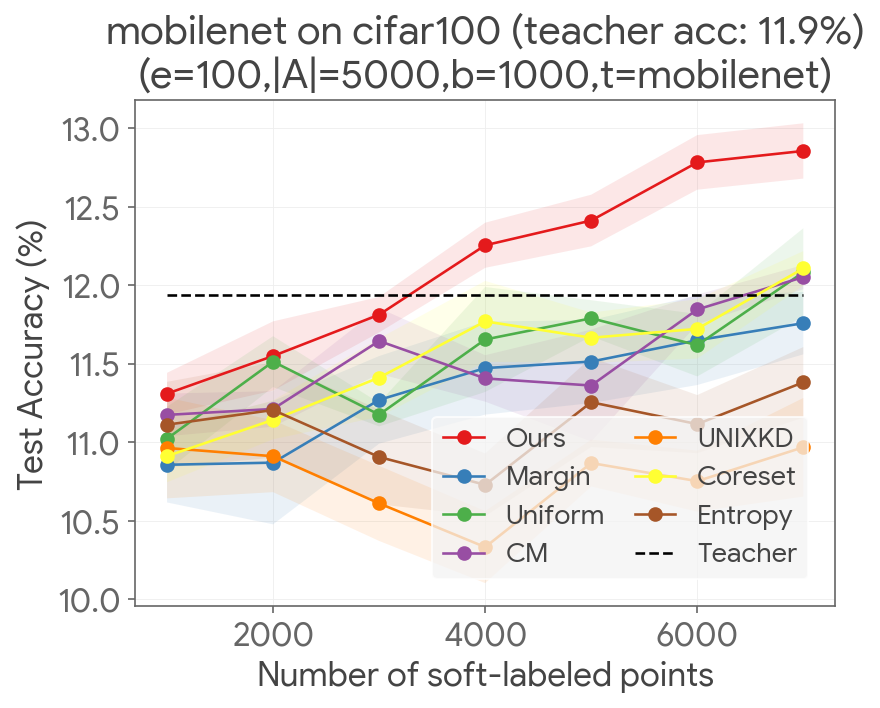}
 %\subcaption{$$}
  \end{minipage}%
  \begin{minipage}[t]{0.25\textwidth}
  \centering
 \includegraphics[width=1\textwidth]{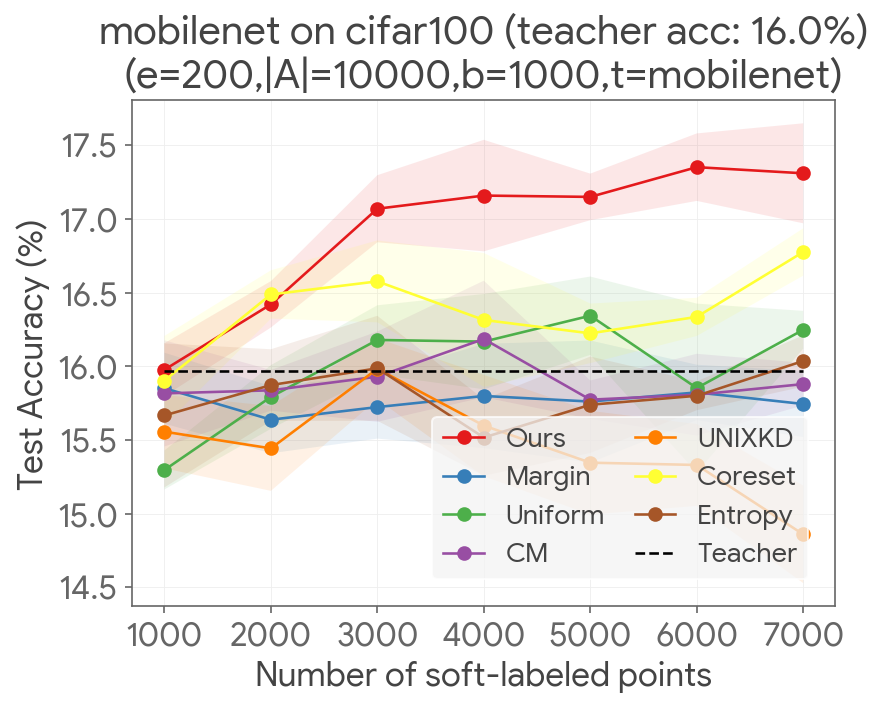}
 %\subcaption{$$}
  \end{minipage}%
          \begin{minipage}[t]{0.25\textwidth}
  \centering
 \includegraphics[width=1\textwidth]{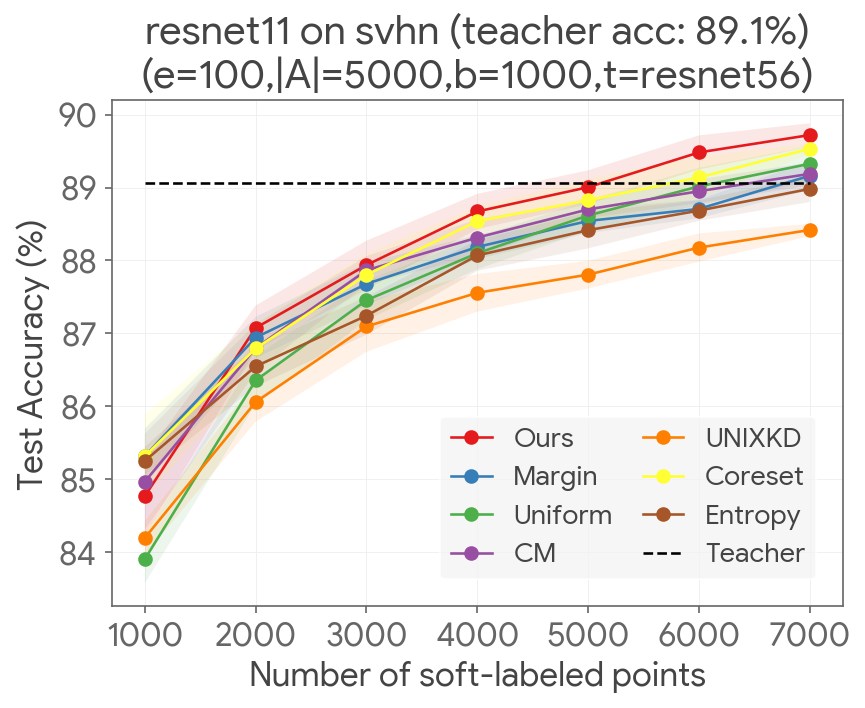}
 %\subcaption{$$}
  \end{minipage}%
  \centering
  \begin{minipage}[t]{0.25\textwidth}
  \centering
 \includegraphics[width=1\textwidth]{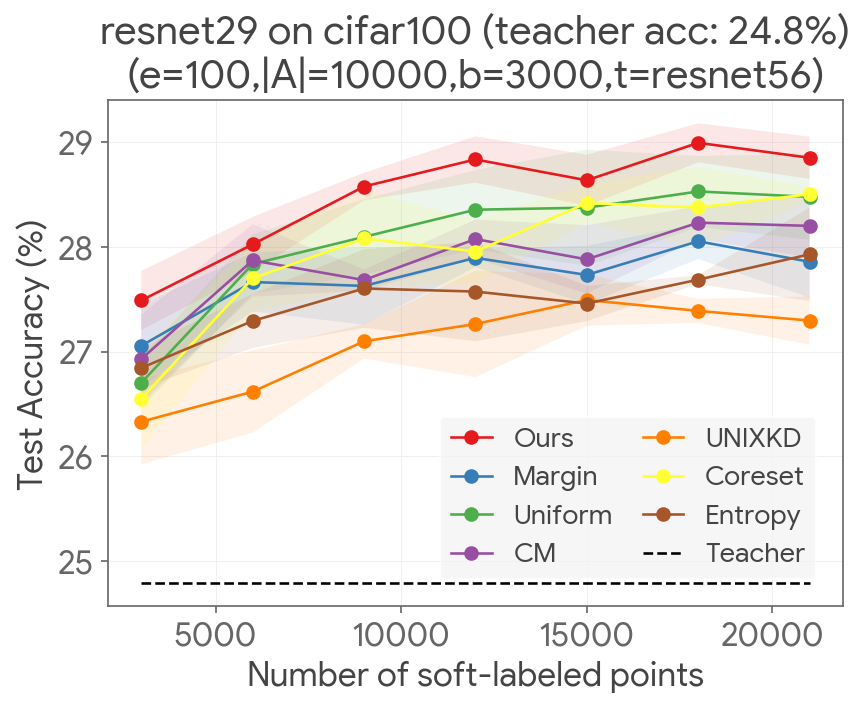}
  \end{minipage}%

  %% FIFTH ROW
            \begin{minipage}[t]{0.25\textwidth}
  \centering
 \includegraphics[width=1\textwidth]{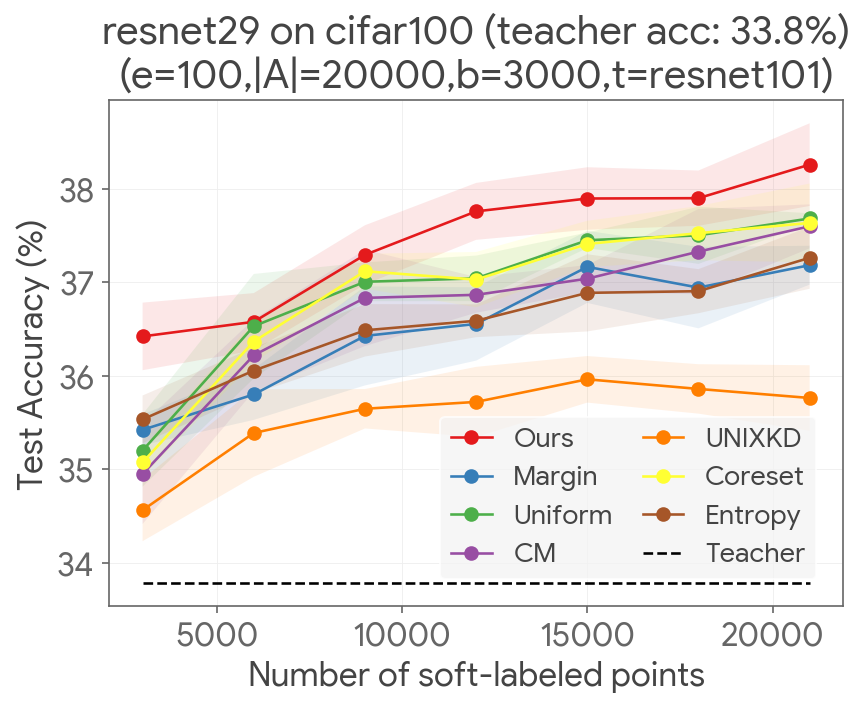}
 %\subcaption{$$}
  \end{minipage}%
  \centering
  \begin{minipage}[t]{0.25\textwidth}
  \centering
 \includegraphics[width=1\textwidth]{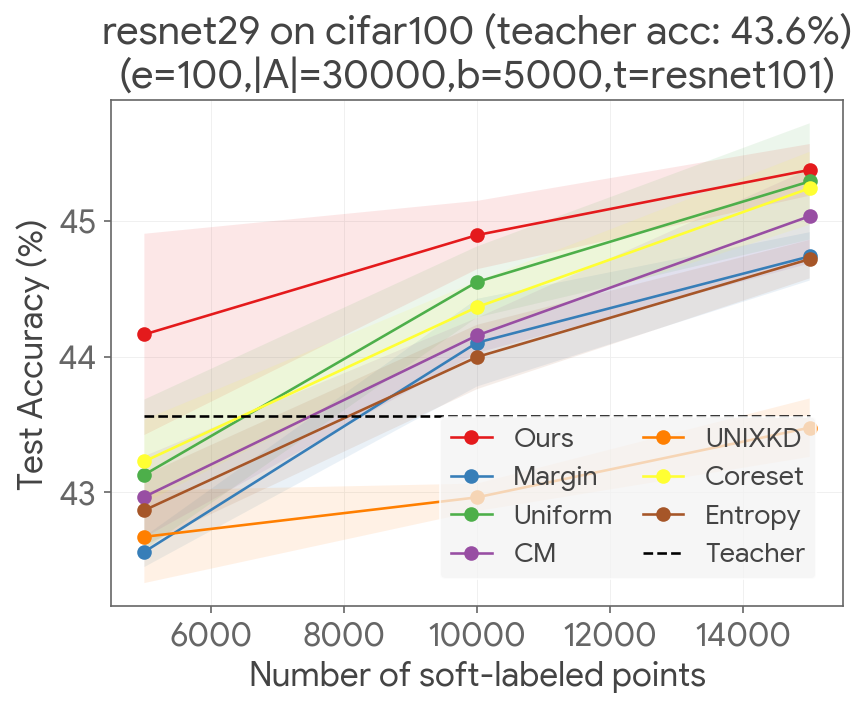}
  \end{minipage}%
      \begin{minipage}[t]{0.25\textwidth}
  \centering
 \includegraphics[width=1\textwidth]{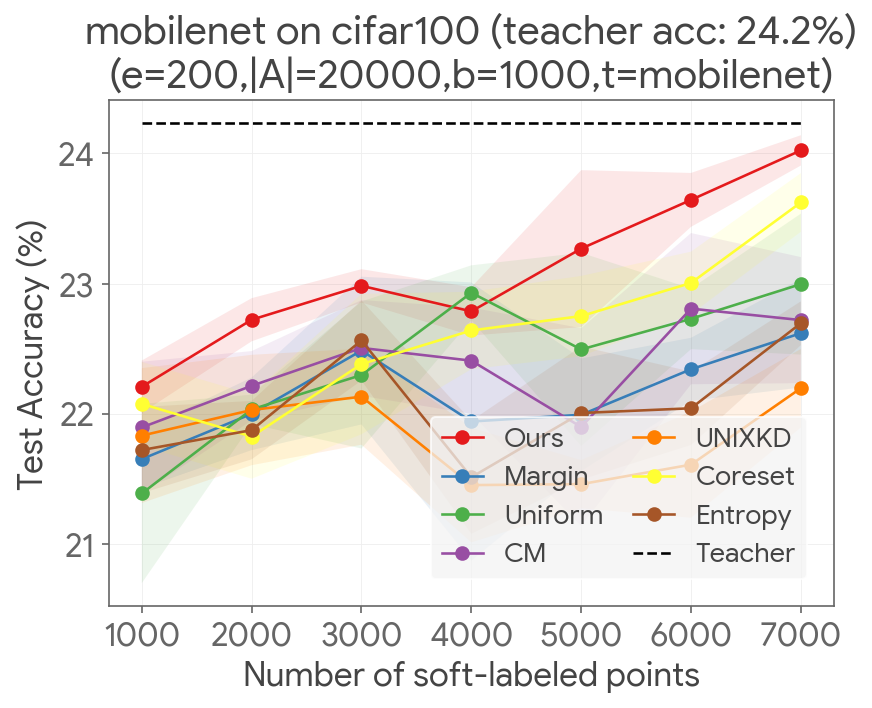}
 %\subcaption{$$}
  \end{minipage}%
  \begin{minipage}[t]{0.25\textwidth}
  \centering
 \includegraphics[width=1\textwidth]{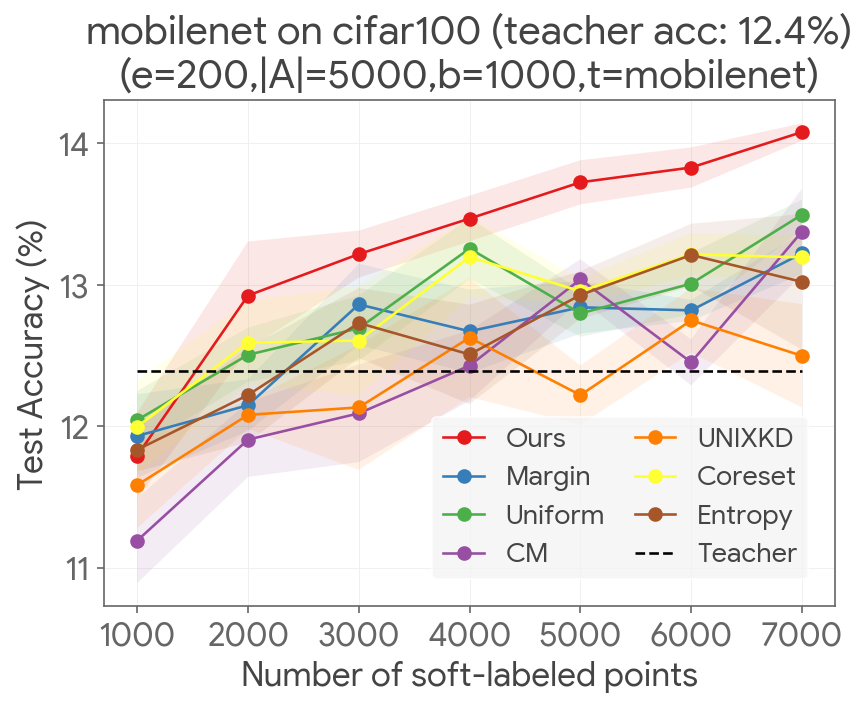}
 %\subcaption{$$}
  \end{minipage}%

  % \textsc{Rad} consistently outperforms competing approaches in obtaining higher performing student models for a given teacher soft-labeling budget. 
		\caption{\rev{Evaluations of \textsc{Rad}, state-of-the-art active learning, and active distillation strategies on a diverse set of distillation configurations with varying data sets and network architectures. \textsc{Rad} consistently outperforms competing approaches. Shaded regions correspond to values within one standard deviation of the mean.}}
	\label{fig:acc}
% 	\vspace{-3ex}
\end{figure*}

In the first set of experiments, we evaluate the effectiveness of each method in generating high-accuracy student models subject to a soft-labeling budget on CIFAR10, CIFAR100, and SVHN data sets with ResNet(v2) and MobileNet architectures of varying sizes. CIFAR10 contains $50,000$ images of size $32\times 32$ with $10$ categories, CIFAR100 has $50,000$ $32\times 32$ images with $1000$ labels, and SVHN consists of $73,257$ real-world images ($32 \times 32$) taken from Google Street View. Fig.~\ref{fig:acc} depicts the results of our evaluations on a diverse set of knowledge distillation scenarios with varying configurations. We observe a consistent and marked improvement in the student model's predictive performance when our approach is used to actively select the points to be soft-labeled by the teacher. %The absolute increase in student accuracy when selecting points to soft-label by \textsc{Rad} can exceed $2\%$ in some scenarios.
This improvement is often present from the first labeling iteration and persists continuously over the active distillation iterations.

We observe that \textsc{Rad} performs particularly well relative to baselines regardless of the teacher's accuracy. For instance, we see significant improvements with \textsc{Rad} when distilling from a MobileNet teacher on CIFAR100, which has relatively low accuracy (see corresponding plots in Fig.~\ref{fig:acc}). This observation suggests that the explicit consideration of possible teacher inaccuracy is indeed helpful when distilling from a teacher that may be prone to making mistakes. At the same time, we observe that \textsc{Rad} outperforms state-of-the-art active learning algorithms such as Cluster Margin (CM) and others (\textsc{Margin}, \textsc{Entropy}) -- which do not explicitly consider label noise in the form of incorrect teacher soft-labels -- even in instances where the teacher accuracy is as high as $\approx 92\%$ (see SVHN plots in Fig.~\ref{fig:acc}). These observations support \textsc{Rad}'s ability to automatically adapt its sampling distribution to the applied scenario based on the approximated teacher inaccuracy.

% there are only a few incorrectly labeled points, e.g., on SVHN where the teacher's accuracy is $\approx90\%$. In fact, on SVHN the absolute improvement in test accuracy of the model generated by \textsc{Rad} is nearly $2\%$ relative to those of \textsc{Standard Margin} and \textsc{Uniform Sampling}.

\begin{figure*}[htb!]

  \centering
  
  \begin{minipage}[t]{0.34\textwidth}
  \centering
 \includegraphics[width=1\textwidth]{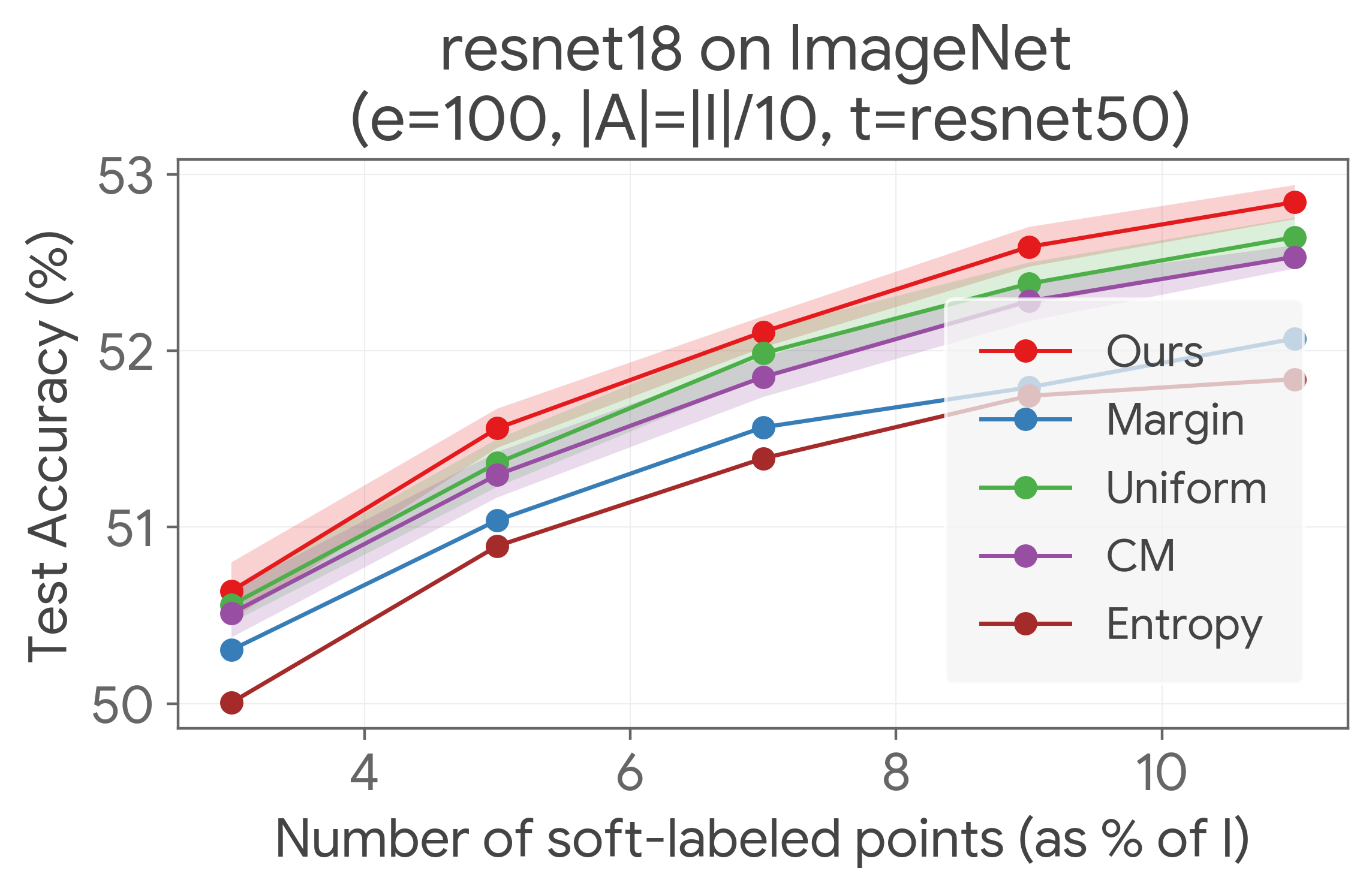}
  \end{minipage}%
  %\hfill
  \begin{minipage}[t]{0.34\textwidth}
  \centering
 \includegraphics[width=1\textwidth]{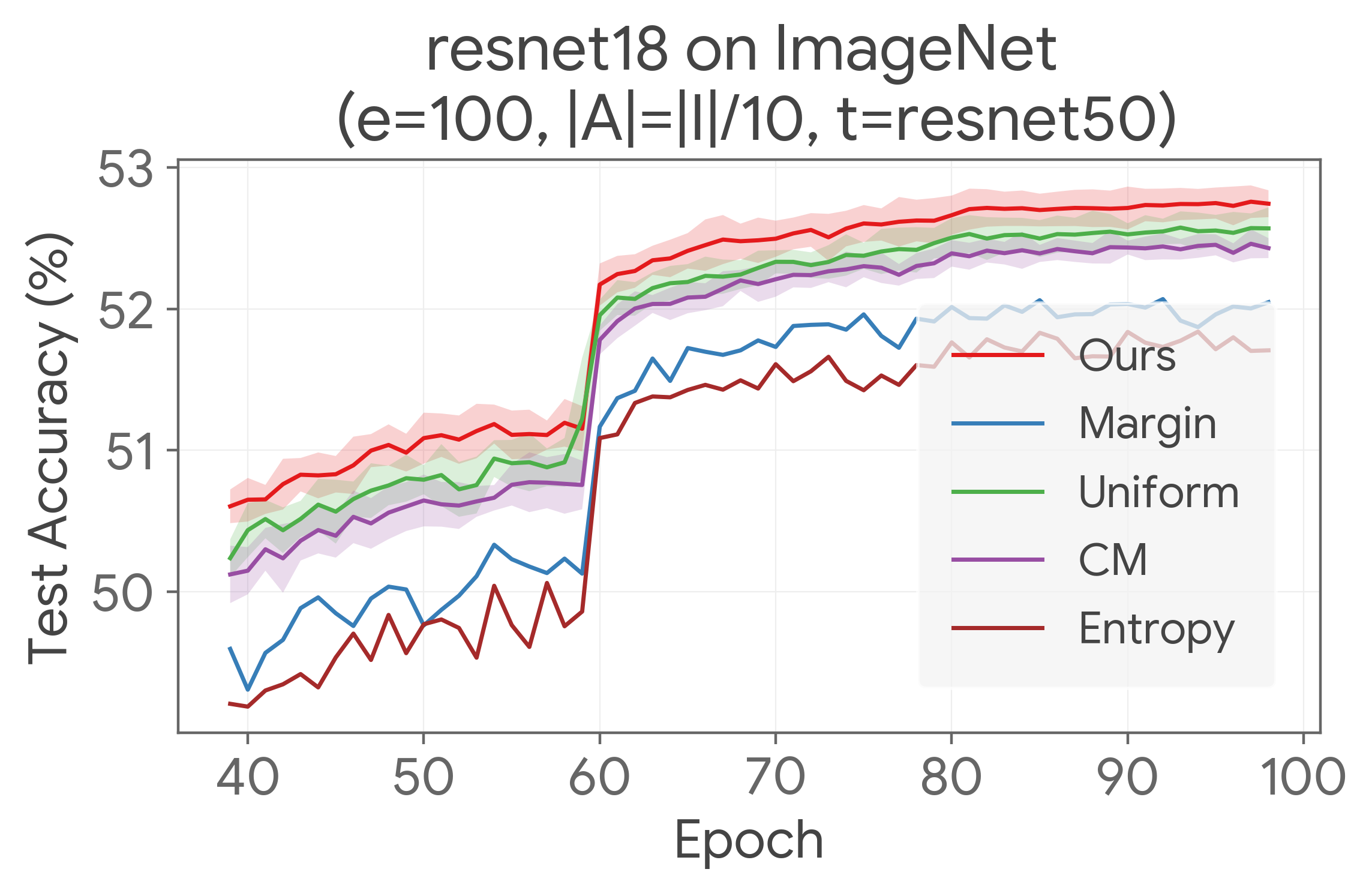}
  \end{minipage}%
  %\hfill
  \begin{minipage}[t]{0.34\textwidth}
  \centering
 \includegraphics[width=1\textwidth]{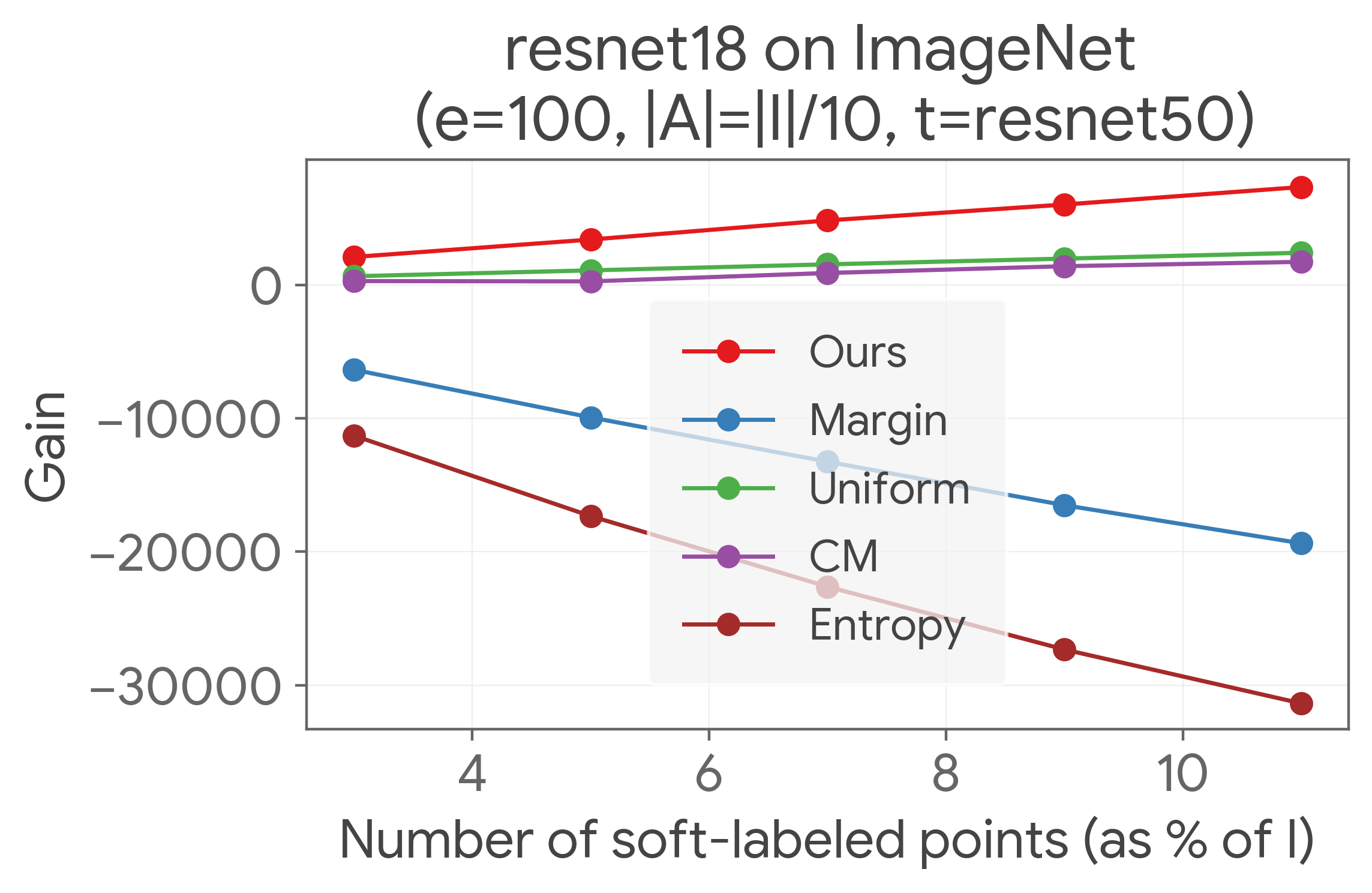}
 %\subcaption{$$}
  \end{minipage}%
		\caption{The classification accuracy and gain on the ImageNet data set with ResNets.}
	\label{fig:imagenet}
\end{figure*}

\subsection{ImageNet Experiments}
Here, we report the results of our evaluations with ResNet architectures trained on the ImageNet data set $I$, which contains nearly $1.3$ million images spanning $1000$ categories~\cite{imagenet}. We exclude the $\textsc{UnixKD}$ and $\textsc{Coreset}$ methods due to resource constraints and the fact that \textsc{CM} supersedes \textsc{Coreset}~\cite{citovsky2021batch} and \textsc{UnixKD} consistently performed poorly on the configurations in the previous subsection. This highlights an additional advantage of our approach: it merely requires a sort ($\Bigo(n \log n)$ total time). This is in contrast to computation- and memory-expensive methods like $\textsc{Coreset}$ and $\textsc{CM}$ that require clustering (see Sec.~\ref{sec:supp-imagenet} for details). Fig.~\ref{fig:imagenet} depicts the results of our evaluations, where our method consistently improves upon the compared approaches across all training epochs. Here the teacher is a ResNet50 model and the student is a ResNet18 model. The teacher is trained on the initial labeled dataset $A$ with $|A| = 10\%|I|$. For each choice of the budget $b \in \{3\%|I|, 5\%|I|, 7\%|I|, 9\%|I|, 11\%|I|\}$, we run $3$ trials.  In the rightmost plot we observe that the gain of our approach (w.r.t. $\gain = 1 - \margin$) is higher than that of the competing approaches, and that the gain correlates well with the test accuracy of the student, which reaffirms the practical validity of our formulation (Sec.~\ref{sec:rad}).

\section{Conclusion}
\label{sec:conclusion}
% In this paper, we considered the problem of efficient and robust knowledge distillation in the semi-supervised learning setting with a limited amount of labeled data and a large amount of unlabeled data. We formulated the problem of robust active distillation and presented a near linear-time algorithm with provable guarantees to solve it optimally. To the best of our knowledge, our work is the first is to first to consider importance sampling for informative and correctly labeled soft-labels to enable efficiency and robustness in large-scale knowledge distillation tasks. Our method is parameter-free and simple-to-implement. Our experiments on popular benchmark data sets with a diverse set of configurations showed a consistent and notable improvement in the test accuracy of the generated student model relative to those generated by state-of-the-art methods. We envision that our approach can be used to generate lightweight, high-performance student models by distilling the knowledge of large, bulky models in the face of limited amount of labeled data. 

In this paper, we considered the problem of efficient and robust knowledge distillation in the semi-supervised learning setting with a limited amount of labeled data and a large amount of unlabeled data. We formulated the problem of robust active distillation and presented a near linear-time algorithm with provable guarantees to solve it optimally. To the best of our knowledge, our work is the first to consider importance sampling for informative and correctly labeled soft-labels to enable efficiency and robustness in large-scale knowledge distillation tasks. Our method is parameter-free and simple-to-implement. Our experiments on popular benchmark data sets with a diverse set of configurations showed a consistent and notable improvement in the test accuracy of the generated student model relative to those generated by state-of-the-art methods. 

\paragraph{Limitations and future work} In future work, we plan to establish a deeper theoretical understanding on the trade-offs of the various instantiations of our general framework, \eqref{prb:ours-general} in Sec.~\ref{sec:rad}, on the \emph{test accuracy} of the student model. For example, it is not clear whether defining the gain as $\gain= 1 - \margin$ or $\gain = \exp(-\margin)$ is more appropriate, even though both definitions lead to gains that are monotonically increasing with the uncertainty of the student. Besides considering teacher accuracy in the robust formulation, we plan to also consider other relevant metrics such as student-teacher disagreement to construct more informed distributions. Overall, we envision that our approach can be used in high-impact applications to generate powerful student models by efficiently distilling the knowledge of large teachers in the face of limited labeled data.

%Relatedly, we conjecture that there exists a closed-form solution for the more challenging problem of minimizing the \emph{performance gap} with respect to the best sampling distribution in hindsight -- this is akin to regret bounds in online learning.
%For future work, we plan to refine our robust active distillation formulation and extend it to the batch sampling setting. For example, it could be 

\section*{Reproducibility Statement}
Our algorithm is fully-specified (Corollary~\ref{cor:negative}), simple-to-implement, and parameter-free (Sec.~\ref{sec:rad}). We provide the full details and hyperparameters required to reproduce our results in Sec.~\ref{sec:results} and Sec.~\ref{sec:supp-experimental-details} of the Appendix. We specify descriptions of how the competing algorithms were implemented, including the hyperparameter settings. We provide precise theoretical results (Sec.~\ref{sec:rad} in the main body and Sec.~\ref{sec:supp-analysis} of the Appendix) that clearly specify the assumptions and provide full proofs and additional helper lemmas in the Appendix (Sec.~\ref{sec:supp-analysis}). Our evaluations use publicly available and easily accessible data sets and models.

% \paragraph{Ethics} Our work uses publicly available data sets and our contributions are mainly theoretical. 

\bibliography{references}
\bibliographystyle{iclr2023_conference}

\newpage
\appendix

\section{Appendix}
In this supplementary material, we provide the details of batch sampling (in Sec.~\ref{sec:supp-method}), full proofs of the results in Sec.~\ref{sec:rad} and additional theoretical results (in Sec.~\ref{sec:supp-analysis}), and details of experiments in Sec.~\ref{sec:results} and additional evaluations (in Sec.~\ref{sec:supp-results}).

\section{Implementation Details}
\label{sec:supp-method}

To supplement our discussion in Sec.~\ref{sec:rad}, we provide additional details regarding the batch sampling procedure. One approach is to iterate over the points $i \in [n]$ and pick each point with probability $q_i$. If $\sum_{i \in [n]} q_i = b$, then this procedure samples $b$ points in expectation. A more principled approach is to use randomized dependent rounding~\cite{chekuri2009dependent} which samples \emph{exactly} $b$ points given a distribution that sums to $b$. This procedure is shown as Alg.~\ref{alg:dep-round}, and an efficient implementation of it runs in $\Bigo(n)$ time~\cite{chekuri2009dependent}.

\begin{minipage}{.95\linewidth}
\centering
\begin{algorithm}[H]
\caption{\textsc{DepRound}}
\textbf{Inputs}: Probabilities $p \in [0,1]^n$ such that $\sum_{i \in [n]} p_i = b$ \\
\textbf{Output:} set of indices $\II \subset [n]$ with $|\II| = b$
\label{alg:dep-round}
\begin{spacing}{1.2}
\begin{algorithmic}[1]
\WHILE{$\exists{i \in [n]}$ such that $0 < p_i < 1$}
    \STATE Pick $i, j \in [n]$ with $i \neq j$, $0 < p_i < 1$, and $0 < p_j < 1$
    \STATE $\alpha \gets \min(1 - p_i, p_j)$ 
    \STATE $\beta \gets \min(p_i, 1-p_j)$
    \STATE Update $p_i$ and $p_j$ \\ $(p_i, p_j) = \begin{cases}
(p_i + \alpha, p_j - \alpha) \quad \text{with probability } \frac{\beta}{\alpha + \beta}, \\
(p_i - \beta, p_j + \beta) \quad \text{with probability } 1 - \frac{\beta}{\alpha + \beta}.
\end{cases}$
\ENDWHILE
\STATE $\II \gets \{i \in [n] : p_i = 1 \}$ \\
\textbf{return } {$\II$}
\end{algorithmic}
\end{spacing}
\end{algorithm}
\end{minipage}

\section{Proofs \& Additional Analysis}
\label{sec:supp-analysis}

\subsection{Proof of Theorem~\ref{thm:minimax-max-gain}}

\minimaxmaxgain
\begin{proof}
The proof relies on the the Minimax Theorem~\cite{neumann1928theorie}, which yields
$$
\min_{c\in\Delta_{n-m}} \max_{p\in\Delta_1} f(p,c) = \max_{p\in\Delta_1} \min_{c\in\Delta_{n-m}} f(p,c).
$$
We will further use two claims, which we'll prove shortly.

\begin{Claim}\label{claim:p}
Let $p^*$, $k^*$, and $N(k)$ be defined as in the statement of the lemma. Then $p^*\in\Delta_1$ and $N(k^*) = \min_{c\in\Delta_{n-m}} f(p^*, c)$.
\end{Claim}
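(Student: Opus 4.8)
The plan is to establish the two assertions of the claim directly: that $p^*$ is a legitimate distribution in $\Delta_1$, and that substituting $p^*$ into $f$ and minimizing over the adversary's choice $c$ recovers exactly $N(k^*) = (G_{k^*}-m)/H_{k^*}$. Throughout I write $f(p,c) = \sum_{i\in[n]} p_i\,(g_i c_i - (1-c_i)\ell_i)$.

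Membership $p^*\in\Delta_1$ is quick: each $p_i^*\ge 0$ since $g_i+\ell_i>0$ and $H_{k^*}>0$, and the normalization is $\sum_{i\in[n]} p_i^* = \tfrac{1}{H_{k^*}}\sum_{i\le k^*}\tfrac{1}{g_i+\ell_i} = 1$ by the definition of $H_{k^*}$. The substantive step is to obtain a closed form for $f(p^*,c)$. I would rewrite the per-index term as $g_i c_i - (1-c_i)\ell_i = c_i(g_i+\ell_i) - \ell_i$; then the factors $g_i+\ell_i$ cancel against the denominator of $p_i^*$, and using $\sum_{i\le k^*}\tfrac{\ell_i}{g_i+\ell_i} = k^* - G_{k^*}$ one gets
\[
f(p^*,c) \;=\; \frac{1}{H_{k^*}}\sum_{i\le k^*}\Big(c_i - \frac{\ell_i}{g_i+\ell_i}\Big) \;=\; \frac{1}{H_{k^*}}\Big(\sum_{i\le k^*} c_i + G_{k^*} - k^*\Big).
\]
Since only $\sum_{i\le k^*} c_i$ depends on $c$, minimizing over $c\in\Delta_{n-m} = \{c\in[0,1]^n:\sum_i c_i = n-m\}$ reduces to minimizing $\sum_{i\le k^*} c_i$ under the box and total-mass constraints; pushing as much mass as possible onto the $n-k^*$ coordinates indexed above $k^*$ gives the bound $\sum_{i\le k^*} c_i \ge (n-m)-(n-k^*) = k^*-m$, which is attained by an explicit feasible $c$ (set $c_i=1$ for $i>k^*$ and spread the remaining $k^*-m$ mass uniformly over $i\le k^*$).

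The part that needs care — and the one obstacle worth flagging — is that this argument only yields the claimed value if $k^*\ge m$, so that the minimum of $\sum_{i\le k^*}c_i$ is genuinely $k^*-m$ rather than $0$. Here I would invoke the standing hypothesis $G_n\ge m$: it gives $N(n) = (G_n-m)/H_n \ge 0$, and since $k^*$ maximizes $N$ we get $N(k^*)\ge N(n)\ge 0$, hence $G_{k^*}\ge m$; combining with the trivial bound $G_{k^*} = \sum_{i\le k^*} g_i/(g_i+\ell_i) \le k^*$ yields $k^*\ge m$. Substituting $\min_{c}\sum_{i\le k^*} c_i = k^*-m$ back in then gives $\min_{c\in\Delta_{n-m}} f(p^*,c) = \tfrac{1}{H_{k^*}}\big(k^*-m + G_{k^*}-k^*\big) = \tfrac{G_{k^*}-m}{H_{k^*}} = N(k^*)$, which is the claim. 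I expect this same closed form for $f(p^*,c)$, together with the inequalities $g_{k^*}\ge N(k^*)\ge g_{k^*+1}$, to be the natural starting point for the companion claim about the adversary's optimal $c^*$ and for the concluding application of the Minimax Theorem.
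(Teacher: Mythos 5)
Your proof is correct and follows essentially the same route as the paper: verify $\sum_{i\le k^*}p_i^*=1$, rewrite the payoff as $f(p^*,c)=\frac{1}{H_{k^*}}\bigl(\sum_{i\le k^*}c_i+G_{k^*}-k^*\bigr)$, and observe that the adversary minimizes by setting $c_i=1$ for $i>k^*$ and $\sum_{i\le k^*}c_i=k^*-m$, yielding $N(k^*)$. Your extra step deducing $k^*\ge m$ from the hypothesis $G_n\ge m$ (via $N(k^*)\ge N(n)\ge 0$ and $G_{k^*}\le k^*$) is a legitimate tightening of a point the paper dismisses with ``clearly,'' but it does not change the argument's structure.
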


\begin{Claim}\label{claim:c}
Let $k^*$ and $N(k)$ be defined as in the statement of the lemma. Further, define
\begin{align*}
c^*_i =& \begin{cases}
    \frac{N(k^*)+\ell_i}{g_i + \ell_i} \mbox{\ \ if $i\leq k^*$}    \\
    1 \mbox{\ \ otherwise}
    \end{cases}
\end{align*}
Then $c^*\in\Delta_{n-m}$ and
$N(k^*) = \max_{p\in\Delta_1} f(p, c^*)$.
\end{Claim}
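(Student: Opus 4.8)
The plan is to prove Claim~\ref{claim:c} in two parts — that $c^*$ is feasible, $c^*\in\Delta_{n-m}$, and that it certifies the value $N(k^*)$ for the inner maximization, i.e.\ $N(k^*)=\max_{p\in\Delta_1}f(p,c^*)$, where $f(p,c)=\sum_{i\in[n]}p_i\big(g_ic_i-(1-c_i)\ell_i\big)$ and $N(k)=\tfrac{G_k-m}{H_k}=\opt(k)$ as in the theorem. Both parts rest on one auxiliary ``sandwich'' fact, which I would establish first:
$$g_{k^*}\ \ge\ N(k^*)\ \ge\ g_{k^*+1}$$
(read the right inequality as vacuous when $k^*=n$, and the left as starting from $g_1$ when $k^*=1$).

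To get the sandwich I would use the optimality of $k^*$ together with the mediant inequality. Writing $G_k=G_{k-1}+\tfrac{g_k}{g_k+\ell_k}$ and $H_k=H_{k-1}+\tfrac{1}{g_k+\ell_k}$, a one-line cross-multiplication shows $N(k)\ge N(k-1)\iff g_k\ge N(k-1)$; since $N(k^*)\ge N(k^*-1)$ by the choice of $k^*$, this yields $g_{k^*}\ge N(k^*-1)$. Moreover $N(k^*)$ is the mediant of $\tfrac{G_{k^*-1}-m}{H_{k^*-1}}=N(k^*-1)$ and $\big(\tfrac{g_{k^*}}{g_{k^*}+\ell_{k^*}}\big)\big/\big(\tfrac{1}{g_{k^*}+\ell_{k^*}}\big)=g_{k^*}$, hence lies between them, so $N(k^*)\le g_{k^*}$. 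Applying the same reasoning at index $k^*+1$: $N(k^*+1)$ is the mediant of $N(k^*)$ and $g_{k^*+1}$, and $N(k^*)\ge N(k^*+1)$ forces $N(k^*)\ge g_{k^*+1}$. The boundary cases $k^*\in\{1,n\}$ reduce to the same computation with one side trivial; note also $N(k^*)\ge N(n)=\tfrac{G_n-m}{H_n}\ge 0$ since $G_n\ge m$, so $N(k^*)\ge 0$ in all cases.

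For feasibility the entry bounds are immediate: $c_i^*=1$ for $i>k^*$; for $i\le k^*$, $c_i^*=\tfrac{N(k^*)+\ell_i}{g_i+\ell_i}\ge 0$ because $N(k^*)\ge 0$ and $\ell_i\ge 0$, and $c_i^*\le 1$ because $N(k^*)\le g_{k^*}\le g_i$ by the sorted order. For the sum I compute
$$\sum_{i\le k^*}c_i^* \;=\; N(k^*)\sum_{i\le k^*}\frac{1}{g_i+\ell_i}+\sum_{i\le k^*}\Big(1-\frac{g_i}{g_i+\ell_i}\Big)\;=\;N(k^*)H_{k^*}+k^*-G_{k^*},$$
and since $N(k^*)H_{k^*}=G_{k^*}-m$ this is $k^*-m$; adding the $n-k^*$ unit entries gives $\sum_i c_i^*=n-m$, so $c^*\in\Delta_{n-m}$.

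For the inner maximization, substituting $c^*$ and using $g_ic_i^*-(1-c_i^*)\ell_i=c_i^*(g_i+\ell_i)-\ell_i$ shows this equals $N(k^*)$ for $i\le k^*$ and $g_i$ for $i>k^*$, so $f(p,c^*)=N(k^*)\sum_{i\le k^*}p_i+\sum_{i>k^*}g_ip_i$. Since $N(k^*)\ge g_{k^*+1}\ge g_i$ for every $i>k^*$ (sandwich plus sorted order), the head coefficient weakly dominates, so over $p\in\Delta_1$ the maximum is attained by placing all mass on $\{1,\dots,k^*\}$, giving value $N(k^*)$; hence $\max_{p\in\Delta_1}f(p,c^*)=N(k^*)$. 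The only genuinely delicate step is the sandwich inequality — getting the mediant/optimality argument right and checking the $k^*\in\{1,n\}$ edge cases — after which both halves of the claim are routine algebra; combined with Claim~\ref{claim:p} and von Neumann's Minimax Theorem this pins the game value to $\opt(k^*)$ and establishes optimality of $p^*$.
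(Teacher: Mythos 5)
Your proof is correct and follows essentially the same route as the paper's: establish the sandwich $g_{k^*}\ge N(k^*)\ge g_{k^*+1}$ from the optimality of $k^*$ (your mediant argument is the paper's fraction observation in disguise), then verify $\sum_i c_i^* = n-m$ and that the coefficients $c_i^*(g_i+\ell_i)-\ell_i$ equal $N(k^*)$ for $i\le k^*$ and $g_i$ for $i>k^*$, so the maximum over $p\in\Delta_1$ is $N(k^*)$. The only differences are cosmetic: you treat the $k^*\in\{1,n\}$ boundary cases and the nonnegativity $N(k^*)\ge 0$ (needed for $c_i^*\ge 0$) explicitly, which the paper handles only via a footnote for $k^*=n$ or leaves implicit.
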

Given the claims, we see
\begin{align*}
N(k^*) = \min_{c\in\Delta_{n-m}} f(p^*, c) 
    \leq& \max_{p\in\Delta_1} \min_{c\in\Delta_{n-m}} f(p, c) \mbox{\ \ by Claim~\ref{claim:p}} \\
    =& \min_{c\in\Delta_{n-m}}\max_{p\in\Delta_1} f(p, c) \mbox{\ \ by Minimax} \\
    \leq& \max_{p\in\Delta_1} f(p, c^*) = N(k^*) \mbox{\ \ by Claim~\ref{claim:c}}
\end{align*}
That is, $p^*$ attains the maximum, which is $N(k^*)$, as we wanted.

We now prove the claims.
Before beginning, we will first simplify $f(p,c)$ somewhat, finding
\begin{align*}
f(p, c) 
    =& \sum_i g_i p_i c_i - \sum_i \ell_i p_i (1-c_i)\\
    =& \sum_i (c_i(g_i + \ell_i) - \ell_i) p_i
\end{align*}
\begin{proof}[Proof of Claim~\ref{claim:p}]
We first show $p^*\in\Delta_1$. We have
\begin{align*}
\sum_{i\leq n} p^*_i = \sum_{i\leq k^*} p^*_i 
    = \sum_{i\leq k^*} \frac{1}{H_{k^*}(g_i+\ell_i)} 
    = \frac{H_{k^*}}{H_{k^*}} = 1
\end{align*}
And since $p^*_i \ge 0$ for all $i$, it immediately follows that $p^*_i \leq 1$ as well.

We now show $N(k^*) = \min_{c\in\Delta_{n-m}} f(p^*, c)$. To this end,
\begin{align*}
f(p^*, c)
    =& \sum_{i\leq n} (c_i(g_i + \ell_i) - \ell_i) p_i \\
%    =& \sum_{i\leq k^*} (c_i(g_i + \ell_i) - \ell_i) \frac{1}{H_{k^*}(g_i+\ell_i)} \\
    =& \sum_{i\leq k^*} \frac{c_i(g_i + \ell_i) - \ell_i}{H_{k^*}(g_i+\ell_i)} \\
    =& \sum_{i\leq k^*} \frac{c_i(g_i + \ell_i)} {H_{k^*}(g_i+\ell_i)}
     - \sum_{i\leq k^*} \frac{(g_i+\ell_i)}{H_{k^*}(g_i+\ell_i)}
     + \sum_{i\leq k^*} \frac{g_i}{H_{k^*}(g_i+\ell_i)} \\
%    =& \sum_{i\leq k^*} \frac{c_i}{H_{k^*}}
%        - \sum_{i\leq k^*}  \frac{\ell_i}{H_{k^*}(g_i+\ell_i)} \\
%    =& \frac{1}{H_{k^*}}\sum_{i\leq k^*} c_i
%        - \sum_{i\leq k^*}  \frac{g_i+\ell_i-g_i}{H_{k^*}(g_i+\ell_i)} \\
    =& \frac{1}{H_{k^*}}\sum_{i\leq k^*} c_i
        -\frac{k^*}{H_{k^*}} 
        + \frac{G_{k^*}}{H_{k^*}}
\end{align*}
Clearly, for $c^*\in\Delta_{n-m}$, this expression is minimized when $c_i = 1$ for $i>k^*$ and $\sum_{i\leq k^*} c_i = k^*-m$. So we have
\begin{align*}
\min_{c\in\Delta_{n-m}} f(p^*, c)
    &= \frac{1}{H_{k^*}}(k^*-m)
        -\frac{k^*}{H_{k^*}} 
        + \frac{G_{k^*}}{H_{k^*}} \\
    &= \frac{G_{k^*} - m}{H_{k^*}} = N(k^*)
\end{align*}
as claimed.
\end{proof}

We now prove our second claim.
\begin{proof}[Proof of Claim~\ref{claim:c}]
We first show that $N(k^*) \ge g_i$ for $i>k^*$.\footnote{In the border case that $k^*=n$, we can add a dummy item with $g_{n+1}=0$ and the claim follows trivially.}
Observe that if $\frac{a}{b} \ge \frac{a+c}{b+d}$, then $\frac{a}{b} \ge \frac{c}{d}$ for non-negative values (and $b>0, d>0$).
Notice
\begin{align*}
\frac{G_{k^*}-m}{H_{k^*}} = N(k^*)\ge  
N(k^*+1) =\frac{G_{k^*}+g_{k^*+1}/(g_{k^*+1}+\ell_{k^*+1})-m}{H_{k^*}+1/(g_{k^*+1}+\ell_{k^*+1})} 
\end{align*}
By our observation,
$$
N(k^*) \ge \frac{g_{k^*+1}/(g_{k^*+1}+\ell_{k^*+1})}{1/(g_{k^*+1}+\ell_{k^*+1})} = g_{k^*+1} \ge g_i \mbox{\ \ for $i>k^*$.}
$$
Similarly, we show $N(k^*) \le g_i$ for $i\leq k^*$, using the observation that if $\frac{a-c}{b-d} \leq \frac{a}{b}$, then $\frac{a}{b} \le \frac{c}{d}$ for non-negative values (and $b-d>0$ and $d>0$).
Notice
\begin{align*}
N(k^*-1) = 
\frac{G_{k^*}-g_{k^*}/(g_{k^*}+\ell_{k^*})-m}{H_{k^*}-1/(g_{k^*}+\ell_{k^*})} 
\leq \frac{G_{k^*}-m}{H_{k^*}}
= N(k^*)
\end{align*}
And again, by our observation,
\begin{equation}
\label{eqn:observation}
N(k^*) \le \frac{g_{k^*}/(g_{k^*}+\ell_{k^*})}{1/(g_{k^*}+\ell_{k^*})} = g_{k^*} \le g_i \mbox{\ \ for $i\le k^*$.}
\end{equation}
Now we're ready to prove the claim.

We first show $c^*\in\Delta_{n-m}$. Since $N(k^*) \le g_i$ for $i\le k^*$, we see
$c^*_i = \frac{N(k^*)+\ell_i}{g_i + \ell_i} \le 1$ for $i\le k^*$. So $c^*_i \in [0, 1]$. Further,
\begin{align*}
\sum_{i\le n} c^*_i
    =& \sum_{i\le k^*} \frac{N(k^*)+\ell_i}{g_i + \ell_i} + \sum_{k^* < i \le n} 1 \\
    =& \sum_{i\le k^*} \frac{N(k^*)-g_i}{g_i + \ell_i} + \sum_{i\le k^*}\frac{g_i+\ell_i}{g_i + \ell_i}  +\sum_{k^* < i \le n} 1 \\
    =& N(k^*) H_{k^*} - G_{k^*} + n \\
    =& G_{k^*} - m - G_{k^*} + n = n-m
\end{align*}
That is, $c^*\in\Delta_{n-m}$.

Finally, we show $N(k^*) = \max_{p\in\Delta_1} f(p, c^*)$. Note that 
$c^*_i(g_i+\ell_i) - \ell_i = N(k^*)$
for $i\le k^*$, while 
$c^*_i(g_i+\ell_i) - \ell_i = g_i$ for $i> k^*$.
We have
\begin{align*}
f(p, c^*) 
    =& \sum_i (c^*_i(g_i + \ell_i) - \ell_i) p_i\\
    =& \sum_{i\leq k^*} N(k^*) p_i
        +\sum_{i> k^*} g_i p_i
\end{align*}
From above, $N(k^*) \ge g_i$ for $i>k^*$. So the expression is maximized for $p\in\Delta_1$ when $p_i = 0$ for $i> k^*$ and $\sum_{i\leq k^*} p_i = 1$. Hence,
\begin{align*}
\max_{p\in\Delta_1} f(p, c^*) = N(k^*)
\end{align*}
as we wanted.
\end{proof} %% End proof of claim.
\end{proof} %% End proof of lemma.

\ignore{
\subsection{Graveyard}

\begin{align*}
N(k^*) &\geq N(k^*+1) \\
\frac{k-M}{H_{k^*}} &\geq \frac{k-M+1}{H_{k^*} + 1/g_{k^*+1}}\\
(k-M)(H_{k^*} + 1/g_{k^*}) & \geq (k-M + 1) H_{k^*} \\
(k-M)/g_{k^*} &\geq H_{k^*} \\
N(k^*) &\geq g_{k^*}
\end{align*}
}  %% end ignore

\subsection{Effect of approximating m}
Here, we prove that an approximately optimal solution can be obtained even if an approximate value of $m$ is used (e.g., via a validation data set). For sake of simplicity, the following lemma considers the case where the losses are $0$ in the context of Theorem~\ref{thm:minimax-max-gain}, however, its generalization to general gains and losses -- including the relative error formulation that we study in this paper -- follows by rescaling the error parameter $\epsilon$ appropriately. \footnote{Note that this result generalizes to the RAD relative loss with $w$ w as discussed in Sec.~\ref{sec:rad} by considering $m' = (1 + w) m$ (see Corollary~\ref{cor:negative}).} Our main result is that if we have an approximation $\hat m \in (1 \pm \epsilon) m$, then we can use this approximate $m$ to obtain an $(1 - 2 \epsilon m / (2 \epsilon m + (1+ \epsilon))$-competitive solution. 

\begin{lemma}
\label{lem:supp-approx-m}
 Let $m \in [n]$ be the groundtruth value of the number of incorrect examples in $\XX_B$ and let $k_m$ be the optimal solution with respect to $m$. Assume $g_1 \ge \cdots \ge g_n$ and let $\alpha = g_{k_m}$ and $\beta = \frac{\alpha}{2(2 - \alpha)}$. Suppose that we have an approximation $\hat m$ of $m$ such that
 $$
 \hat m \in (1 \pm \epsilon) m \quad \text {with } \epsilon \in (0, \beta),
 $$
then the solution $k_{\tilde m}$ with respect to $\tilde m = \hat m / (1 + \epsilon)$ is $(1 - 2 \epsilon m / (2 \epsilon m + (1+ \epsilon))$-competitive with the optimal solution, i.e., it satisfies
$$
\obj(k_{\tilde m}, m) \ge \left( 1 - \frac{2 \epsilon m}{2 \epsilon m + (1 + \epsilon)} \right) \opt = \left( 1 - \frac{2 \epsilon m}{2 \epsilon m + (1 + \epsilon)} \right) \obj(k_m, m),
$$
where $\obj(k,m) = \frac{k - m}{H_k} = \max_{k' \in [n]} \frac{k' - m}{H_{k'}}$.
\end{lemma}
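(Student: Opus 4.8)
The plan is to work with the notation of Theorem~\ref{thm:minimax-max-gain} specialized to $\ell_i\equiv 0$, so that $G_k=k$, $H_k=\sum_{i\le k}1/g_i$, and $\obj(k,\mu)=\tfrac{k-\mu}{H_k}$ for a generic penalty $\mu\ge 0$; thus $k_\mu=\argmax_{k\in[n]}\obj(k,\mu)$, $k_m$ is as in the statement, and $\opt=\obj(k_m,m)$. I would first record a few facts that come essentially for free. From the proof of Theorem~\ref{thm:minimax-max-gain} (namely Claim~\ref{claim:c} and \eqref{eqn:observation}, read with $\ell\equiv 0$) we get $\opt=\obj(k_m,m)\le g_{k_m}=\alpha$; since $\opt\ge\obj(n,m)=\tfrac{n-m}{H_n}>0$ in the non-degenerate case $m<n$, this forces $k_m-m\ge 1$. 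Next, from $\hat m\in(1\pm\epsilon)m$ and $\tilde m=\hat m/(1+\epsilon)$ one reads off $\tilde m\le m$ and $0\le\delta:=m-\tilde m\le\tfrac{2\epsilon m}{1+\epsilon}=:D$ (everything here is nonnegative as $\epsilon<\beta\le\tfrac12$ and the gains lie in $(0,1]$, as in our setting; the general case follows by rescaling $\epsilon$, cf.\ the footnote). Finally I will use the identity $\opt\,H_{k_m}=k_m-m$.

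The engine is a monotonicity lemma. A one-line computation gives, for any penalty $\mu$ and index $k$, that $\obj(k{+}1,\mu)\gtrless\obj(k,\mu)$ exactly when $g_{k+1}\gtrless\obj(k,\mu)$; and since $\obj(k{+}1,\mu)=\tfrac{(k-\mu)+1}{H_k+1/g_{k+1}}$ is the mediant of $\obj(k,\mu)$ and $g_{k+1}$ it always lies between them. Together these show $k\mapsto\obj(k,\mu)$ is unimodal (weakly increasing, then weakly decreasing), with maximizer obeying $g_{k_\mu}\ge\obj(k_\mu,\mu)\ge g_{k_\mu+1}$. The crucial intermediate claim I would then establish is $k_{\tilde m}\ge m+1$. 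By unimodality this follows once we know $\obj(\cdot,\tilde m)$ is still increasing at $m$, i.e.\ $\obj(m{+}1,\tilde m)>\obj(m,\tilde m)$; by the criterion above this is equivalent to $g_{m+1}>\obj(m,\tilde m)=\delta/H_m$, i.e.\ to $\delta<g_{m+1}H_m$. Now $g_{m+1}\ge g_{k_m}=\alpha$ because $m{+}1\le k_m$, and $H_m\ge m$ because each $g_i\le 1$, so $g_{m+1}H_m\ge\alpha m$; while $\delta\le D<\tfrac{2\beta m}{1+\beta}=\tfrac{2\alpha m}{4-\alpha}\le\alpha m$, the middle identity being $\tfrac{2\beta}{1+\beta}=\tfrac{2\alpha}{4-\alpha}$ and the last inequality being $\tfrac{2\alpha}{4-\alpha}\le\alpha\iff\alpha\le 2$. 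This is precisely where the hypothesis $\epsilon<\beta$, and the peculiar shape $\beta=\tfrac{\alpha}{2(2-\alpha)}$, are used. Hence $\delta<g_{m+1}H_m$ and the claim holds.

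Granting the claim, the rest is algebra. Set $v:=k_{\tilde m}-m\ge 1$ and $u:=k_m-m\ge 1$. For any $k$ with $k-m+\delta>0$ we have the identity $\obj(k,m)=\obj(k,\tilde m)\cdot\tfrac{k-m}{k-m+\delta}$, and using $\opt\,H_{k_m}=u$ we have $\obj(k_m,\tilde m)=\opt\cdot\tfrac{u+\delta}{u}$. Since $k_{\tilde m}$ maximizes $\obj(\cdot,\tilde m)$,
\[
\obj(k_{\tilde m},m)=\obj(k_{\tilde m},\tilde m)\cdot\tfrac{v}{v+\delta}\ \ge\ \obj(k_m,\tilde m)\cdot\tfrac{v}{v+\delta}\ =\ \opt\cdot\tfrac{u+\delta}{u}\cdot\tfrac{v}{v+\delta}.
\]
A direct check shows $\tfrac{u+\delta}{u}\cdot\tfrac{v}{v+\delta}\ge\tfrac{1}{1+\delta}$ iff $v(1+u+\delta)\ge u$, which holds since $v\ge 1$; combined with $\delta\le D$ this gives $\obj(k_{\tilde m},m)\ge\tfrac{\opt}{1+\delta}\ge\tfrac{\opt}{1+D}=\big(1-\tfrac{2\epsilon m}{2\epsilon m+(1+\epsilon)}\big)\opt$, which is the assertion.

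I expect the one real obstacle to be the intermediate claim $k_{\tilde m}\ge m+1$: without it the statement is not merely hard but false (if $k_{\tilde m}\le m$ then $\obj(k_{\tilde m},m)\le 0$ while the target is strictly positive), and this is exactly where the budget $\epsilon<\beta$ must be spent; everything else is bookkeeping around the unimodality of $\obj(\cdot,\mu)$. I would also dispatch the degenerate case $m=n$ separately (there $\opt=0$ and the claim is vacuous) and note that the same argument carries over to general gains and to the relative-loss setting of Corollary~\ref{cor:negative} by replacing $m$ with $(1{+}w)m$ and rescaling $\epsilon$.
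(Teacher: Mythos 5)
Your proof is correct and, structurally, it is the same argument as the paper's: both first record $\tilde m\le m$ and $m-\tilde m\le 2\epsilon m/(1+\epsilon)$, both hinge on the intermediate claim $k_{\tilde m}\ge m+1$, and both finish with the identical chain that uses the optimality of $k_{\tilde m}$ for the perturbed penalty $\tilde m$ to obtain $\obj(k_{\tilde m},m)\ge \opt/(1+(m-\tilde m))\ge\bigl(1-\tfrac{2\epsilon m}{2\epsilon m+1+\epsilon}\bigr)\opt$ (your multiplicative bookkeeping with $u,v,\delta$ is just a rearrangement of the paper's subtractive version of the same optimality inequality). The one genuine divergence is how the pivotal claim $k_{\tilde m}\ge m+1$ is obtained. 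The paper applies the stationarity condition at $k_{\tilde m}$ itself, $k_{\tilde m}-\tilde m\ge g_{k_{\tilde m}+1}H_{k_{\tilde m}}\ge\alpha k_{\tilde m}$, to get $k_{\tilde m}\ge\tilde m/(1-\alpha)$, and then combines this with $\tilde m\ge(1-\epsilon)m/(1+\epsilon)$ and $\epsilon<\beta$; note that the step $g_{k_{\tilde m}+1}\ge\alpha$ there is only automatic when $k_{\tilde m}<k_m$. You instead prove unimodality of $k\mapsto\obj(k,\mu)$ via the mediant criterion and check the increase condition at $k=m$, where $g_{m+1}\ge g_{k_m}=\alpha$ is guaranteed because $m+1\le k_m$; together with $H_m\ge m$ and $\delta< 2\alpha m/(4-\alpha)\le\alpha m$ this yields $k_{\tilde m}\ge m+1$ for every maximizer, independently of tie-breaking. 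This is a slightly cleaner route to the same claim, and it makes explicit the assumptions ($g_i\le 1$, hence $H_k\ge k$, and $m<n$ so that $\opt>0$) that the paper also relies on implicitly. Two minor caveats: dismissing the $m=n$ case as ``vacuous'' is not quite accurate (the statement there asserts $\obj(k_{\tilde m},n)\ge 0$, which is not automatic), though the paper likewise tacitly excludes it; and the mediant/unimodality lemma should be restricted to $k\ge 1$ since $H_0=0$, which is harmless because you only invoke it at $k=m\ge 1$.
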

\begin{proof}
For sake of notational brevity, we let $k$ and $k^*$ denote $k_{\tilde m}$ and $k_m$, respectively. First, observe that by the assumption of the lemma, we have
$$
\tilde m = \frac{\hat m}{1 + \epsilon} \ge \frac{(1 - \epsilon) m}{1 + \epsilon} \quad \text{and} \quad \tilde m \leq m.
$$
Note that since $k \ge \tilde m$ by the optimality of $k$ with respect to $\tilde m$, we have $k \ge \frac{(1-\epsilon)m}{(1+\epsilon)(1-\alpha)} > m$ which follows from the optimality condition from the proof of Theorem~\ref{thm:minimax-max-gain},
\begin{align*}
    k &\ge g_{k+1} H_k + \tilde m \ge \alpha k + \tilde m \Rightarrow k \ge \frac{\tilde m}{1 - \alpha}
\end{align*}
and $\epsilon \leq \frac{\alpha}{2(2 - \alpha)}$. Since $k$ and $m$ are integral, we have $k \ge m + 1 \ge \tilde m + 1$.
Finally, by the optimality of $k$ with respect to $\tilde m$, we have
$$
\frac{1}{H_k} \geq \frac{k^* - \tilde m}{(k- \tilde m) H_{k^*}}.
$$

% We also have
% $$
% \frac{1}{H_k} \geq \frac{k^* - \tilde m}{(k- \tilde m) H_{k^*}}.
% $$
Putting all of the above together,
\begin{align*}
    \obj(k, m) &= \frac{k - m}{H_k} \\
    &= \frac{k - \tilde m}{H_k} - \frac{m - \tilde m}{H_k} \\
    &\ge \frac{k^* - \tilde m}{H_{k^*}} - \frac{(m - \tilde m) (k^* - \tilde m)}{(k - \tilde m )H_{k^*}} \\
    &\ge \frac{k^* - m}{H_{k^*}} \left( 1 - \frac{m - \tilde m}{k - \tilde m} \right) \\
    &\ge \opt \left( 1 - \frac{m - \tilde m}{m - \tilde m + 1} \right) \\
    &\ge \opt \left( 1 - \frac{2 \epsilon m}{2 \epsilon m + (1 + \epsilon)} \right),
\end{align*}
where the first inequality is by the inequality on the lower bound $1 / H_k$ and $k \ge m + 1$, the second by $m \ge \tilde m$, the third by $k \ge m + 1$, and the fourth by 
$$
m - \tilde m \leq \frac{2 \epsilon m}{1 + \epsilon},
$$
and rearrangment.
\end{proof}

\subsection{Approximating $m$}
Next, we show how to estimate $m$ using a validation data set $\TT$. To do so, we define
$$
\mathrm{err}(\TT, f_\theta) = \frac{1}{|\TT|} \sum_{(x, y) \in \TT} \1 \{f_\theta(x) \neq y\}
$$
where $f_\theta(\cdot) \in [k]$ corresponds to the label prediction of network $\theta$. Note here that
$$
m = \mathrm{err}(\PP_B, f_\theta)|\PP_B|
$$
where $\PP_B$ corresponds to the points in dataset B.

\begin{lemma}
\label{lem:supp-estimating-m}
For any $\delta \in (0, 1)$, if we use a validation set $\TT$ of size $k$ to obtain an approximation $\hat m = \mathrm{err}(\TT, f_\theta) |\PP_B|$ for $m = \mathrm{err}(\PP_B, f_\theta) |\PP_B|$, then with probability at least $1 - \delta$,
$$
\frac{|m - \hat m|}{|\PP_B|} \leq \log(4/\delta) \left(\frac{1}{|\PP_B|} + \frac{1}{k}\right)   + \sqrt{2p(1-p)\log(4/\delta)} \left(\frac{1}{\sqrt{|\PP_B|}} + \frac{1}{\sqrt{k}}\right),
$$
where $p =  \Pr_{(x,y) \sim \DD} (f_\theta(x) \neq y) $ is the probability of mislabeling for network $f_\theta(\cdot)$.
\end{lemma}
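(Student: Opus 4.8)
The plan is to recognize both $\mathrm{err}(\TT, f_\theta)$ and $m/|\PP_B| = \mathrm{err}(\PP_B, f_\theta)$ as empirical means of i.i.d.\ Bernoulli random variables and to control the deviation of each from the true error rate $p$ via Bernstein's inequality. Conditioning on the fixed network $f_\theta$, each term $\1\{f_\theta(x)\neq y\}$ with $(x,y)\sim\DD$ is a Bernoulli$(p)$ random variable with variance $p(1-p)$, so $\mathrm{err}(\TT, f_\theta)$ is the average of $k$ such i.i.d.\ variables and $\mathrm{err}(\PP_B, f_\theta)$ is the average of $|\PP_B|$ of them (recall $\XX_U$, hence $\PP_B$, is drawn i.i.d.\ from $\DD$ and $f_\theta$ is independent of $\PP_B$). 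Since $\hat m/|\PP_B| = \mathrm{err}(\TT, f_\theta)$ and $m/|\PP_B| = \mathrm{err}(\PP_B, f_\theta)$, the quantity to bound is exactly $|\mathrm{err}(\PP_B, f_\theta) - \mathrm{err}(\TT, f_\theta)|$.

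First I would apply the triangle inequality to split this as
\[
\frac{|m-\hat m|}{|\PP_B|} \;\le\; \big|\mathrm{err}(\PP_B, f_\theta) - p\big| \;+\; \big|\mathrm{err}(\TT, f_\theta) - p\big| .
\]
Then I would invoke Bernstein's inequality on each of the two averages separately, each with failure probability $\delta/2$; since Bernstein gives a two-sided estimate of the form $2\exp(-\,\cdot\,)$, setting $2\exp(-\,\cdot\,) = \delta/2$ is exactly what produces the $\log(4/\delta)$ in the statement. Solving the resulting quadratic in the deviation $t$ and using $\sqrt{a+b}\le\sqrt a+\sqrt b$ yields, with probability at least $1-\delta/2$, the bound $|\mathrm{err}(\PP_B, f_\theta) - p| \le \tfrac{2}{3}\log(4/\delta)/|\PP_B| + \sqrt{2p(1-p)\log(4/\delta)/|\PP_B|}$, and the analogous bound for $\TT$ with $k$ in place of $|\PP_B|$. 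A union bound over the two events (total failure probability $\delta$), followed by adding the two estimates and loosening $\tfrac23 \le 1$ on the terms linear in $1/n$, gives precisely
\[
\frac{|m-\hat m|}{|\PP_B|} \le \log(4/\delta)\Big(\tfrac{1}{|\PP_B|}+\tfrac1k\Big) + \sqrt{2p(1-p)\log(4/\delta)}\Big(\tfrac{1}{\sqrt{|\PP_B|}}+\tfrac{1}{\sqrt k}\Big).
\]

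This argument is essentially a routine concentration bound, so there is no serious technical obstacle; the one point that genuinely deserves care is the conditioning. The estimate is valid only because $f_\theta$ is fixed before $\TT$ and $\PP_B$ are sampled — equivalently, the teacher is not trained on $\XX_U$ and the validation set $\TT$ is held out — so that the indicator losses are genuinely i.i.d.\ Bernoulli$(p)$ and both empirical error rates are unbiased for $p$. Without this independence the empirical error on $\PP_B$ could be biased and the bound would not follow; I would therefore state the conditioning explicitly at the outset of the proof.
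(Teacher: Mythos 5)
Your proposal matches the paper's proof essentially step for step: both decompose $|m-\hat m|/|\PP_B|$ via the triangle inequality through the true error rate $p$, apply Bernstein's inequality separately to the empirical errors on $\TT$ (with $k$ samples) and on $\PP_B$ (with $|\PP_B|$ samples) at failure level $\delta/2$ each, solve the resulting quadratic, and finish with a union bound. Your explicit remark on the independence of $f_\theta$ from the sampled points, and the $\tfrac23 \le 1$ loosening when solving the quadratic, are minor clarifications of steps the paper leaves implicit, not a different argument.
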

\begin{proof}
Let $\TT = \{(x_1, y_1), \ldots, (x_{|\TT|}, y_{|\TT|})\}$ be a set of $|\TT|$ i.i.d. points from the data distribution $\DD$ and define $X_i = \1 \{f_\theta(x_i) \neq y_i\}$ for each $i \in |\TT|$. Letting $X = \frac{1}{|\TT|} \sum_{i} X_i$, we observe that 
\begin{align*}
\E[X] &= \E_{(x,y) \sim \DD} [\1 \{f_\theta(x) \neq y\}] = \Pr_{(x,y) \sim \DD} (f_\theta(x) \neq y) \\
&= p.
\end{align*}
Since we have a sum of $k = |\TT|$ independent random variables each bounded by $1$ with variance $\Var(X_i) = p (1-p)$, we invoke Bernstein's inequality~\cite{bernstein1924modification} to obtain
$$
\Pr( |X - p| \ge \epsilon_\TT) \leq 2 \exp \left (\frac{- k  \epsilon_\TT^2}{2(p(1-p) +  \epsilon_\TT/3}\right),
$$
setting the above to $\delta/2$ and solving for $ \epsilon_\TT$ yields
$$
|X - p| \leq \epsilon_\TT \leq \frac{\log(4/\delta)}{k} + \sqrt{\frac{2p(1-p)\log(4/\delta)}{k} },
$$
with probability at least $1 - \delta/2$.
Similarly, we can define the random variables $Y_1, \ldots, Y_{|\PP_B|}$ so that $Y_i = \1 \{f_\theta(x_i) \neq y_i\}$ for each $(x_i, y_i) \in \PP_B$. Letting $Y = \frac{1}{|\PP_B|} \sum_{i} Y_i$ and observing that $\E[Y] = p$ as before, we invoke Bernstein's inequality again to obtain that with probability at least $1 - \delta/2$,
$$
|Y - p| = \epsilon_{\PP_B} \leq \frac{\log(4/\delta)}{|\PP_B|} + \sqrt{\frac{2p(1-p)\log(4/\delta)}{|\PP_B|} }.
$$
The statement follows by the triangle inequality $|Y - X| \leq |Y - p| + |X - p|$ and the union bound.
\end{proof}
\section{Additional Evaluations \& Experimental Details}
\label{sec:supp-results}

Here, we describe the experimental details and hyperparameters used in our evaluations and provide additional empirical results that supplement the ones presented in the paper. Our additional evaluations support the robustness and widespread applicability of our approach.

\subsection{Experimental Details}
\label{sec:supp-experimental-details}
We conduct our evaluations on 64 TPU v4s each with two cores. We used a validation data set of size $1,000$ for the CIFAR10, CIFAR100, and SVHN data sets, and used a validation data set of size $10,000$ for ImageNet, respectively, to estimate $m$. The hyper-parameters used with respect to each architecture and corresponding data set(s) are as follows. 

\paragraph{MobileNet (CIFAR10, CIFAR100, SVHN} For the experiments involving MobileNet~\cite{mobilenet}, whenever MobileNet was used as a student architecture it was initialized with a width paramater of 1, and whenever it was used as a teacher, it was initialized with a width parameter of 2. We used the Adam optimizer~\cite{kingma2014adam} with default parameters (learning rate: $1\mathrm{e}{-3}$) and trained for either 100 or 200 epochs depending on the experimental configuration. We did not use data augmentation or weight regularization.

\paragraph{ResNets and ResNetv2s (CIFAR10, CIFAR100, SVHN} We used the Adam optimizer~\cite{kingma2014adam} with the default parameters except for the learning rate schedule which was as follows. For a given number of epochs $n_\mathrm{epochs} \in \{100, 200\}$, we used $1\mathrm{e}{-3}$ as the learning rate for the first $(2/5) n_\mathrm{epochs}$, then used $1\mathrm{e}{-4}$  until $(3/5) n_\mathrm{epochs}$, $1\mathrm{e}{-5}$ until $(4/5)n_\mathrm{epochs}$, $1\mathrm{e}{-6}$ until $(9/10) n_\mathrm{epochs}$, and finally $5\mathrm{e}{-7}$ until then end. We used rounded values for the epoch windows that determine the learning rate schedule to integral values whenever necessary. We did not use data augmentation or weight regularization.

\subsection{ImageNet}
\label{sec:supp-imagenet}
\paragraph{Setup} We used a ResNet-18 student and ResNet-50 teacher model for the ImageNet experiments. We train the student model for $100$ epochs using SGD with momentum ($= 0.9$) with batch size $256$ and a learn rate schedule as follows. For the first 5 epochs, we linearly increase the learning rate from $0$ to $0.1$, the next 30 epochs we use a learning rate of $0.1$, the next 30 after that, we use a learning rate of $0.01$, the next 20 we use a learning rate of $0.001$, and use a learning rate of $0.0001$ for the remaining epochs. We use random horizontal flips as our data augmentation.

\paragraph{Methods} The implementations of \textsc{RAD}, \textsc{Margin}, \textsc{Entropy}, and \textsc{Uniform} are the same as in our evaluations of CIFAR10/100 and SVHN in Sec~\ref{sec:results}. However, the Cluster Margin (\textsc{CM})~\cite{citovsky2021batch} and \textsc{Coreset}~\cite{sener2017active} algorithms require expensive clustering operations and were inapplicable to ImageNet off-the-shelf due to memory and computational constraints. Nevertheless, we implemented an approximate version of \textsc{CM} for completeness. We choose CM over \textsc{Coreset} because it is currently the state-of-the-art and \citet{citovsky2021batch} have already demonstrated that it outperforms \textsc{Coreset} on large-scale settings. For our approximate version of \textsc{CM}, we partition the $\approx 1.1M$ images into buckets of size $10,000$ and run HAC clustering in each bucket. We stop when the number of generated clusters reaches $500$. This leads to $56,000$ clusters total, after which we apply the \textsc{CM} algorithm in its usual way.

\subsection{Beyond Test Accuracy \& Comparison to Greedy}
\label{sec:supp-beyond-test-acc}

We investigate the performance of the student model beyond the final reported test accuracy and verify the validity our problem formulation. In particular, we question (i) whether our method also leads to improved student accuracy across all epochs during training and (ii) whether it actually achieves a higher gain with respect to the robust distillation formulation (\eqref{prb:ours-negative}, Sec.~\ref{sec:rad}) compared to using the (greedy) \textsc{Standard Margin} algorithm that simply picks the highest gain points and \textsc{Uniform} sampling. To this end, we conduct evaluations on CIFAR10, CIFAR100, and SVHN data sets similar to those in Sec.~\ref{sec:results}, and additionally report the test accuracy over each epoch for the last knowledge distillation iteration and the realized gain over the active distillation iterations.

Fig.~\ref{fig:epochs-gains} summarizes the results of our experiments for various knowledge distillation configurations. From the figures, we observe that our approach simultaneously achieves a higher final test accuracy (first column) and generally higher test accuracy over the entire training trajectory (second column). This suggests that the improvements we obtain from our approach are consistent and present regardless of when the student training is terminated. In the third column of Fig.~\ref{fig:epochs-gains}, we also observe that our approach achieves the highest realized gain among the evaluated methods and that this gain tends to be a good predictor of the method's performance. This sheds light into why the greedy variant (\textsc{Standard Margin}) that simply picks the points with the lowest margin (highest gain) is not consistently successful in practice: the high gain points are often mislabeled by the teacher, further confusing the student. This further motivates our robust formulation in Sec.~\ref{sec:rad} and supports its practicality.

\begin{figure*}[htb!]
 \centering
  \begin{minipage}[t]{1\textwidth}
  \centering
\includegraphics[width=0.9\textwidth]{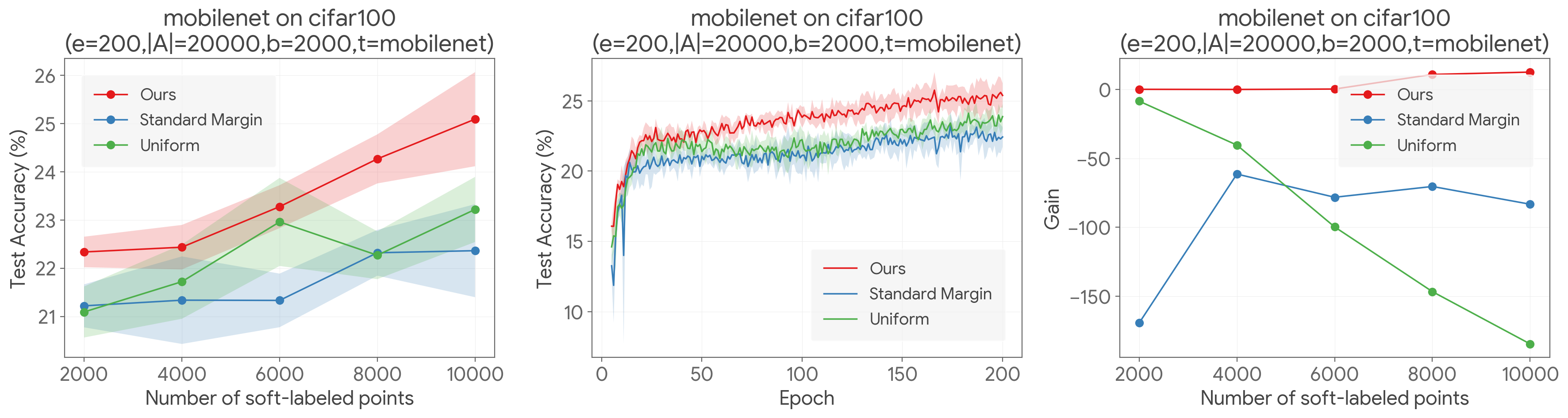}
 %\subcaption{$$}
  \end{minipage}%
  
      \begin{minipage}[t]{1\textwidth}
  \centering
\includegraphics[width=0.9\textwidth]{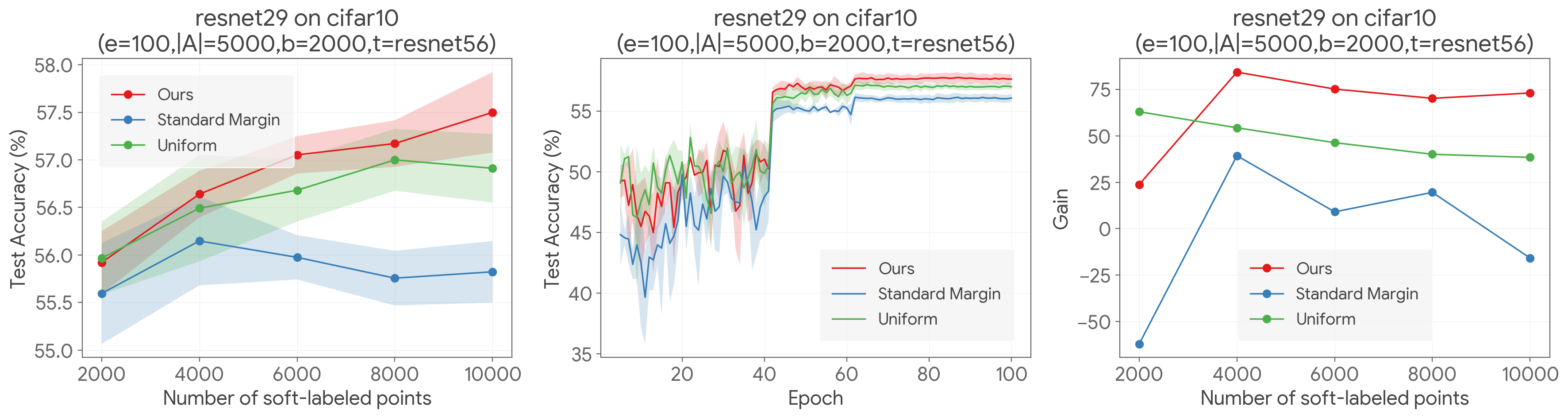}
 %\subcaption{$$}
  \end{minipage}%
  
  \begin{minipage}[t]{1\textwidth}
  \centering
\includegraphics[width=0.9\textwidth]{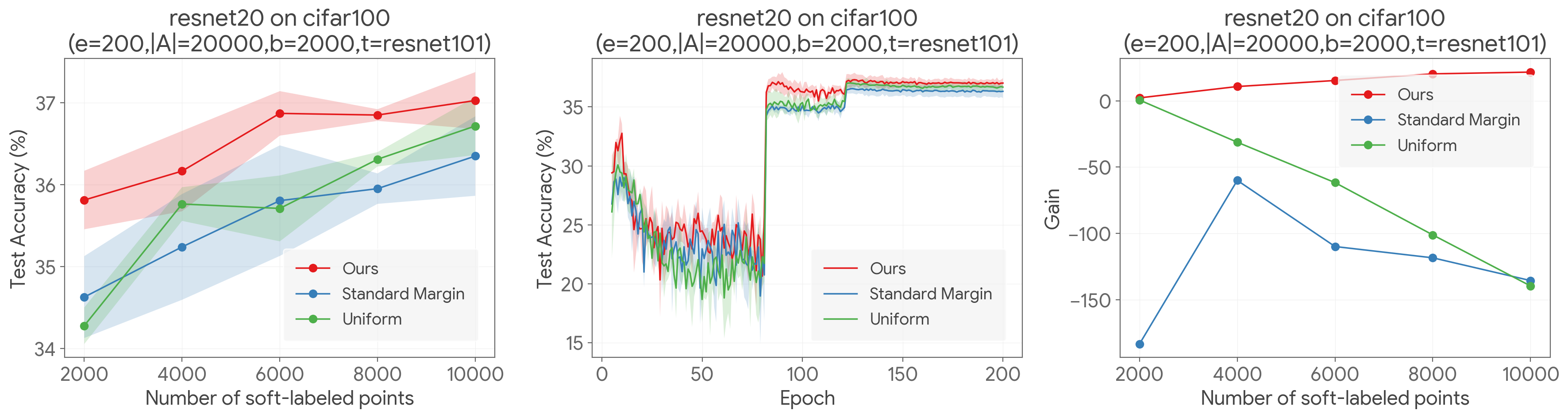}
 %\subcaption{$$}
  \end{minipage}%
  
      \begin{minipage}[t]{1\textwidth}
  \centering
\includegraphics[width=0.9\textwidth]{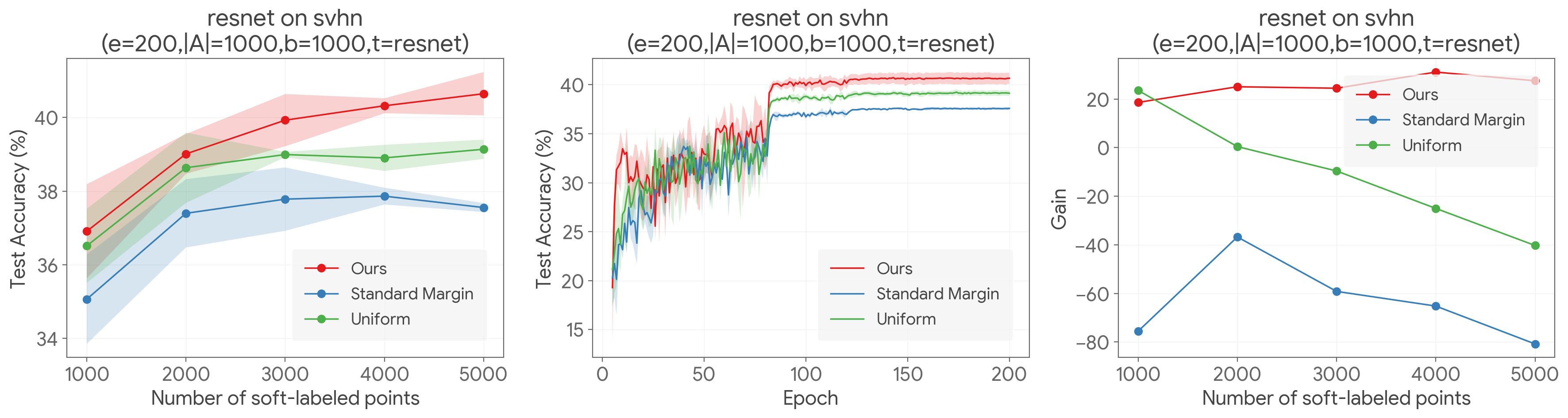}
 %\subcaption{$$}
  \end{minipage}%
 
  		\caption{The student's final accuracy (first column), student's test accuracy over the training trajectory (second column), and the realized gains (third column). Our approach generates student models that generally achieve higher test accuracy and gain with respect to the formulation in Sec.~\ref{sec:rad} across the evaluated scenarios.}
	\label{fig:epochs-gains}
\end{figure*}

\rev{
\subsection{Applying \textsc{RAD} to Standard Active Learning}
\label{sec:supp-active-learning}
Here, we demonstrate the applicability of \textsc{RAD} to standard active learning settings and compare its performance to SOTA strategies. This is motivated by recent work that has demonstrated that selecting the most difficult or informative -- with respect to a proxy metric -- samples may in fact hinder training of the model~\cite{paul2022lottery,paul2021deep}. For example, on CIFAR10, choosing the most difficult instances was observed to hurt training, and the best strategy was found to be one where moderately difficult points were picked~\cite{paul2022lottery,paul2021deep}. This is method of sampling is reminiscent of the sampling probabilities generated by \textsc{RAD} as depicted in Fig.~\ref{fig:sampling-distribution}. The results of the experiments are shown in Fig.~\ref{fig:standard-al}. \textsc{Rad} matches or improves on the performance of state-of-the-art techniques.

The results of the active learning experiments are shown in Fig.~\ref{fig:standard-al}. Since there is no teacher model involved, we instantiate \textsc{RAD} with $m = 0.05n$ for the number of teacher mistakes. This is based on the empirical studies showing that around $5\%$ of the data points are inherently too difficult~\cite{paul2022lottery,paul2021deep} or outliers which may impair training. Setting the appropriate value for $m$ when applying \textsc{RAD} to the standard active learning setting remains an open question, and is an interesting direction for future work. The results in Fig.~\ref{fig:standard-al} show that \textsc{RAD} is competitive with state-of-the-art active learning algorithms in the evaluated scenarios and matches or improves the performance of the best-performing active learning technique. We emphasize that, in contrast to existing clustering-based approaches such as CM or Coreset,  \textsc{RAD} achieves this performance in a computationally-efficient and fully parameter-free way for a given $m$.

\begin{figure*}[htb!]

  \centering
  
  \begin{minipage}[t]{0.33\textwidth}
  \centering
 \includegraphics[width=1\textwidth]{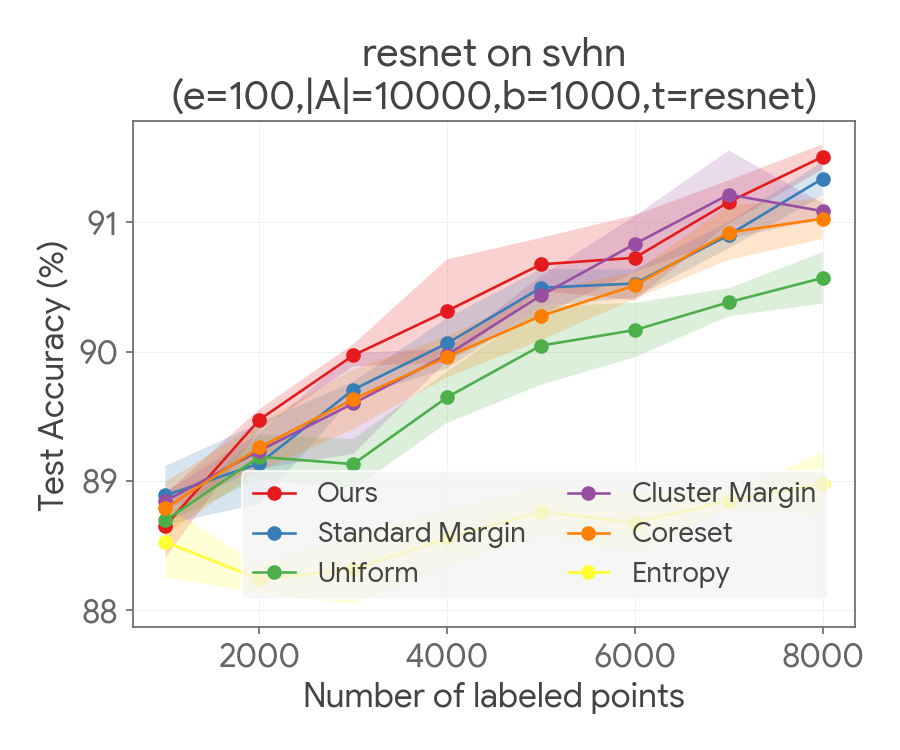}
  \end{minipage}%
  %\hfill
  \begin{minipage}[t]{0.33\textwidth}
  \centering
 \includegraphics[width=1\textwidth]{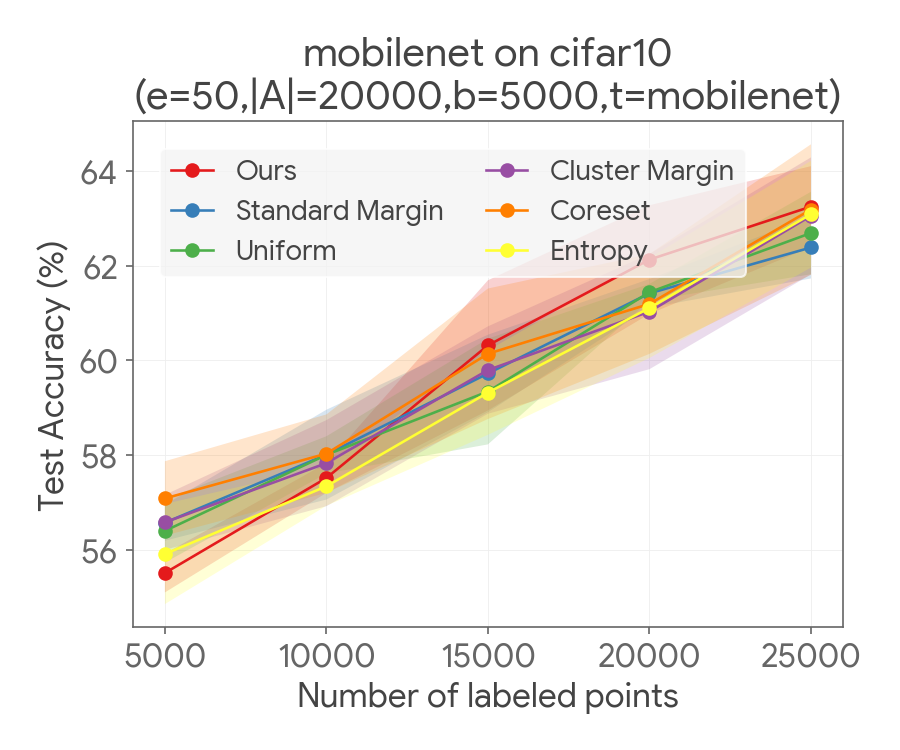}
  \end{minipage}%
  %\hfill
  \begin{minipage}[t]{0.33\textwidth}
  \centering
 \includegraphics[width=1\textwidth]{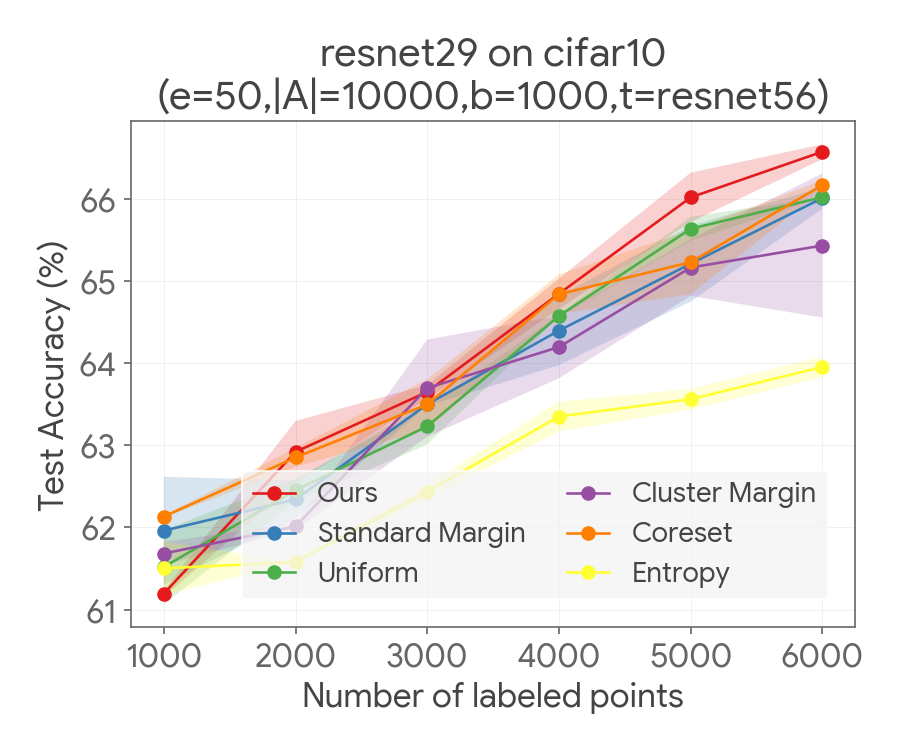}
 %\subcaption{$$}
  \end{minipage}%
		\caption{\rev{Evaluations in the standard active learning setting.}}
	\label{fig:standard-al}
\end{figure*}
}

\rev{
\subsection{Robustness to the Choice of $w$}
\label{sec:supp-robustness-w}

In this subsection, we evaluate the robustness of \textsc{RAD} by evaluating the performance of the algorithm with various instantiations for the $w$ parameter on a wide range of distillation scenarios spanning CIFAR10, CIFAR100, and SVHN datasets and various resnet student-teacher architectures. The results of experiments comparing \textsc{RAD} with the default setting of $w = 1 - m/n$ as described in Sec.~\ref{sec:rad} (Ours) to \textsc{RAD} variants with $w \in \{0.0, 0.1, 0.2, 0.3, 0.5, 0.6, 0.7, 0.8, 1.0\}$ are shown in Fig.~\ref{fig:robustness-w}. The results were averaged over $5$ trials. As we can see from the figure, the performance of \textsc{RAD} remains relatively consistent (generally within one standard deviation) over varying choices of $w$. Moreover, the theoretically-derived choice of $w = 1 - m/n$ consistently performs well across the evaluated scenarios -- it is always within one standard deviation of the best-performing $w$ for each scenario. 

\begin{figure*}[htb!]
    
      \begin{minipage}[t]{0.245\textwidth}
  \centering
 \includegraphics[width=1\textwidth]{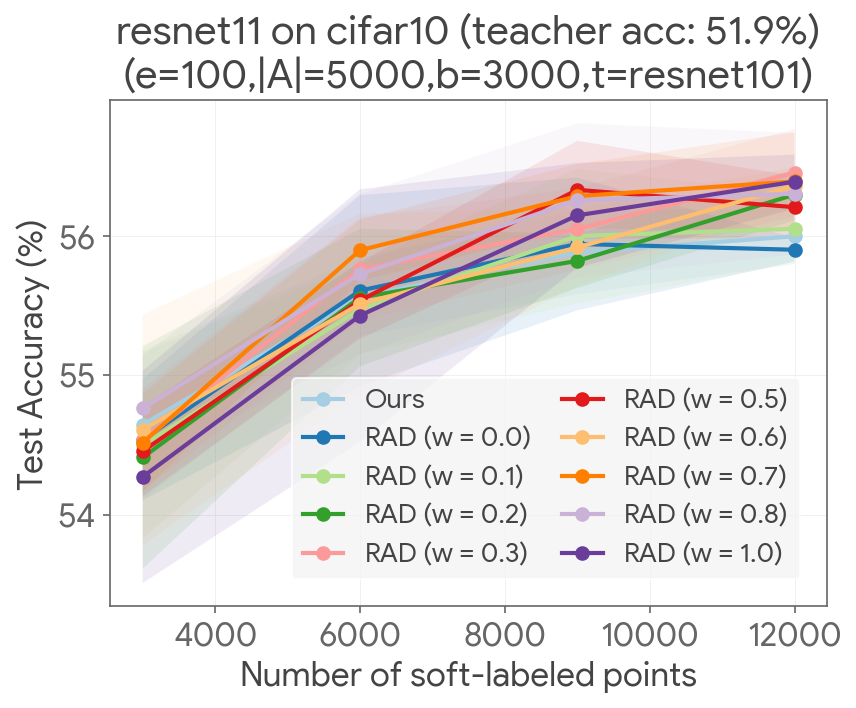}
 %\subcaption{$$}
  \end{minipage}%
  \centering
  \begin{minipage}[t]{0.245\textwidth}
  \centering
\includegraphics[width=1\textwidth]{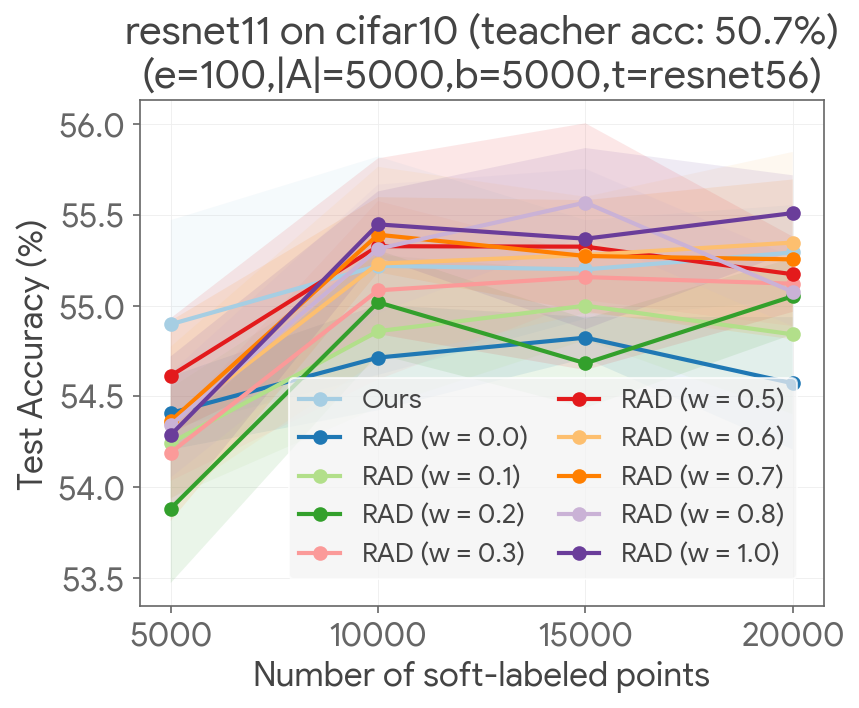}
  \end{minipage}%
  \begin{minipage}[t]{0.245\textwidth}
  \centering
\includegraphics[width=1\textwidth]{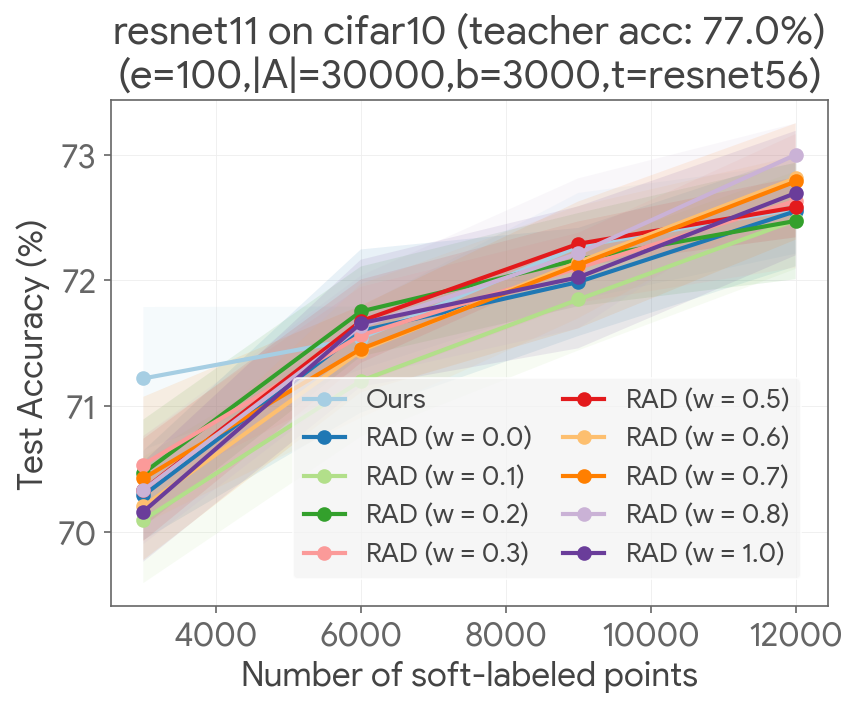}
  \end{minipage}%
    \begin{minipage}[t]{0.245\textwidth}
  \centering
\includegraphics[width=1\textwidth]{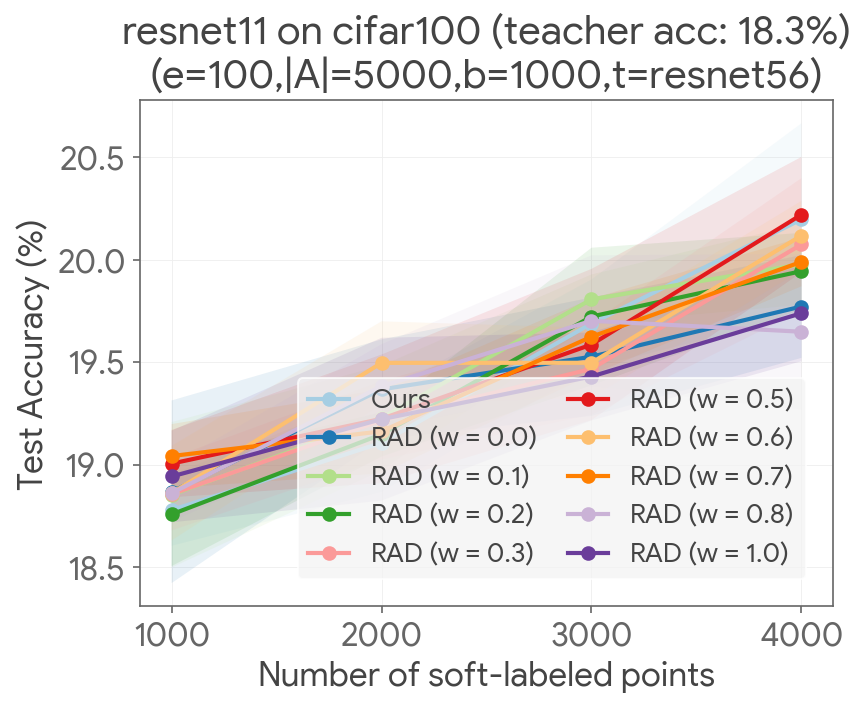}
  \end{minipage}%
  
        \begin{minipage}[t]{0.245\textwidth}
  \centering
 \includegraphics[width=1\textwidth]{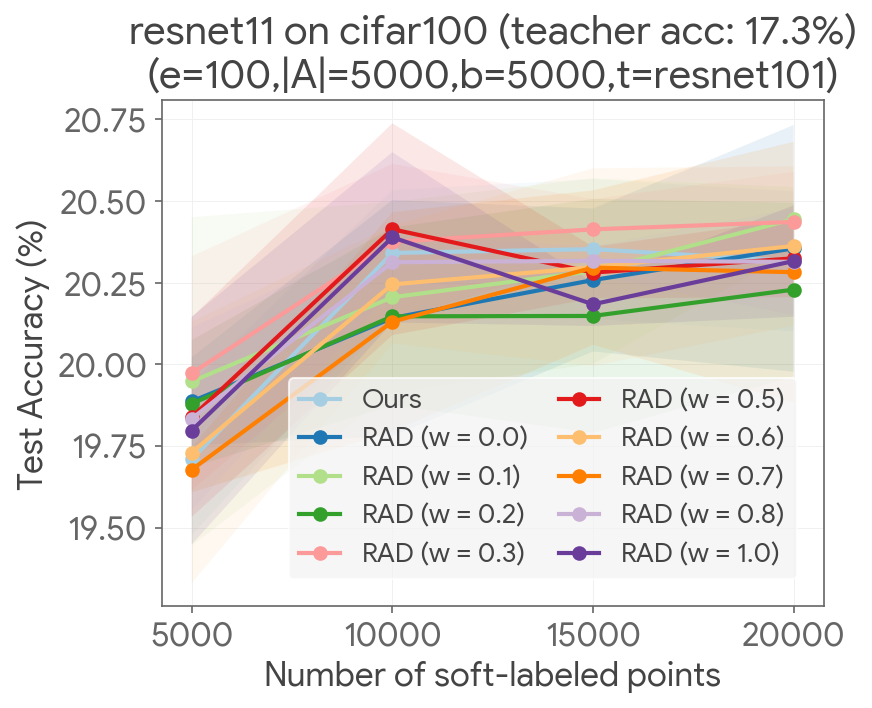}
 %\subcaption{$$}
  \end{minipage}%
  \centering
  \begin{minipage}[t]{0.245\textwidth}
  \centering
\includegraphics[width=1\textwidth]{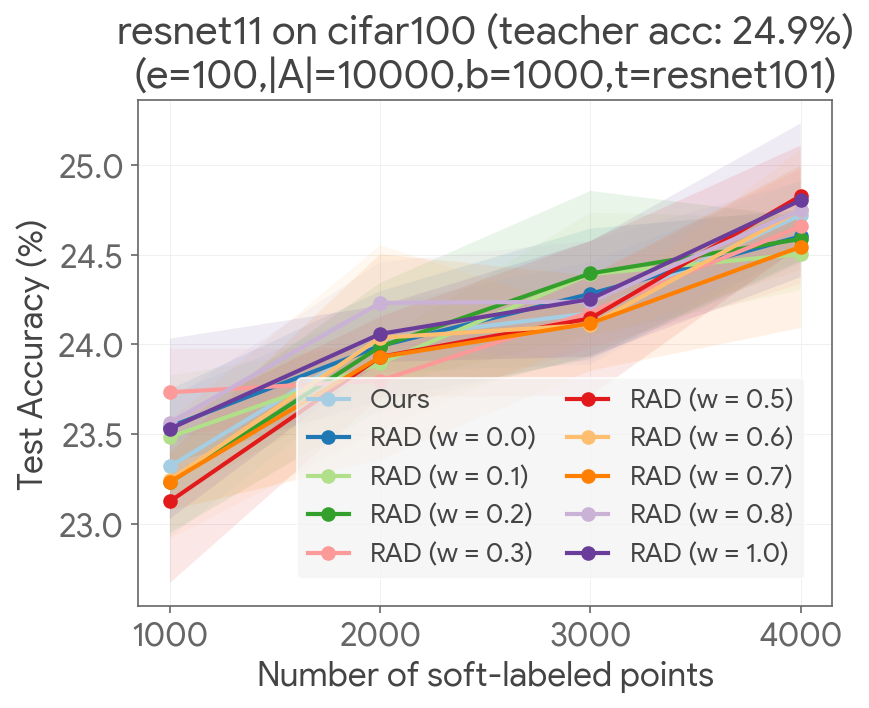}
  \end{minipage}%
  \begin{minipage}[t]{0.245\textwidth}
  \centering
\includegraphics[width=1\textwidth]{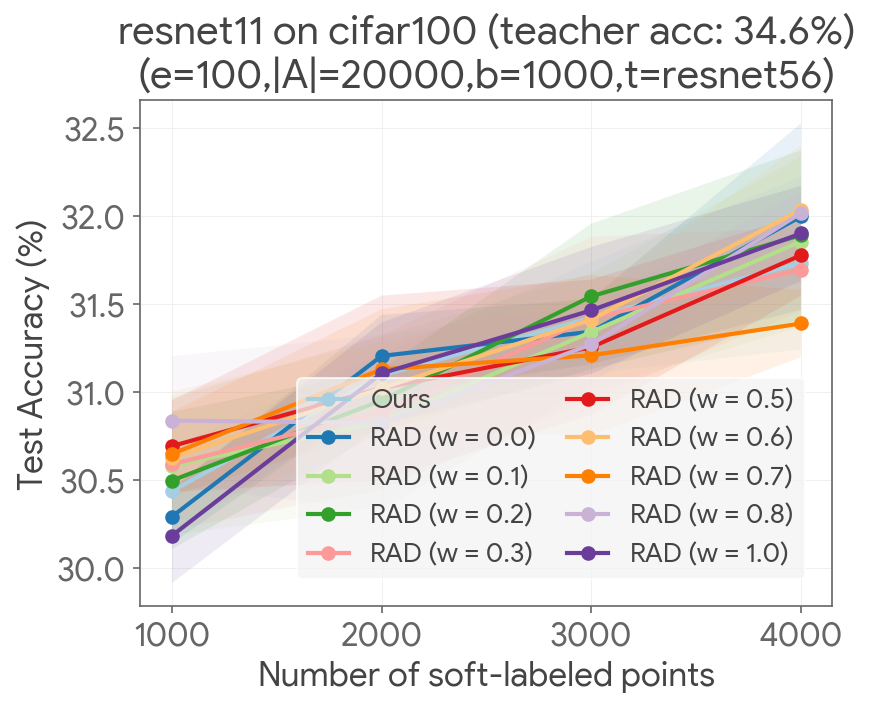}
  \end{minipage}%
    \begin{minipage}[t]{0.245\textwidth}
  \centering
\includegraphics[width=1\textwidth]{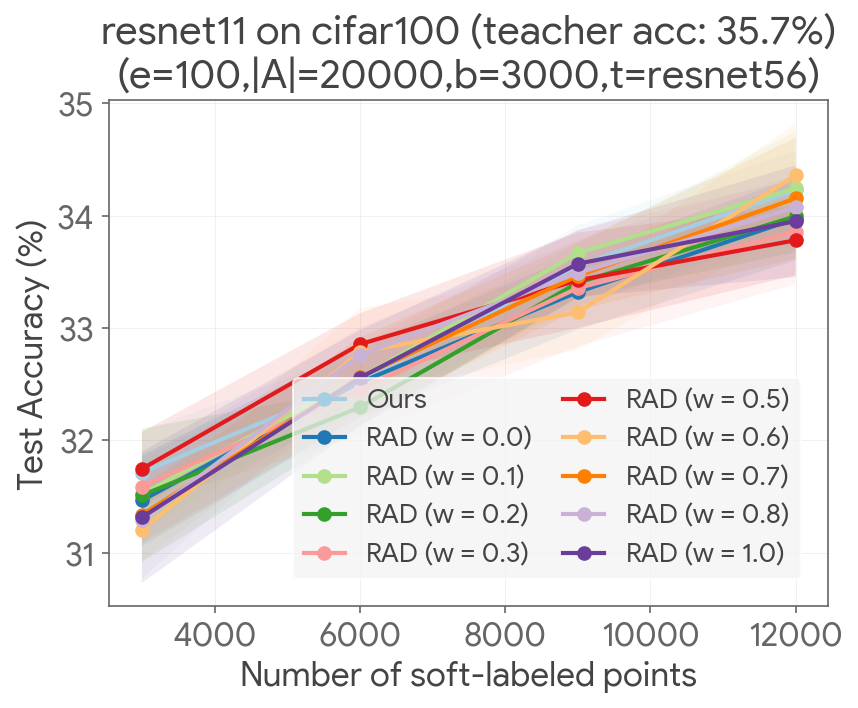}
  \end{minipage}%
  
        \begin{minipage}[t]{0.245\textwidth}
  \centering
 \includegraphics[width=1\textwidth]{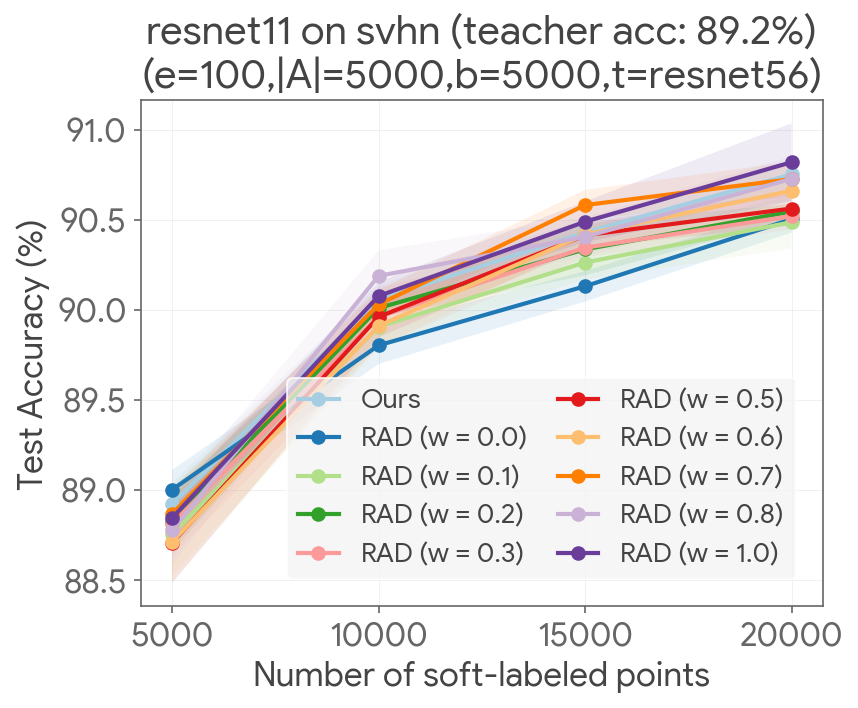}
 %\subcaption{$$}
  \end{minipage}%
  \centering
  \begin{minipage}[t]{0.245\textwidth}
  \centering
\includegraphics[width=1\textwidth]{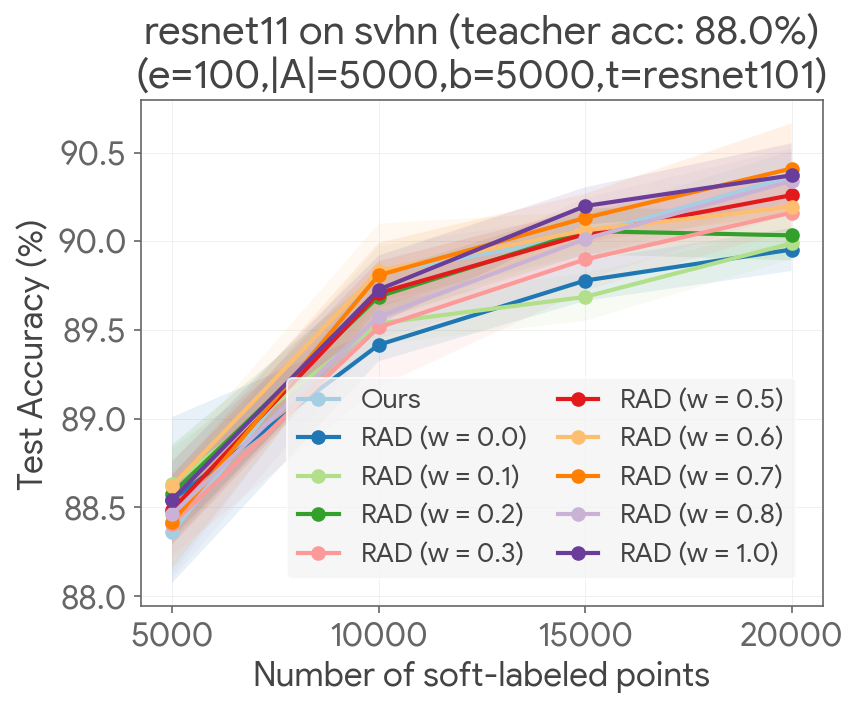}
  \end{minipage}%
  \begin{minipage}[t]{0.245\textwidth}
  \centering
\includegraphics[width=1\textwidth]{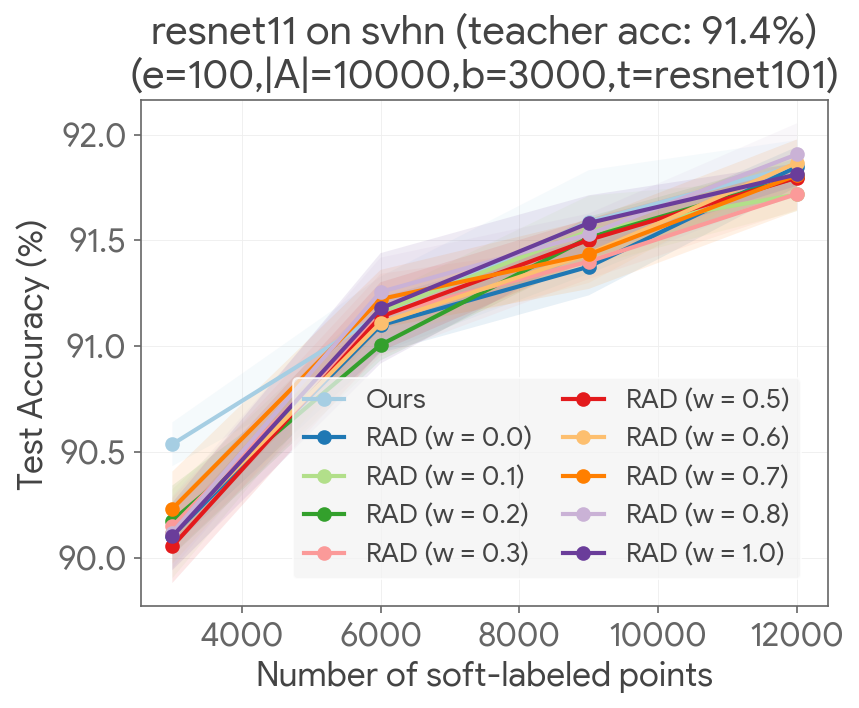}
  \end{minipage}%
    \begin{minipage}[t]{0.245\textwidth}
  \centering
\includegraphics[width=1\textwidth]{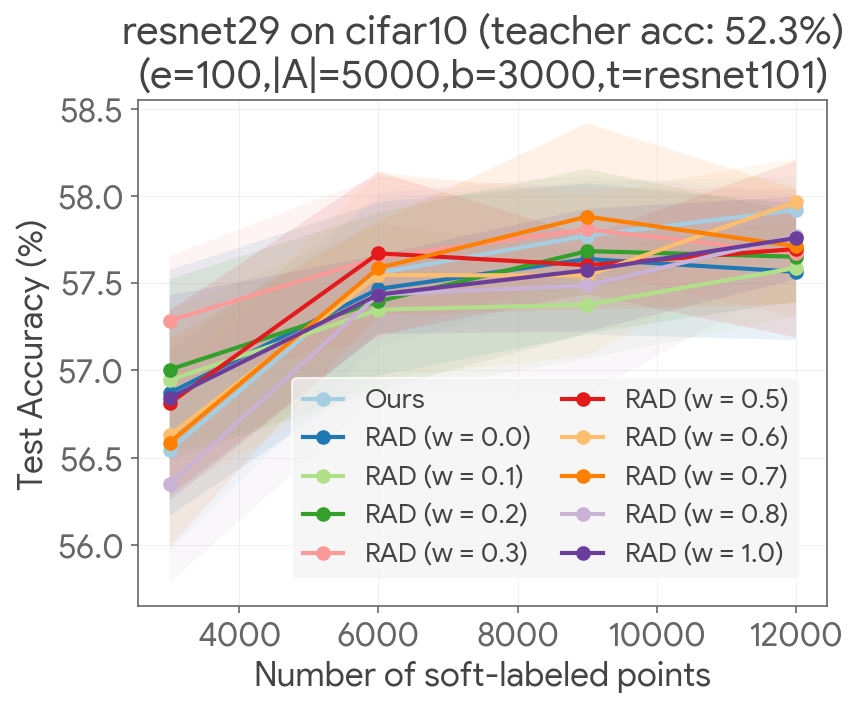}
  \end{minipage}%

        \begin{minipage}[t]{0.245\textwidth}
  \centering
 \includegraphics[width=1\textwidth]{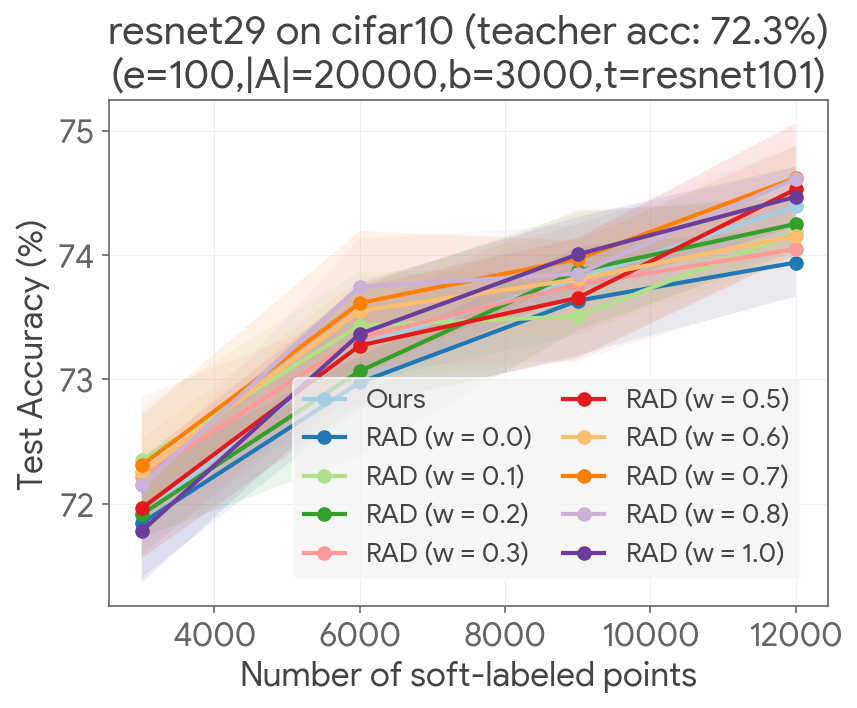}
 %\subcaption{$$}
  \end{minipage}%
  \centering
  \begin{minipage}[t]{0.245\textwidth}
  \centering
\includegraphics[width=1\textwidth]{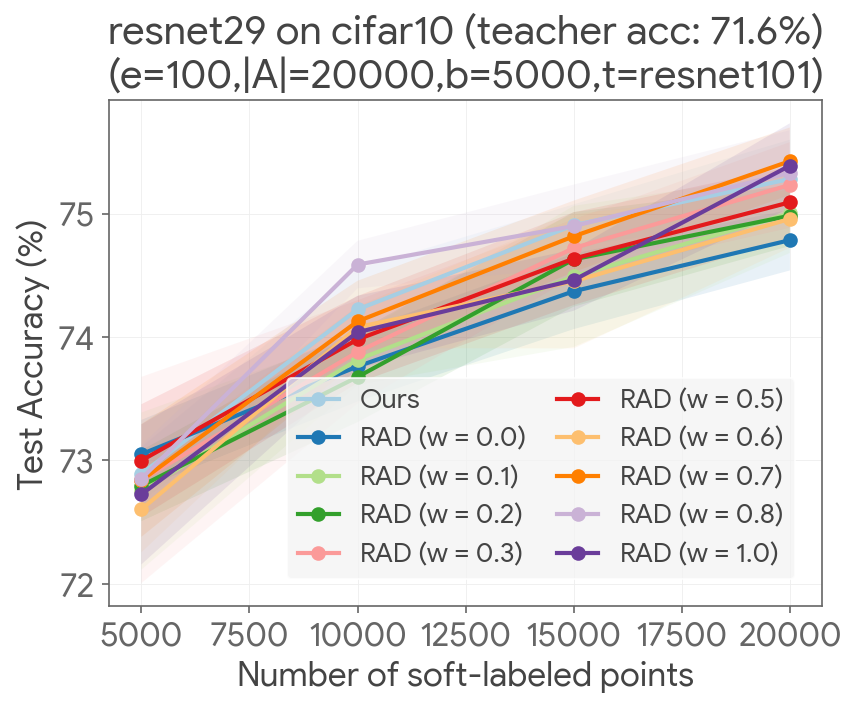}
  \end{minipage}%
  \begin{minipage}[t]{0.245\textwidth}
  \centering
\includegraphics[width=1\textwidth]{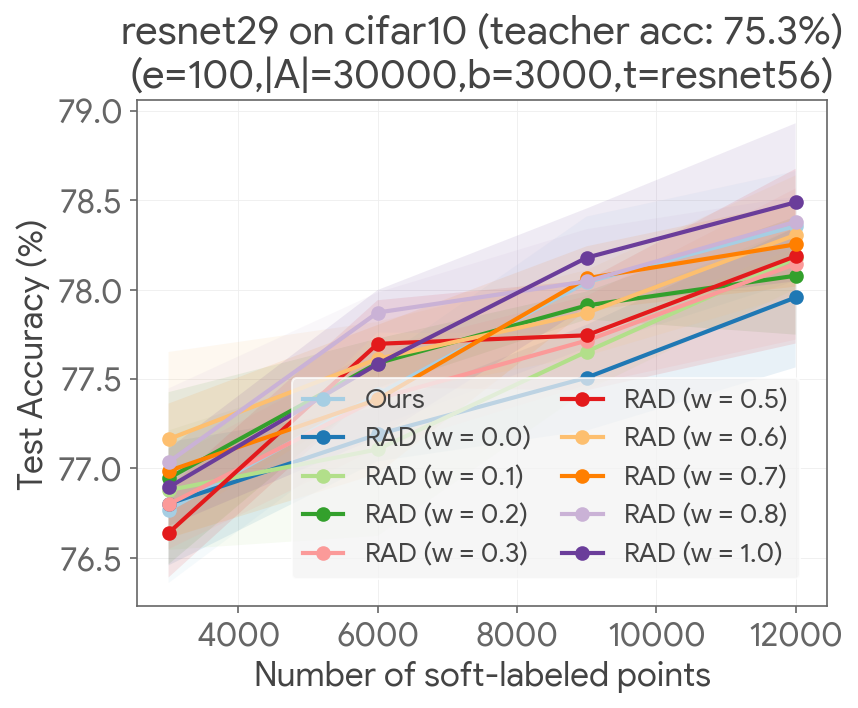}
  \end{minipage}%
    \begin{minipage}[t]{0.245\textwidth}
  \centering
\includegraphics[width=1\textwidth]{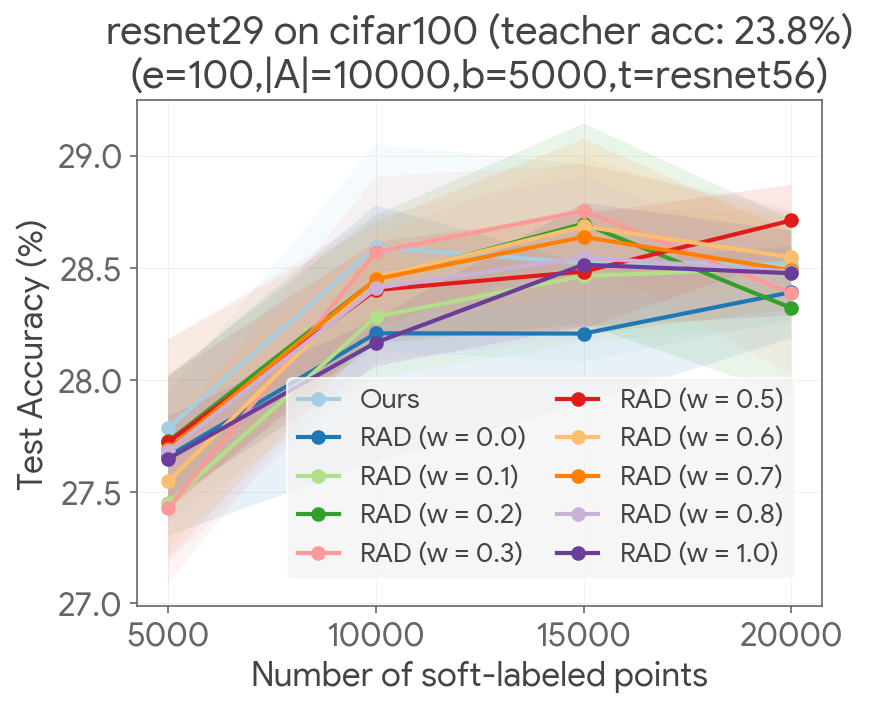}
  \end{minipage}%
  
        \begin{minipage}[t]{0.245\textwidth}
  \centering
 \includegraphics[width=1\textwidth]{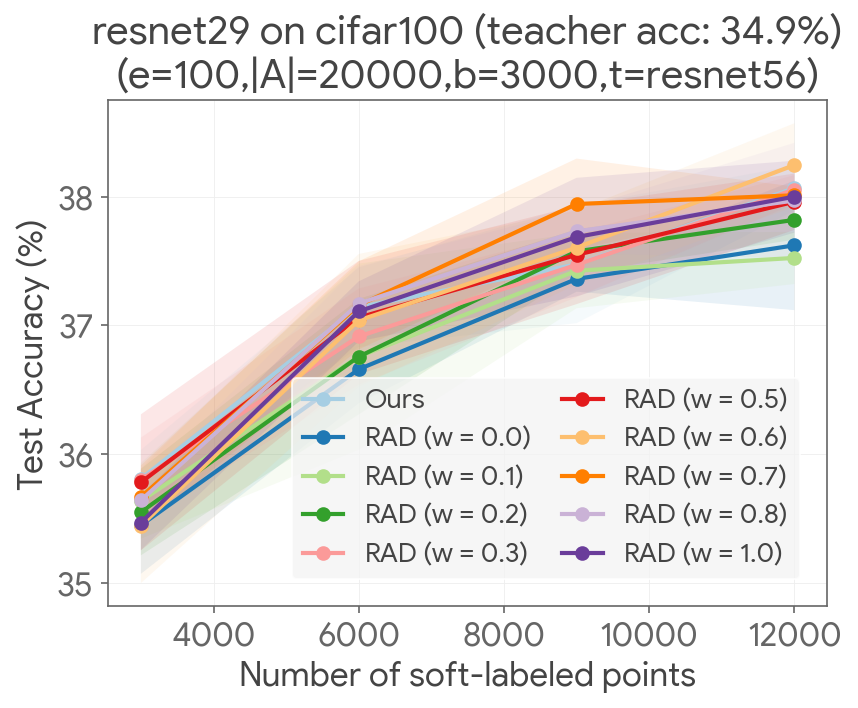}
 %\subcaption{$$}
  \end{minipage}%
  \centering
  \begin{minipage}[t]{0.245\textwidth}
  \centering
\includegraphics[width=1\textwidth]{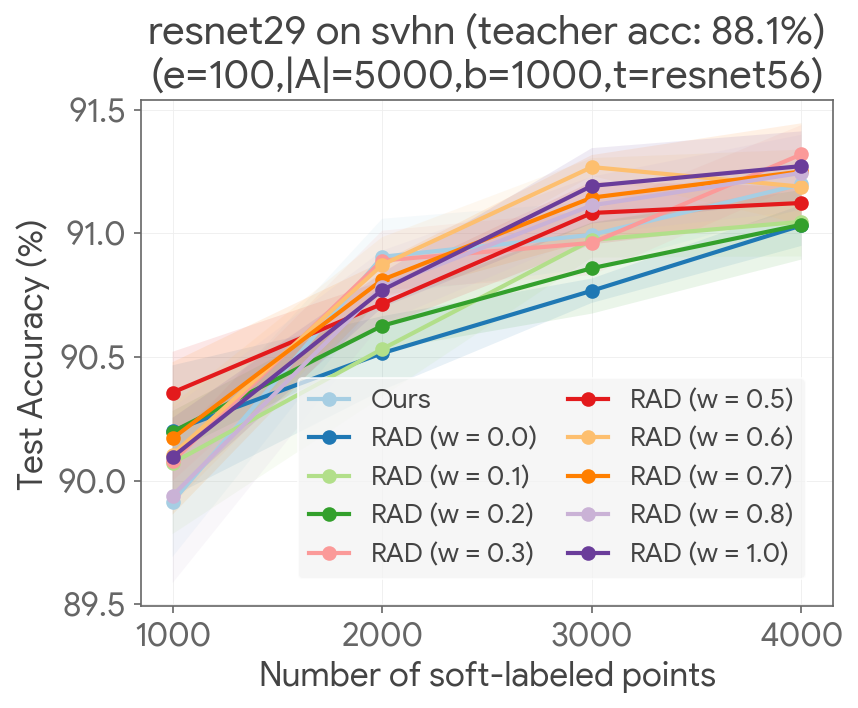}
  \end{minipage}%
  \begin{minipage}[t]{0.245\textwidth}
  \centering
\includegraphics[width=1\textwidth]{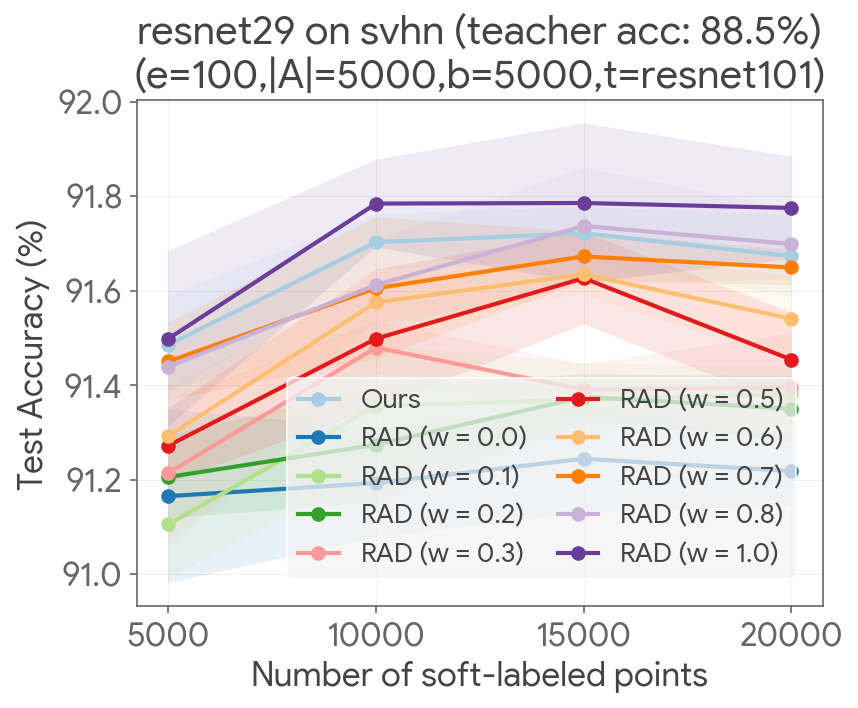}
  \end{minipage}%
    \begin{minipage}[t]{0.245\textwidth}
  \centering
\includegraphics[width=1\textwidth]{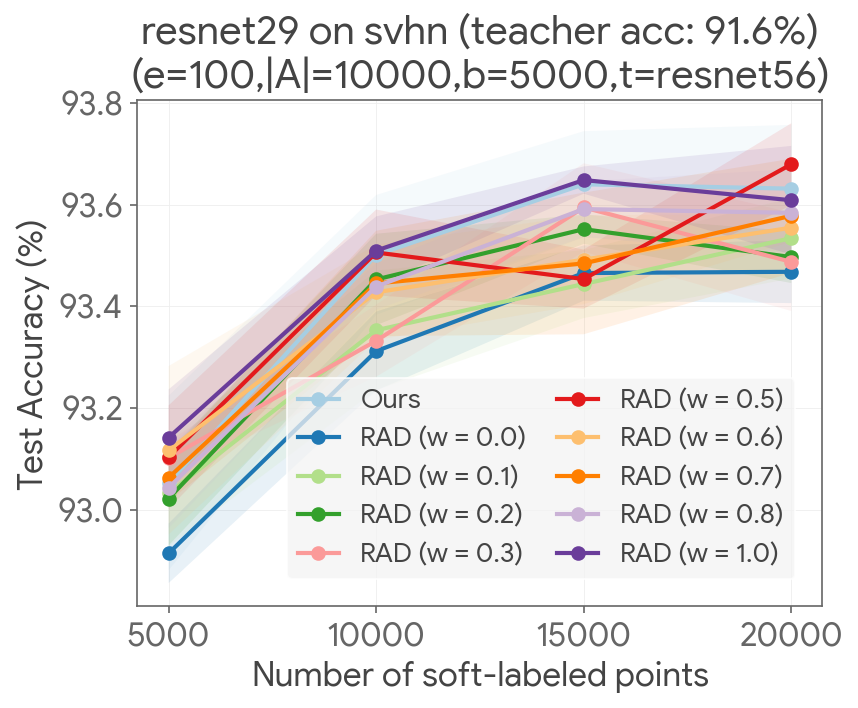}
  \end{minipage}%
		\caption{\rev{Comparisons of the performance of \textsc{RAD} with various settings of the hyper-parameter $w$ compared to \textsc{Ours}, which uses the default value of $w = 1 - m/n$ (i.e., teacher accuracy). The overlapping performance of the various instantiations (within shaded region of one standard deviation) supports the robustness of \textsc{RAD} to the various settings of $w \in [0,1]$.}}
	\label{fig:robustness-w}
% 	\vspace{-3ex}
\end{figure*}

\subsection{Robustness to the Choice of Gain}
\label{sec:supp-robustness-gain}
In our empirical evaluations, we had so far only considered a specific definition of gains with respect to the student's margin as described in Sec.~\ref{sec:rad}, i.e., $\gain = 1 - \margin$. Since \textsc{RAD} can generally be used with any user-specified notion of gain, in this section we investigate the performance of \textsc{RAD} when the entropy of the student's softmax prediction $f_\mathrm{student}(x_i) \in [0,1]^k$ is used to define the gains, i.e.,
$$
\gain = \mathrm{Entropy}(f_\mathrm{student}(x_i)) = -\sum_{j=1}^k f_\mathrm{student}(x_i)_j \log f_\mathrm{student}(x_i)_j.
$$
We label this algorithm \textsc{RAD Entropy} and compare its performance to our variant that uses the student margins.

Fig.~\ref{fig:robustness-gain} shows the results of our comparisons with varying distillation configurations, architectures, and data sets averaged over 5 trials. Overall, we observe that the change in the definition of gain does not lead to a significant change ($>$ one standard deviation) in performance.

\begin{figure*}[htb!]
    
      \begin{minipage}[t]{0.33\textwidth}
  \centering
 \includegraphics[width=1\textwidth]{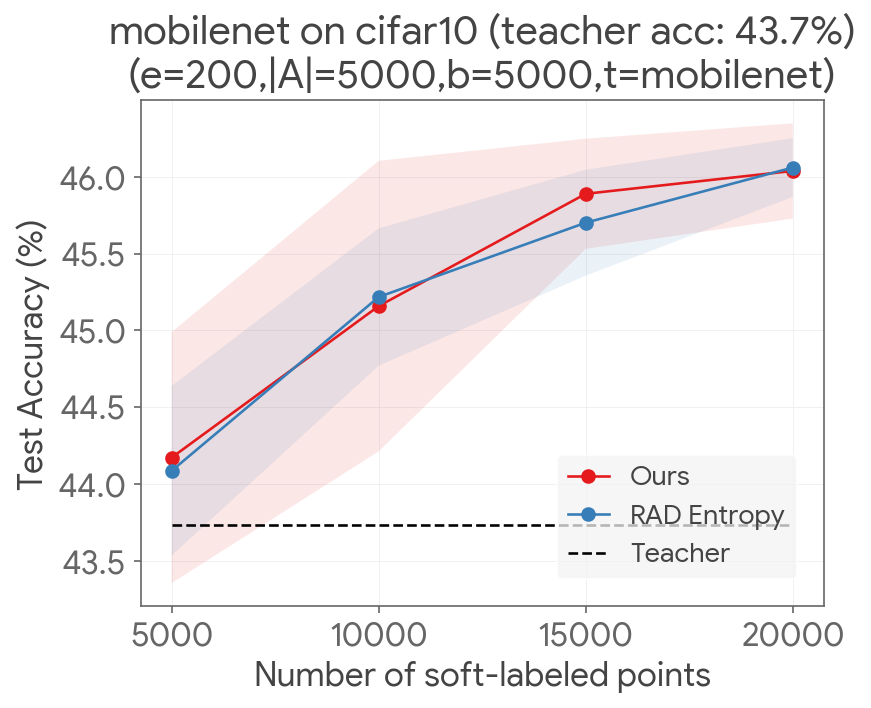}
 %\subcaption{$$}
  \end{minipage}%
  \centering
  \begin{minipage}[t]{0.33\textwidth}
  \centering
\includegraphics[width=1\textwidth]{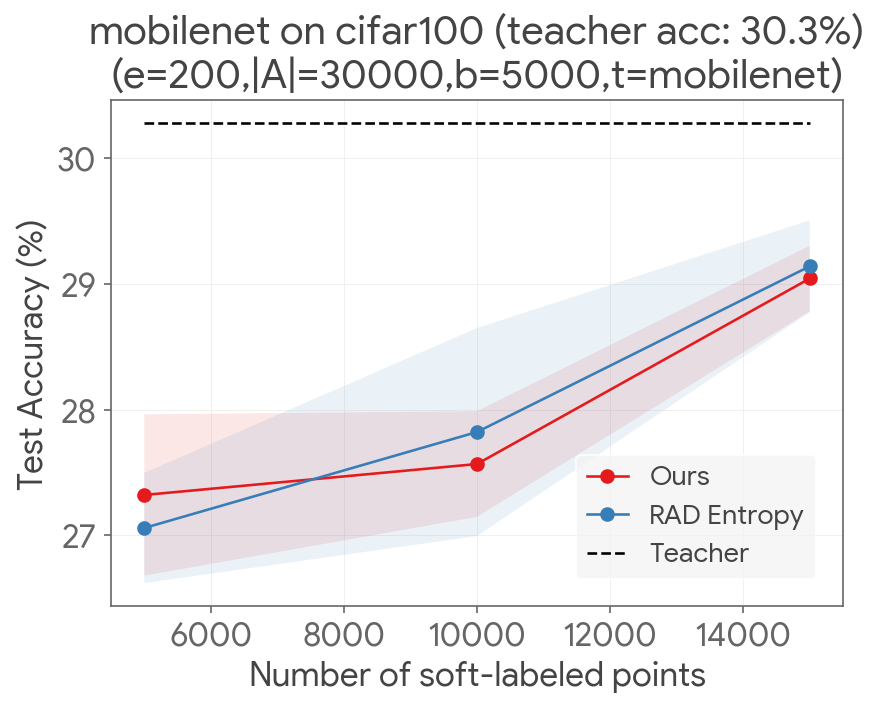}
  \end{minipage}%
  \begin{minipage}[t]{0.33\textwidth}
  \centering
\includegraphics[width=1\textwidth]{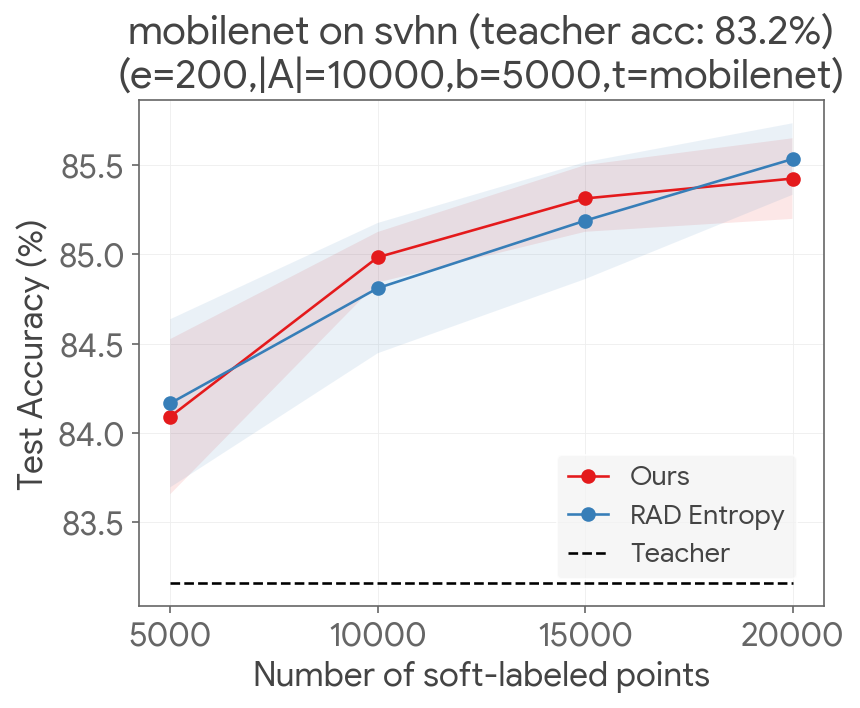}
  \end{minipage}%
  
    \begin{minipage}[t]{0.33\textwidth}
  \centering
\includegraphics[width=1\textwidth]{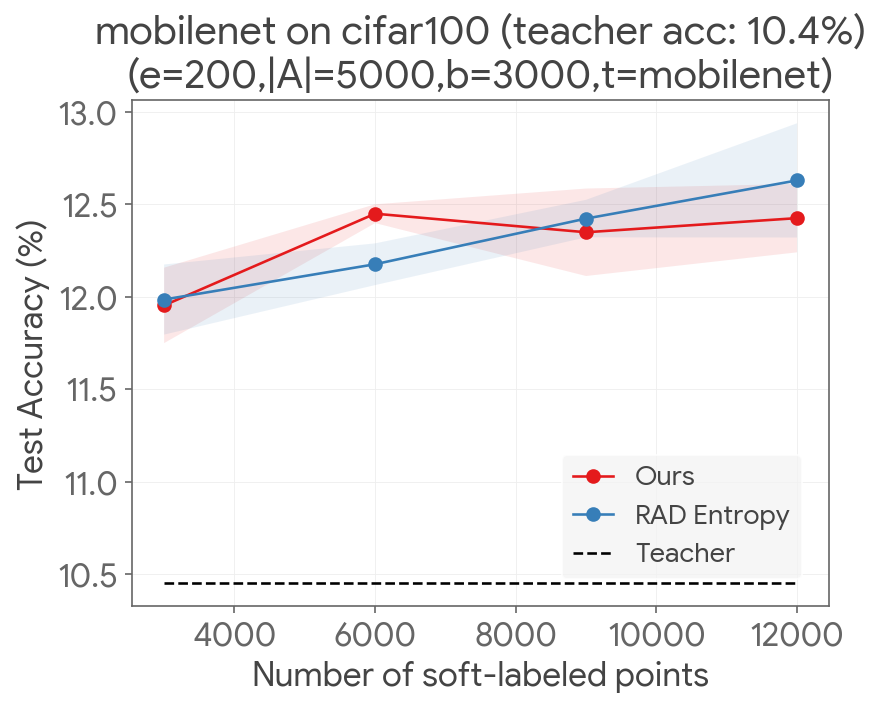}
 %\subcaption{$$}
  \end{minipage}%
  \centering
  \begin{minipage}[t]{0.33\textwidth}
  \centering
\includegraphics[width=1\textwidth]{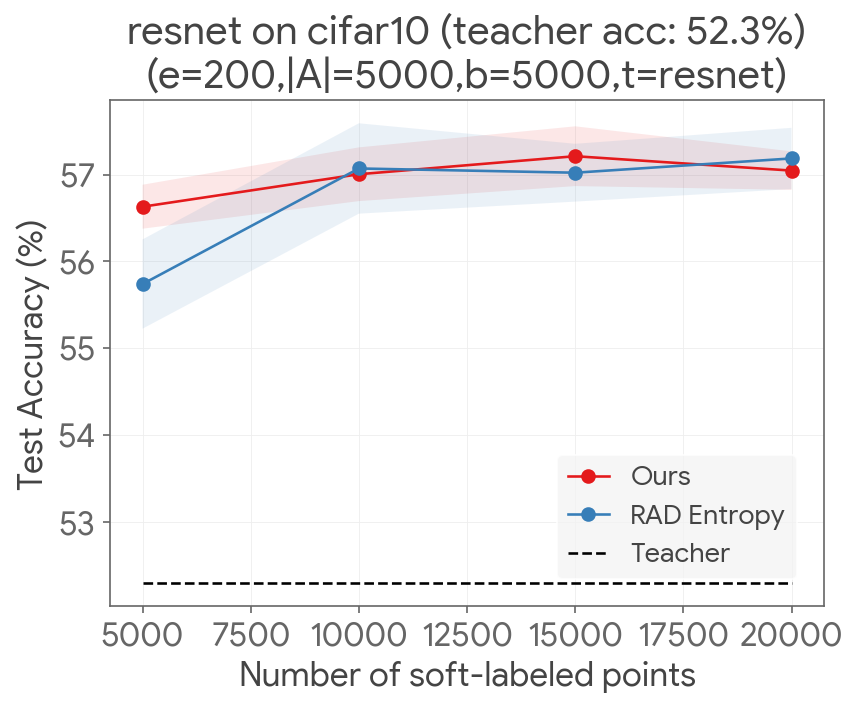}
  \end{minipage}%
  \begin{minipage}[t]{0.33\textwidth}
  \centering
\includegraphics[width=1\textwidth]{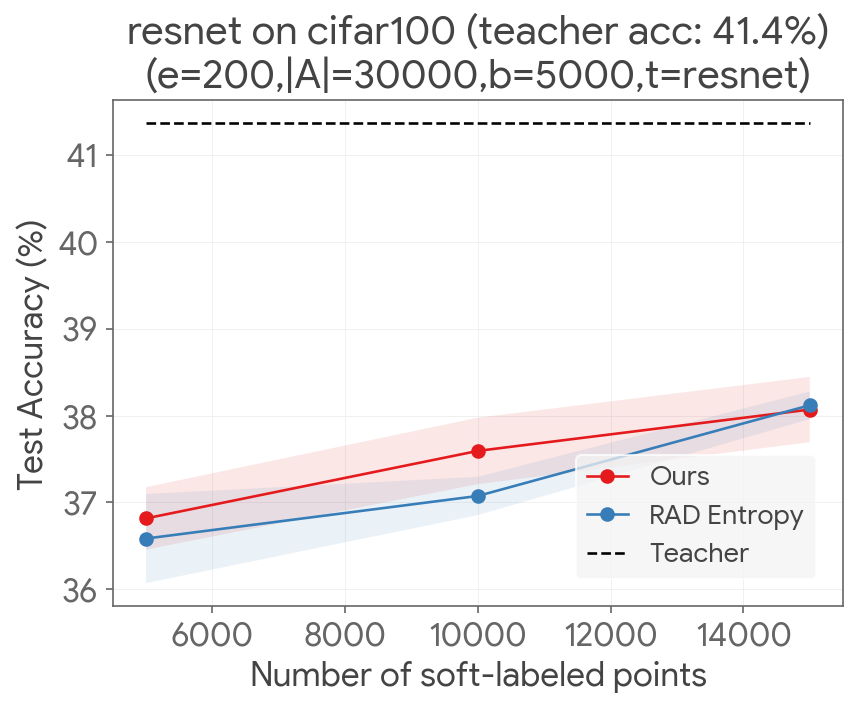}
 %\subcaption{$$}
  \end{minipage}%

        \begin{minipage}[t]{0.33\textwidth}
  \centering
\includegraphics[width=1\textwidth]{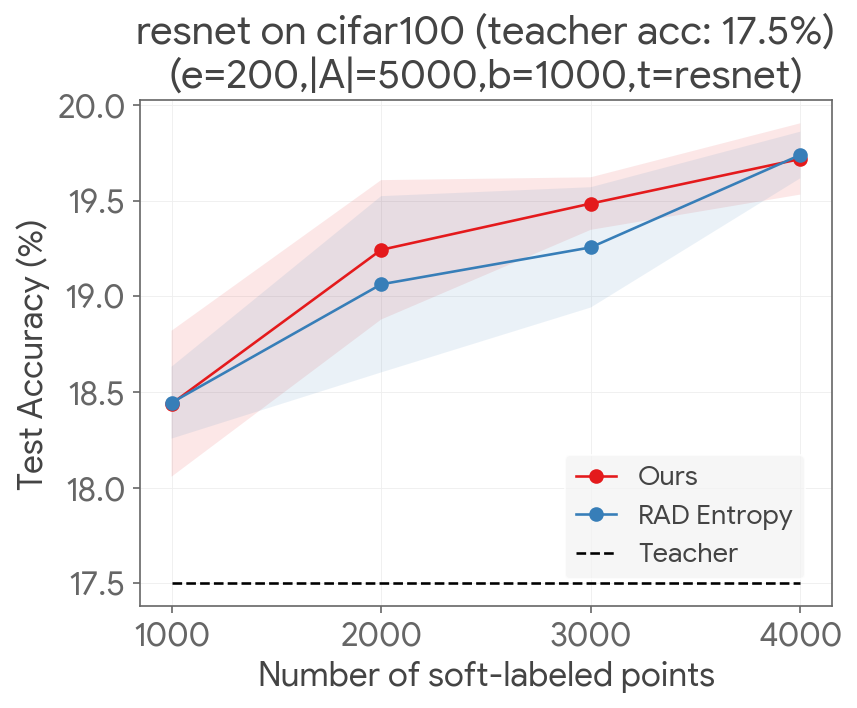}
  \end{minipage}%
  \centering
  \begin{minipage}[t]{0.33\textwidth}
  \centering
\includegraphics[width=1\textwidth]{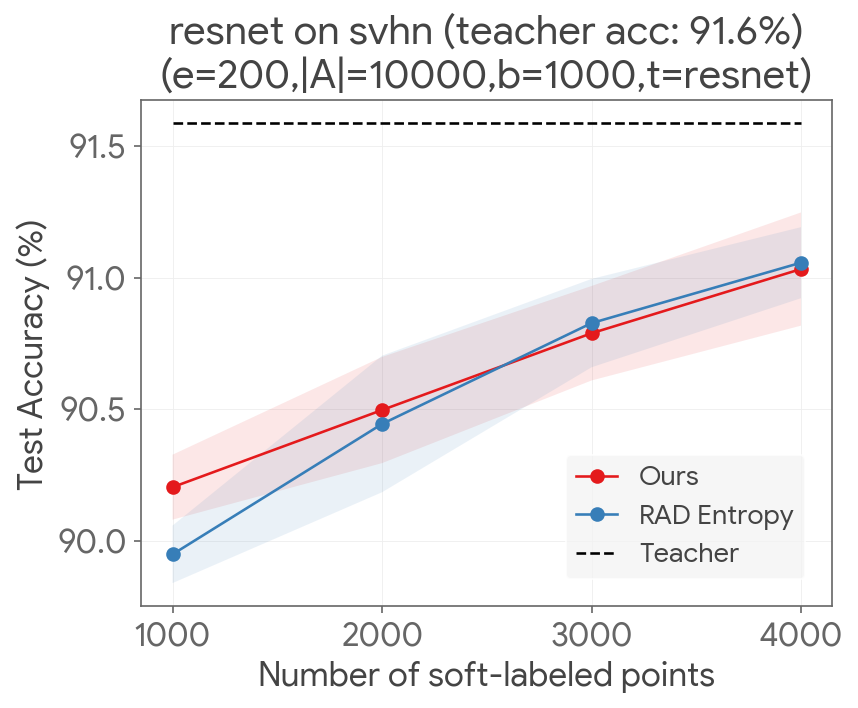}
  \end{minipage}%
  \begin{minipage}[t]{0.33\textwidth}
  \centering
\includegraphics[width=1\textwidth]{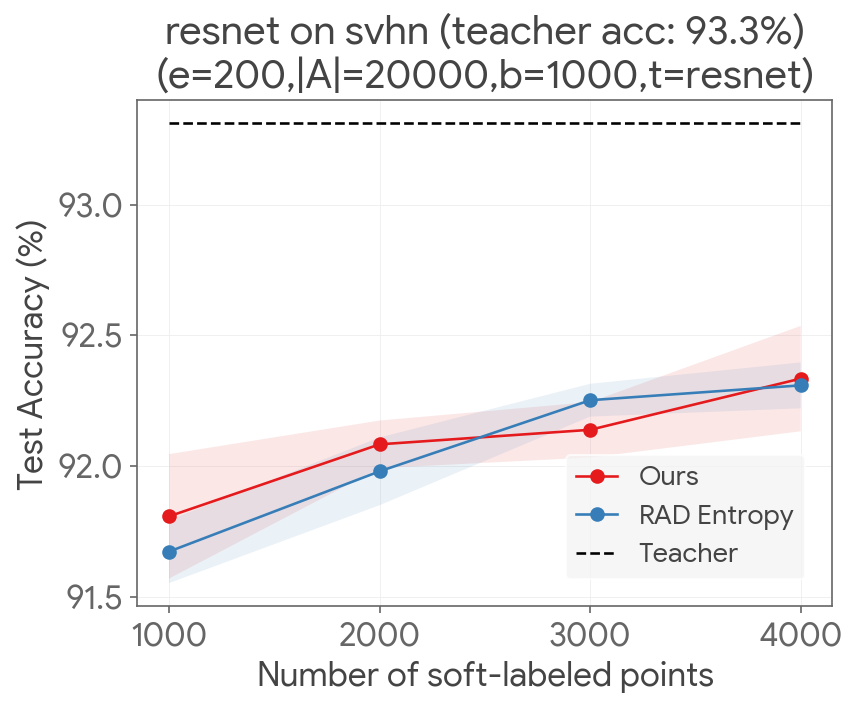}
  \end{minipage}%

        \begin{minipage}[t]{0.33\textwidth}
  \centering
\includegraphics[width=1\textwidth]{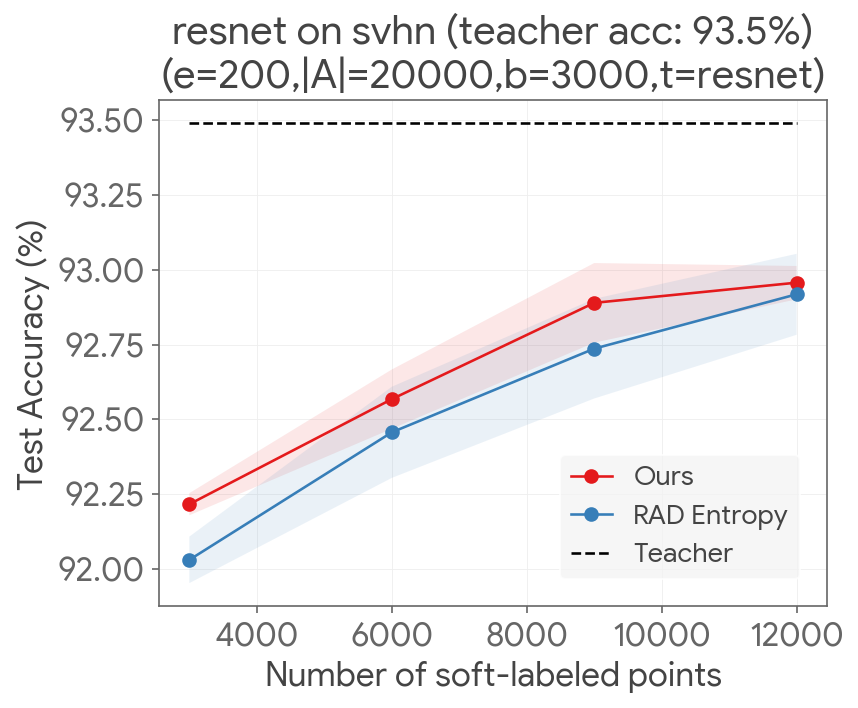}
  \end{minipage}%
  \centering
  \begin{minipage}[t]{0.33\textwidth}
  \centering
\includegraphics[width=1\textwidth]{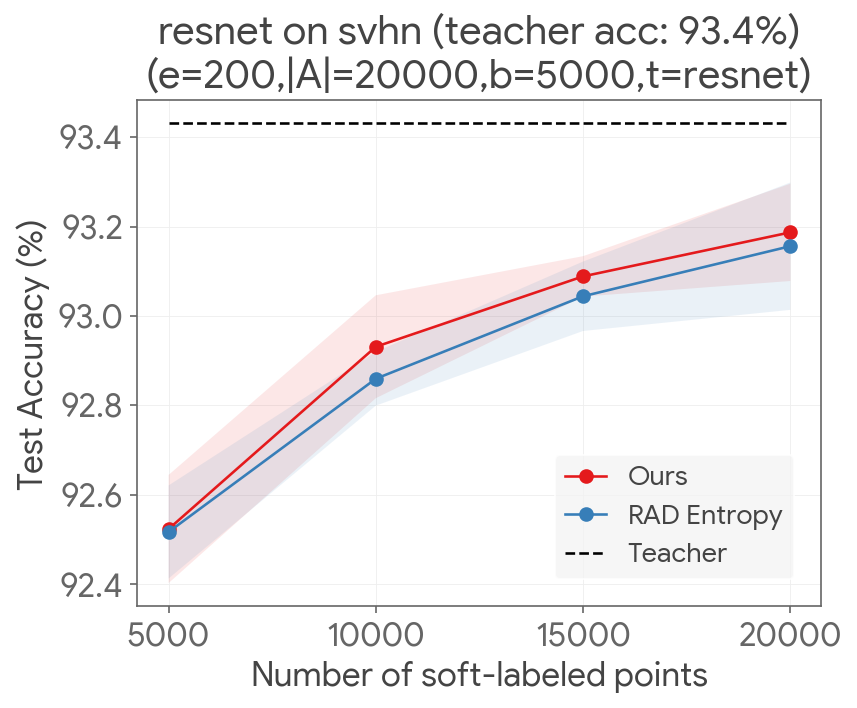}
  \end{minipage}%
  \begin{minipage}[t]{0.33\textwidth}
  \centering
\includegraphics[width=1\textwidth]{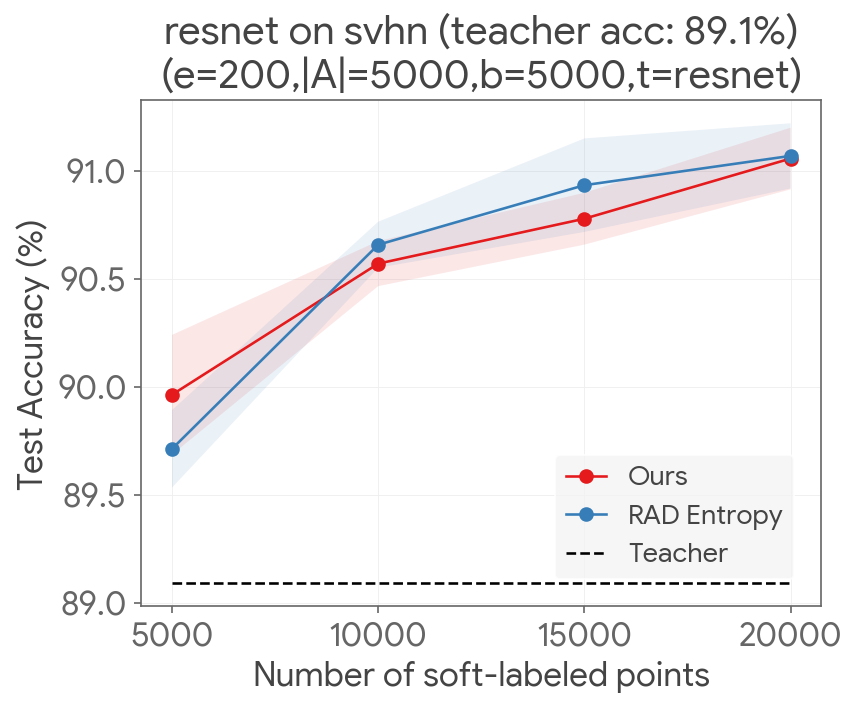}
  \end{minipage}%
  
		\caption{\rev{Comparisons of \textsc{RAD} with the gains defined with respect to the student margins as in Sec.~\ref{sec:background-subsec}, \textsc{Ours}, to \textsc{RAD} with gains defined with respect to the entropy of the student predictions, \textsc{RAD Entropy}. \textsc{RAD}'s performance is robust to the alternative notion of uncertainty to define the gains.}}
	\label{fig:robustness-gain}
% 	\vspace{-3ex}
\end{figure*}

}

\subsection{Robustness to Varying Configurations}
\label{sec:supp-varying-configurations}
In this section, we consider the robustness of our algorithm to varying configurations on a \emph{fixed data set}. In particular, we consider the SVHN~\cite{svhn} data set and consider the performance with varying size of the student model (ResNetv2-\{11, 20, 29\}), the size of the teacher (ResNetv2-\{56, 110\}), $|A| \in \{5000, 10000, 20000\}$, $b \in \{1000, 2000, 5000\}$, and number of epochs $e = \{100, 200\}$. Due to resource constraints, we conduct the extensive comparisons against the top-2 best performing algorithms from the main body of the paper (Sec.~\ref{sec:results}): (\textsc{Standard}) \textsc{Margin} and \textsc{Uniform}. The results of the evaluations show that our method uniformly performs better or at least as well as well as the competing approaches.

\begin{figure*}[htb!]

    \begin{minipage}[t]{0.33\textwidth}
  \centering
 \includegraphics[width=1\textwidth]{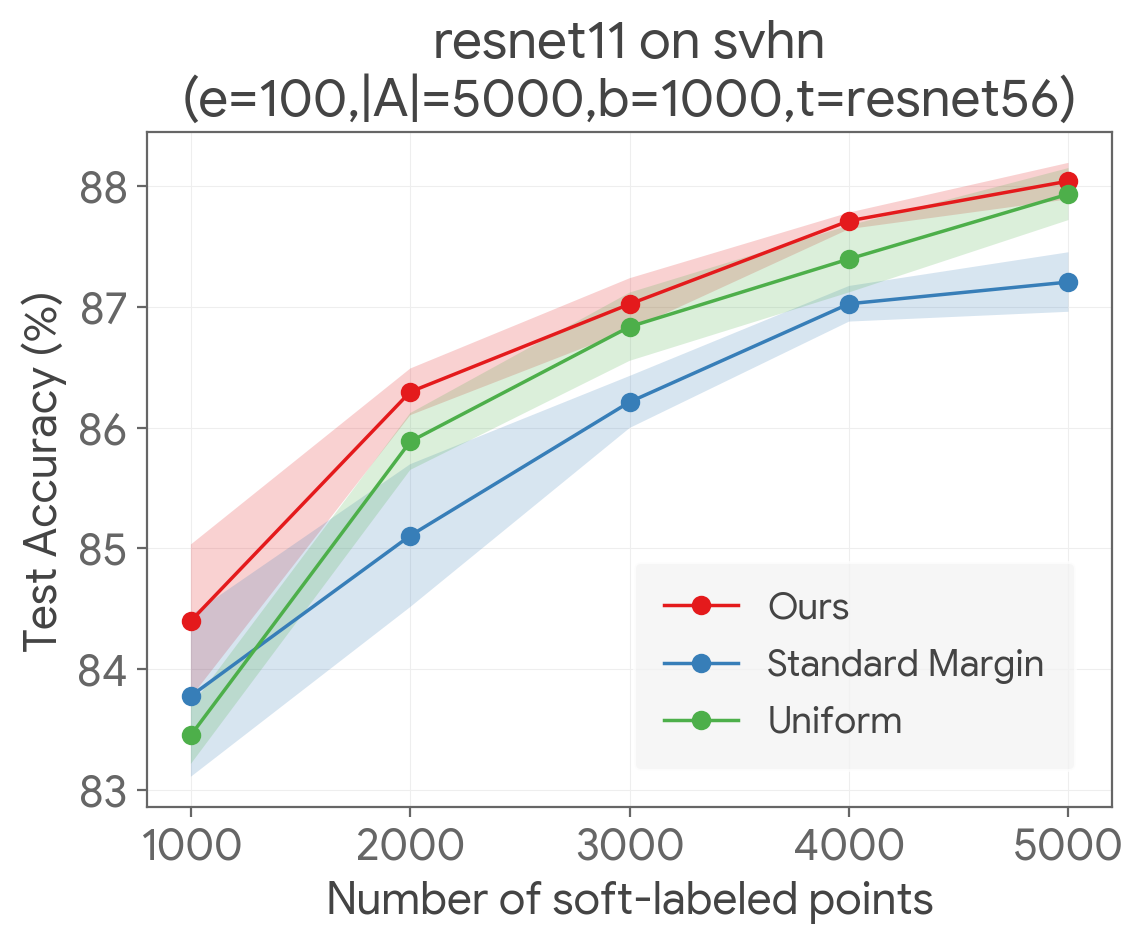}
 %\subcaption{$$}
  \end{minipage}%
  \centering
  \begin{minipage}[t]{0.33\textwidth}
  \centering
 \includegraphics[width=1\textwidth]{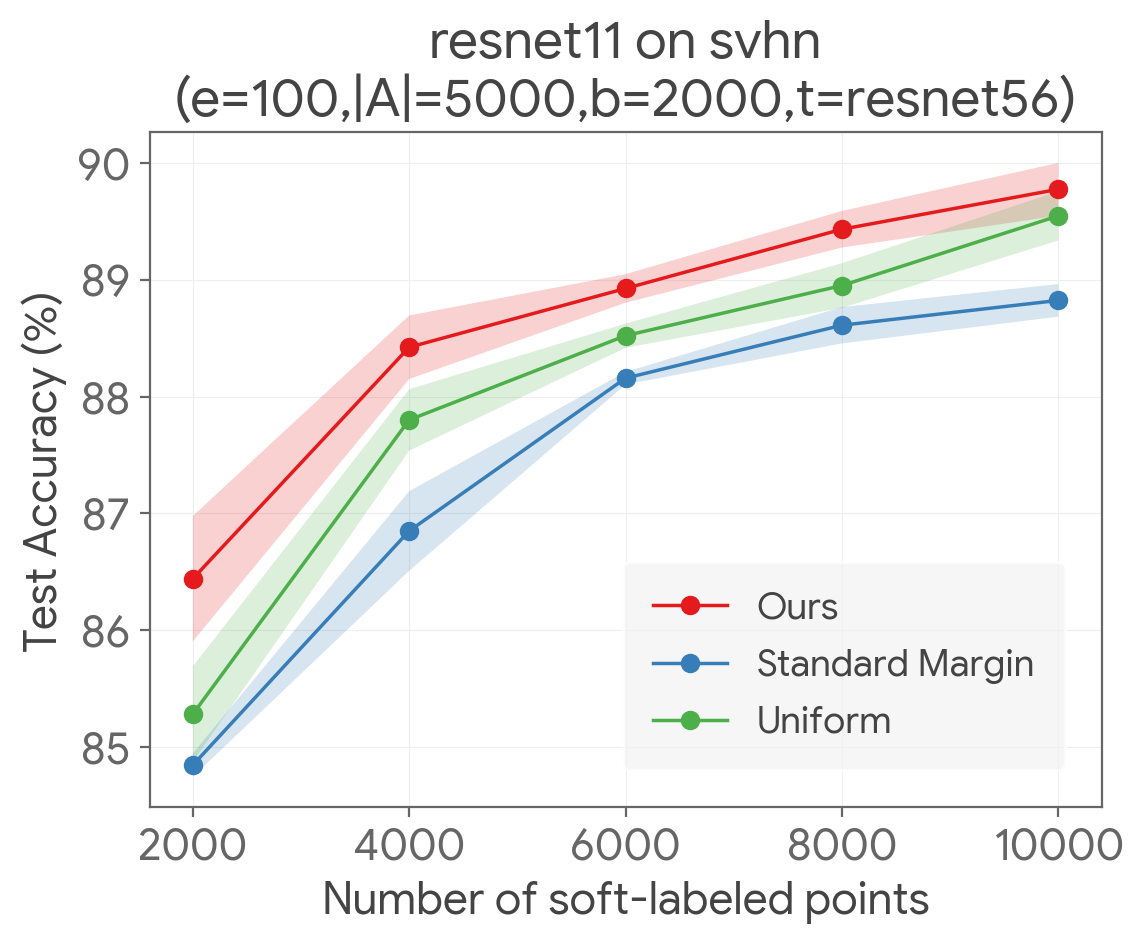}
  \end{minipage}%
  \begin{minipage}[t]{0.33\textwidth}
  \centering
 \includegraphics[width=1\textwidth]{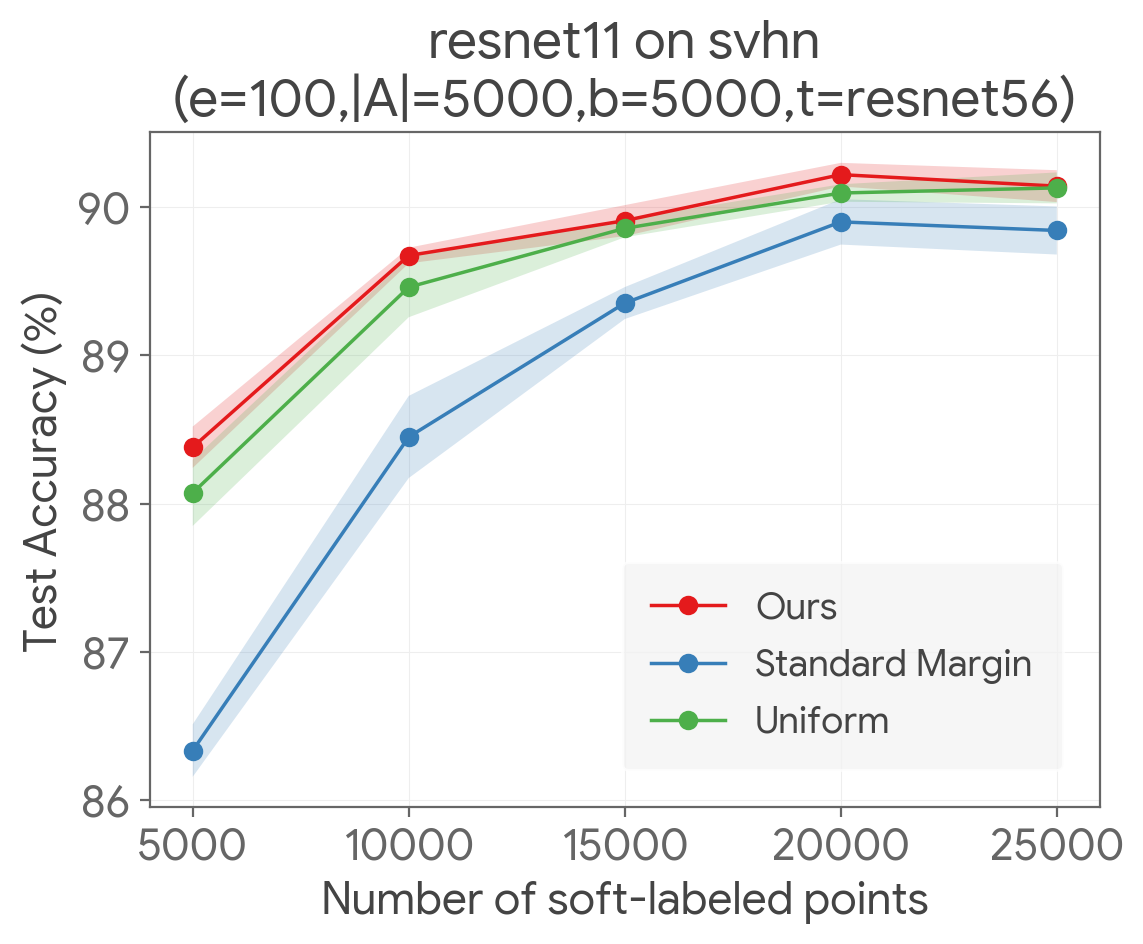}
 %\subcaption{$$}
  \end{minipage}%
  
      \begin{minipage}[t]{0.33\textwidth}
  \centering
 \includegraphics[width=1\textwidth]{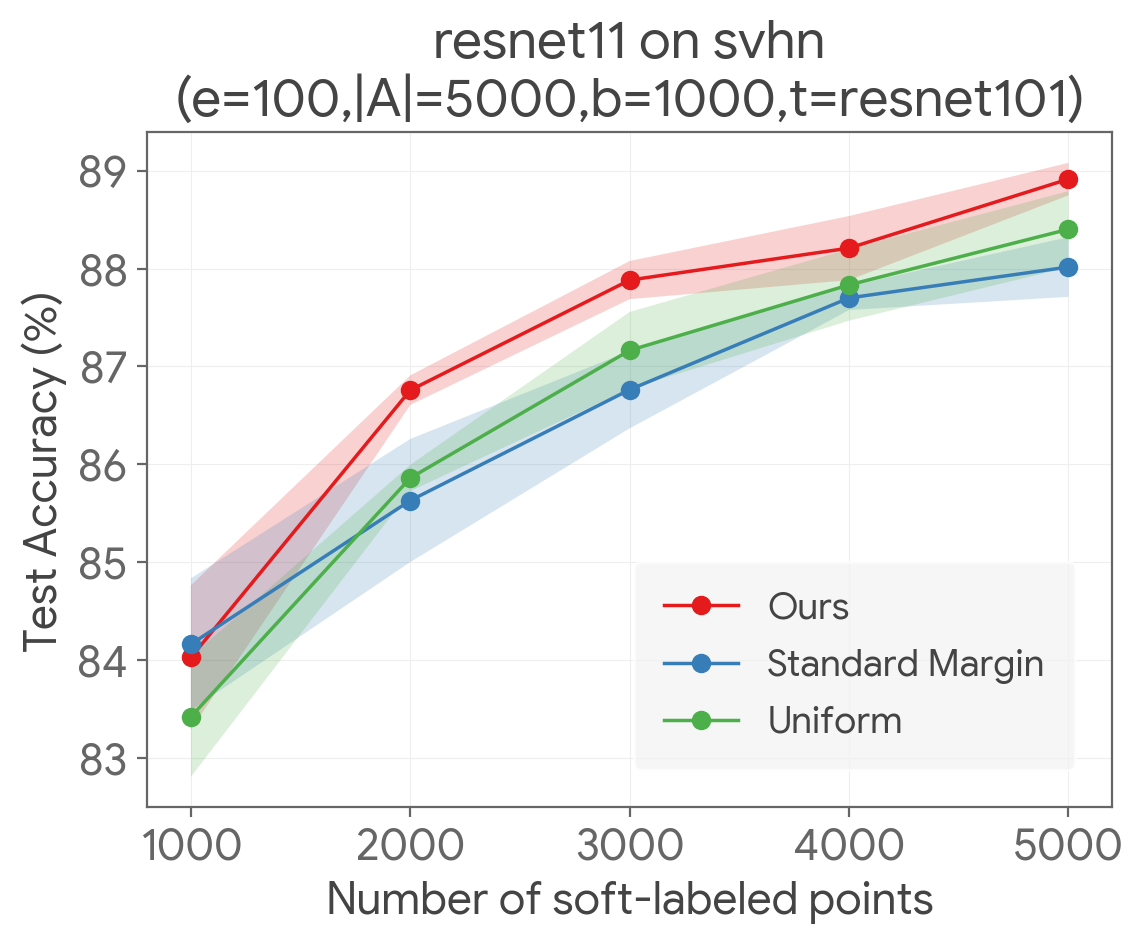}
 %\subcaption{$$}
  \end{minipage}%
  \centering
  \begin{minipage}[t]{0.33\textwidth}
  \centering
 \includegraphics[width=1\textwidth]{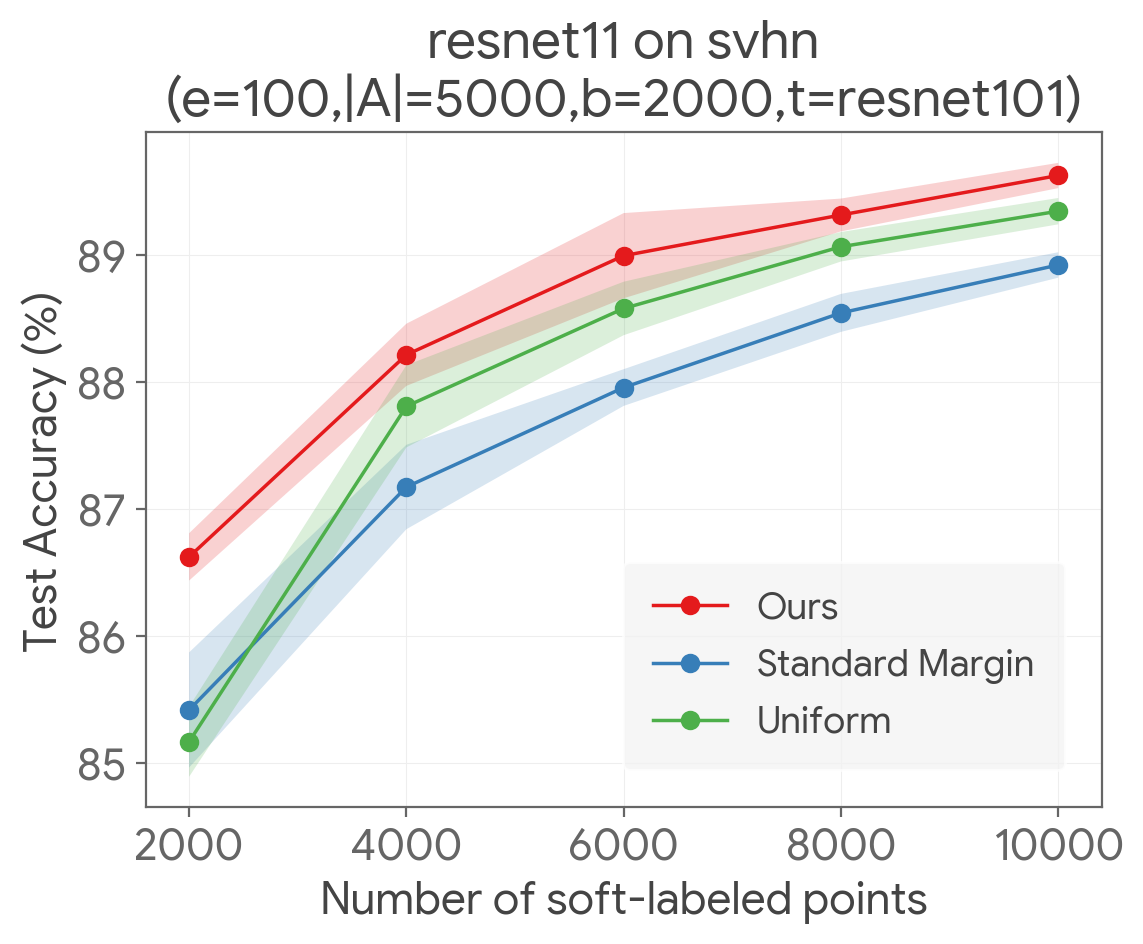}
 %\subcaption{$(\textsc{Scratch}, 50, 15, 500)$}
  \end{minipage}%
  \begin{minipage}[t]{0.33\textwidth}
  \centering
 \includegraphics[width=1\textwidth]{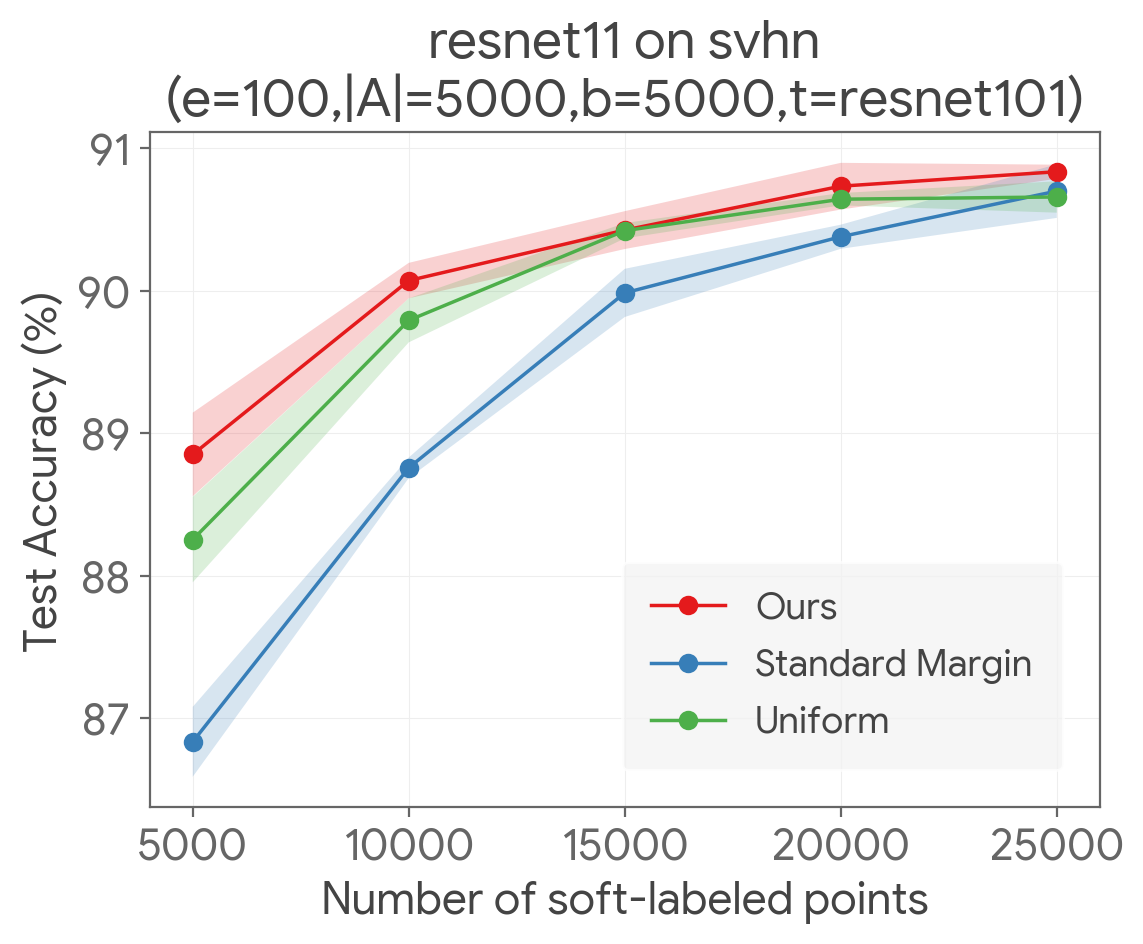}
  \end{minipage}%
		\caption{ResNetv2-11 architecture with 100 epochs and $|A| = 5000$. First row: ResNetv2-56 teacher; second row: ResNetv2-101 teacher. Columns correspond to batch size of $b = 1000$, $b=2000$, and $b=5000$, respectively.}
	\label{fig:acc-resnet11}
% 	\vspace{-3ex}
\end{figure*}

\begin{figure*}[htb!]

    \begin{minipage}[t]{0.33\textwidth}
  \centering
 \includegraphics[width=1\textwidth]{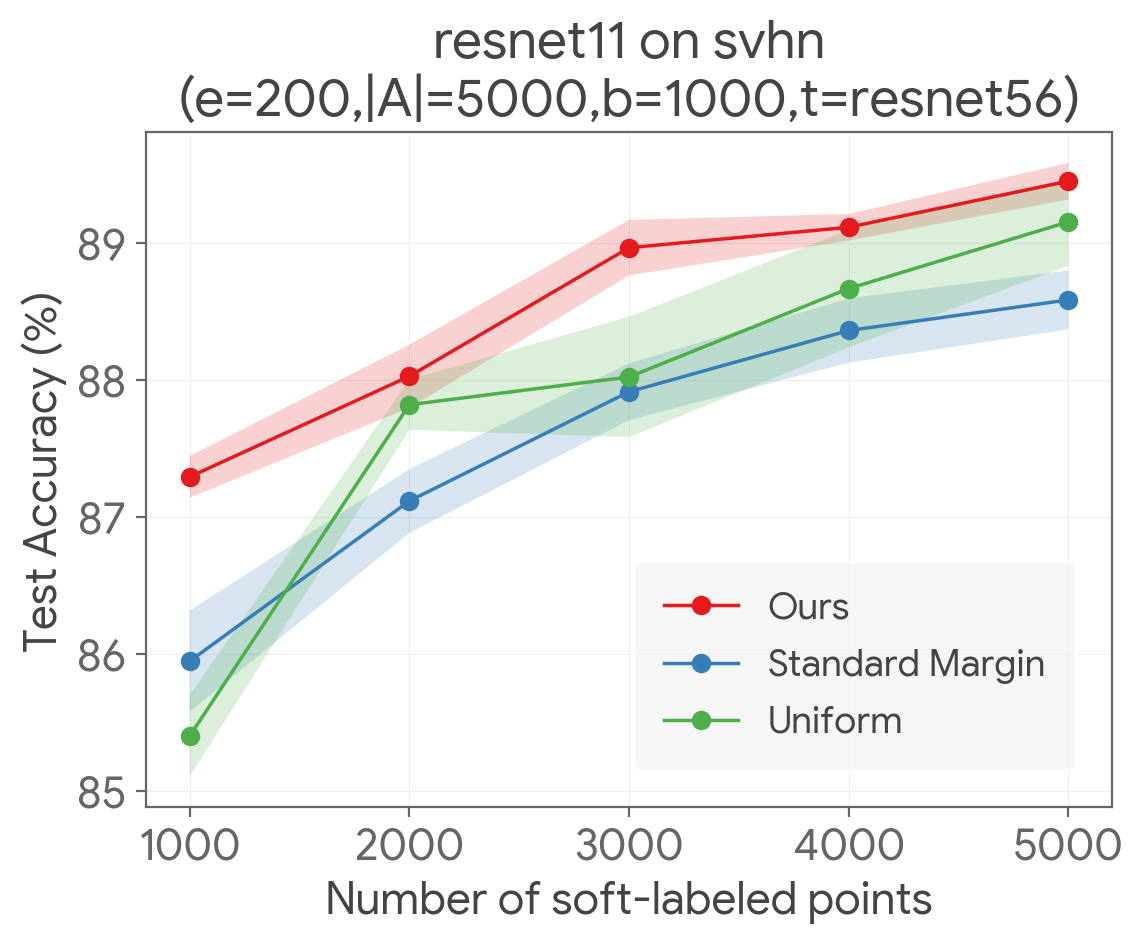}
 %\subcaption{$$}
  \end{minipage}%
  \centering
  \begin{minipage}[t]{0.33\textwidth}
  \centering
 \includegraphics[width=1\textwidth]{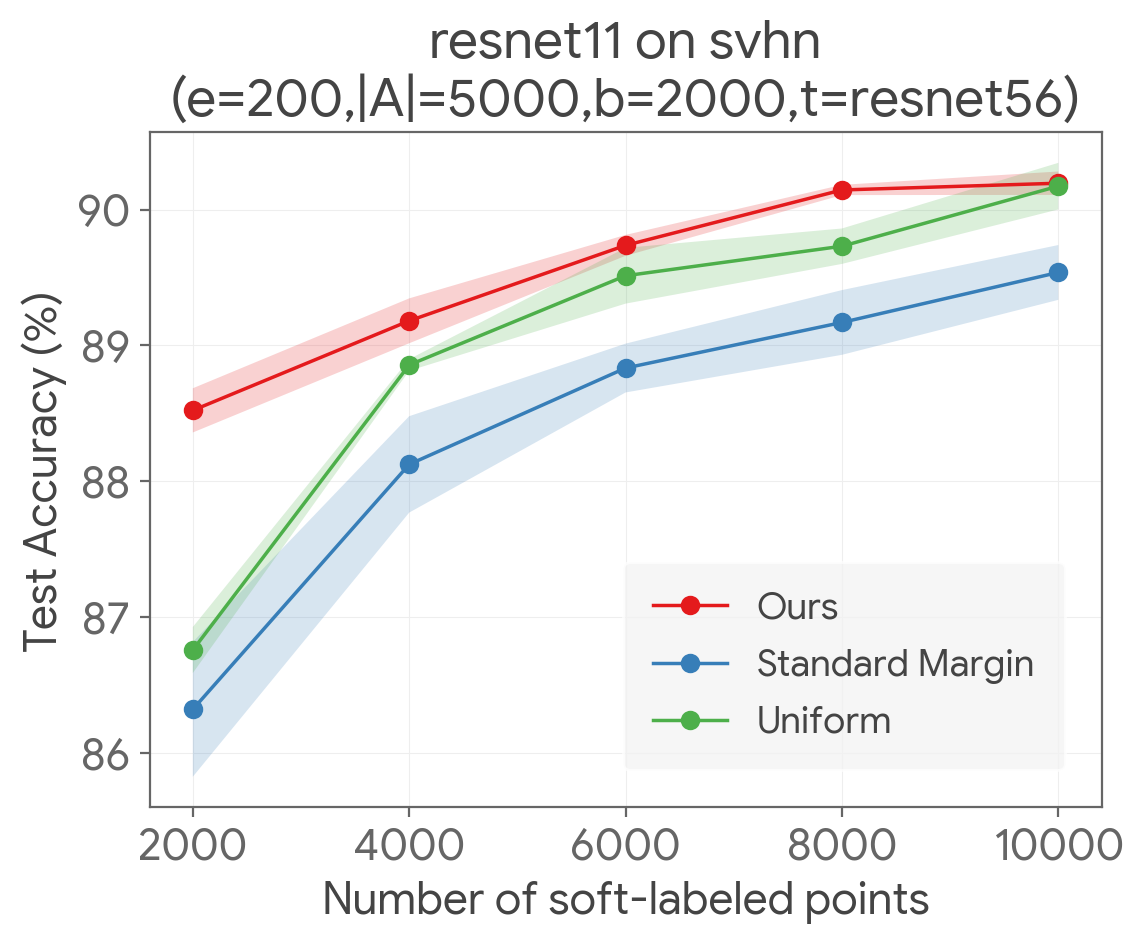}
  \end{minipage}%
  \begin{minipage}[t]{0.33\textwidth}
  \centering
 \includegraphics[width=1\textwidth]{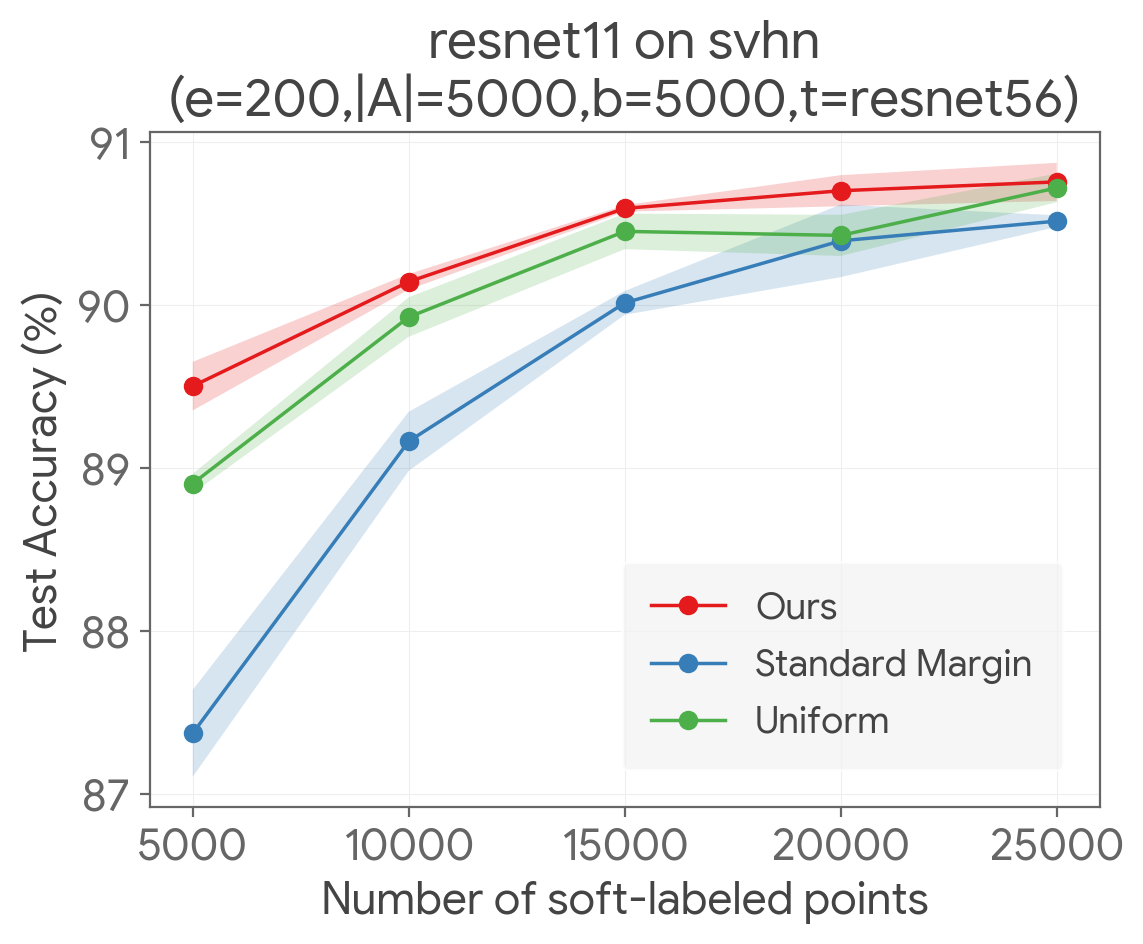}
 %\subcaption{$$}
  \end{minipage}%
  
      \begin{minipage}[t]{0.33\textwidth}
  \centering
 \includegraphics[width=1\textwidth]{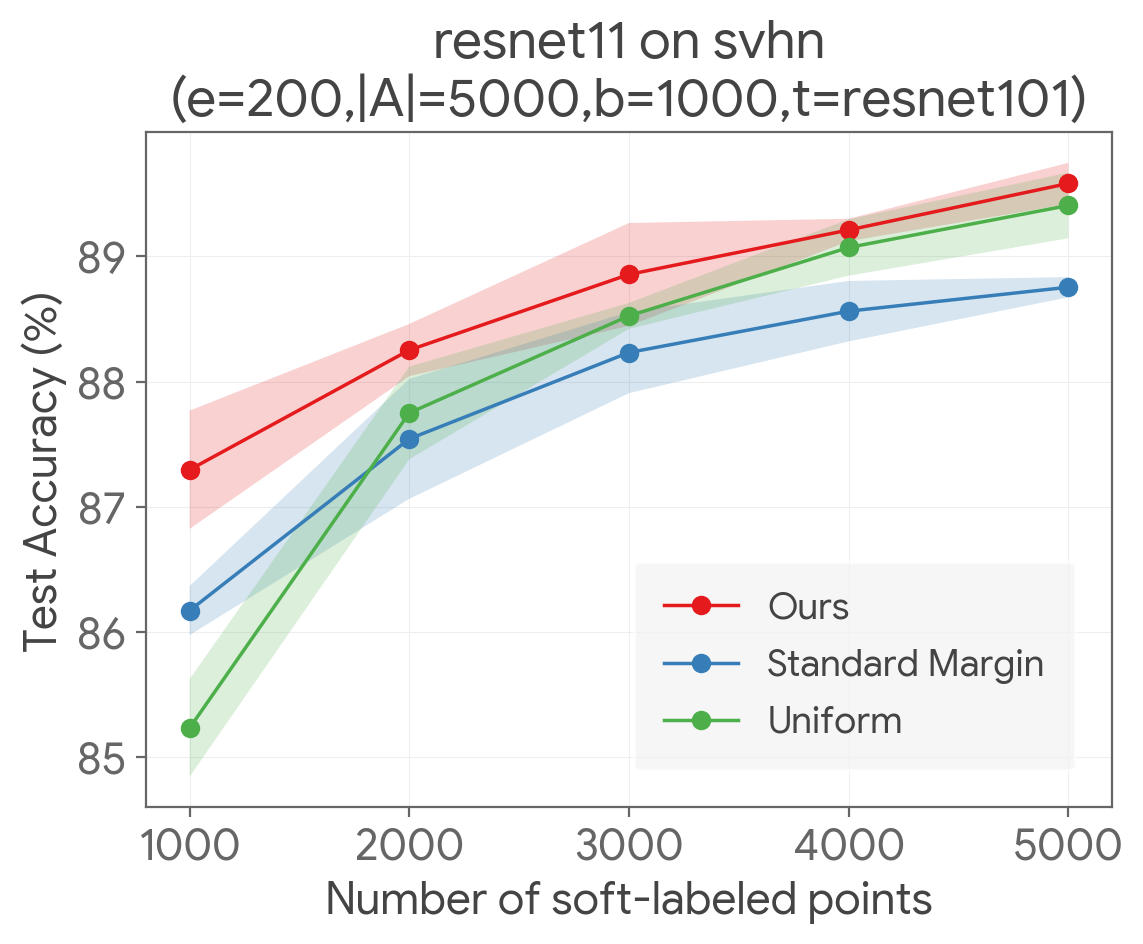}
 %\subcaption{$$}
  \end{minipage}%
  \centering
  \begin{minipage}[t]{0.33\textwidth}
  \centering
 \includegraphics[width=1\textwidth]{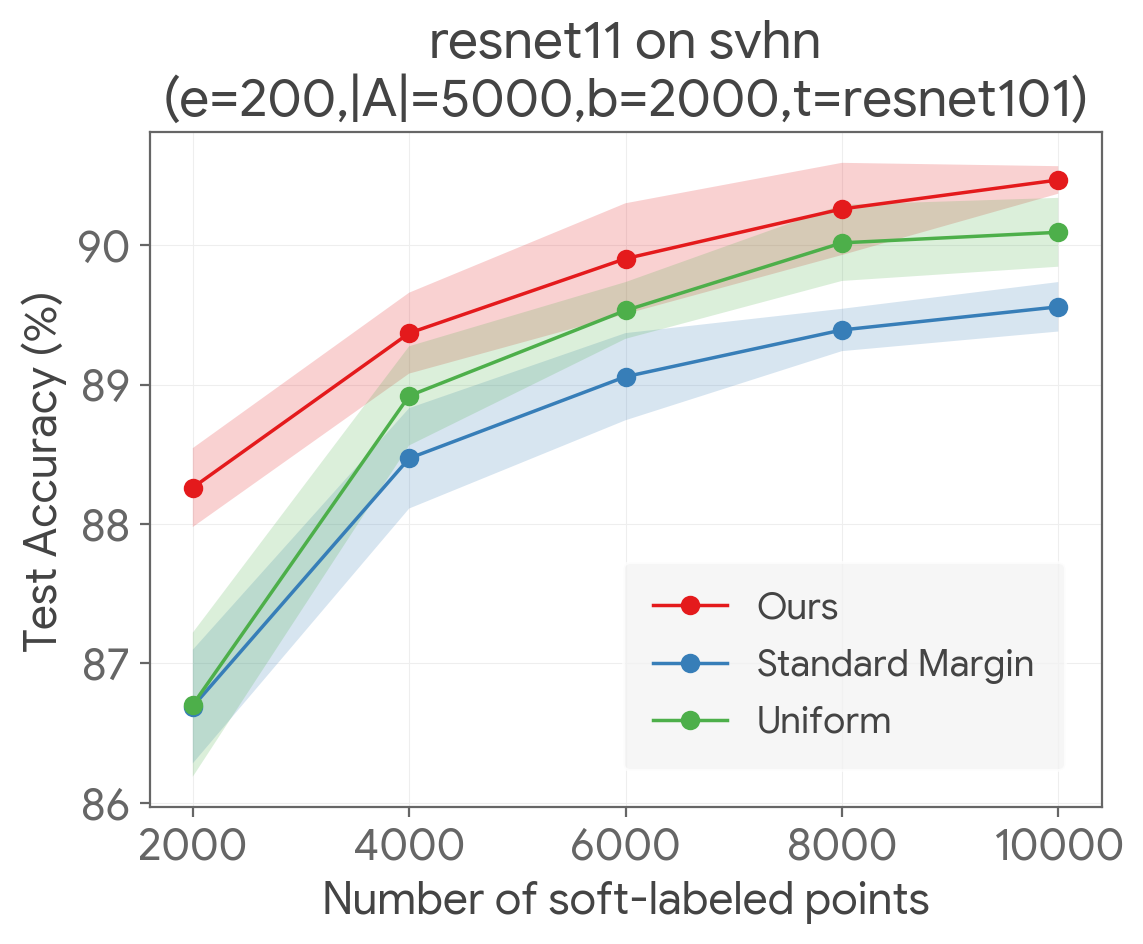}
 %\subcaption{$(\textsc{Scratch}, 50, 15, 500)$}
  \end{minipage}%
  \begin{minipage}[t]{0.33\textwidth}
  \centering
 \includegraphics[width=1\textwidth]{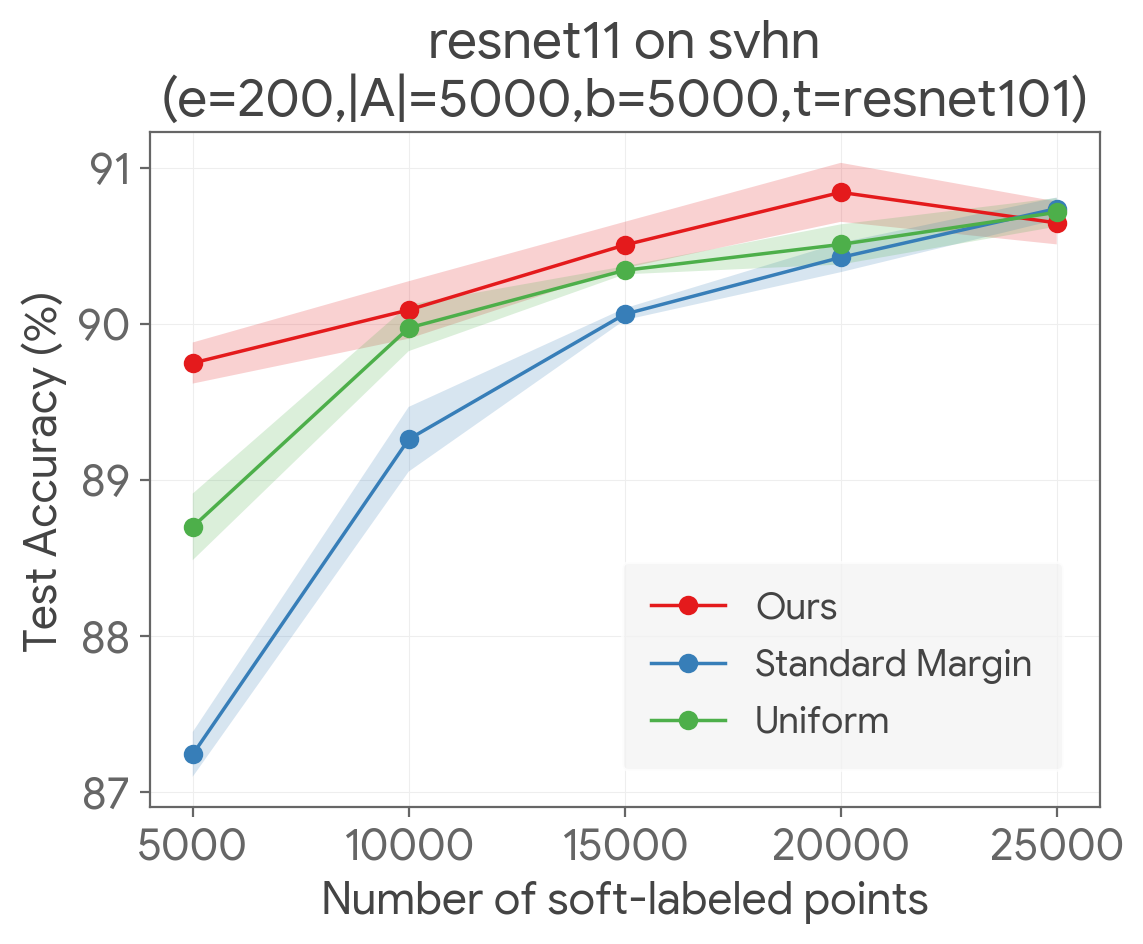}
  \end{minipage}%
		\caption{ResNetv2-11 architecture with \textbf{200} epochs and $|A| = 5000$. First row: ResNetv2-56 teacher; second row: ResNetv2-101 teacher. Columns correspond to batch size of $b = 1000$, $b=2000$, and $b=5000$, respectively.}
	\label{fig:acc-resnet11-200epochs}
% 	\vspace{-3ex}
\end{figure*}

\begin{figure*}[htb!]

    \begin{minipage}[t]{0.33\textwidth}
  \centering
 \includegraphics[width=1\textwidth]{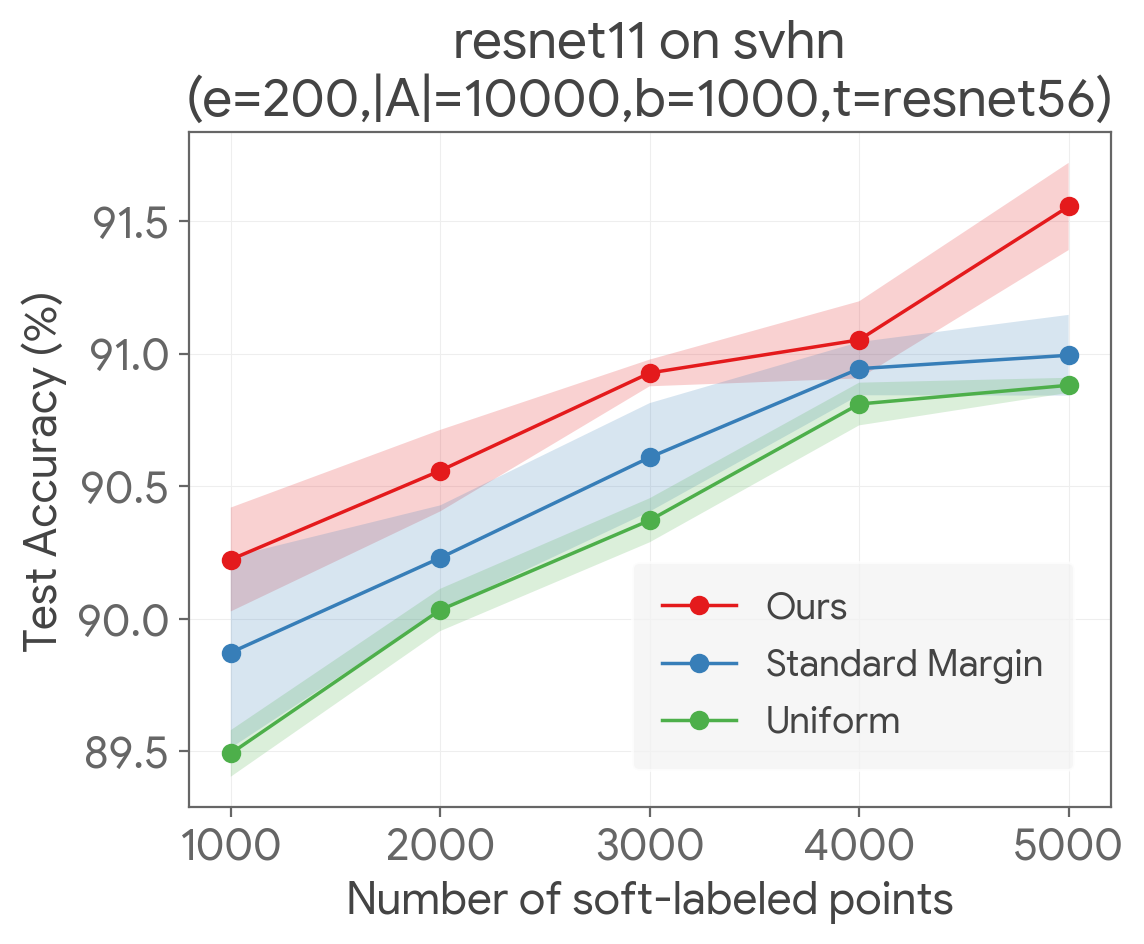}
 %\subcaption{$$}
  \end{minipage}%
  \centering
  \begin{minipage}[t]{0.33\textwidth}
  \centering
 \includegraphics[width=1\textwidth]{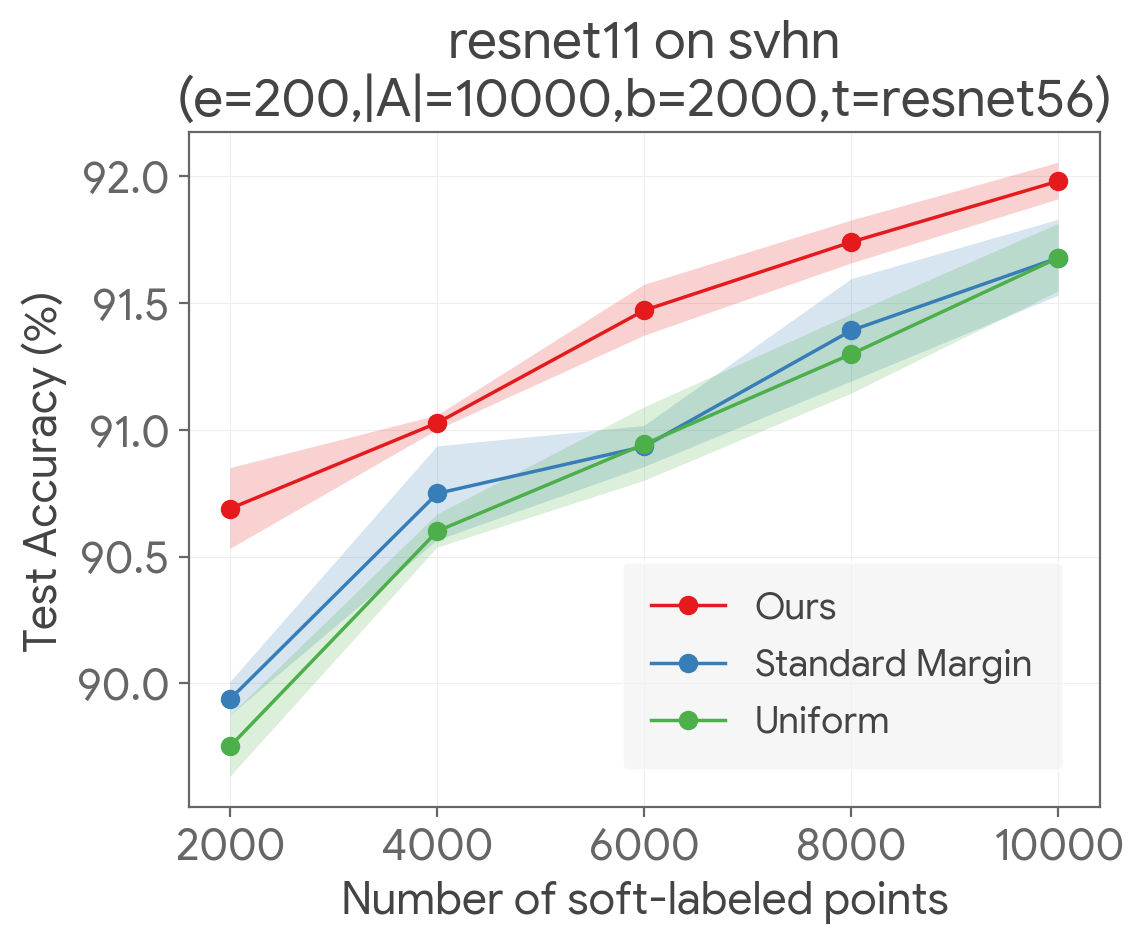}
  \end{minipage}%
  \begin{minipage}[t]{0.33\textwidth}
  \centering
 \includegraphics[width=1\textwidth]{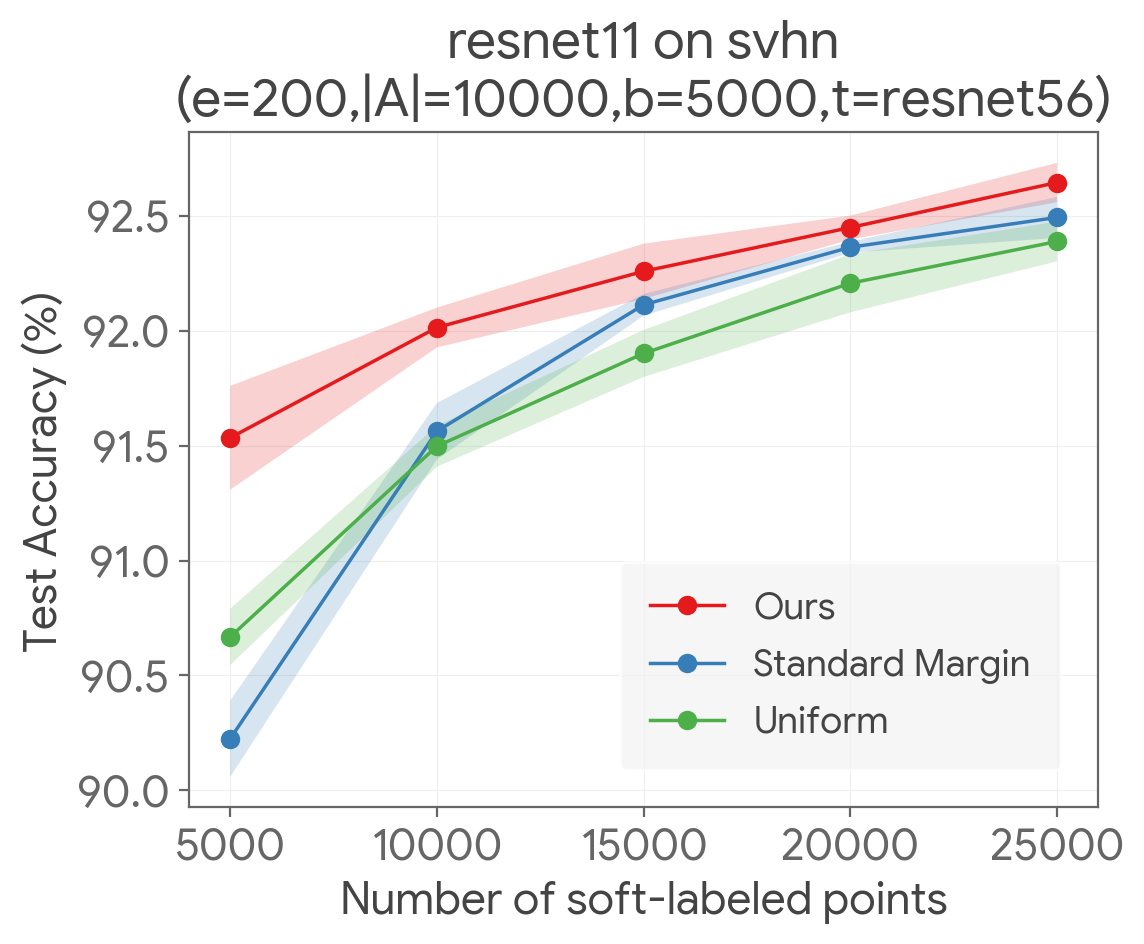}
 %\subcaption{$$}
  \end{minipage}%
  
      \begin{minipage}[t]{0.33\textwidth}
  \centering
 \includegraphics[width=1\textwidth]{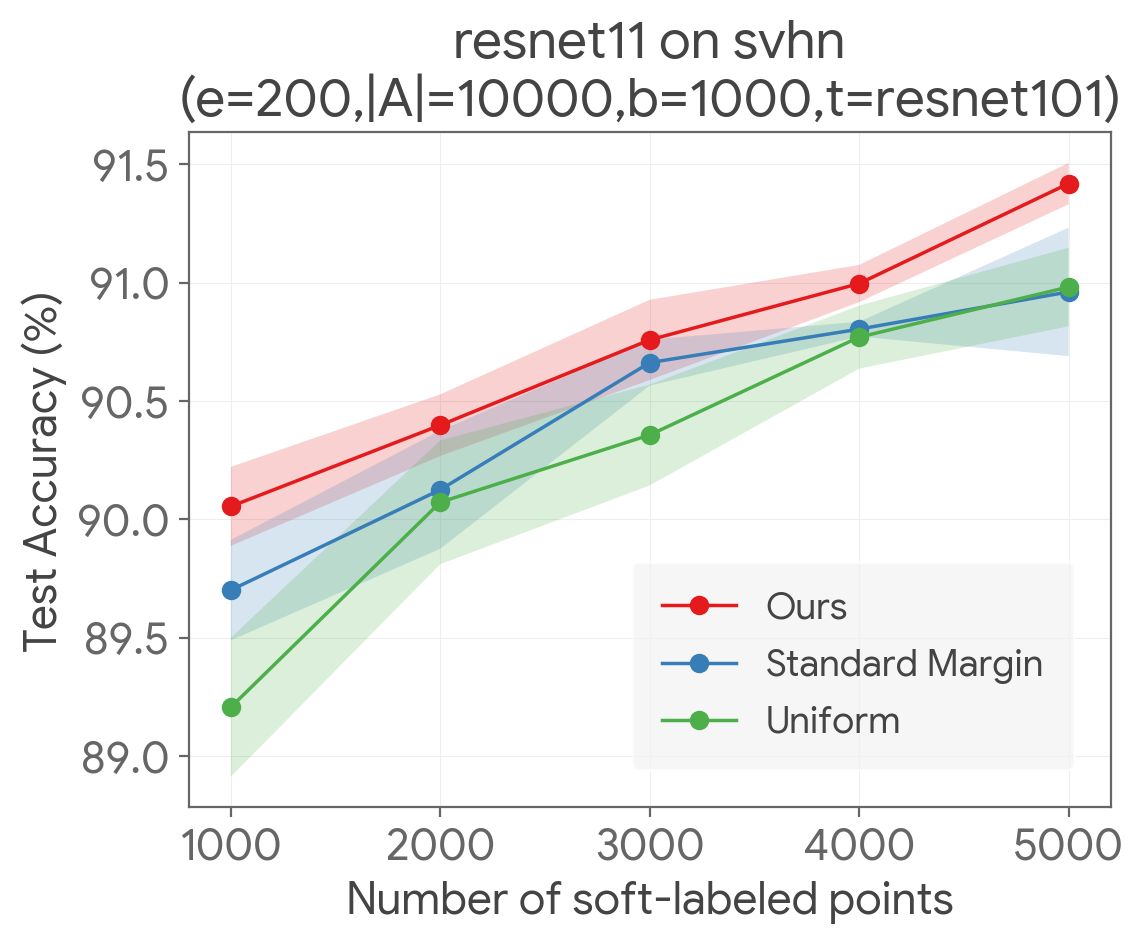}
 %\subcaption{$$}
  \end{minipage}%
  \centering
  \begin{minipage}[t]{0.33\textwidth}
  \centering
 \includegraphics[width=1\textwidth]{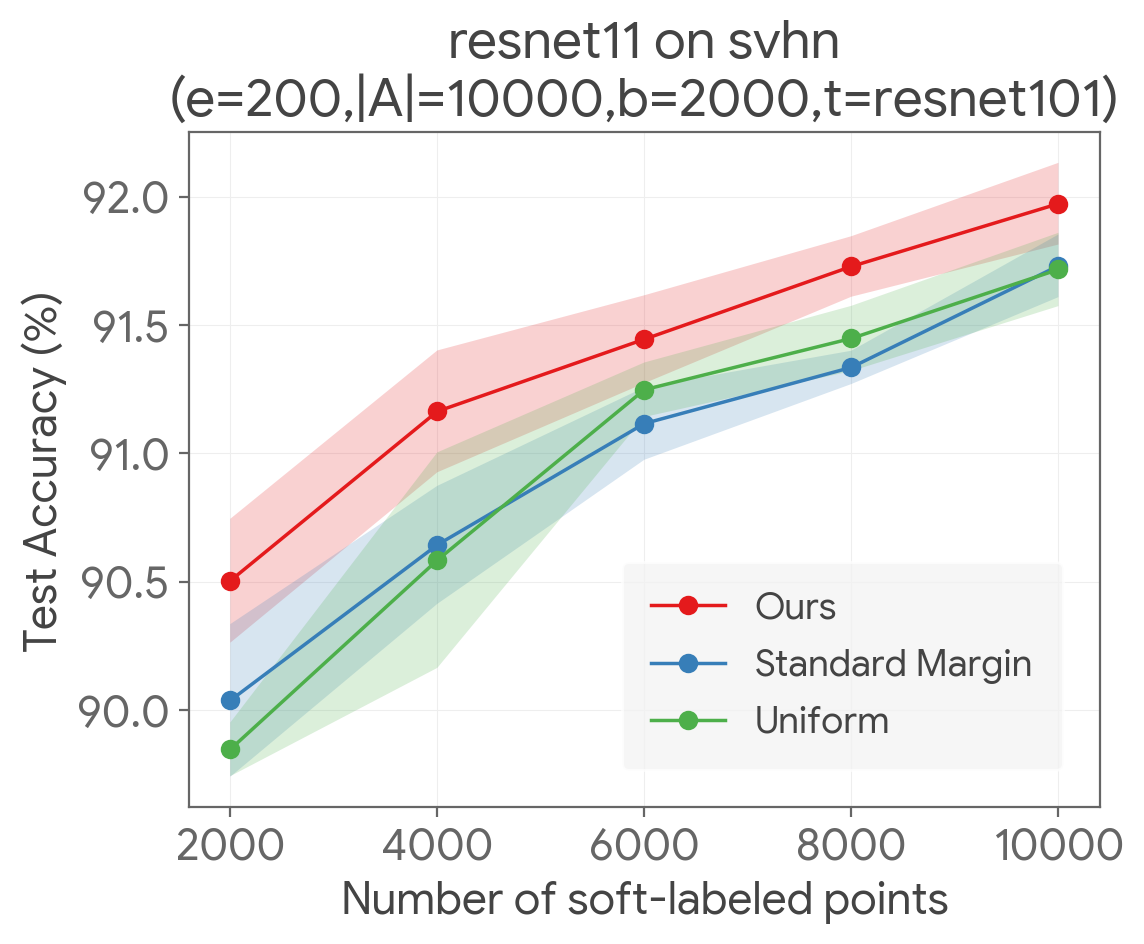}
 %\subcaption{$(\textsc{Scratch}, 50, 15, 500)$}
  \end{minipage}%
  \begin{minipage}[t]{0.33\textwidth}
  \centering
 \includegraphics[width=1\textwidth]{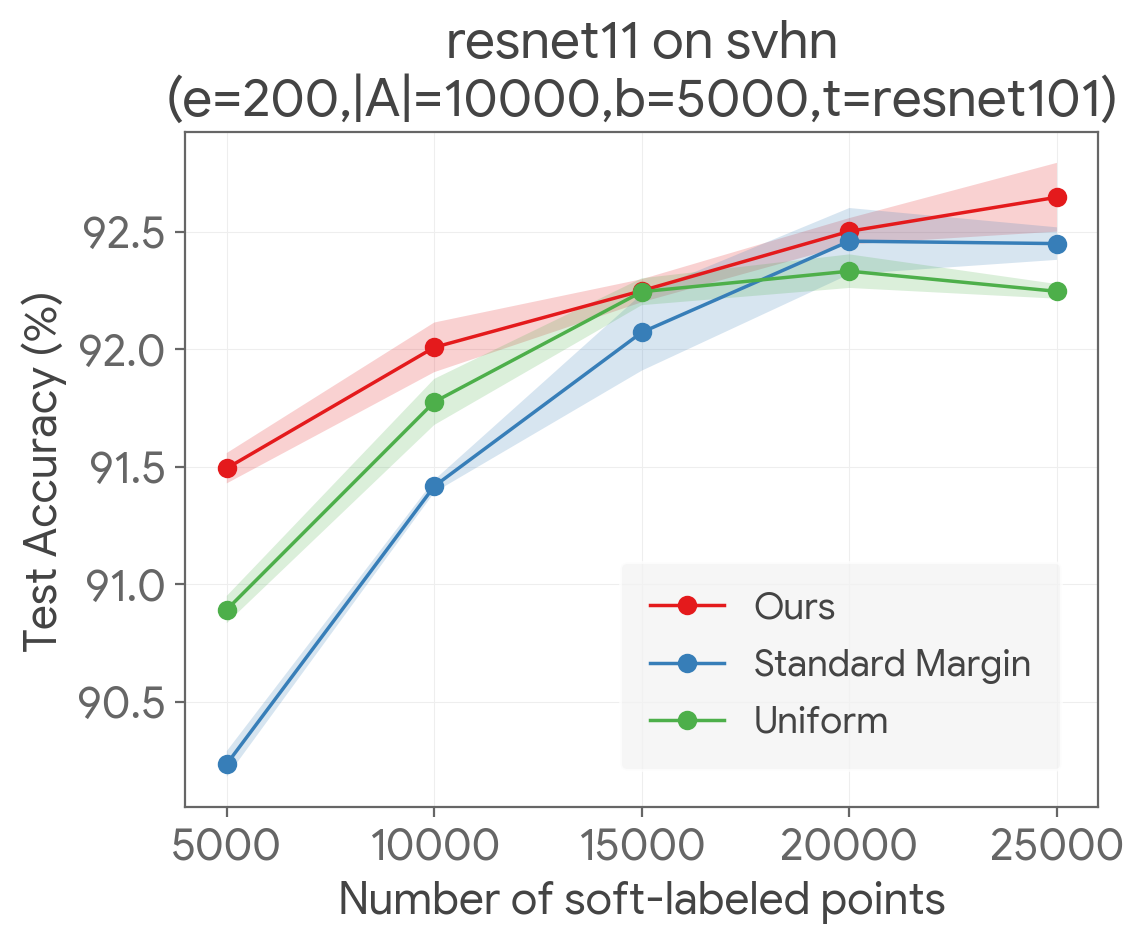}
  \end{minipage}%
		\caption{ResNetv2-11 architecture with \textbf{200} epochs and $|A| = 10000$. First row: ResNetv2-56 teacher; second row: ResNetv2-101 teacher. Columns correspond to batch size of $b = 1000$, $b=2000$, and $b=5000$, respectively.}
	\label{fig:acc-resnet11-200epochs}
% 	\vspace{-3ex}
\end{figure*}

\begin{figure*}[htb!]

    \begin{minipage}[t]{0.33\textwidth}
  \centering
 \includegraphics[width=1\textwidth]{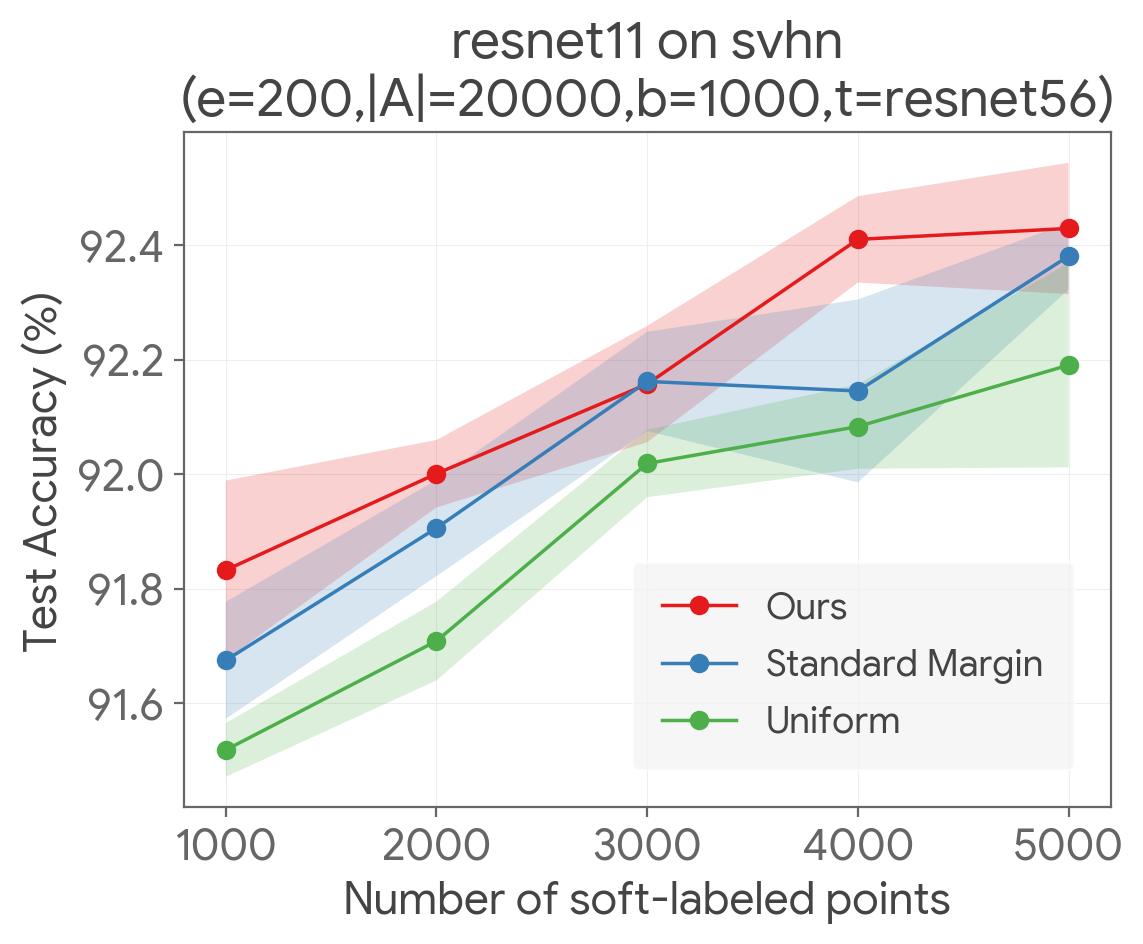}
 %\subcaption{$$}
  \end{minipage}%
  \centering
  \begin{minipage}[t]{0.33\textwidth}
  \centering
 \includegraphics[width=1\textwidth]{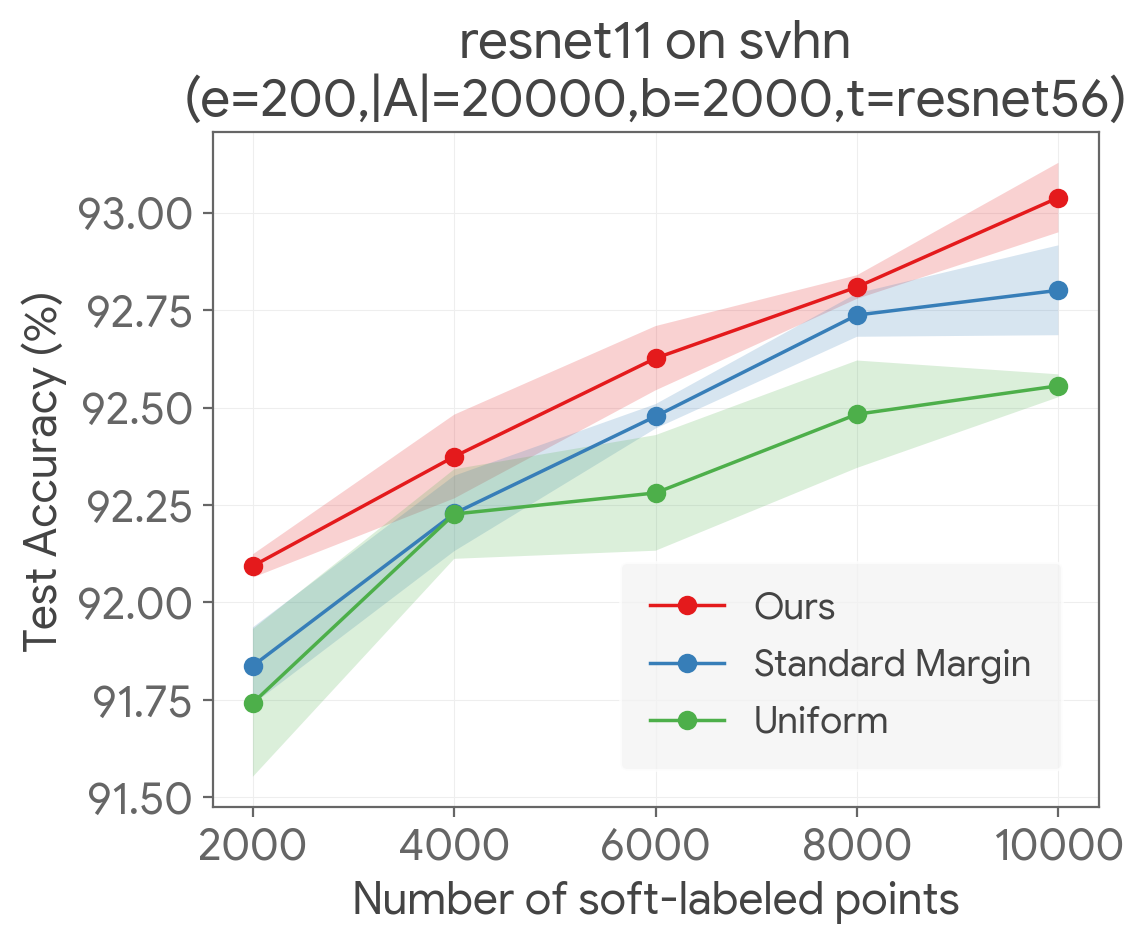}
  \end{minipage}%
  \begin{minipage}[t]{0.33\textwidth}
  \centering
 \includegraphics[width=1\textwidth]{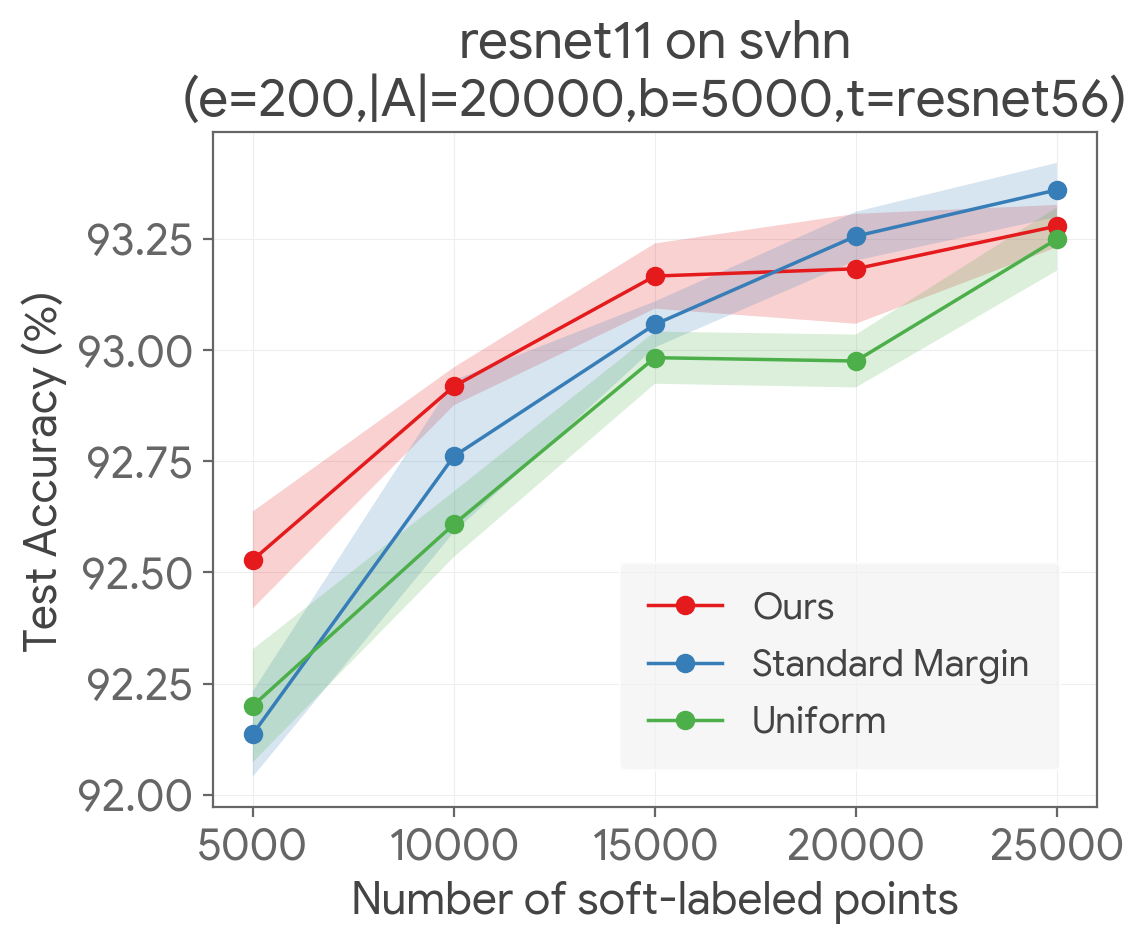}
 %\subcaption{$$}
  \end{minipage}%
  
      \begin{minipage}[t]{0.33\textwidth}
  \centering
 \includegraphics[width=1\textwidth]{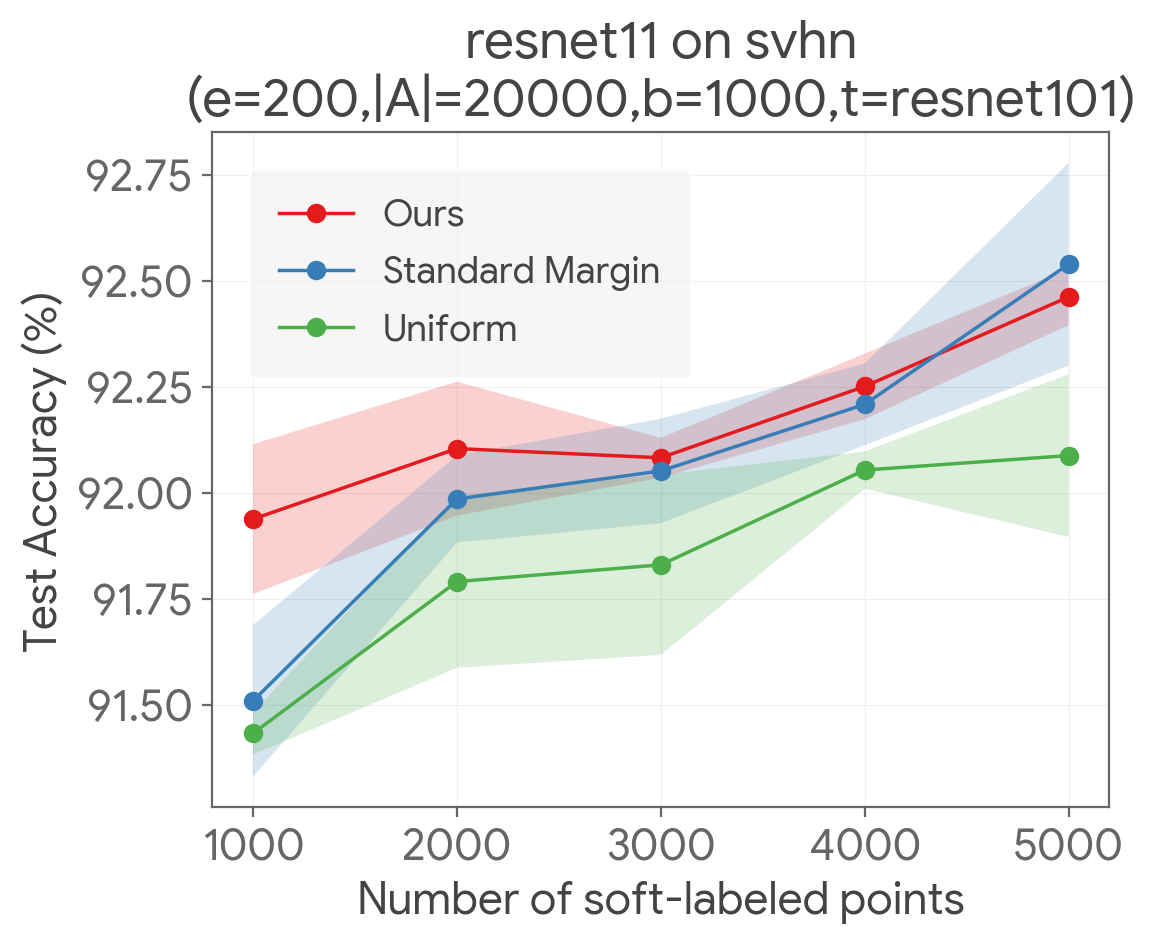}
 %\subcaption{$$}
  \end{minipage}%
  \centering
  \begin{minipage}[t]{0.33\textwidth}
  \centering
 \includegraphics[width=1\textwidth]{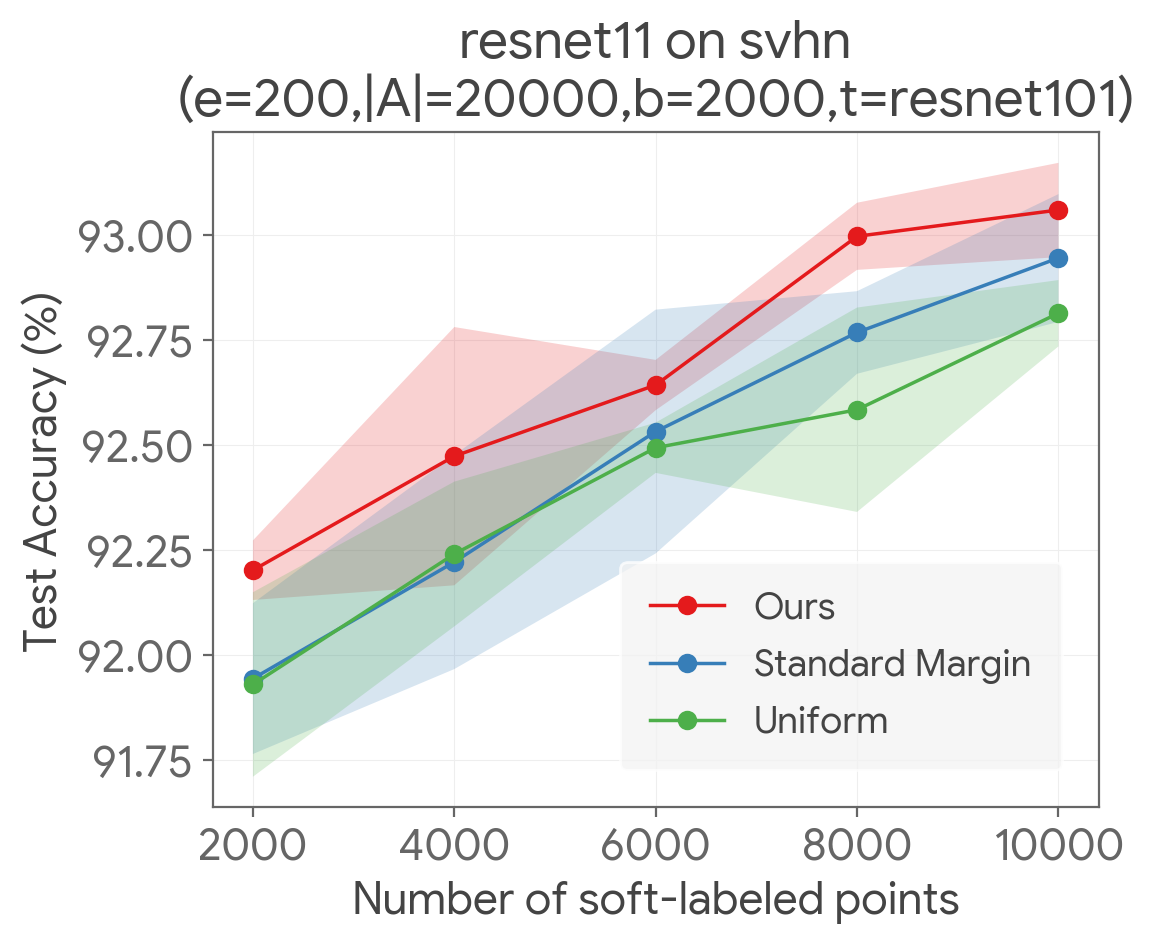}
 %\subcaption{$(\textsc{Scratch}, 50, 15, 500)$}
  \end{minipage}%
  \begin{minipage}[t]{0.33\textwidth}
  \centering
 \includegraphics[width=1\textwidth]{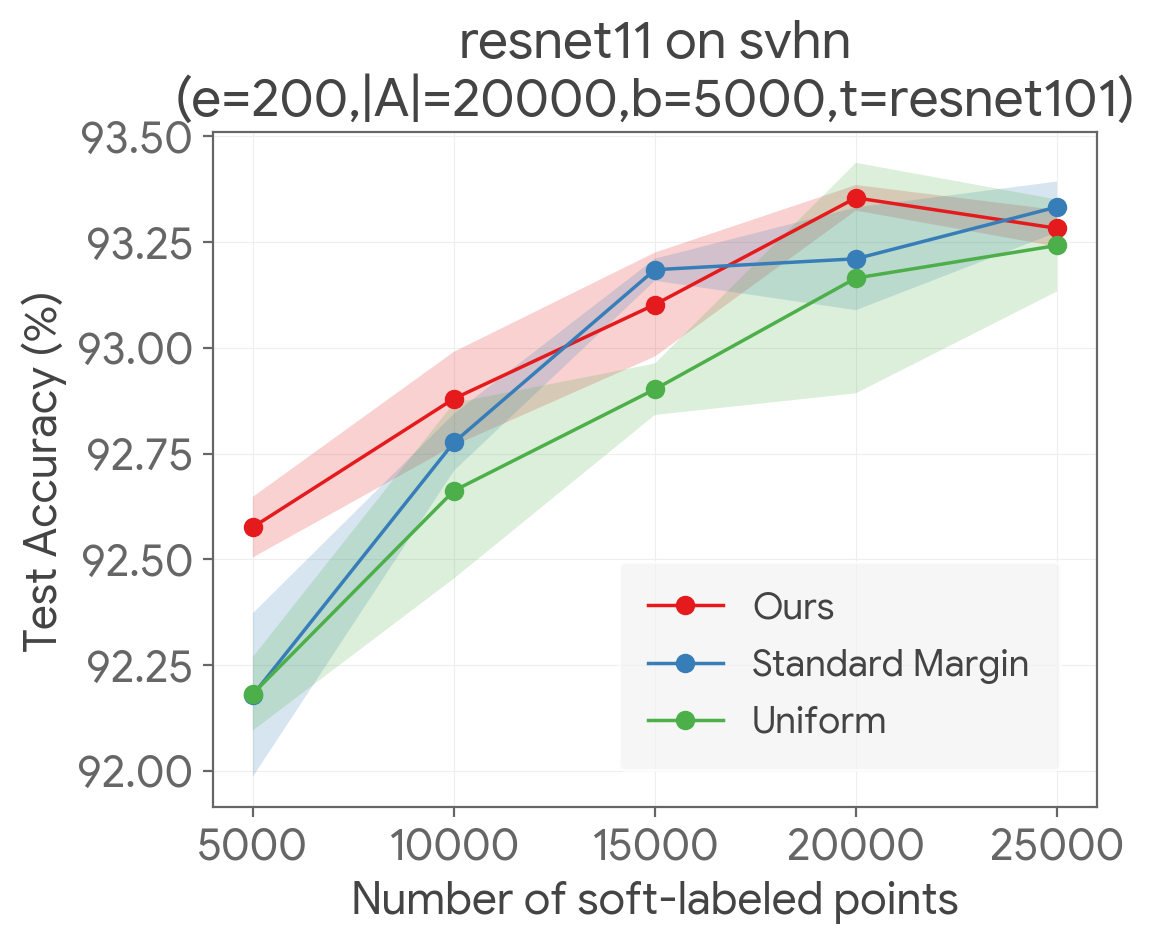}
  \end{minipage}%
		\caption{ResNetv2-11 architecture with \textbf{200} epochs and $|A| = 20000$. First row: ResNetv2-56 teacher; second row: ResNetv2-101 teacher. Columns correspond to batch size of $b = 1000$, $b=2000$, and $b=5000$, respectively.}
	\label{fig:acc-resnet11-200epochs}
% 	\vspace{-3ex}
\end{figure*}

%%%%%%%%%%%%%%%%%%%%%%%%%%%%%%%%%%%%%%%%%%%%%%%%%%%%%%%%%%%%%%%%%%%%%%%%%%%%%%%%%%%%%%%%%%%%%%%%%%%%%%%%
%%%%%%%%%%%%%%%%%%%%%%%%%%%%%%%%%%%%%%%%%%%%%%%%%%%%%%%%%%%%%%%%%%%%%%%%%%%%%%%%%%%%%%%%%%%%%%%%%%%%%%%%
%%%%%%%%%%%%%%%%%%%%%%%%%%%%%%%%%%%%%%%%%%%%%%%%%%%%%%%%%%%%%%%%%%%%%%%%%%%%%%%%%%%%%%%%%%%%%%%%%%%%%%%%
%%%%%%%%%%%%%%%%%%%%%%%%%%%%%%%%%%%%%%%%%%%%%%%%%%%%%%%%%%%%%%%%%%%%%%%%%%%%%%%%%%%%%%%%%%%%%%%%%%%%%%%%
%%%%%%%%%%%%%%%%%%%%%%%%%%%%%%%%%%%%%%%%%%%%%%%%%%%%%%%%%%%%%%%%%%%%%%%%%%%%%%%%%%%%%%%%%%%%%%%%%%%%%%%%
%%%%%%%%%%%%%%%%%%%%%%%%%%%%%%%%%%%%%%%%%%%%%%%%%%%%%%%%%%%%%%%%%%%%%%%%%%%%%%%%%%%%%%%%%%%%%%%%%%%%%%%%
\begin{figure*}[htb!]

    \begin{minipage}[t]{0.33\textwidth}
  \centering
 \includegraphics[width=1\textwidth]{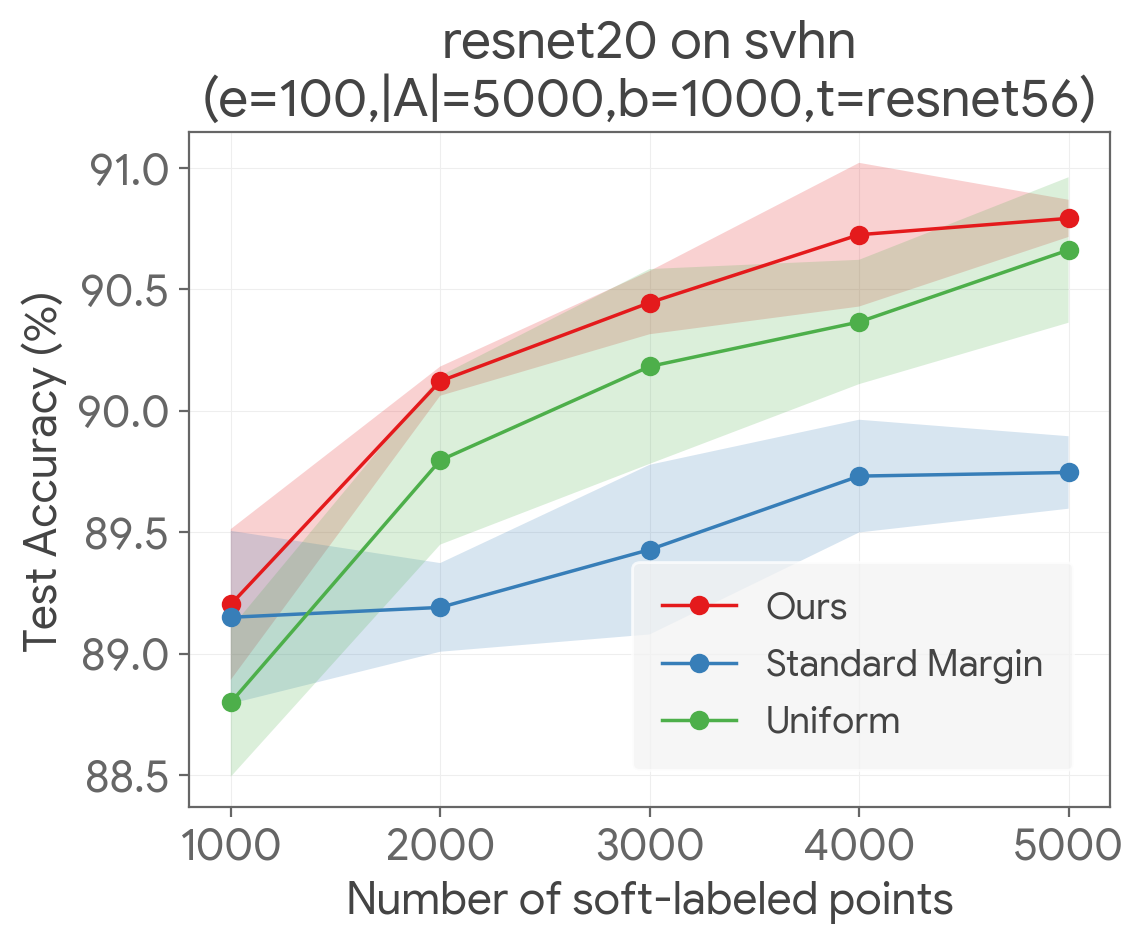}
 %\subcaption{$$}
  \end{minipage}%
  \centering
  \begin{minipage}[t]{0.33\textwidth}
  \centering
 \includegraphics[width=1\textwidth]{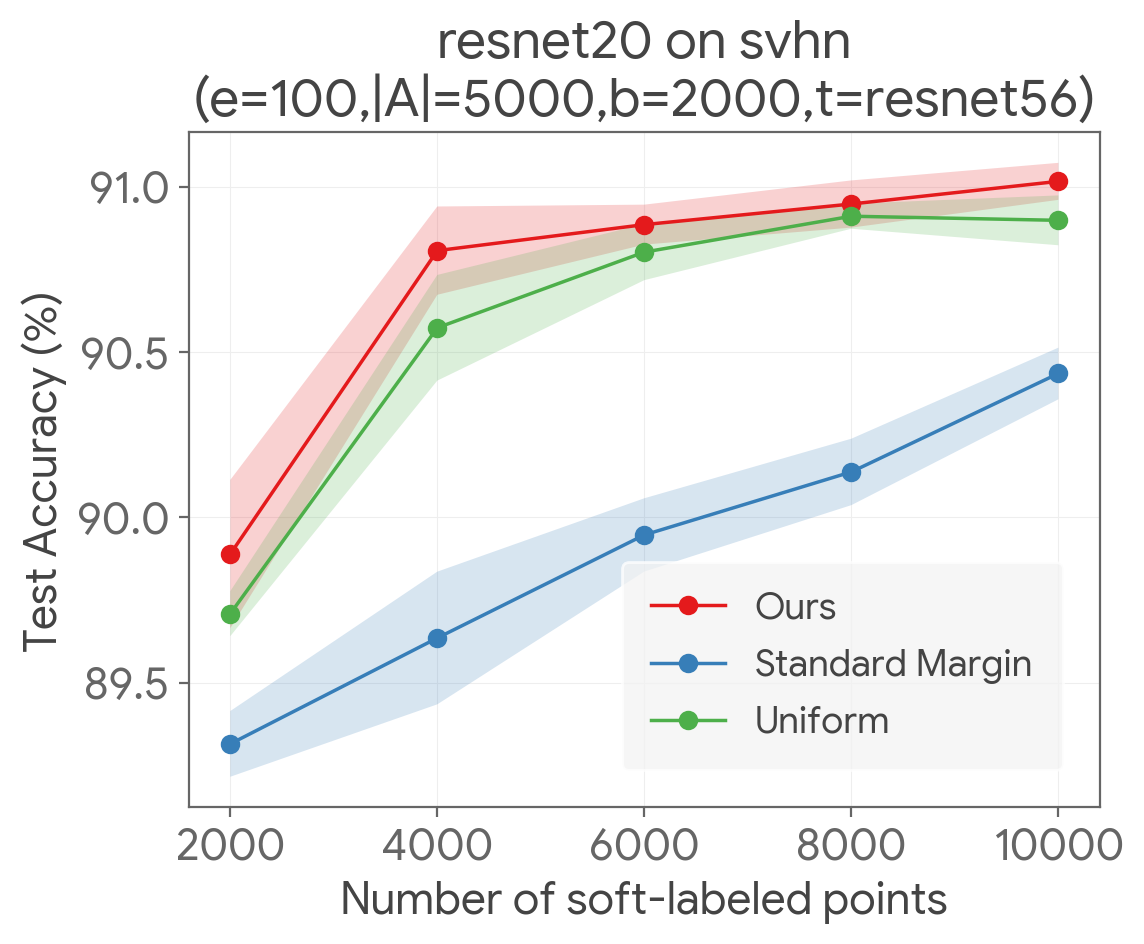}
  \end{minipage}%
  \begin{minipage}[t]{0.33\textwidth}
  \centering
 \includegraphics[width=1\textwidth]{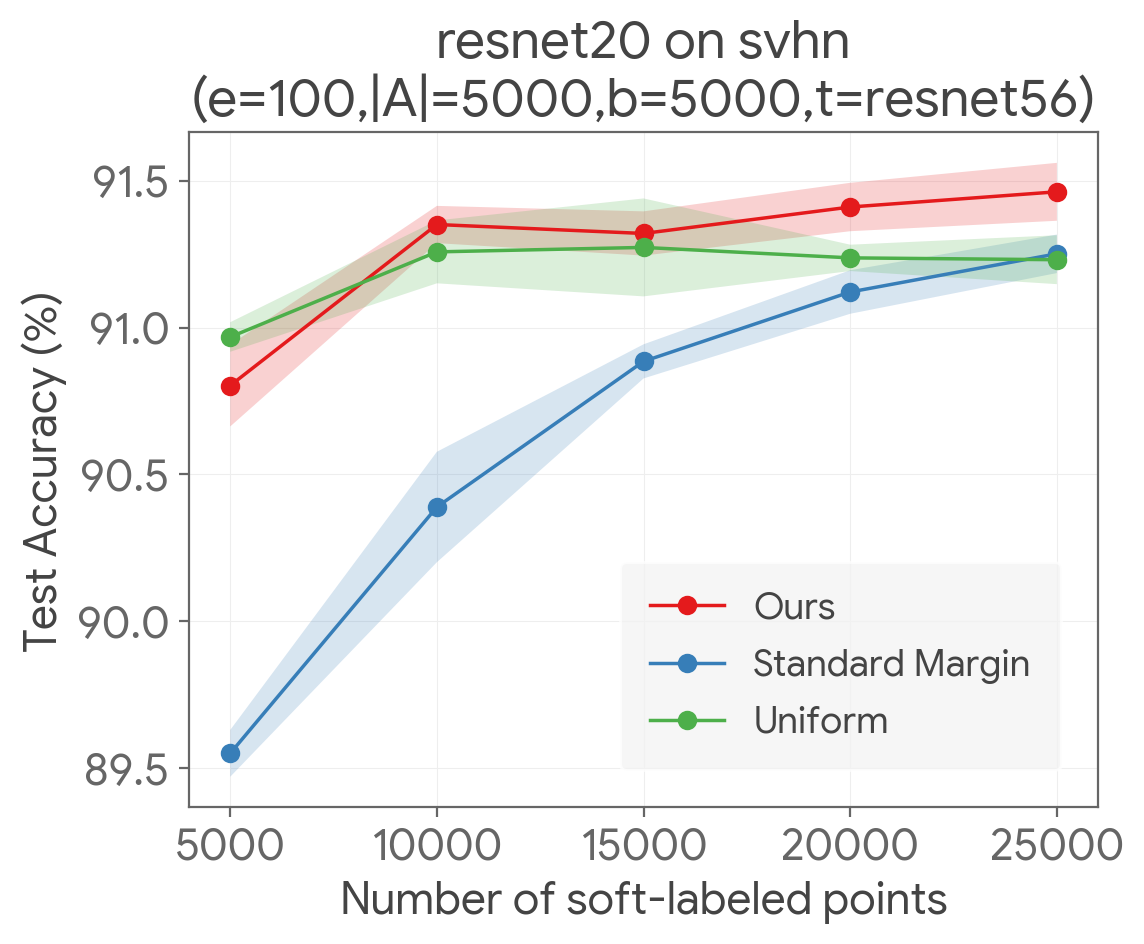}
 %\subcaption{$$}
  \end{minipage}%
  
      \begin{minipage}[t]{0.33\textwidth}
  \centering
 \includegraphics[width=1\textwidth]{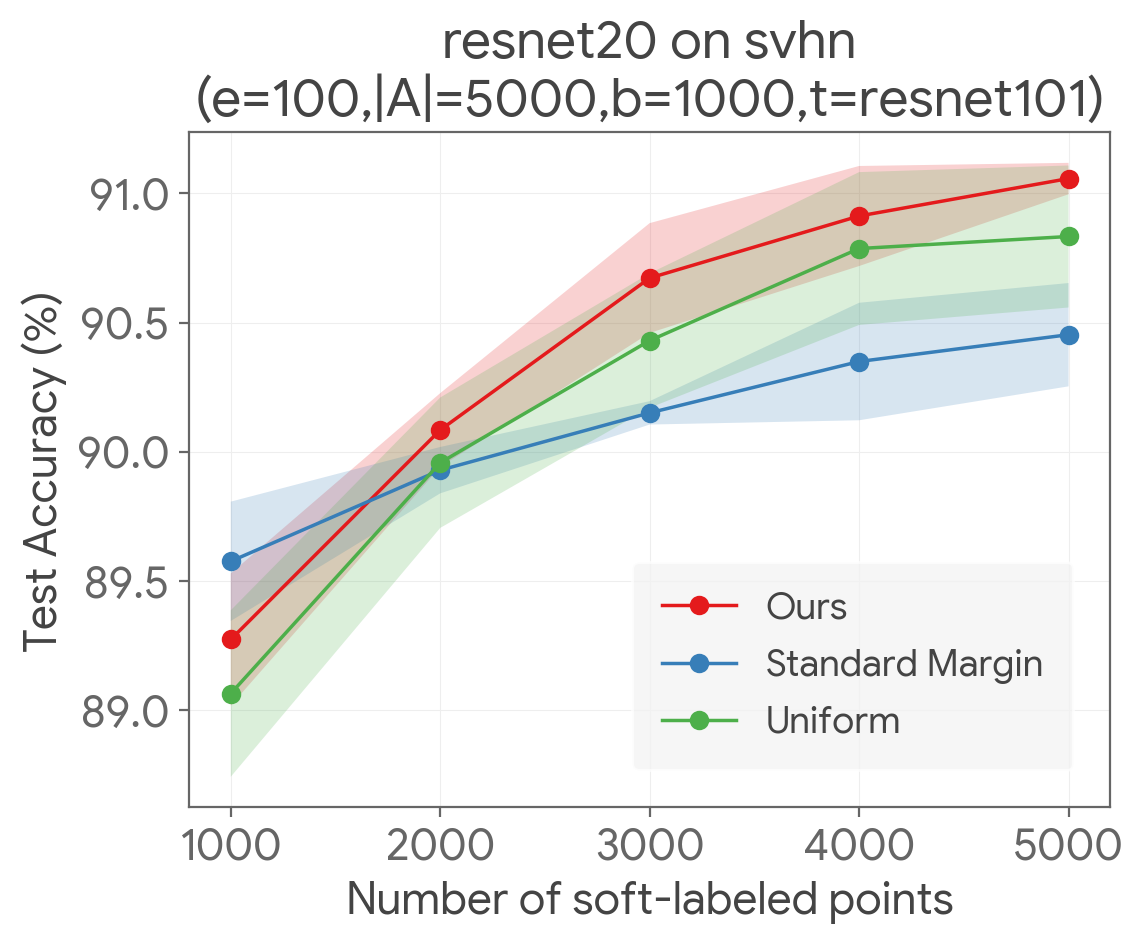}
 %\subcaption{$$}
  \end{minipage}%
  \centering
  \begin{minipage}[t]{0.33\textwidth}
  \centering
 \includegraphics[width=1\textwidth]{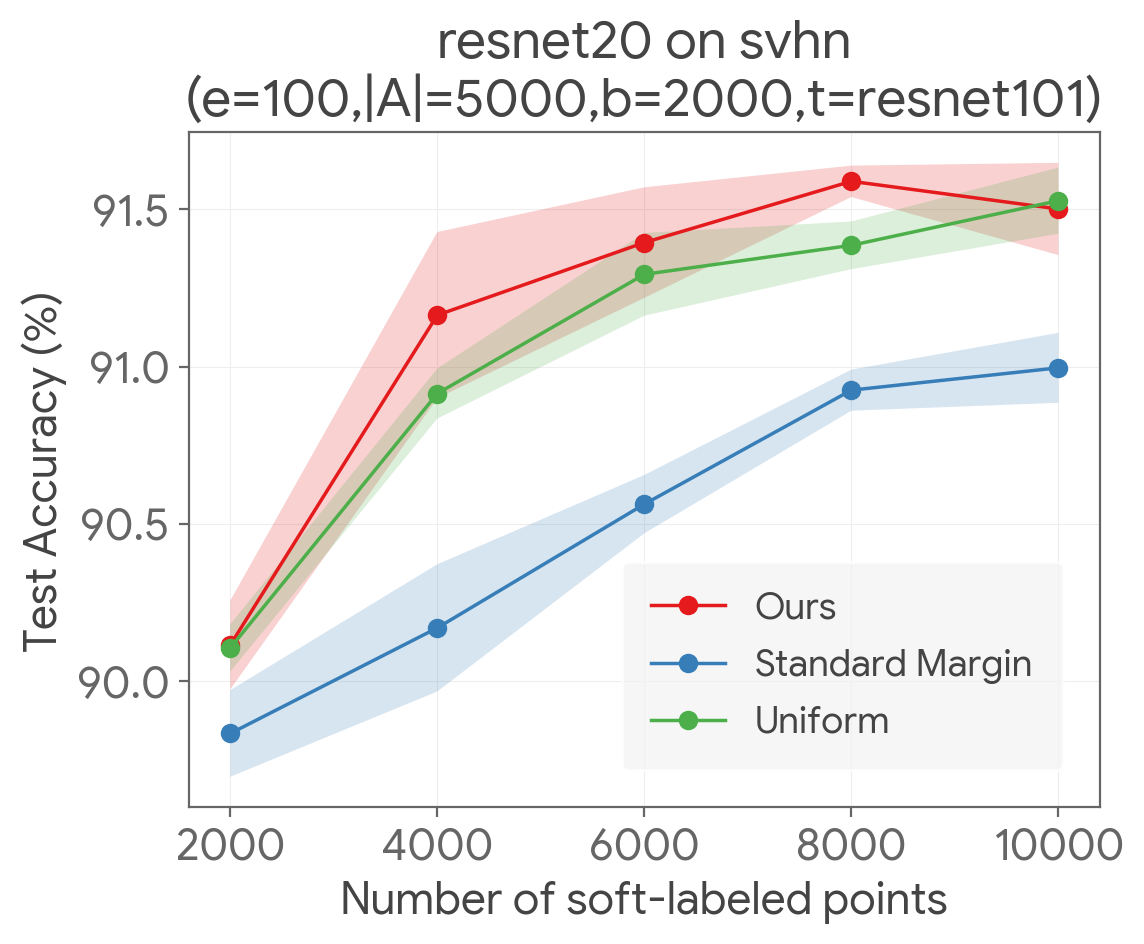}
 %\subcaption{$(\textsc{Scratch}, 50, 15, 500)$}
  \end{minipage}%
  \begin{minipage}[t]{0.33\textwidth}
  \centering
 \includegraphics[width=1\textwidth]{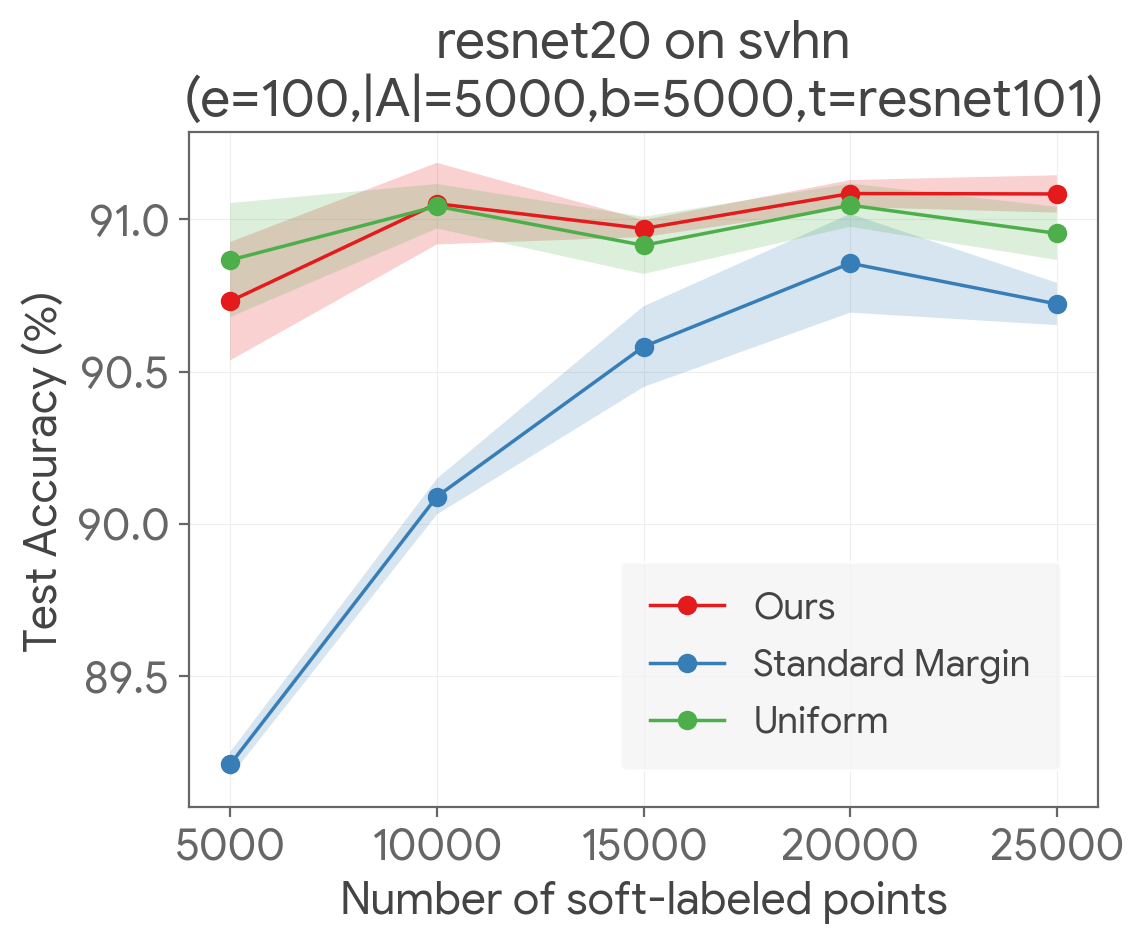}
  \end{minipage}%
		\caption{ResNetv2-20 architecture with 100 epochs and $|A| = 5000$. First row: ResNetv2-56 teacher; second row: ResNetv2-101 teacher. Columns correspond to batch size of $b = 1000$, $b=2000$, and $b=5000$, respectively.}
	\label{fig:acc-resnet11}
% 	\vspace{-3ex}
\end{figure*}

\begin{figure*}[htb!]

    \begin{minipage}[t]{0.33\textwidth}
  \centering
 \includegraphics[width=1\textwidth]{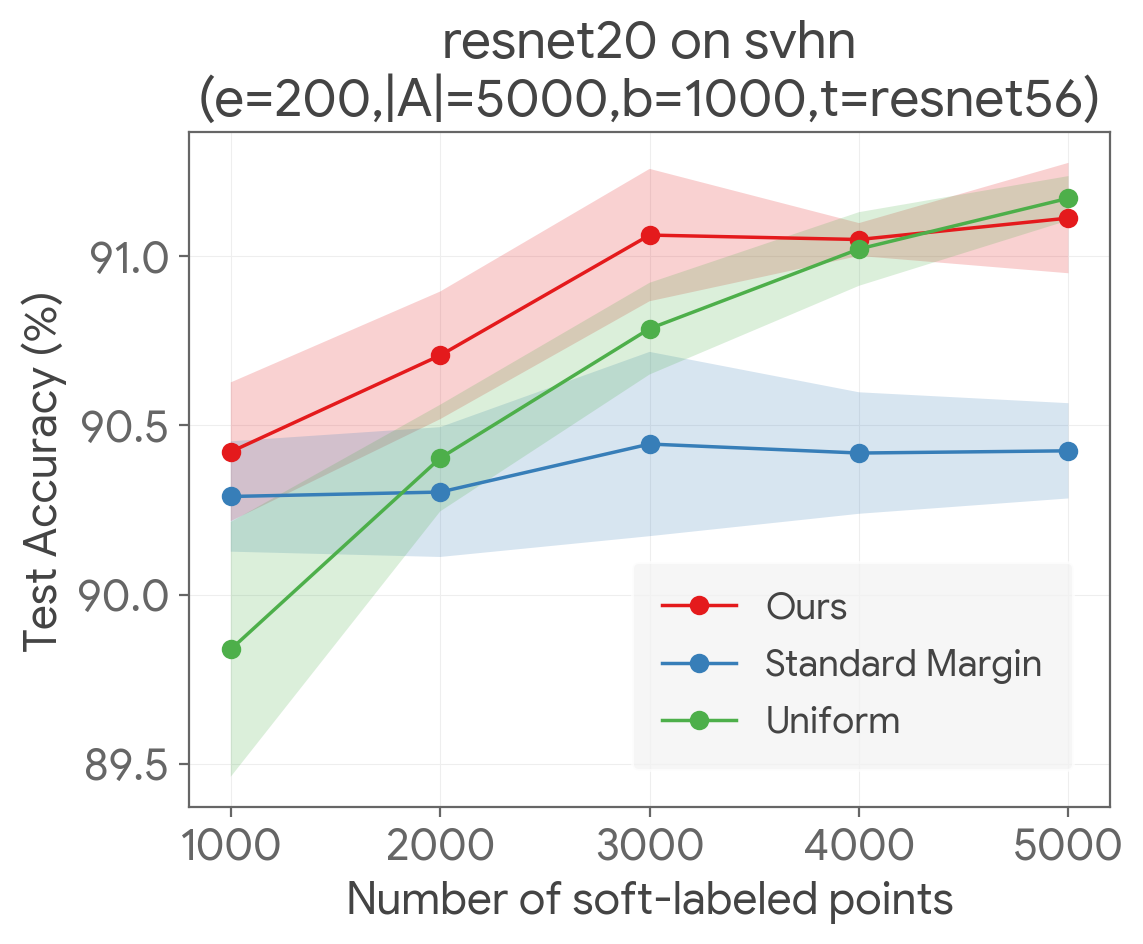}
 %\subcaption{$$}
  \end{minipage}%
  \centering
  \begin{minipage}[t]{0.33\textwidth}
  \centering
 \includegraphics[width=1\textwidth]{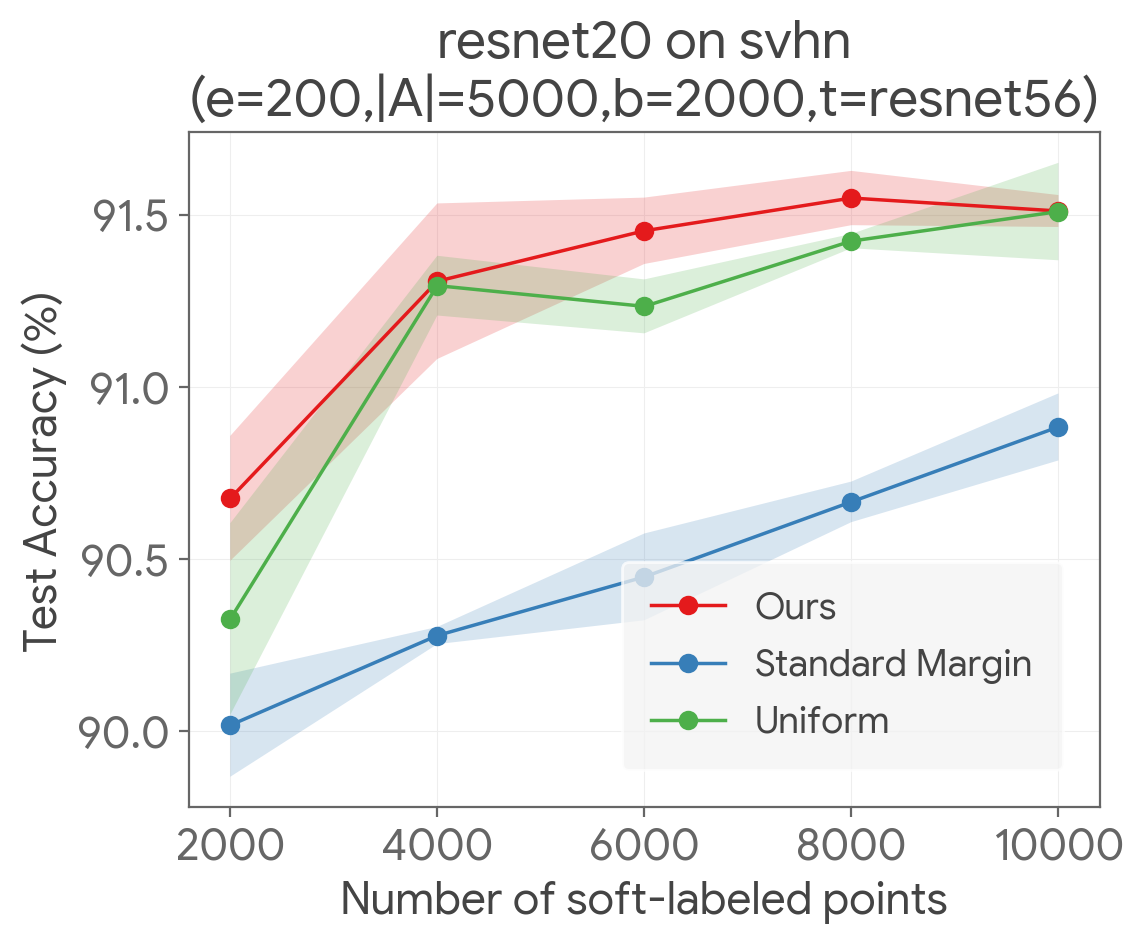}
  \end{minipage}%
  \begin{minipage}[t]{0.33\textwidth}
  \centering
 \includegraphics[width=1\textwidth]{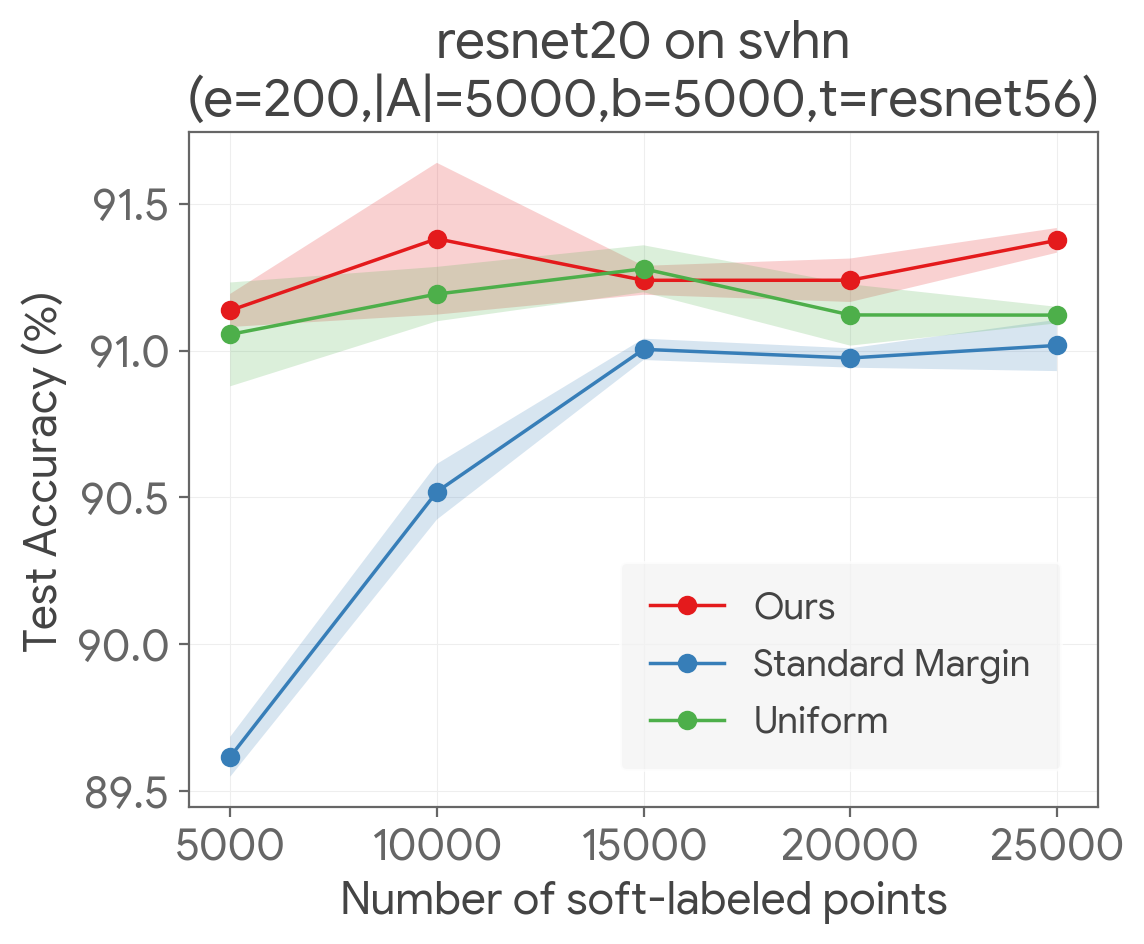}
 %\subcaption{$$}
  \end{minipage}%
  
      \begin{minipage}[t]{0.33\textwidth}
  \centering
 \includegraphics[width=1\textwidth]{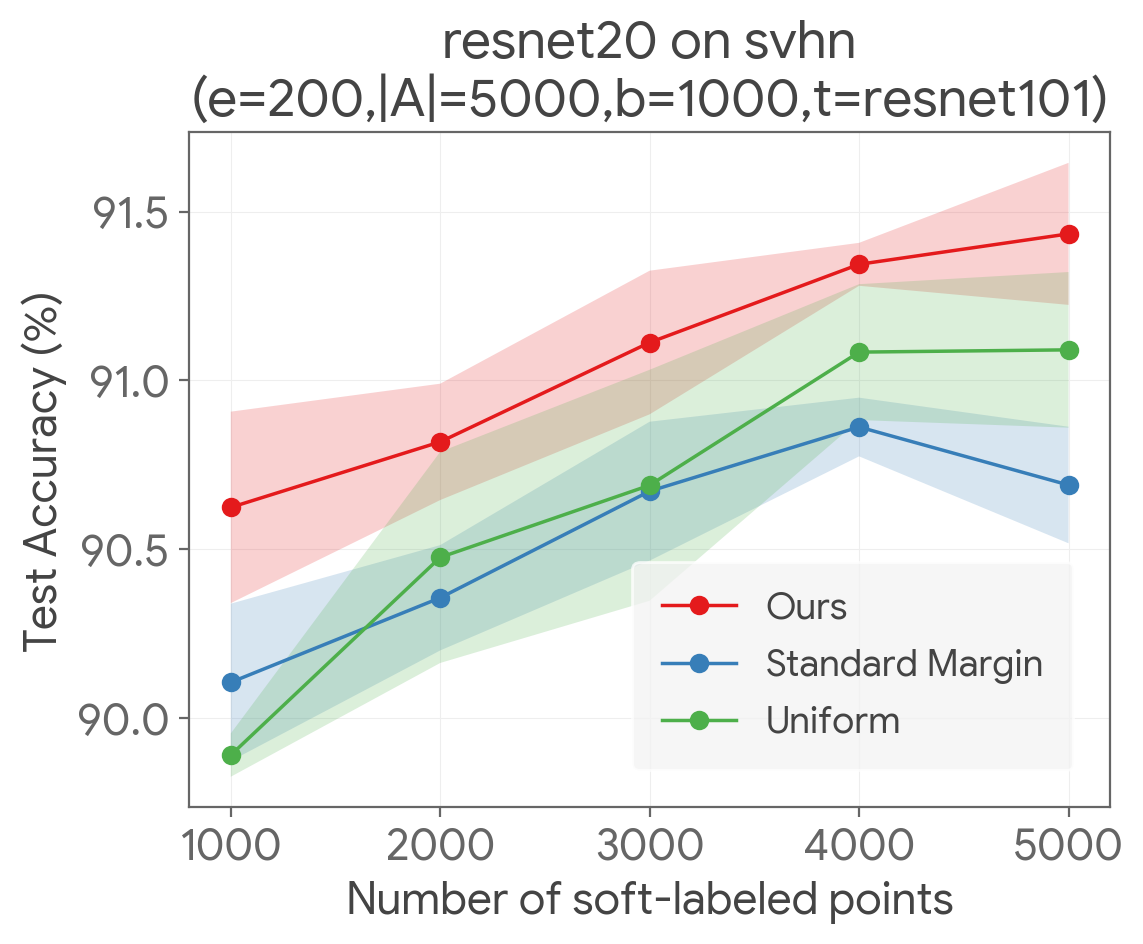}
 %\subcaption{$$}
  \end{minipage}%
  \centering
  \begin{minipage}[t]{0.33\textwidth}
  \centering
 \includegraphics[width=1\textwidth]{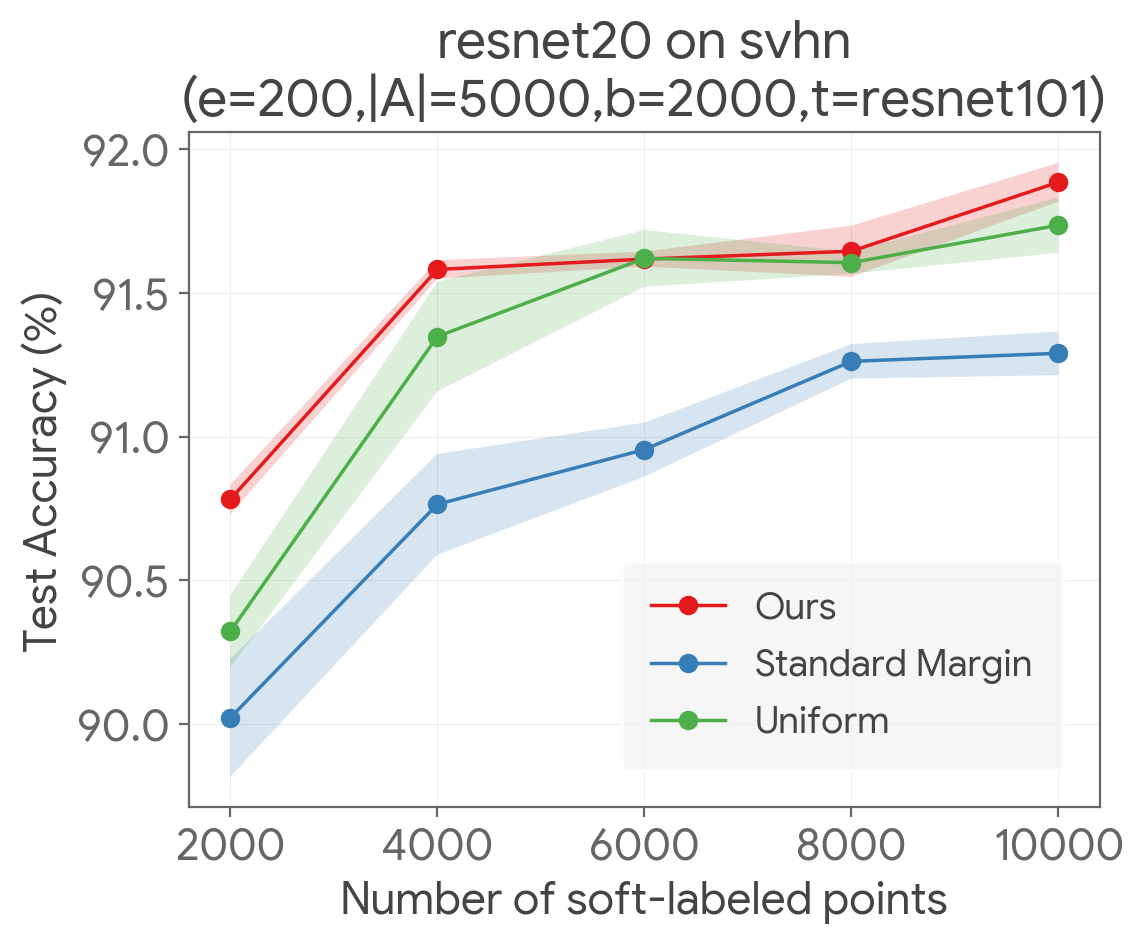}
 %\subcaption{$(\textsc{Scratch}, 50, 15, 500)$}
  \end{minipage}%
  \begin{minipage}[t]{0.33\textwidth}
  \centering
 \includegraphics[width=1\textwidth]{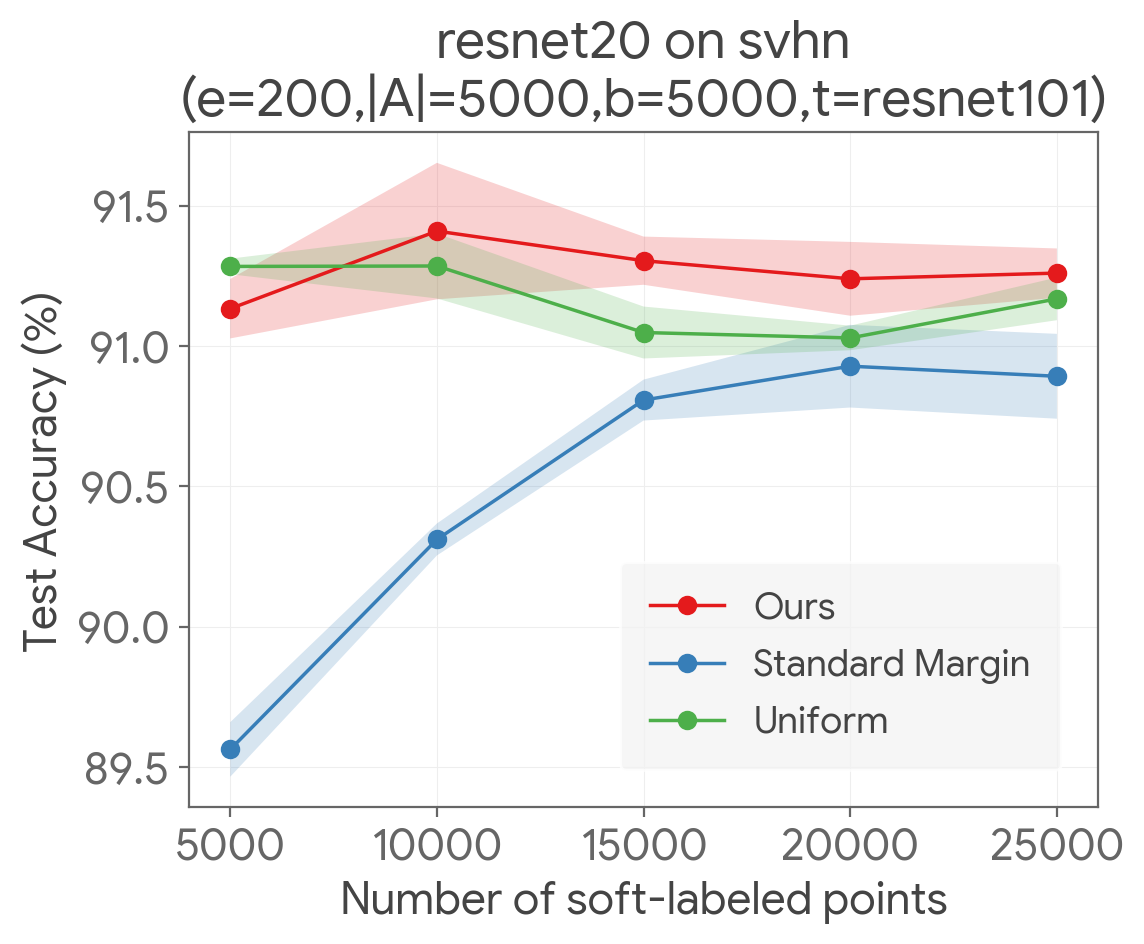}
  \end{minipage}%
		\caption{ResNetv2-20 architecture with \textbf{200} epochs and $|A| = 5000$. First row: ResNetv2-56 teacher; second row: ResNetv2-101 teacher. Columns correspond to batch size of $b = 1000$, $b=2000$, and $b=5000$, respectively.}
	%\label{fig:acc-resnet11-200epochs}
% 	\vspace{-3ex}
\end{figure*}

\begin{figure*}[htb!]

    \begin{minipage}[t]{0.33\textwidth}
  \centering
 \includegraphics[width=1\textwidth]{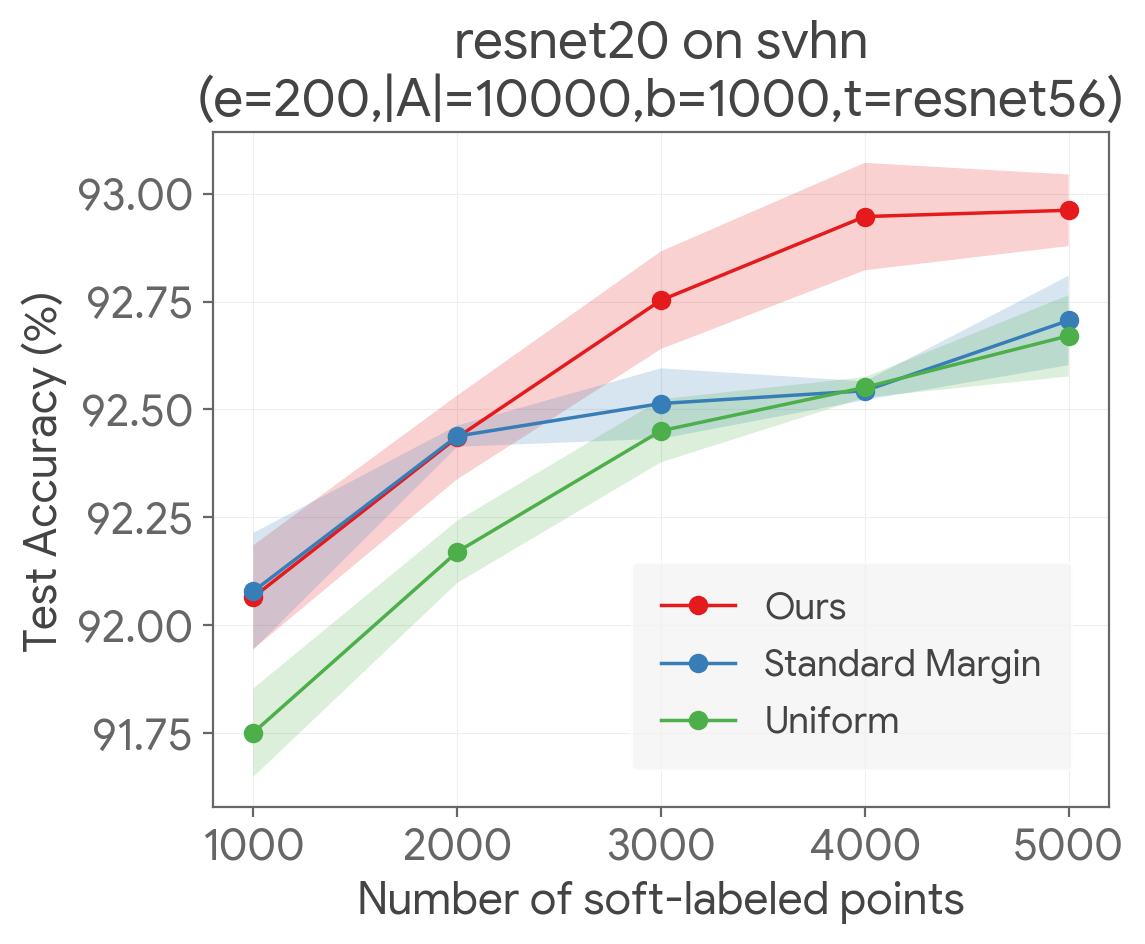}
 %\subcaption{$$}
  \end{minipage}%
  \centering
  \begin{minipage}[t]{0.33\textwidth}
  \centering
 \includegraphics[width=1\textwidth]{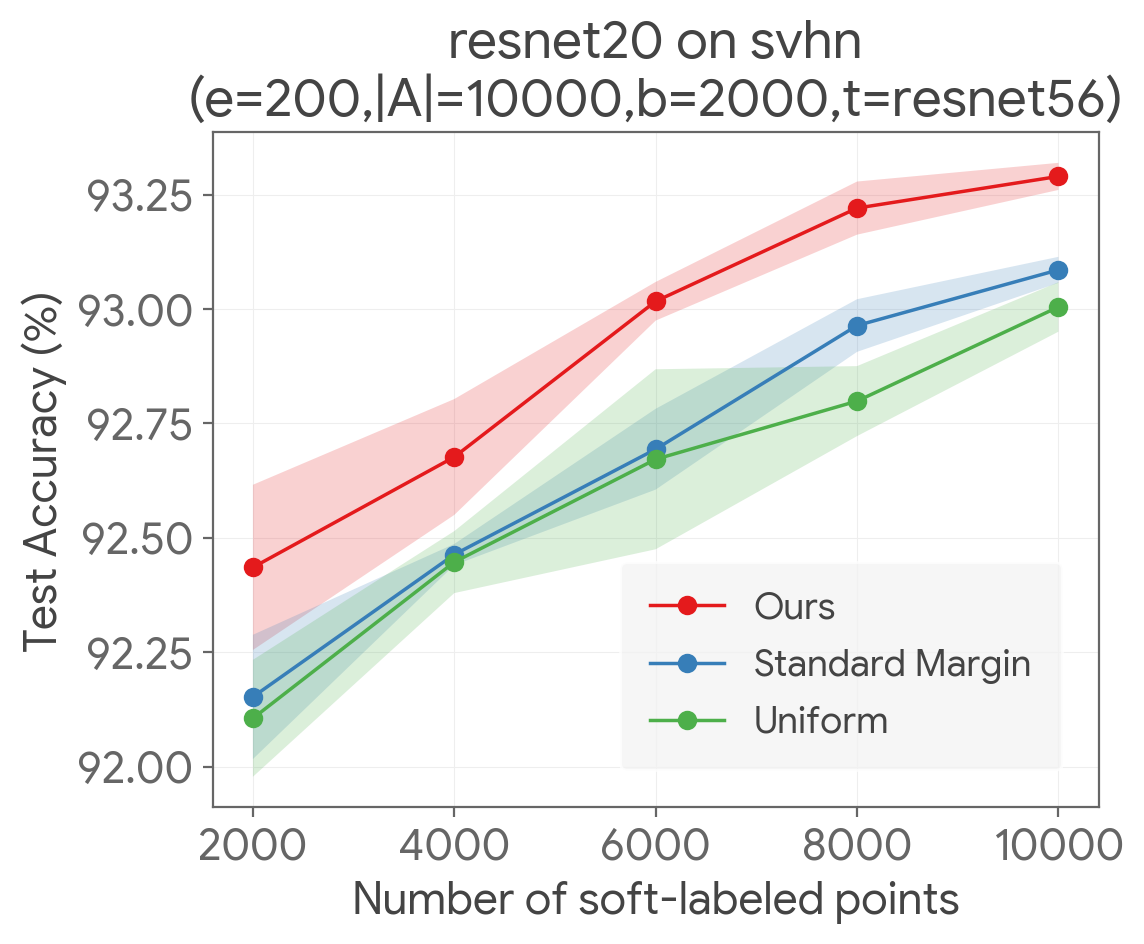}
  \end{minipage}%
  \begin{minipage}[t]{0.33\textwidth}
  \centering
 \includegraphics[width=1\textwidth]{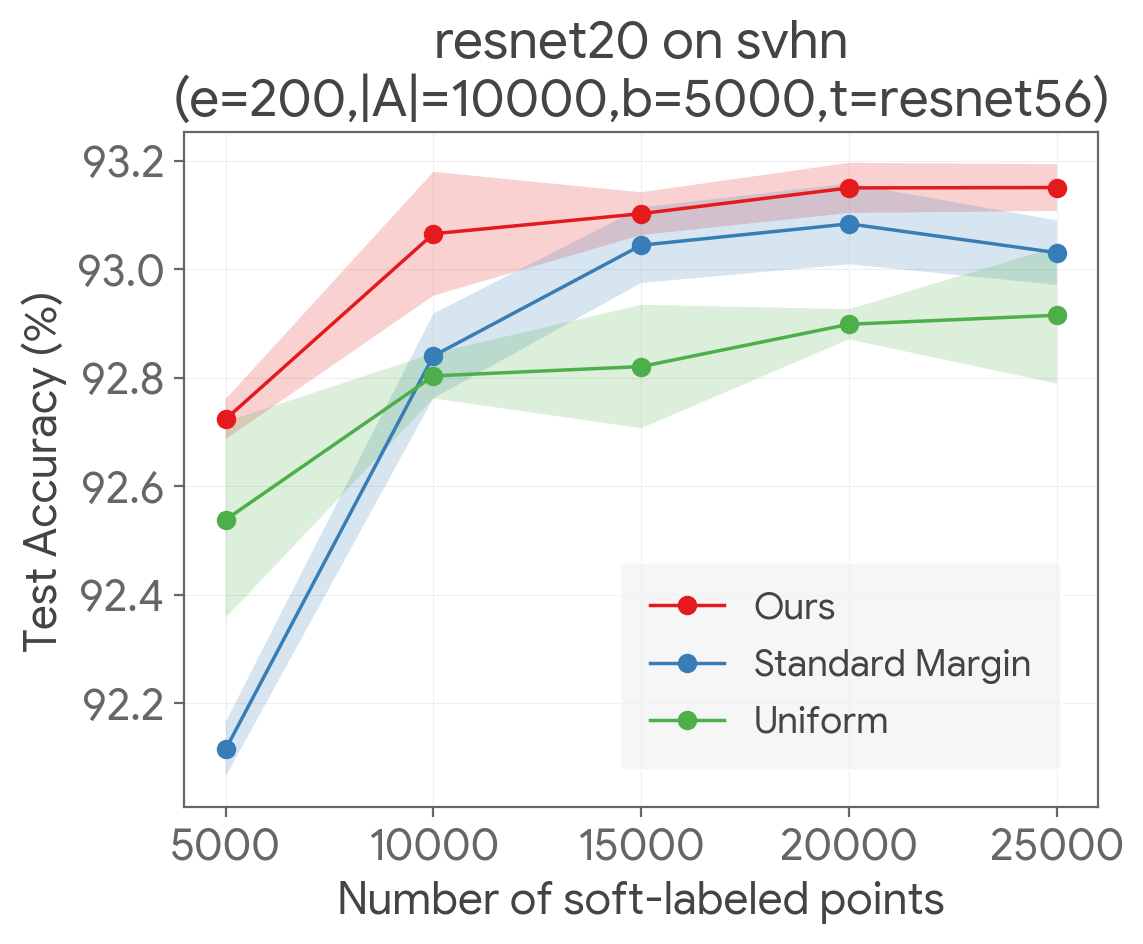}
 %\subcaption{$$}
  \end{minipage}%
  
      \begin{minipage}[t]{0.33\textwidth}
  \centering
 \includegraphics[width=1\textwidth]{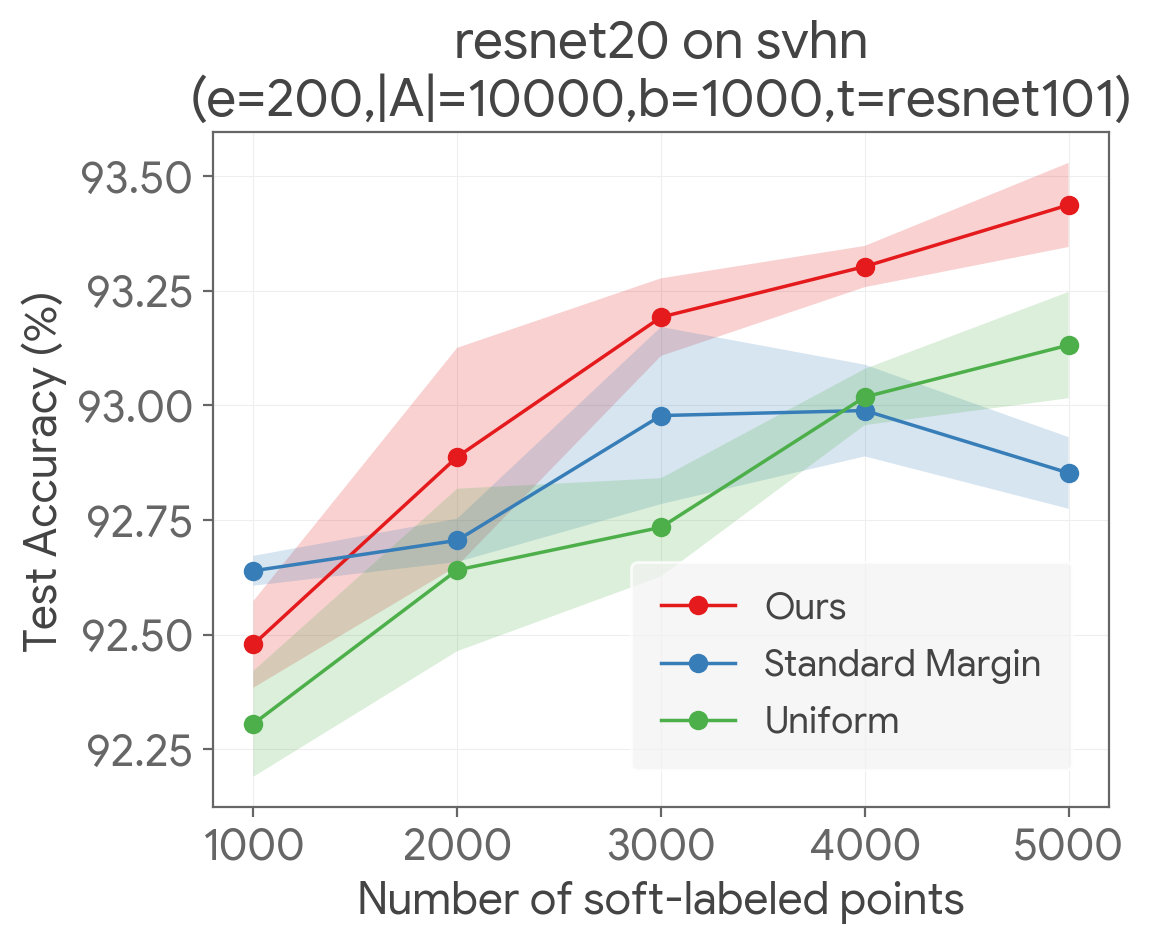}
 %\subcaption{$$}
  \end{minipage}%
  \centering
  \begin{minipage}[t]{0.33\textwidth}
  \centering
 \includegraphics[width=1\textwidth]{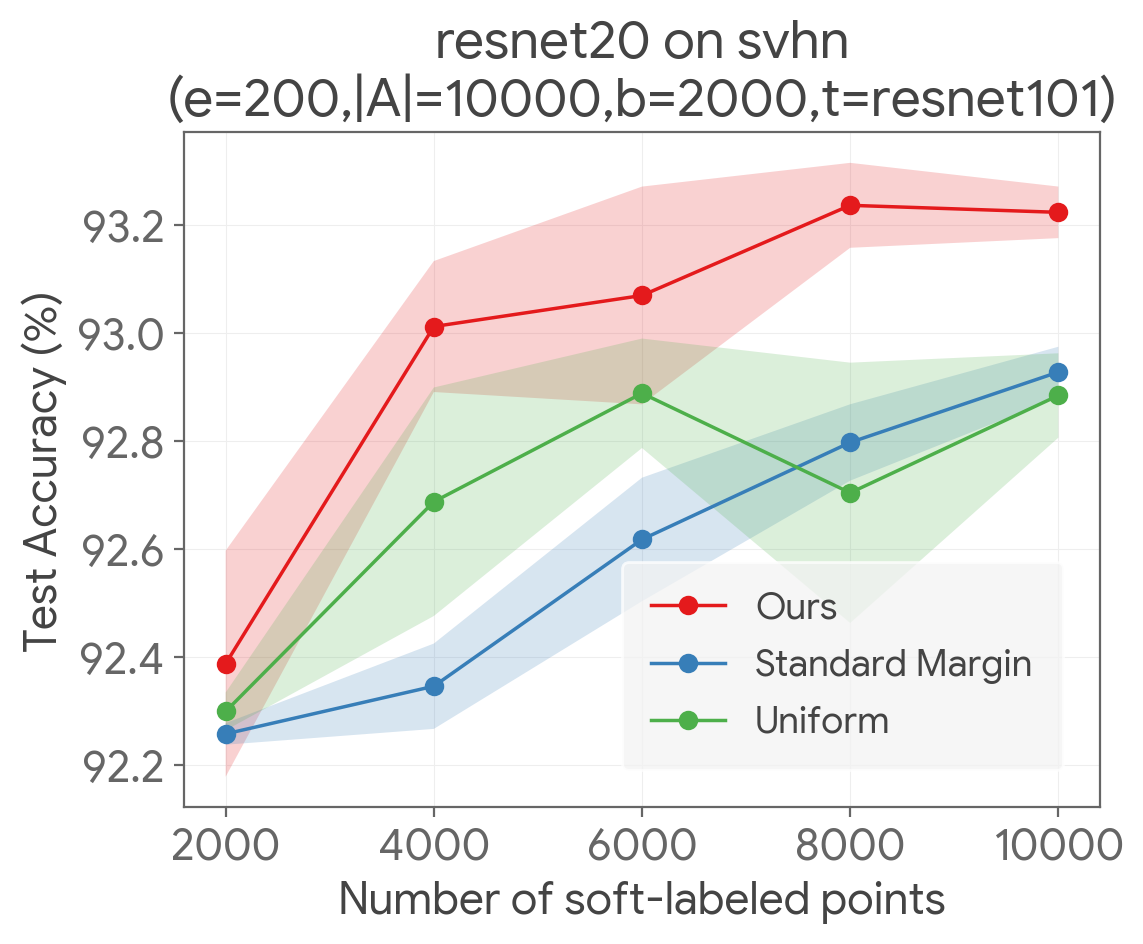}
 %\subcaption{$(\textsc{Scratch}, 50, 15, 500)$}
  \end{minipage}%
  \begin{minipage}[t]{0.33\textwidth}
  \centering
 \includegraphics[width=1\textwidth]{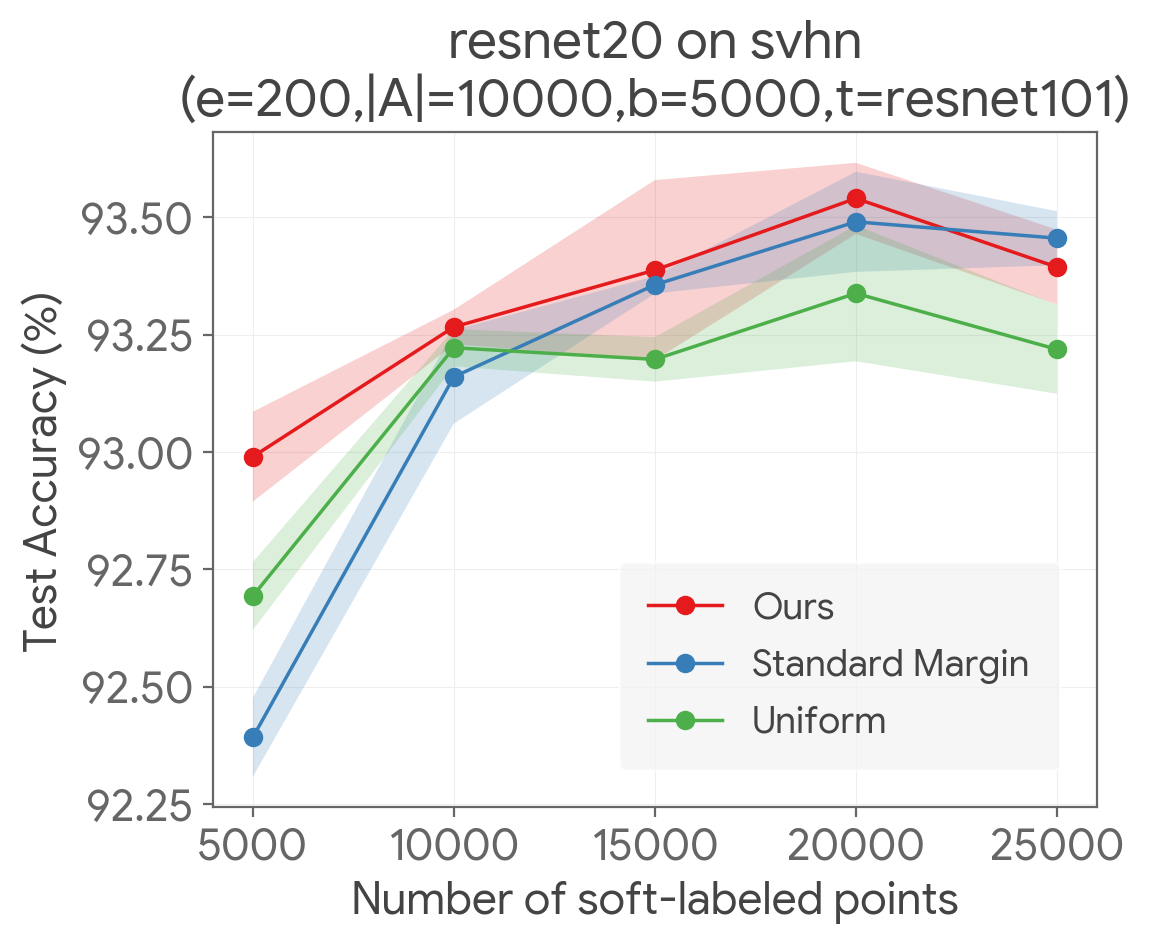}
  \end{minipage}%
		\caption{ResNetv2-20 architecture with 200 epochs and $|A| = 10000$. First row: ResNetv2-56 teacher; second row: ResNetv2-101 teacher. Columns correspond to batch size of $b = 1000$, $b=2000$, and $b=5000$, respectively.}
	%\label{fig:acc-resnet11-200epochs}
% 	\vspace{-3ex}
\end{figure*}
%%%%%%%%%%%%%%%%%%%%%%%%%%%%%%%%%%%%%%%%%%%%%%%%%%%%%%%%%%%%%%%%%%%%%%%%%%%%%%%%%%%%%%%%%%%%%%%%%%%%%%%%
%%%%%%%%%%%%%%%%%%%%%%%%%%%%%%%%%%%%%%%%%%%%%%%%%%%%%%%%%%%%%%%%%%%%%%%%%%%%%%%%%%%%%%%%%%%%%%%%%%%%%%%%
%%%%%%%%%%%%%%%%%%%%%%%%%%%%%%%%%%%%%%%%%%%%%%%%%%%%%%%%%%%%%%%%%%%%%%%%%%%%%%%%%%%%%%%%%%%%%%%%%%%%%%%%
%%%%%%%%%%%%%%%%%%%%%%%%%%%%%%%%%%%%%%%%%%%%%%%%%%%%%%%%%%%%%%%%%%%%%%%%%%%%%%%%%%%%%%%%%%%%%%%%%%%%%%%%
%%%%%%%%%%%%%%%%%%%%%%%%%%%%%%%%%%%%%%%%%%%%%%%%%%%%%%%%%%%%%%%%%%%%%%%%%%%%%%%%%%%%%%%%%%%%%%%%%%%%%%%%
%%%%%%%%%%%%%%%%%%%%%%%%%%%%%%%%%%%%%%%%%%%%%%%%%%%%%%%%%%%%%%%%%%%%%%%%%%%%%%%%%%%%%%%%%%%%%%%%%%%%%%%%

\begin{figure*}[htb!]

    \begin{minipage}[t]{0.33\textwidth}
  \centering
 \includegraphics[width=1\textwidth]{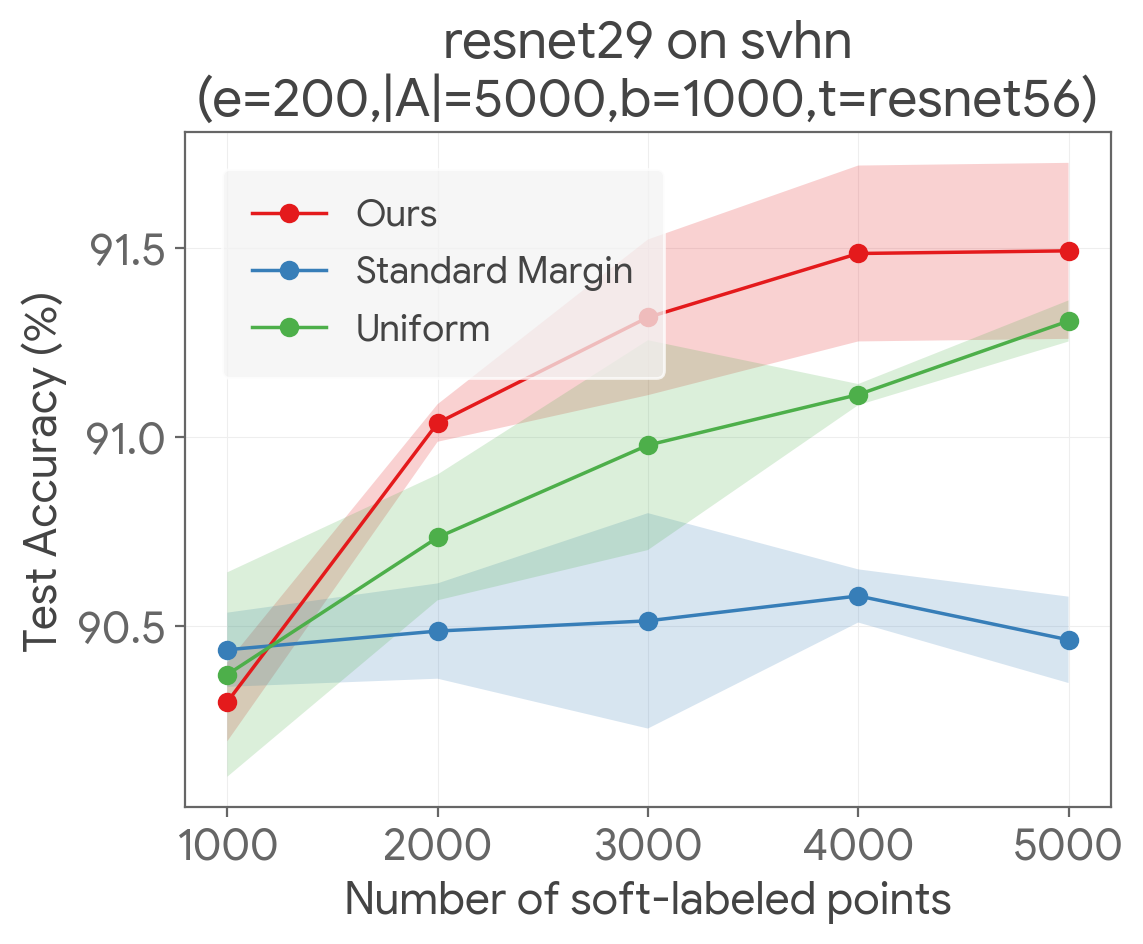}
 %\subcaption{$$}
  \end{minipage}%
  \centering
  \begin{minipage}[t]{0.33\textwidth}
  \centering
 \includegraphics[width=1\textwidth]{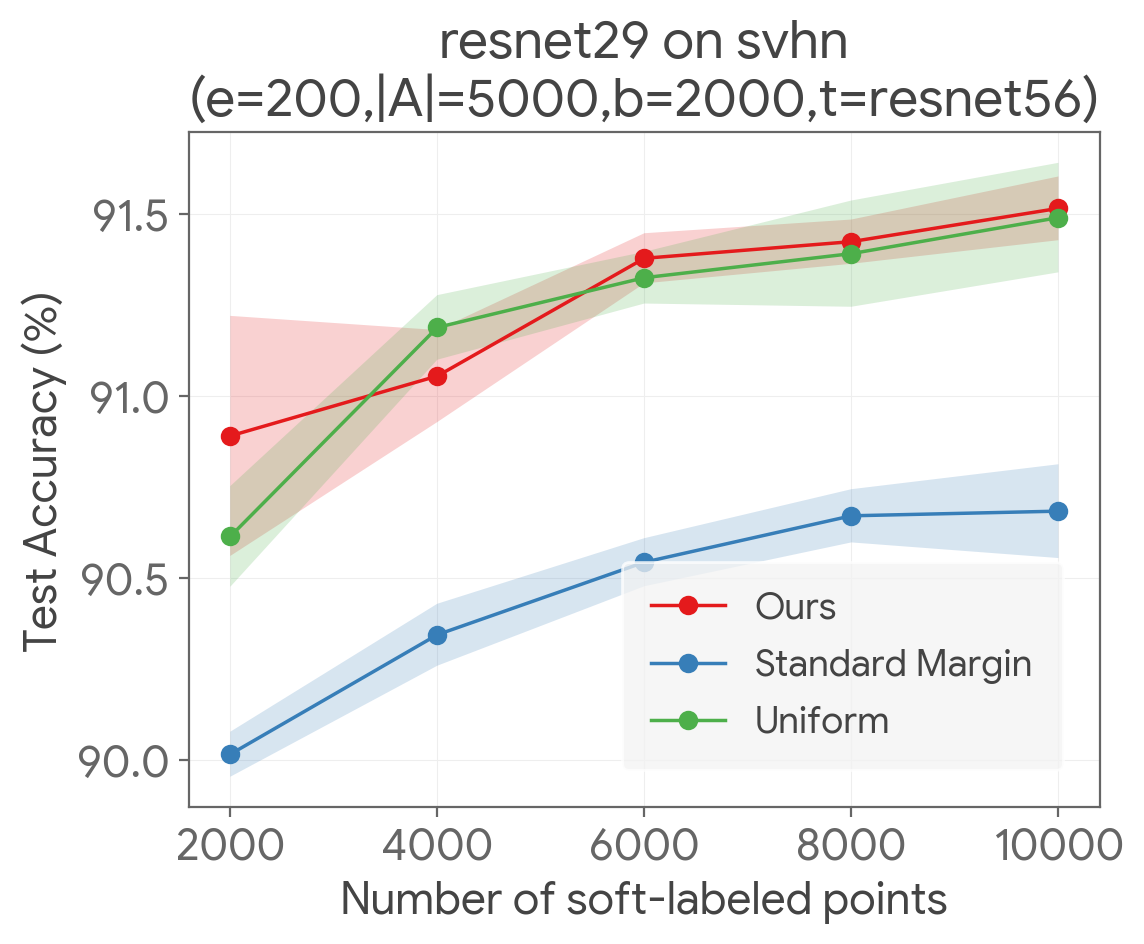}
  \end{minipage}%
  \begin{minipage}[t]{0.33\textwidth}
  \centering
 \includegraphics[width=1\textwidth]{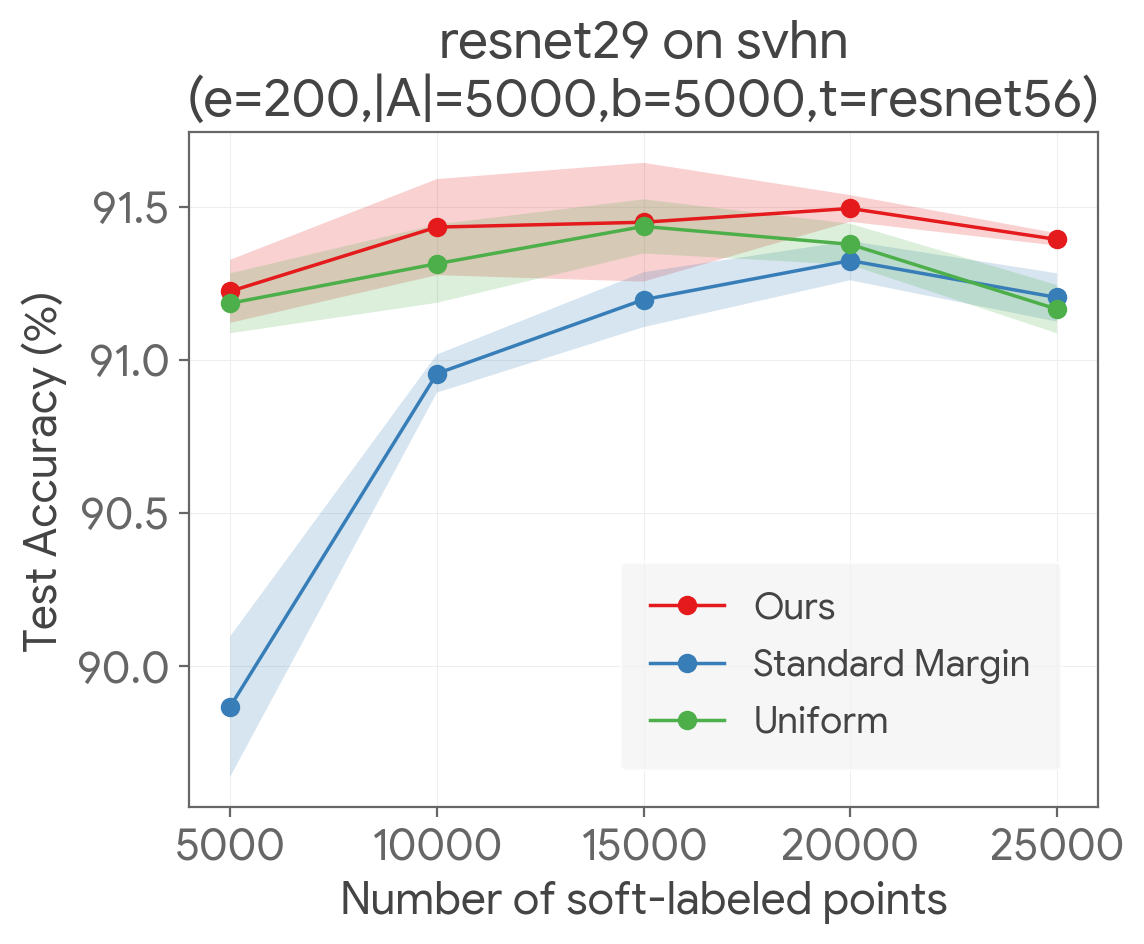}
 %\subcaption{$$}
  \end{minipage}%
  
      \begin{minipage}[t]{0.33\textwidth}
  \centering
 \includegraphics[width=1\textwidth]{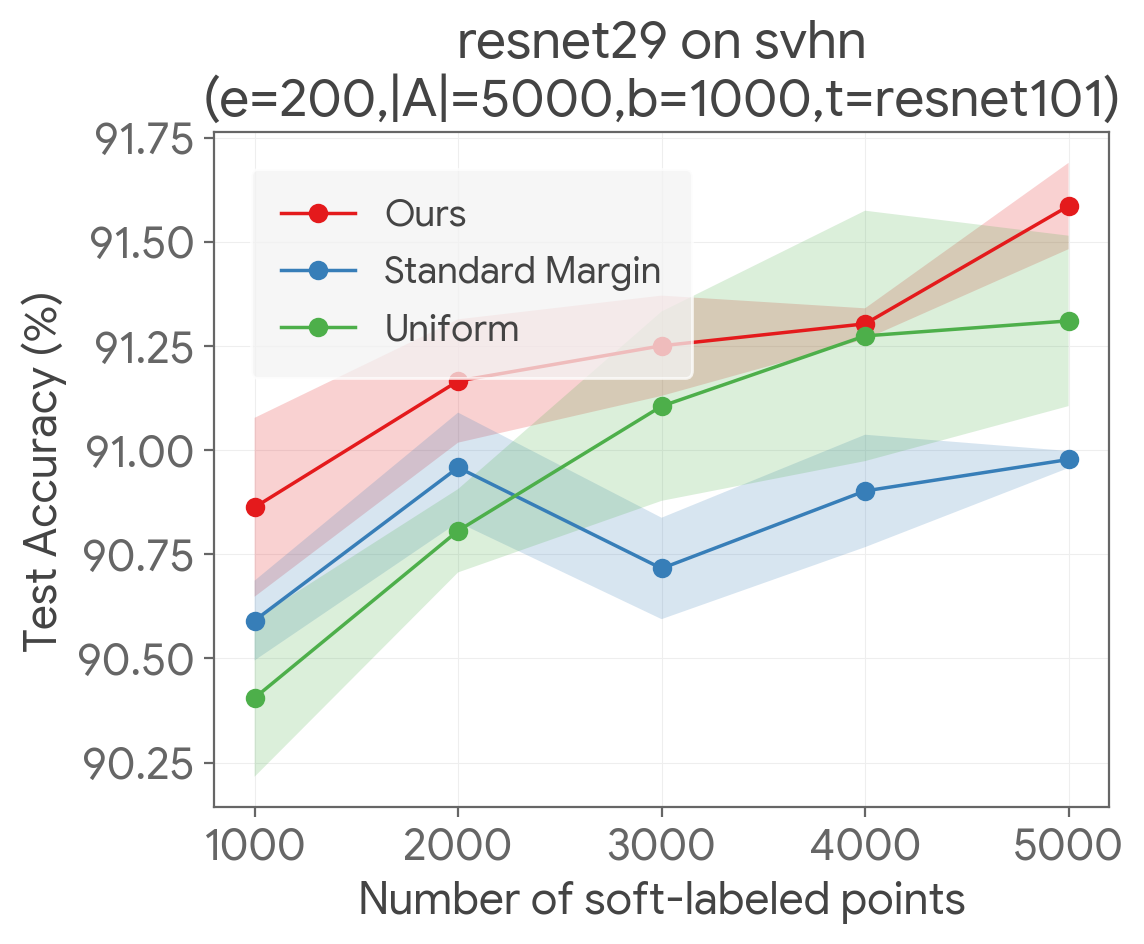}
 %\subcaption{$$}
  \end{minipage}%
  \centering
  \begin{minipage}[t]{0.33\textwidth}
  \centering
 \includegraphics[width=1\textwidth]{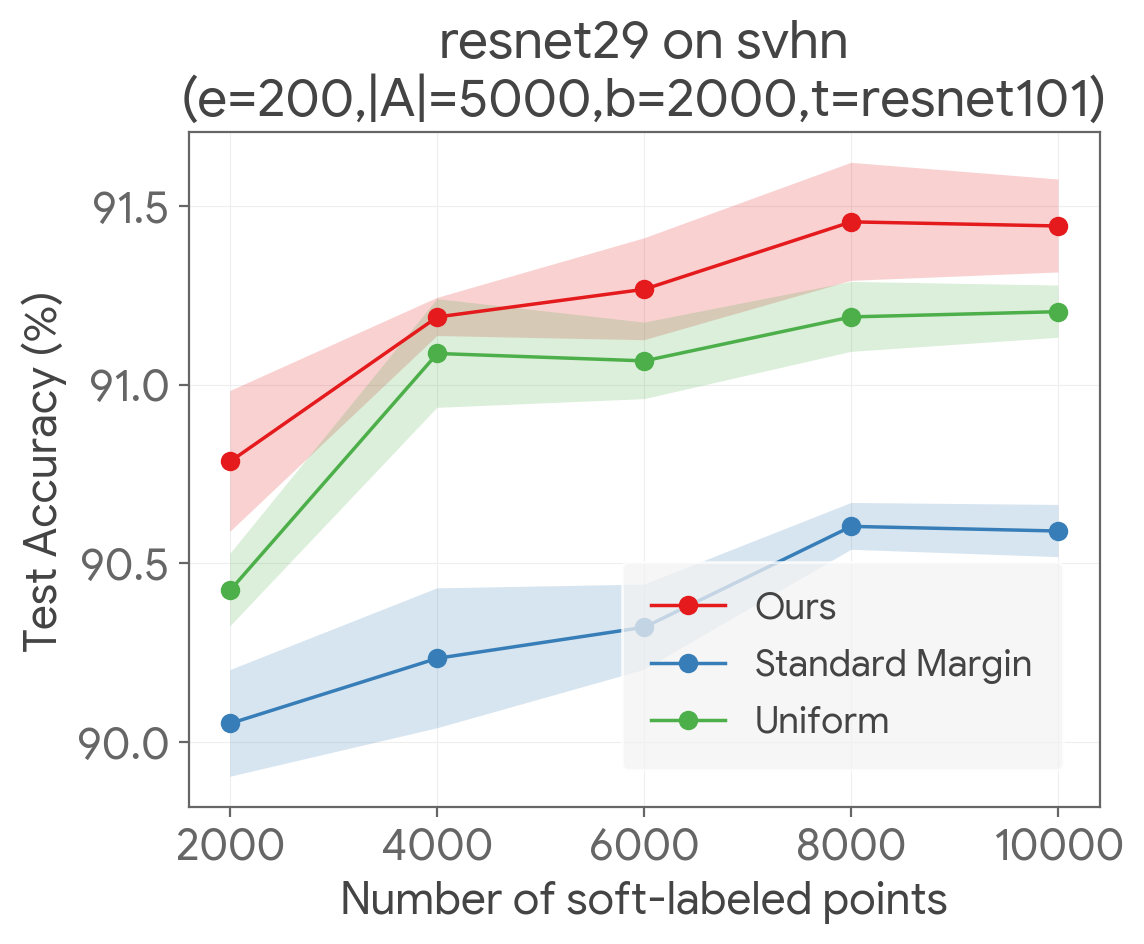}
 %\subcaption{$(\textsc{Scratch}, 50, 15, 500)$}
  \end{minipage}%
  \begin{minipage}[t]{0.33\textwidth}
  \centering
 \includegraphics[width=1\textwidth]{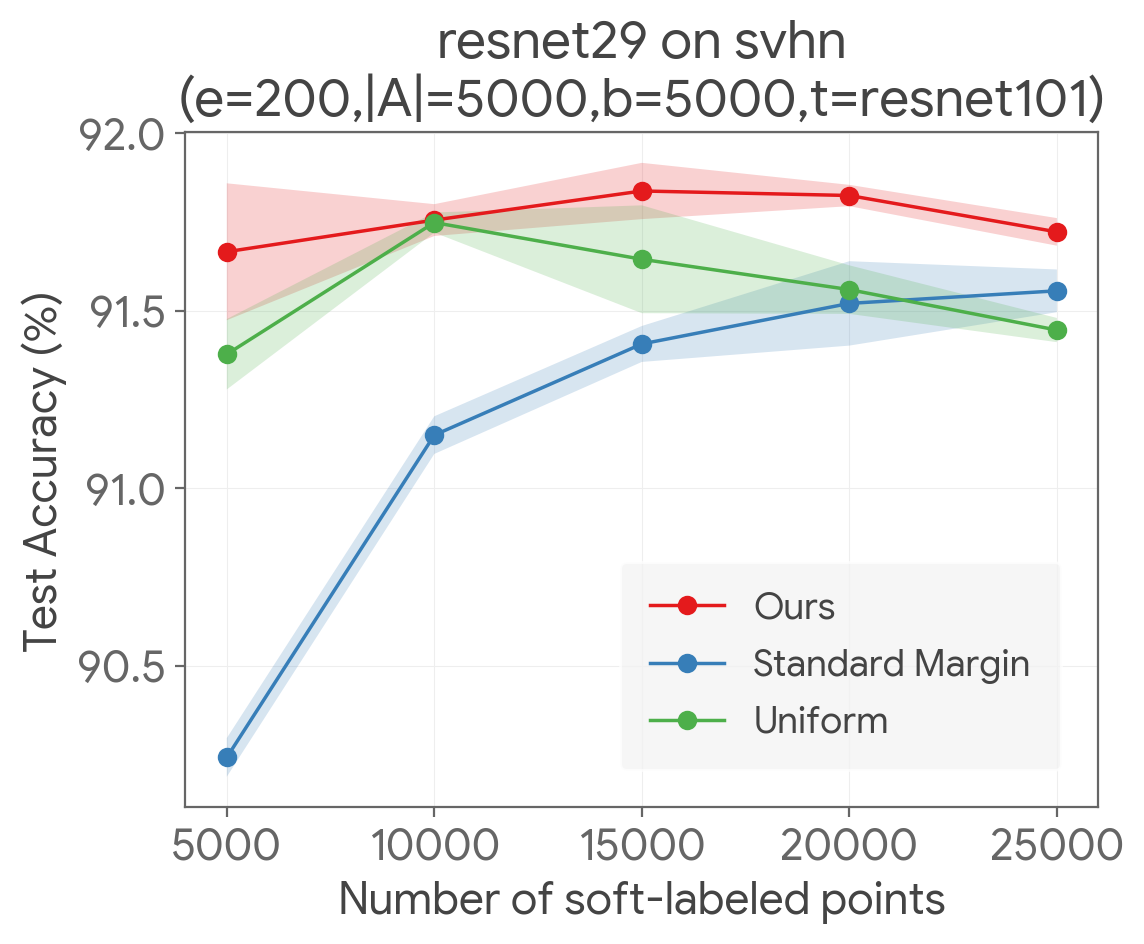}
  \end{minipage}%
		\caption{ResNetv2-29 architecture with 200 epochs and $|A| = 5000$. First row: ResNetv2-56 teacher; second row: ResNetv2-101 teacher. Columns correspond to batch size of $b = 1000$, $b=2000$, and $b=5000$, respectively.}
	%\label{fig:acc-resnet11-200epochs}
% 	\vspace{-3ex}
\end{figure*}

\reviclr{
\FloatBarrier
\newpage

\subsection{Experiments with Pre-trained Teacher Models}
\label{sec:supp-pretrained-teacher}
\begin{figure*}[htb!]

  \centering
    \begin{minipage}[t]{0.33\textwidth}
  \centering
 \includegraphics[width=1\textwidth]{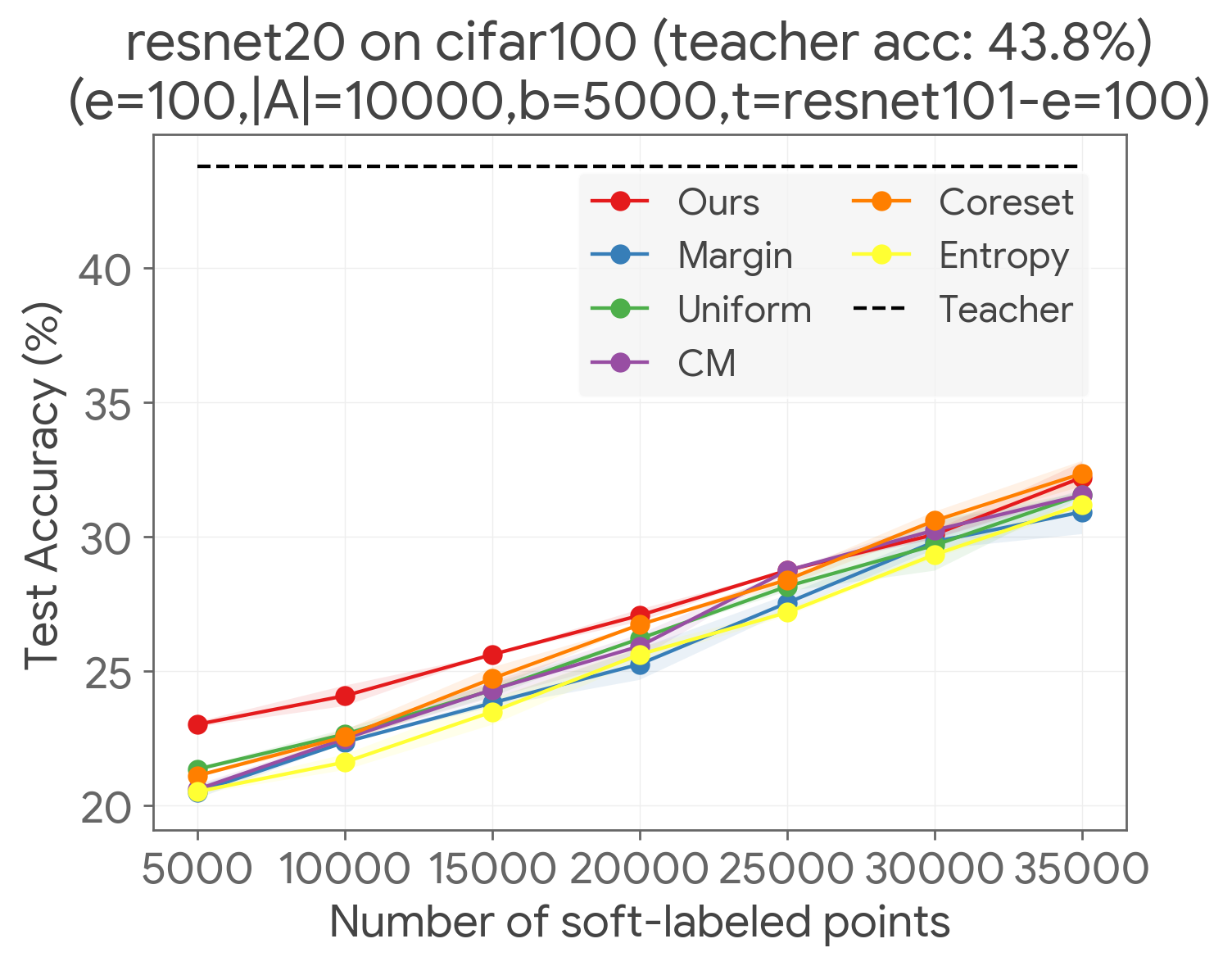}
 %\subcaption{$$}
  \end{minipage}%
  \begin{minipage}[t]{0.33\textwidth}
  \centering
 \includegraphics[width=1\textwidth]{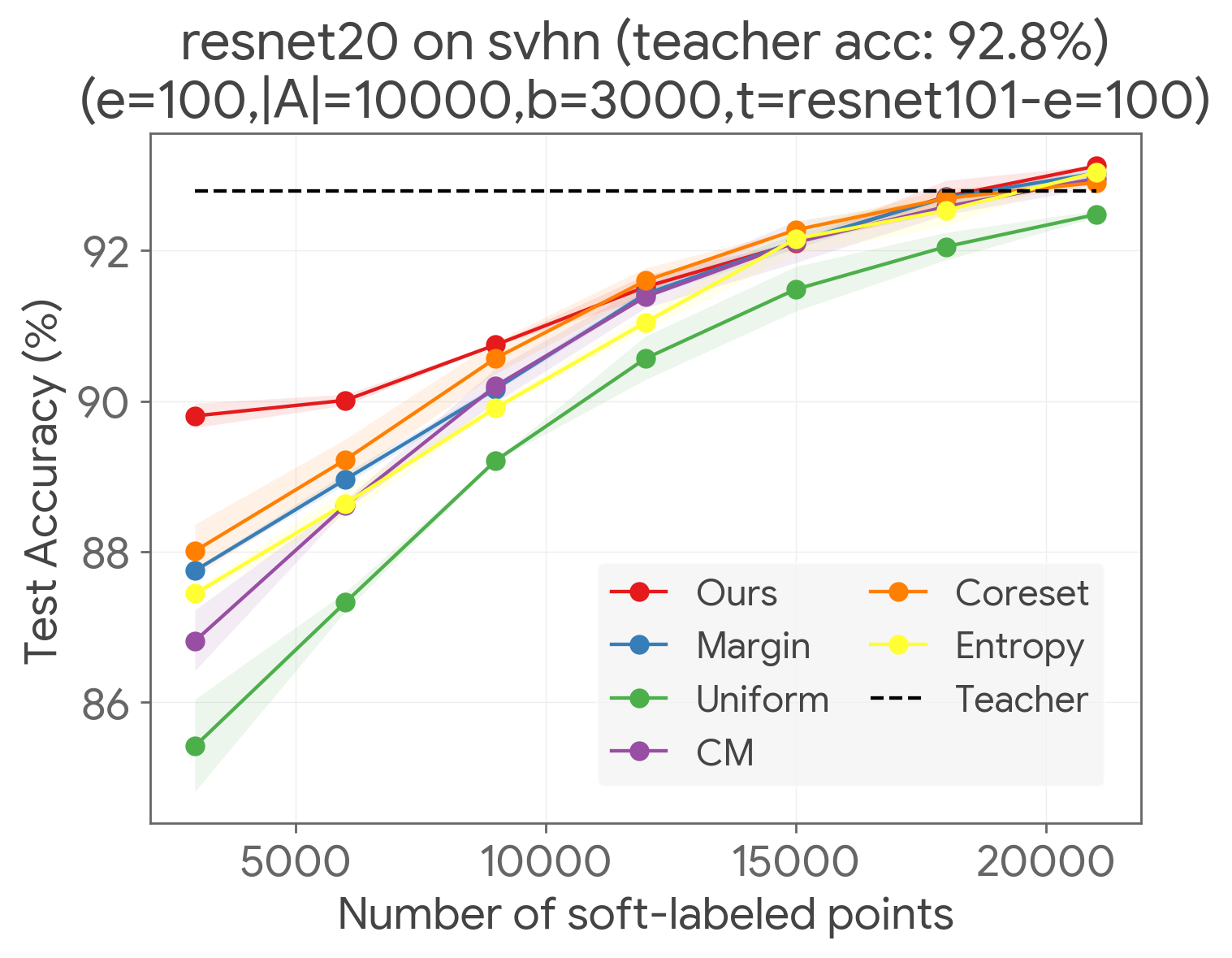}
  \end{minipage}%
  \begin{minipage}[t]{0.33\textwidth}
  \centering
 \includegraphics[width=1\textwidth]{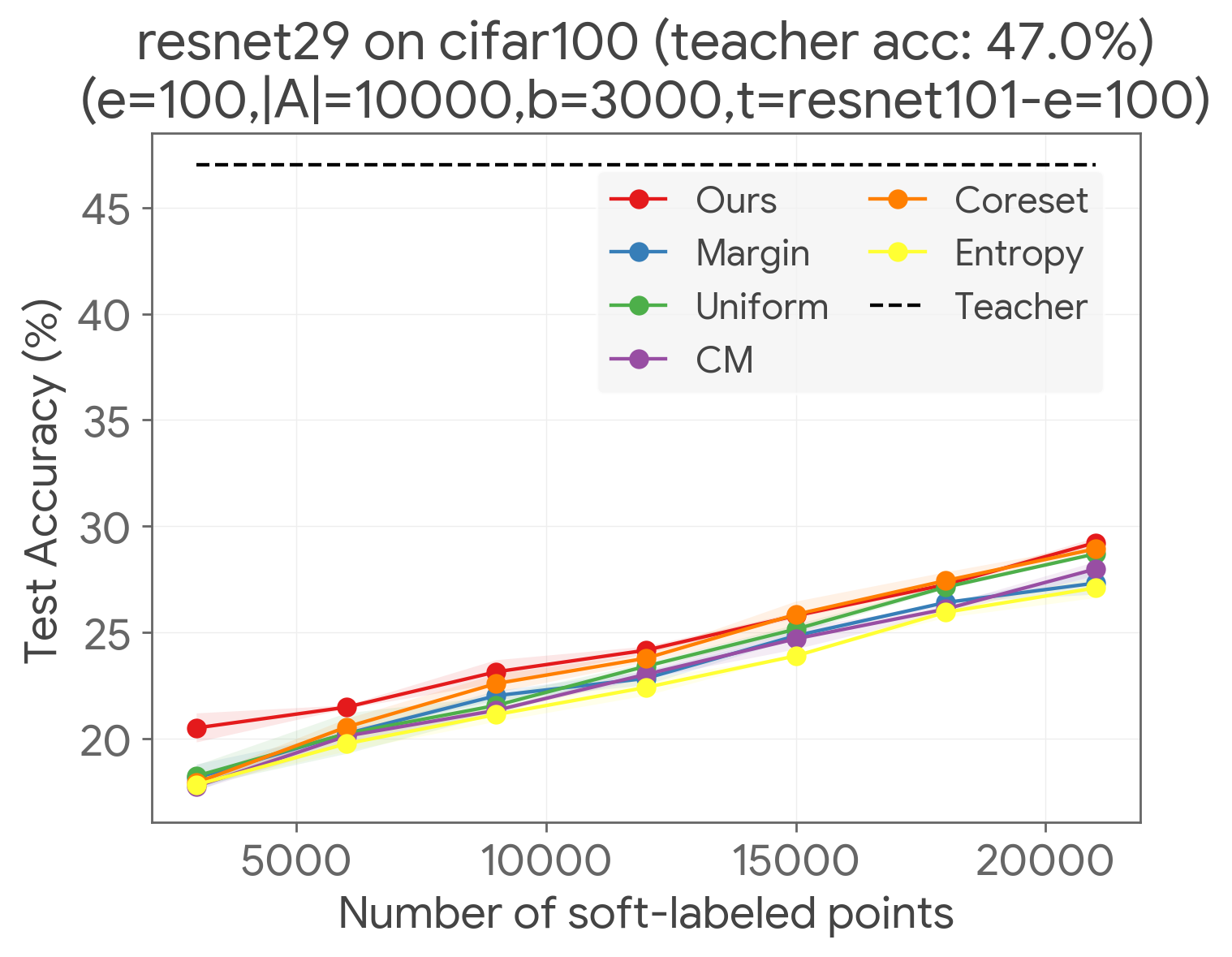}
  \end{minipage}%
  
    \begin{minipage}[t]{0.33\textwidth}
  \centering
 \includegraphics[width=1\textwidth]{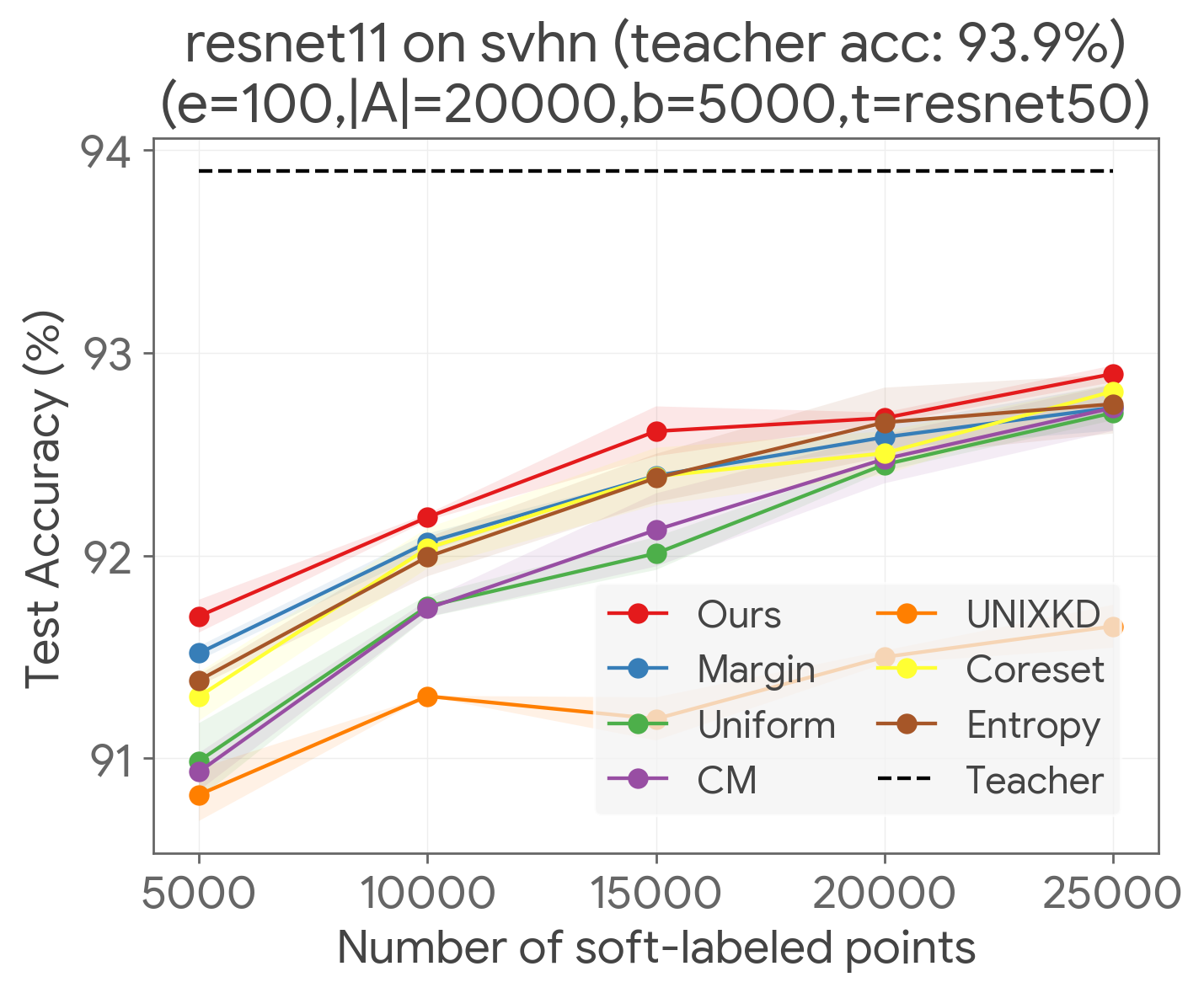}
 %\subcaption{$$}
  \end{minipage}%
  \begin{minipage}[t]{0.33\textwidth}
  \centering
 \includegraphics[width=1\textwidth]{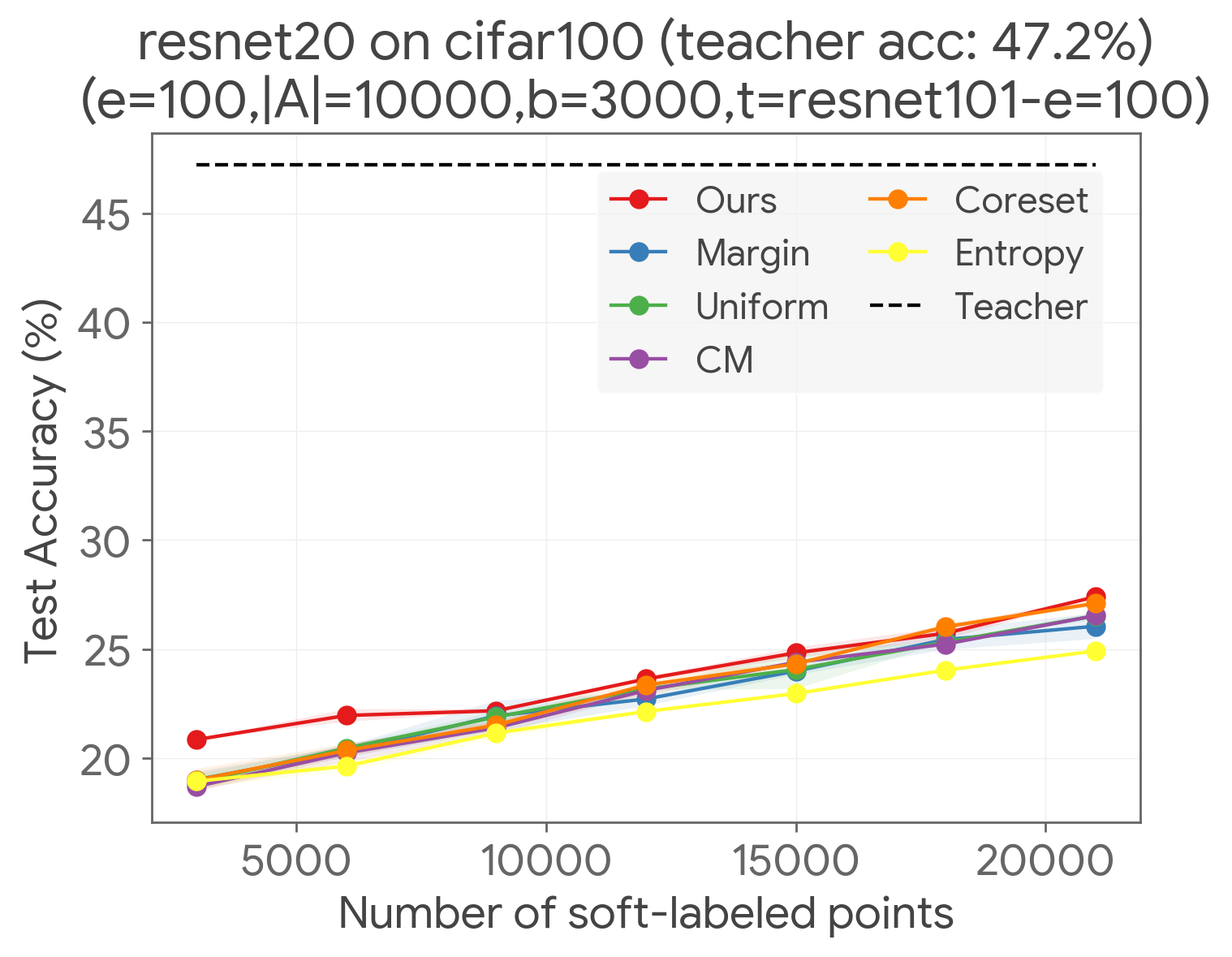}
  \end{minipage}%
  \begin{minipage}[t]{0.33\textwidth}
  \centering
 \includegraphics[width=1\textwidth]{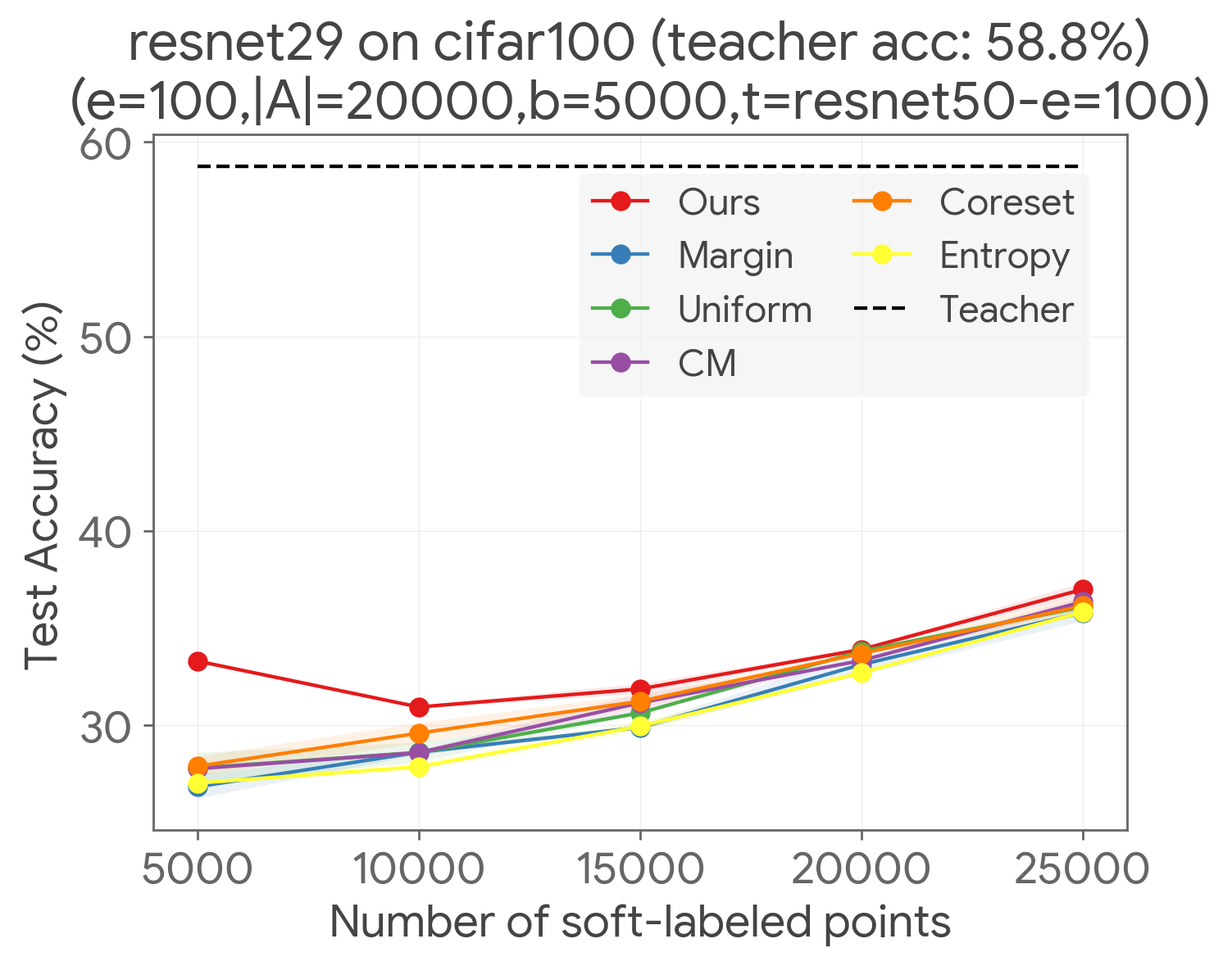}
  \end{minipage}%
		\caption{\reviclr{Evaluations on with a ResNet teacher model pre-trained on ImageNet and fine-tuned on the respective data sets. RAD uniformly performs at least as well as the best comparison method, and often better especially in the low sample regime.}}
	\label{fig:pretrained-teacher}
\end{figure*}
In this section, we present evaluations with ResNet50 and ResNet101 teacher models that are pre-trained on ImageNet and fine-tuned on the labeled data that is available for the academic data sets we consider. Fig.~\ref{fig:pretrained-teacher} depicts the results of our evaluations on CIFAR100 and SVHN datasets. Consistent with the trend of our results in Sec.~\ref{sec:results}, RAD uniformly outperforms or matches the performance of the best performing comparison method across all scenarios.

\subsection{NLP Evaluations}
\label{sec:supp-nlp-results}
\begin{figure*}[htb!]

  \centering
  
  \begin{minipage}[t]{0.33\textwidth}
  \centering
 \includegraphics[width=1\textwidth]{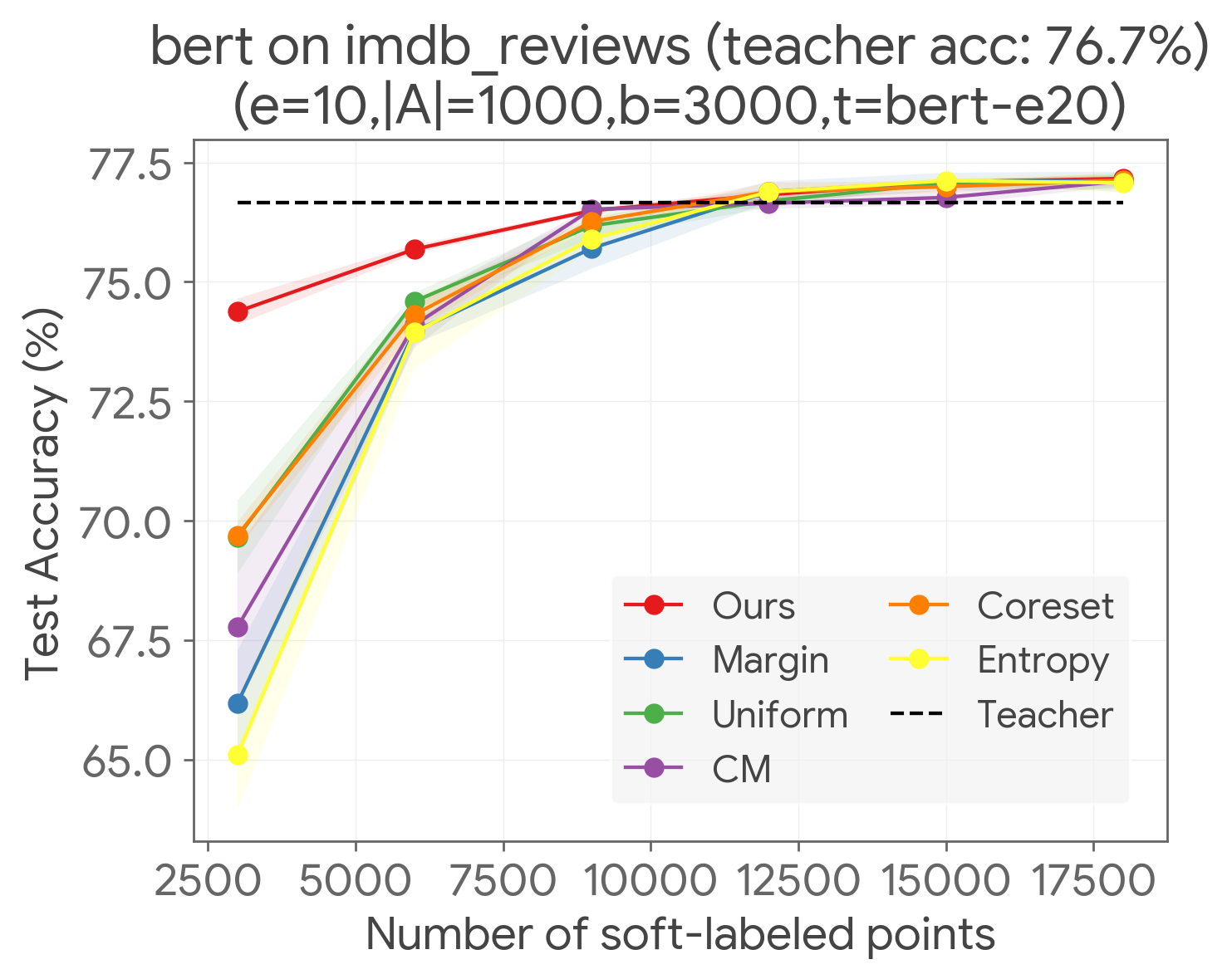}
  \end{minipage}%
  \hfill
  \begin{minipage}[t]{0.33\textwidth}
  \centering
 \includegraphics[width=1\textwidth]{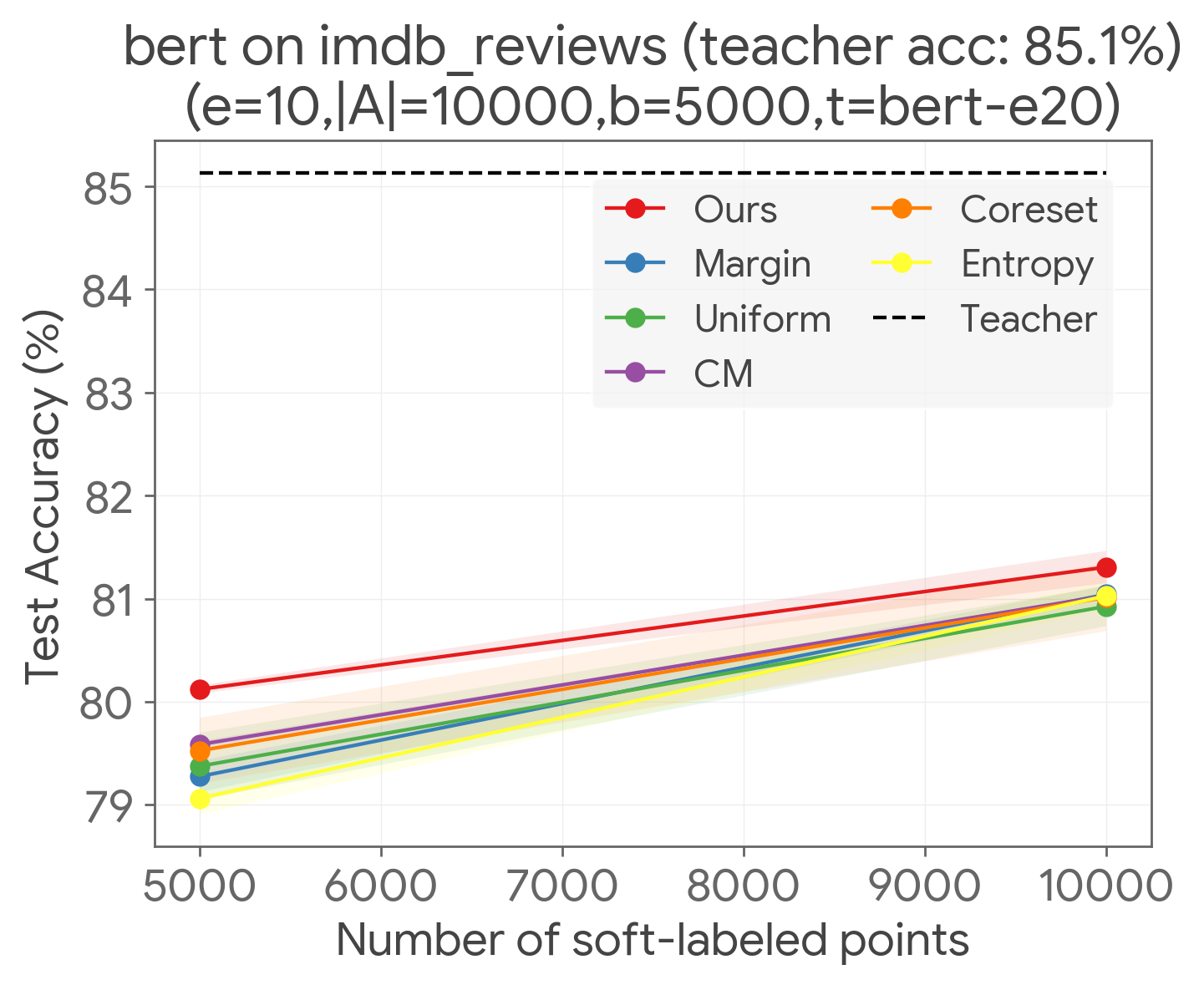}
  \end{minipage}%
  \hfill
  \begin{minipage}[t]{0.33\textwidth}
  \centering
 \includegraphics[width=1\textwidth]{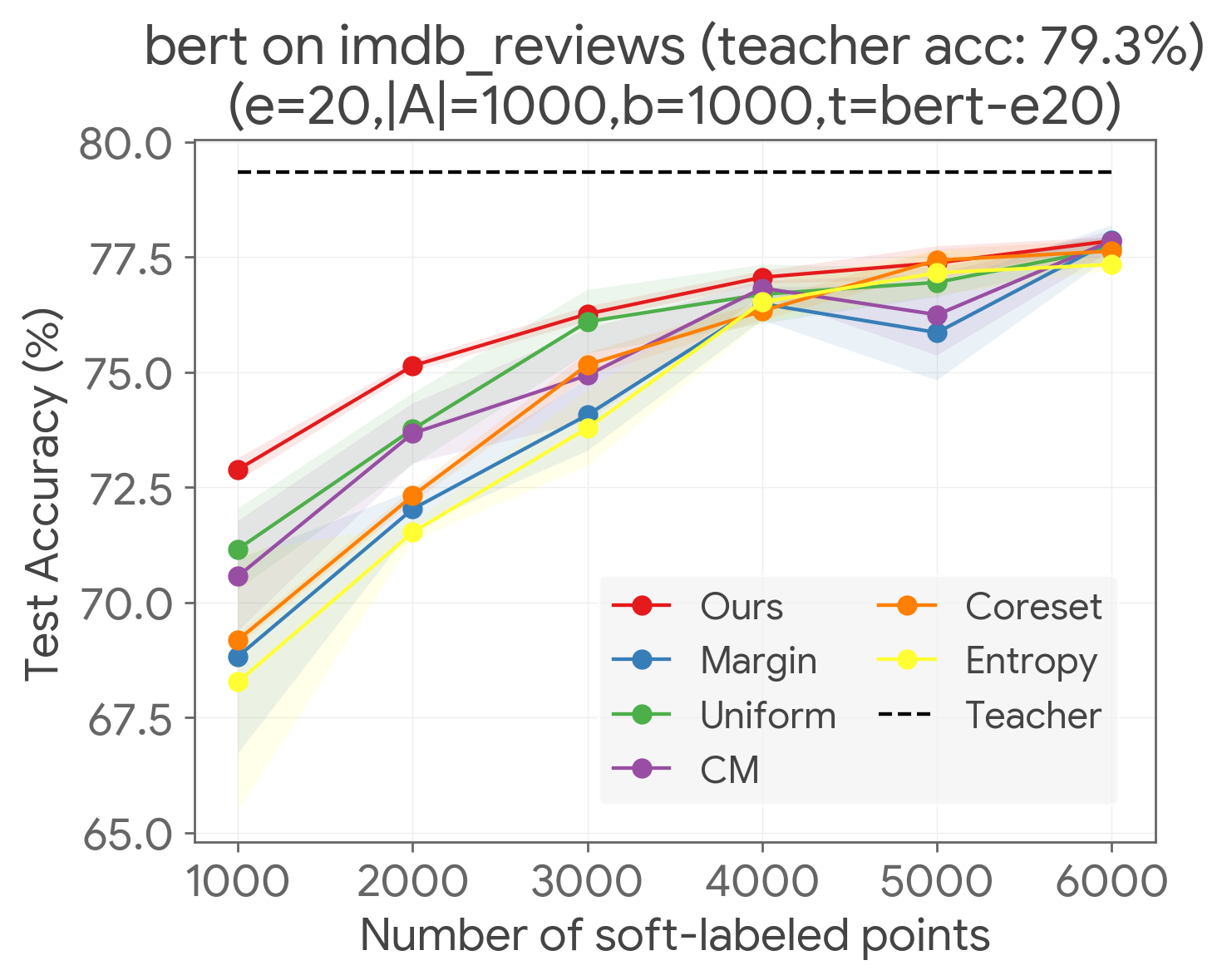}
  \end{minipage}%
		\caption{\reviclr{Evaluations on the IMDB dataset with a tiny BERT student model and a pre-trained BERT teacher.}}
	\label{fig:nlp}
\end{figure*}

We conclude the supplementary results by presenting evaluations on a Natural Language Processing (NLP) task on the IMDB Reviews~\cite{imdb} dataset with a pretrained BERT teacher model. The IMDB dataset has 25,000 training and 25,000 testing data points, where each data point is a movie review. The task is to classify each review as either positive or negative. We used a pre-trained, 12-layer SmallBERT~\cite{turc2019well} with hidden dimension $768$ as the teacher model and a randomly initialized 2-layer SmallBERT with hidden dimension $128$ as the student. From Fig.~\ref{fig:nlp} we that the improved effectiveness of RAD relative to the compared approaches persists on the NLP task, consistent with our evaluations on the vision datasets. RAD is particularly effective in the small sample regime, where the number of soft-labeled points is small relative to the size of the unlabeled dataset.}

\end{document}